\newtheorem{theorem}{Theorem}
\newtheorem{proposition}{Proposition}
\newtheorem{lemma}[proposition]{Lemma}
\newtheorem{corollary}{Corollary}
\newtheorem{definition}{Definition}
\newtheorem{assumption}{Assumption}
\newcommand\independent{\protect\mathpalette{\protect\independenT}{\perp}}
\def\independenT#1#2{\mathrel{\rlap{$#1#2$}\mkern2mu{#1#2}}}
\acrodef{rg}[RG]{Recursive Grouping}
\acrodef{rrg}[RRG]{Robust Recursive Grouping}
\acrodef{rclrg}[RCLRG]{Robust Chow-Liu Recursive Grouping}
\acrodef{clrg}[CLRG]{Chow-Liu Recursive Grouping}
\acrodef{mst}[MST]{minimum spanning tree}
\acrodef{ggm}[GGM]{Gaussian graphical models}
\acrodef{rsnj}[RSNJ]{Robust Spectral NJ}
\acrodef{nj}[NJ]{Neighbor Joining}
\acrodef{snj}[SNJ]{Spectral NJ}
\acrodef{rnj}[RNJ]{Robust Neighbor Joining}
\acrodef{hmm}[HMM]{hidden Markov model}
\acrodef{cl}[CL]{Chow-Liu}
\title{Robustifying Algorithms of Learning Latent Trees with Vector Variables}
\author{%
Fengzhuo Zhang\\
Department of Electrical and Computer Engineering\\
National University of Singapore\\
  \texttt{fzzhang@u.nus.edu} \\
  \And
  Vincent Y.~F.~Tan\\
  Department of Electrical and Computer Engineering\\
  Department of Mathematics\\
  National University of Singapore\\
  \texttt{vtan@nus.edu.sg} \\
}
\begin{document}

\maketitle

\begin{abstract}
	We consider  learning the structures of  Gaussian latent tree models with vector observations when a subset of them are arbitrarily corrupted. First, we present the sample complexities of \ac{rg} and \ac{clrg} without the assumption that the effective depth 
	is bounded in the number of observed nodes, significantly generalizing the results in Choi et al.\ (2011). We show that Chow-Liu initialization 
	in \ac{clrg} greatly reduces the sample complexity of \ac{rg} from  being exponential in the diameter of the tree to only logarithmic in the diameter for the \ac{hmm}. Second, we robustify 
	\ac{rg}, \ac{clrg}, \ac{nj} and \ac{snj} by using the truncated inner product. These robustified algorithms can tolerate a number of corruptions up to the square root of the number of clean samples. Finally, we derive the first known instance-dependent impossibility result for structure learning of latent trees. The optimalities of the robust version of \ac{clrg} and \ac{nj} are verified by comparing their sample complexities and the impossibility result.
\end{abstract}
\section{Introduction}\label{sec:intro}
Latent graphical models provide a succinct representation of the dependencies among observed and latent variables. 
Each node in the graphical model represents a random variable or a random vector, and 
the dependencies among these variables are captured by the edges among nodes. Graphical models are widely used in domains from biology \cite{saitou1987neighbor}, computer vision \cite{tang2014latent} 
and social networks \cite{eisenstein2010latent}.

This paper focuses on the structure learning problem of latent  tree-structured \ac{ggm} in which the node observations are random {\em vectors} and a subset of the  observations can be {\em arbitrarily corrupted}. This classical problem, in which the variables are {\em clean scalar} random variables, has been studied extensively in the past decades. The first information distance-based method, \ac{nj}, 
was proposed in \cite{saitou1987neighbor} to learn the structure of phylogenetic trees. This method makes use of additive information distances to deduce the existence of hidden nodes and introduce 
edges between hidden and observed nodes. \ac{rg}, proposed in \cite{choi2011learning}, generalizes the information distance-based methods to make it applicable for the latent graphical models 
with general structures. Different from these information distance-based methods, quartet-based methods \cite{anandkumar2011spectral} utilize the relative geometry of every four nodes 
to estimate the structure of the whole graph. Although experimental comparisons of these algorithms were conducted in some works \cite{choi2011learning, jaffe2021spectral, Casanellas21arxiv}, since there is no instance-dependent impossibility 
result of the sample complexity of structure learning problem of latent tree graphical models, no thorough theoretical comparisons have been made, and the optimal dependencies on the 
diameter of graphs and the maximal distance between nodes $\rho_{\max}$ have not been found.

The success of the previously-mentioned algorithms relies on the assumption that the observations are i.i.d.\ samples from the generating distribution.  The structure learning of latent graphical models in 
presence of  (random or adversarial) noise remains a relatively unexplored problem. The presence of the noise in the samples violates the i.i.d.\ assumption. Consequently,   classical algorithms may suffer from   severe performance degradation in the noisy setting. There are some works studying the problem of structure learning of graphical models with noisy samples, where all the nodes in the graphical models are
observed and not hidden. Several assumptions on the additive noise are made in these works, which limit the use of these proposed algorithms. For example, the covariance matrix of the noise is specified in 
\cite{katiyar2019robust}, and the independence and/or distribution of the noise is assumed in \cite{nikolakakis2019learning,tandon2020exact,tandon2021sga,Casanellas21arxiv}. In contrast, we consider the structure 
learning of latent tree graphical models with \emph{arbitrary} corruptions, where assumptions on the distribution and independence of the noise across nodes are not required \cite{wang2017robust}. Furthermore, the 
corruptions are allowed to be presented at {\em any position} in the data matrix; they do not appear solely as outliers. In this work, we derive  bounds on the maximum number of corruptions that can be tolerated for a variety of algorithms, and yet structure  learning can succeed with high probability.

Firstly, we derive the sample complexities of \ac{rg} and \ac{clrg} where each node represents a random {\em vector}; this differs from previous works where each node is {\em scalar} random variable (e.g.,  \cite{choi2011learning,parikh11}).
We explore the dependence of the sample complexities on the parameters of the graph. Compared with \cite[Theorem~12]{choi2011learning}, the derived sample complexities are applicable to a wider class of latent trees and capture the dependencies on more parameters of the underlying graphical models, such as $\rho_{\max}$, the maximum distance between any two nodes, and $\delta_{\min}$, the minimum over all determinants of the covariance matrices of the vector variables. Our sample complexity analysis clearly demonstrates and precisely quantifies the effectiveness of the Chow-Liu~\cite{chow1968approximating}  initialization  step
  in \ac{clrg}; this has been only verified experimentally~\cite{choi2011learning}. For the particular case of the  \ac{hmm}, we show that the Chow-Liu initialization  step reduces the sample complexity of 
\ac{rg} which is $O\big((\frac{9}{2})^{\mathrm{Diam}(\mathbb{T})}\big)$ to $O\big(\log \mathrm{Diam}(\mathbb{T})\big)$, where $\mathrm{Diam}(\mathbb{T})$ is the tree diameter.

Secondly, we robustify \ac{rg}, \ac{clrg}, \ac{nj} and \ac{snj} by using the truncated inner product \cite{chen2013robust} to estimate the information distances in the presence of arbitrary corruptions. We derive their  sample complexities and show that they can tolerate  $n_{1}=O\big(\frac{\sqrt{n_{2}}}{\log n_{2}}\big)$ corruptions, where $n_{2}$ is the number of clean samples.

Finally, we derive the first known instance-dependent impossibility result for   learning   latent trees. The dependencies on the number of observed nodes $|\mathcal{V}_{\mathrm{obs}}|$ and the maximum distance 
$\rho_{\max}$ are delineated. The comparison of the sample complexities of the structure learning algorithms and the impossibility result demonstrates the optimality of \ac{rclrg} and \ac{rnj} in 
$\mathrm{Diam}(\mathbb{T})$ for some archetypal latent tree structures.

\paragraph{Notation}
We use san-serif letters $x$, boldface letters $\mathbf{x}$, and bold uppercase letters $\mathbf{X}$ to denote variables, vectors and matrices, respectively. The notations $[\mathbf{x}]_{i}$, 
$[\mathbf{X}]_{ij}$, $[\mathbf{X}]_{:,j}$ and $\mathrm{diag}(\mathbf{X})$ are respectively the $i^{\mathrm{th}}$ entry of vector $\mathbf{x}$, the  $(i,j)^{\mathrm{th}}$ entry of $\mathbf{X}$, the  
$j^{\mathrm{th}}$ column of  $\mathbf{X}$, and the diagonal entries of matrix $\mathbf{X}$. The notation $x^{(k)}$ represents the $k^{\mathrm{th}}$ sample of  $x$. $\|\mathbf{x}\|_{0}$ is the $l_{0}$ norm of the vector $\mathbf{x}$, i.e., the number of non-zero terms in $\mathbf{x}$. The set $\{1,\ldots,n\}$ is denoted as $[n]$.
For a tree $\mathbb{T}=(\mathcal{V},\mathcal{E})$, the internal (non-leaf) nodes, the maximal degree and the diameter of $\mathbb{T}$ are denoted as $\mathrm{Int}(\mathbb{T})$, 
$\mathrm{Deg}(\mathbb{T})$, and $\mathrm{Diam}(\mathbb{T})$, respectively. We denote the closed neighborhood and the degree of   $x_{i}$ as $\mathrm{nbd}[x_{i};\mathbb{T}]$ and $\mathrm{deg}(i)$, respectively. The 
length of the (unique) path connecting  $x_{i}$ and $x_{j}$ is denoted as $\mathrm{d}_{\mathbb{T}}(x_{i},x_{j})$.

\section{Preliminaries and problem statement}
A \ac{ggm} \cite{lauritzen1996graphical, tan2011}  is a multivariate Gaussian distribution that factorizes according to an undirected graph $\mathbb{G}=(\mathcal{V},\mathcal{E})$. More precisely, a $l_{\mathrm{sum}}$-dimensional random vector $\mathbf{x}=[\mathbf{x}_{1}^{\top},\ldots,\mathbf{x}_{p}^{\top}]^{\top}$, 
where $\mathbf{x}_{i}\in \mathbb{R}^{l_{i}}$ and $l_{\mathrm{sum}}=\sum_{i=1}^{p} l_{i}$, follows a Gaussian distribution $\mathcal{N}(\mathbf{0},\mathbf{\Sigma})$, and it is said to be \emph{Markov} on a graph $\mathbb{G}=(\mathcal{V},\mathcal{E})$ with vertex set $\mathcal{V}=\{x_{1},\ldots,x_{p}\}$ and 
edge set $\mathcal{E}\subseteq \binom{\mathcal{V}}{2}$ and $(x_{i},x_{j})\in \mathcal{E}$ if and only if the $(i,j)^{\mathrm{th}}$ block $\mathbf{\Theta}_{ij}$ of the precision $\mathbf{\Theta}=\mathbf{\Sigma}^{-1}$ is not the zero matrix $\mathbf{0}$. We focus on tree-structured graphical models, which factorize according to acyclic and connected (tree) graphs.

A special class of graphical models is the set of {\em latent} graphical models $\mathbb{G}=(\mathcal{V},\mathcal{E})$. The vertex set $\mathcal{V}$ is decomposed as $\mathcal{V}=\mathcal{V}_{\mathrm{hid}}\cup\mathcal{V}_{\mathrm{obs}}$. 
We only have access to $n$ i.i.d.\ samples drawn from  the observed set of nodes $\mathcal{V}_{\mathrm{obs}}$. The two goals of any structure learning algorithm are to learn the identities of the hidden nodes $\mathcal{V}_{\mathrm{hid}}$ and how they are connected to the observed nodes.

\subsection{System model for arbitrary corruptions}\label{subsec:sysmodel}
We consider tree-structured \ac{ggm}s $\mathbb{T}=(\mathcal{V},\mathcal{E})$ with observed nodes $\mathcal{V}_{\mathrm{obs}}=\{x_{1},\cdots,x_{o}\}$ and hidden nodes $\mathcal{V}_{\mathrm{hid}}=\{x_{o+1},\cdots,x_{o+h}\}$, 
where $\mathcal{V}=\mathcal{V}_{\mathrm{hid}}\cup\mathcal{V}_{\mathrm{obs}}$ and $\mathcal{E}\subseteq \binom{\mathcal{V}}{2}$. Each node $x_{i}$ represents a random {\em vector} $\mathbf{x}_{i}\in \mathbb{R}^{l_{i}}$. The concatenation of 
these random vectors is a multivariate Gaussian random vector with zero mean and covariance matrix $\mathbf{\Sigma}$ with size $l_{\mathrm{sum}}\times l_{\mathrm{sum}}$. 

We have $n$ i.i.d.\ samples $\tilde{\mathbf{X}}_{j}=[\tilde{\mathbf{x}}_{1}^{(j)\top},\cdots,\tilde{\mathbf{x}}_{o}^{(j)\top}]^{\top}\in\mathbb{R}^{l_{\mathrm{sum}}},  j=1,\ldots,n$ drawn from the  observed nodes $\mathcal{V}_{\mathrm{obs}}=\{x_{1},\cdots,x_{o}\}$. 
However, the {\em observed} data matrix $\tilde{\mathbf{X}}_{1}^{n}=[\tilde{\mathbf{X}}_{1},\cdots,\tilde{\mathbf{X}}_{n}]^{\top}\in \mathbb{R}^{n\times l_{\mathrm{sum}}}$ may contain some corrupted elements. We allow an level-$(n_1/2)$ arbitrary corruption  in the data matrix. This is made precise in the following definition.  
\begin{definition}[Level-$m$ arbitrary corruption]\label{def:corrup}
    For the data matrix $\tilde{\mathbf{X}}_{1}^{n}\in\mathbb{R}^{n\times k}$ formed by $n$ clean samples of $k$ random variables (or a random vector of dimension $k$), an  {\em level-$m$ arbitrary corruption}  transforms  $\tilde{\mathbf{X}}_{1}^{n}$ into $\mathbf{X}_{1}^{n}\in \mathbb{R}^{n\times k}$ such that
    \begin{align}
        \|[\tilde{\mathbf{X}}_{1}^{n}]_{;,i}-[\mathbf{X}_{1}^{n}]_{;,i}\|_{0}\leq m\quad  \text{ for all }\quad i=1,\ldots ,k.
    \end{align}
\end{definition}
 Definition \ref{def:corrup} implies that there are at most $n_{1}/2$ corrupted terms in each column of $\mathbf{X}_{1}^{n}$; the remaining $n-n_{1}/2$ samples in this column are clean. In particular, 
the corrupted samples in different columns need not to be in the same rows. If the corruptions in different columns lie in the same rows, as shown in (the left of) Fig.~\ref{corrup_pic}, all the samples in the corresponding rows 
are corrupted; these are called \emph{outliers}. Obviously, outliers form a special case of our corruption model. Since each variable has at most $n_{1}/2$ corrupted samples, the sample-wise inner product between two variables has 
at least $n_{2}=n-n_{1}$ clean samples. There is no constraint on the statistical dependence or patterns of  the corruptions. 
Unlike fixing the covariance matrix of the noise \cite{katiyar2019robust} or keeping the noise independent \cite{nikolakakis2019learning}, 
we allow \emph{arbitrary} corruptions on the samples, which means that the noise can have unbounded amplitude, can be dependent, and even can be 
generated from another graphical model (as we will see in the experimental results in Section~\ref{sec:simu}).

\subsection{Structural and distributional assumptions}\label{sec:assump}
To construct the correct latent tree from samples of observed nodes, it is imperative to constrain the class of latent trees to guarantee that the information from the distribution of observed nodes $p(\mathbf{x}_{1},\ldots,\mathbf{x}_{o})$ is sufficient to construct the 
tree. The distribution $p(\mathbf{x}_{1},\ldots,\mathbf{x}_{o+h})$ of the observed and hidden nodes is said to have a \emph{redundant} hidden node $x_{j}$ if the distribution of the observed nodes $p(\mathbf{x}_{1},\ldots,\mathbf{x}_{o})$ remains the same after we marginalize over $x_{j}$. To ensure that a latent tree can be constructed 
with no ambiguity, we need to guarantee that the true distribution does not have any redundant hidden node(s), which is achieved by following two conditions \cite{pearl2014probabilistic}: (C1) Each hidden node has at least three neighbors; the set of such latent trees is denoted as $\mathcal{T}_{\geq 3}$; (C2)
	Any two variables connected by an edge are neither perfectly dependent nor independent.


\begin{assumption}\label{assupleng}
	The dimensions of all the random vectors are all equal to $l_{\max}$.
\end{assumption}
In fact,  we only require the random vectors of the internal (non-leaf) nodes to have the same length. However, for ease of notation, 
we assume that the dimensions of all the random vectors are $l_{\max}$.
\begin{assumption}\label{assupsing}
	For every $x_{i},x_{j}\in \mathcal{V}$, the covariance matrix $\mathbf{\Sigma}_{ij}=\mathbb{E}\big[\mathbf{x}_{i}\mathbf{x}_{j}^{\top}\big]$ has full rank, and the smallest singular value of $\mathbf{\Sigma}_{ij}$ is lower bounded by $\gamma_{\min}$, i.e.,
	\begin{align}
		\sigma_{l_{\max}}(\mathbf{\Sigma}_{ij})\geq \gamma_{\min}  \quad\text{for all}\quad x_{i},x_{j}\in \mathcal{V},
	\end{align}
	where $\sigma_{i}(\mathbf{\Sigma})$ is the $i^{\mathrm{th}}$ largest singular value of $\mathbf{\Sigma}$.
\end{assumption}
This   assumption is a strengthening of  Condition (C2) when each node represents a random vector.
\begin{assumption}\label{assupdet}
	The determinant of the covariance matrix of any node $\mathbf{\Sigma}_{ii}=\mathbb{E}\big[\mathbf{x}_{i}\mathbf{x}_{i}^{\top}\big]$ is lower bounded by $\delta_{\min}$, and the diagonal terms of the covariance matrix are upper bounded by $\sigma_{\max}^{2}$, i.e., 
	\begin{align}
		\min_{x_{i}\in\mathcal{V}} \det(\mathbf{\Sigma}_{ii})\geq \delta_{\min} \quad\mbox{and}\quad \max_{x_{i}\in\mathcal{V}}\mathrm{diag}\big(\mathbf{\Sigma}_{ii}\big) \leq \sigma_{\max}^{2}.
	\end{align}
\end{assumption}
Assumption~\ref{assupdet} is  natural; otherwise, $\mathbf{\Sigma}_{ii}$ may be arbitrarily close to a singular matrix.
\begin{assumption}\label{assupdegree}
	The degree of each node is upper bounded by $d_{\max}$, i.e., $\mathrm{Deg}(\mathbb{T})\leq d_{\max}$.
\end{assumption}

\subsection{Information distance}
We define the \emph{information distance} for Gaussian random vectors and prove that it is additive for trees. 
\begin{definition}\label{def:dist}
	The information distance between nodes $x_{i}$ and $x_{j}$ is
	\begin{align}
		\mathrm{d}(x_{i},x_{j})=-\log\frac{\prod_{k=1}^{l_{\max}}\sigma_{k}\big(\mathbf{\Sigma}_{ij}\big)}{\sqrt{\mathrm{det}\big(\mathbf{\Sigma}_{ii}\big)\mathrm{det}\big(\mathbf{\Sigma}_{jj}\big)}}.
	\end{align}
\end{definition}
Condition (C2) can be equivalently restated as constraints on the information distance.
\begin{assumption}\label{assupdist}
	There exist two constants $0<\rho_{\min}\leq\rho_{\max}<\infty$ such that.
	\begin{align}
		\rho_{\min}\leq\mathrm{d}(x_{i},x_{j})\leq \rho_{\max}  \quad \text{for all}\quad  x_{i},x_{j}\in \mathcal{V}.
	\end{align}
\end{assumption}
Assumptions \ref{assupsing} and \ref{assupdist} both describe the properties of the correlation between random vectors from different perspectives. In fact, we can relate the constraints in these two 
assumptions as follows:
\begin{align}\label{eq:simi}
	\gamma_{\min}e^{\rho_{\max}/l_{\max}}\geq \delta_{\min}^{1/l_{\max}}.
\end{align}

\begin{proposition}\label{prop:add}
	If Assumptions \ref{assupleng} and \ref{assupsing} hold, $\mathrm{d}(\cdot,\cdot)$ defined in Definition \ref{def:dist} is additive on the tree-structured \ac{ggm} $\mathbb{T}=(\mathcal{V},\mathcal{E})$. In 
	other words, $\mathrm{d}(x_{i},x_{k})=\mathrm{d}(x_{i},x_{j})+\mathrm{d}(x_{j},x_{k})$ holds for any two nodes $x_{i},x_{k}\in \mathcal{V}$ and any node $x_{j}$ on the path connecting $x_{i}$ and $x_{k}$ in $\mathbb{T}$.
\end{proposition}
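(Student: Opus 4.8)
The plan is to exploit two facts: that under Assumption~\ref{assupleng} every cross-covariance block $\mathbf{\Sigma}_{ij}$ is a square $l_{\max}\times l_{\max}$ matrix, and that a tree-structured \ac{ggm} satisfies the global Markov property. First I would rewrite the information distance in a more tractable form. Since $\mathbf{\Sigma}_{ij}$ is square, the product of its singular values equals the absolute value of its determinant, $\prod_{k=1}^{l_{\max}}\sigma_{k}(\mathbf{\Sigma}_{ij})=\lvert\det(\mathbf{\Sigma}_{ij})\rvert$, so that
\begin{align}
	\mathrm{d}(x_{i},x_{j})=-\log\frac{\lvert\det(\mathbf{\Sigma}_{ij})\rvert}{\sqrt{\det(\mathbf{\Sigma}_{ii})\det(\mathbf{\Sigma}_{jj})}}.
\end{align}
Assumption~\ref{assupsing} guarantees $\det(\mathbf{\Sigma}_{ij})\neq 0$ and Assumption~\ref{assupdet} guarantees the denominators are strictly positive, so these expressions are well defined and finite.

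Next I would invoke the Markov structure. When $x_{j}$ lies on the unique path between $x_{i}$ and $x_{k}$ in $\mathbb{T}$, removing $x_{j}$ disconnects $x_{i}$ from $x_{k}$; by the global Markov property this yields the conditional independence $\mathbf{x}_{i}\independent\mathbf{x}_{k}\mid\mathbf{x}_{j}$. For jointly Gaussian vectors this is equivalent to the vanishing of the off-diagonal block of the conditional covariance, i.e.\ to the Schur-complement identity
\begin{align}
	\mathbf{\Sigma}_{ik}=\mathbf{\Sigma}_{ij}\mathbf{\Sigma}_{jj}^{-1}\mathbf{\Sigma}_{jk}.
\end{align}
Because all three blocks are square, taking determinants and using multiplicativity gives $\lvert\det(\mathbf{\Sigma}_{ik})\rvert=\lvert\det(\mathbf{\Sigma}_{ij})\rvert\,\lvert\det(\mathbf{\Sigma}_{jk})\rvert/\det(\mathbf{\Sigma}_{jj})$.

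I would then substitute this into $\mathrm{d}(x_{i},x_{k})$ and split the logarithm. Writing $\det(\mathbf{\Sigma}_{jj})=\sqrt{\det(\mathbf{\Sigma}_{jj})}\cdot\sqrt{\det(\mathbf{\Sigma}_{jj})}$ lets me factor the argument of the log into the product of the two single-edge ratios, yielding $\mathrm{d}(x_{i},x_{k})=\mathrm{d}(x_{i},x_{j})+\mathrm{d}(x_{j},x_{k})$ whenever $x_{j}$ separates $x_{i}$ and $x_{k}$. Since the Schur-complement identity already holds for an \emph{arbitrary} separating node $x_{j}$ on the path (not only an adjacent one), the general statement follows directly; alternatively one chains the single-node identity along the path by induction on its length.

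The only real subtlety---and the step I would check most carefully---is the reduction $\prod_{k}\sigma_{k}(\mathbf{\Sigma}_{ij})=\lvert\det(\mathbf{\Sigma}_{ij})\rvert$ together with determinant multiplicativity, both of which rely crucially on Assumption~\ref{assupleng} making every block square; if the node dimensions differed, $\mathbf{\Sigma}_{ij}$ would be rectangular, the product of singular values would no longer be a determinant, and the factorization would break down. I would also carry absolute values throughout, since a cross-covariance determinant may be negative whereas the product of singular values is nonnegative; these signs cancel consistently on both sides of the Schur-complement identity.
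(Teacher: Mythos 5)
Your proof is correct and is essentially the paper's own argument: both rest on the Gaussian Markov factorization $\mathbf{\Sigma}_{ik}=\mathbf{\Sigma}_{ij}\mathbf{\Sigma}_{jj}^{-1}\mathbf{\Sigma}_{jk}$ for a separating node $x_{j}$ (the paper expresses it through the regression matrices $\mathbf{A}_{i|j}=\mathbf{\Sigma}_{ij}\mathbf{\Sigma}_{jj}^{-1}$ and works with $\det\big(\mathbf{\Sigma}_{ij}\mathbf{\Sigma}_{ij}^{\mathrm{T}}\big)$, which equals your $\lvert\det(\mathbf{\Sigma}_{ij})\rvert^{2}$ for square blocks), followed by determinant multiplicativity and splitting the logarithm exactly as you do. The one cosmetic point is that you invoke Assumption~\ref{assupdet} for positivity of $\det(\mathbf{\Sigma}_{ii})$, which is unnecessary: Assumption~\ref{assupsing} applied with $i=j$ already forces $\mathbf{\Sigma}_{ii}$ to be positive definite, so the argument stays within the proposition's stated hypotheses.
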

This additivity property is used extensively in the following algorithms. It was first stated and proved in Huang et al.~\cite{huang2020guaranteed}. We provide an alternative proof in Appendix \ref{appendix:add_dist}.

\section{Robustifying latent tree structure learning algorithms} \label{sec:robustifying}
\subsection{Robust estimation of information distances}\label{subsec:infodistest}
Before delving into the details of robustifying latent tree structure learning algorithms, we first introduce the truncated inner product \cite{chen2013robust}, which estimates the correlation against arbitrary corruption effectively and 
serves as a basis for the robust latent tree structure learning algorithms.  Given $\mathbf{a},\mathbf{b}\in \mathbb{R}^{n}$ and an integer $n_{1}$, we compute $q_{i}=a_{i}b_{i}$ for $i=1,2,\ldots,n$ and sort $\{|q_{i}|\}$. Let  $\Upsilon$ be the index set of the $n-n_{1}$  smallest $|q_i|$'s. The truncated inner product is $\langle\mathbf{a},\mathbf{b}\rangle_{n_{1}}=\sum_{i\in\Upsilon}q_{i}$. 
Note that the implementation of the  truncated inner product requires the knowledge of corruption level $n_{1}$. 

To estimate the information distance defined in Definition \ref{def:dist}, we implement the truncated inner product to estimate each term of $\mathbf{\Sigma}_{ij}$, i.e., $[\hat{\mathbf{\Sigma}}_{ij}]_{st}=\frac{1}{n-n_{1}}\langle[\mathbf{X}_{1}^{n}]_{:,(i-1)l_{\max}+s},[\mathbf{X}_{1}^{n}]_{:,(j-1)l_{\max}+t}\rangle_{n_{1}}$. 
Then the information distance is computed based on this estimate 
of $\mathbf{\Sigma}_{ij}$ as
\begin{align}
	\hat{\mathrm{d}}(x_{i},x_{j})=-\log\prod_{k=1}^{l_{\max}}\sigma_{k}\big(\hat{\mathbf{\Sigma}}_{ij}\big)+\frac{1}{2}\log\mathrm{det}\big(\hat{\mathbf{\Sigma}}_{ii}\big)+\frac{1}{2}\log\mathrm{det}\big(\hat{\mathbf{\Sigma}}_{jj}\big).
\end{align}
The truncated inner product guarantees that $\hat{\mathbf{\Sigma}}_{ij}$ converges in probability to $\mathbf{\Sigma}_{ij}$, which further ensures the convergence of the singular values and the determinant of $\mathbf{\Sigma}_{ij}$ to their nominal values.
\begin{proposition}\label{distconcen}
	If Assumptions \ref{assupleng} and \ref{assupsing} hold, the estimate of the information distance between $x_{i}$ and $x_{j}$ based on the truncated inner product $\hat{\mathrm{d}}(x_{i},x_{j})$ satisfies 
	\begin{align}
		\mathbb{P}\Big(\big|\hat{\mathrm{d}}(x_{i},x_{j})-\mathrm{d}(x_{i},x_{j})\big|> \frac{2l_{\max}^{2}}{\gamma_{\min}}(t_{1}+t_{2})\Big)\le 2l_{\max}^{2}e^{-\frac{3n_{2}}{16\kappa n_{1}} t_{1}}+l_{\max}^{2}e^{-c\frac{n_{2}}{\kappa^{2}}t_{2}^{2}}, \label{eqn:tail} 
	\end{align}
	where $t_{2}<\kappa=\max\{\sigma_{\max}^{2},\rho_{\min}\}$, and $c$ is an absolute constant.
\end{proposition}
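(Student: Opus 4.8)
The plan is to reduce the fluctuation of $\hat{\mathrm{d}}(x_i,x_j)$ to the entrywise fluctuations of the three estimated blocks $\hat{\mathbf{\Sigma}}_{ij},\hat{\mathbf{\Sigma}}_{ii},\hat{\mathbf{\Sigma}}_{jj}$, and then control each entry through the truncated inner product. I would first dispatch the deterministic (matrix-perturbation) step. Writing $\hat{\mathrm{d}}-\mathrm{d}=-\sum_{k}[\log\sigma_k(\hat{\mathbf{\Sigma}}_{ij})-\log\sigma_k(\mathbf{\Sigma}_{ij})]+\tfrac12[\log\det\hat{\mathbf{\Sigma}}_{ii}-\log\det\mathbf{\Sigma}_{ii}]+\tfrac12[\log\det\hat{\mathbf{\Sigma}}_{jj}-\log\det\mathbf{\Sigma}_{jj}]$ and using $\log\det=\sum_k\log\sigma_k$ for the symmetric PSD diagonal blocks, every term is a sum of $\log$-of-singular-value differences. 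Since Assumption \ref{assupsing} gives $\sigma_k(\mathbf{\Sigma}_{\bullet})\ge\gamma_{\min}$ (with $\bullet\in\{ij,ii,jj\}$), on the event where $\hat{\mathbf{\Sigma}}_\bullet$ is close to $\mathbf{\Sigma}_\bullet$ the map $\log$ is $\tfrac{1}{\gamma_{\min}}$-Lipschitz on the relevant range, and Weyl's inequality gives $|\sigma_k(\hat{\mathbf{\Sigma}}_\bullet)-\sigma_k(\mathbf{\Sigma}_\bullet)|\le\|\hat{\mathbf{\Sigma}}_\bullet-\mathbf{\Sigma}_\bullet\|_2\le\|\hat{\mathbf{\Sigma}}_\bullet-\mathbf{\Sigma}_\bullet\|_F\le l_{\max}\max_{s,t}|[\hat{\mathbf{\Sigma}}_\bullet-\mathbf{\Sigma}_\bullet]_{st}|$. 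Summing the $l_{\max}$ singular-value terms in each block and adding the coefficients $1+\tfrac12+\tfrac12$ reproduces exactly the prefactor $\tfrac{2l_{\max}^2}{\gamma_{\min}}$, so that $|\hat{\mathrm{d}}-\mathrm{d}|\le\tfrac{2l_{\max}^2}{\gamma_{\min}}\max|[\hat{\mathbf{\Sigma}}_\bullet-\mathbf{\Sigma}_\bullet]_{st}|$ over all entries of the three blocks.

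It then remains to show each entry obeys $\mathbb{P}(|[\hat{\mathbf{\Sigma}}_\bullet]_{st}-[\mathbf{\Sigma}_\bullet]_{st}|>t_1+t_2)\le 2e^{-\frac{3n_2}{16\kappa n_1}t_1}+e^{-c\frac{n_2}{\kappa^2}t_2^2}$, after which a union bound over the $l_{\max}^2$ entries of each block yields \eqref{eqn:tail}. For a fixed pair of columns $\mathbf{a},\mathbf{b}$ I would write $q_i=a_ib_i$, let $G$ be the indices where both samples are clean (so $|G|\ge n_2$, $|G^c|\le n_1$), and let $\Upsilon$ be the kept set of size $n_2$. Splitting $\tfrac1{n_2}\sum_{i\in\Upsilon}q_i-\sigma^*$, with $\sigma^*=[\mathbf{\Sigma}_\bullet]_{st}=\mathbb{E}[q_i]$, into a \emph{sampling} error $\tfrac1{n_2}\sum_{i\in G}q_i-\sigma^*$ and a \emph{truncation/corruption} error $\tfrac1{n_2}(\sum_{i\in\Upsilon}q_i-\sum_{i\in G}q_i)=\tfrac1{n_2}(\sum_{i\in\Upsilon\cap G^c}q_i-\sum_{i\in G\setminus\Upsilon}q_i)$ isolates the two tails. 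The sampling error is an average of $n_2$ independent sub-exponential products of jointly Gaussian variables with scale $\kappa$; Bernstein's inequality, in precisely the stated regime $t_2<\kappa$ where the sub-Gaussian branch dominates, gives the tail $e^{-cn_2 t_2^2/\kappa^2}$.

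The crux, and the step I expect to be the main obstacle, is the truncation/corruption error, since the corrupted entries are arbitrary and cannot be bounded directly: the only leverage is the data-dependent threshold $\tau$, the $n_2$-th smallest $|q_i|$. The key combinatorial facts are that (i) at most $n_1$ corrupted products survive in $\Upsilon$, each of magnitude $\le\tau$, and (ii) at most $n_1$ clean products are discarded; a swap argument pairing each surviving corrupted index with a discarded clean index (whose magnitude exceeds $\tau$) shows both partial sums are controlled by $\tfrac{n_1}{n_2}\max_{i\in G}|q_i|$. Because there are $\ge n_2$ clean products while $\Upsilon$ has size $n_2$, one has $\tau\le\max_{i\in G}|q_i|$, so the whole truncation error is at most a constant times $\tfrac{n_1}{n_2}\max_{i\in G}|q_i|$. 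Each clean $q_i$ is a product of Gaussians, hence sub-exponential with scale $\kappa$, so $\mathbb{P}(\max_{i\in G}|q_i|>u)\le|G|\,e^{-cu/\kappa}$; choosing $u=\Theta(\tfrac{n_2}{n_1}t_1)$ turns this into the sub-exponential tail $2e^{-\frac{3n_2}{16\kappa n_1}t_1}$, where the $\tfrac{n_1}{n_2}$ amplification of the threshold is exactly what produces the $\tfrac{n_2}{\kappa n_1}$ factor in the exponent and the constant $\tfrac{3}{16}$ emerges from the Bernstein constant for products of Gaussians (the prefactor bookkeeping being absorbed). Adding the two tails and union bounding over entries completes the argument; the genuine delicacy is verifying that controlling $\tau$ rather than the unbounded corruptions truly suffices and that the clean/bad index accounting forfeits only the advertised constants.
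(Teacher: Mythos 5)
Your overall architecture coincides with the paper's: the deterministic reduction (Weyl plus a norm comparison giving the factor $l_{\max}$, the $1/\gamma_{\min}$-Lipschitz bound on $\log$, and the $1+\tfrac12+\tfrac12$ bookkeeping yielding the prefactor $\tfrac{2l_{\max}^{2}}{\gamma_{\min}}$) is exactly the paper's proof of Proposition~\ref{distconcen}, and your clean/truncation split with the swap argument bounding the corruption error by $\tfrac{2}{n_{2}}\sum_{m\in G\setminus\Upsilon}|q_{m}|$ is precisely the paper's inequality \eqref{gaupro} in Lemma~\ref{lemtrucate}, as is the Bernstein treatment of the clean average in the regime $t_{2}<\kappa$ (Lemma~\ref{subexp1}). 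The one place you genuinely diverge is the tail bound for the truncation term, and there your step does not deliver the stated inequality. Bounding $\sum_{G\setminus\Upsilon}|q_{m}|\le n_{1}\max_{i\in G}|q_{i}|$ and union bounding over the clean indices gives
\begin{align}
\mathbb{P}\Big(\tfrac{2n_{1}}{n_{2}}\max_{i\in G}|q_{i}|>t_{1}\Big)\le |G|\,Ce^{-c'\frac{n_{2}t_{1}}{n_{1}\kappa}},\nonumber
\end{align}
with $|G|\approx n_{2}$, so the prefactor is of order $n_{2}$ rather than the constant $2$ in \eqref{eqn:tail}. This cannot be ``absorbed'' into the exponent without shifting $t_{1}$ by $\Theta\big(\tfrac{\kappa n_{1}\log n_{2}}{n_{2}}\big)$; as written you prove a logarithmically weaker statement, which would propagate harmless $\log n_{2}$ factors into the downstream sample complexities but does not reproduce the proposition's bound, and in particular the constant $\tfrac{3}{16}$ cannot ``emerge'' from this route.

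The paper avoids the union bound entirely by controlling the moment generating function of the truncation error directly: pathwise Jensen (power-mean) with $|G\setminus\Upsilon|\le n_{1}$ reduces $\mathbb{E}\big[e^{\frac{2\lambda}{n_{2}}\sum_{m}|q_{m}|}\big]$ to the MGF of a single product of Gaussians at the amplified argument $\tfrac{2\lambda n_{1}}{n_{2}}$, which is at most $\big(1-\tfrac{4\sigma_{\max}^{2}n_{1}}{n_{2}}\lambda\big)^{-1/2}$, and the Chernoff bound at $\lambda=\tfrac{3n_{2}}{16\sigma_{\max}^{2}n_{1}}$ yields exactly $2e^{-\frac{3n_{2}}{16\kappa n_{1}}t_{1}}$ with the $l_{\max}^{2}$ arising only from the max over matrix entries. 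If you replace your max-plus-union-bound step with this Chernoff computation (keeping everything else as you have it), your proof becomes the paper's proof; note also that the MGF route sidesteps more gracefully the delicacy you correctly flag, namely that $G\setminus\Upsilon$ is a data-dependent index set, which makes a direct Bernstein bound on that sum inapplicable.
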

The first and second parts of \eqref{eqn:tail} originate from the corrupted  and clean samples respectively. 

\subsection{Robust Recursive Grouping algorithm}\label{subsec:rrg}
The \ac{rg} algorithm was proposed in \cite{choi2011learning} to learn latent tree  models with additive information distances. We extend the \ac{rg} to be applicable to \ac{ggm}s with vector observations and robustify it to learn the 
tree structure against arbitrary corruptions. We call this robustified algorithm \ac{rrg}. \ac{rrg} makes use of the additivity of information distance to identify the relationship between nodes. For any three nodes $x_{i}$, $x_{j}$ and $x_{k}$, the difference between the information distances $\mathrm{d}(x_{i},x_{k})$ and $\mathrm{d}(x_{j},x_{k})$ 
is denoted as $\Phi_{ijk}=\mathrm{d}(x_{i},x_{k})-\mathrm{d}(x_{j},x_{k})$. 
\begin{lemma}{\cite{choi2011learning}}\label{lem:identify}
	For information distances $\mathrm{d}(x_{i},x_{j})$ for all nodes $x_{i},x_{j}\in \mathcal{V}$ in a tree $\mathbb{T}\in \mathcal{T}_{\geq 3}$, $\Phi_{ijk}$ has following two properties: (1) $\Phi_{ijk}=\mathrm{d}(x_{i},x_{j}) \text{ for all }  x_{k}\in \mathcal{V}\backslash\{x_{i},x_{j}\}$ if and only if $x_{i}$ is a leaf node and $x_{j}$ is the parent of $x_{j}$ and (2)  $-\mathrm{d}(x_{i},x_{j})<\Phi_{ijk^{\prime}}=\Phi_{ijk}<\mathrm{d}(x_{i},x_{j}) \text{ for all }  x_{k},x_{k^{\prime}}\in \mathcal{V}\backslash\{x_{i},x_{j}\}$ if and only if $x_{i}$ and $x_{j}$ are leaves and share the same parent.
\end{lemma}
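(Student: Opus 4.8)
The plan is to reduce both statements to the \emph{location of the branch point} of $x_{k}$ relative to the path $P_{ij}$ joining $x_{i}$ and $x_{j}$, turning every claim about the quantity $\Phi_{ijk}$ into a claim about a single point moving along $P_{ij}$. For a fixed pair $x_{i},x_{j}$ and any third node $x_{k}$, let $b_{k}$ denote the node of $P_{ij}$ closest to $x_{k}$ (the point at which the path from $x_{k}$ first meets $P_{ij}$). Since $b_{k}$ lies on the path from $x_{i}$ to $x_{k}$ and on the path from $x_{j}$ to $x_{k}$, additivity (Proposition \ref{prop:add}) gives $\mathrm{d}(x_{i},x_{k})=\mathrm{d}(x_{i},b_{k})+\mathrm{d}(b_{k},x_{k})$ and $\mathrm{d}(x_{j},x_{k})=\mathrm{d}(x_{j},b_{k})+\mathrm{d}(b_{k},x_{k})$; subtracting and using $\mathrm{d}(x_{i},b_{k})+\mathrm{d}(b_{k},x_{j})=\mathrm{d}(x_{i},x_{j})$ yields
\begin{align}
  \Phi_{ijk}=\mathrm{d}(x_{i},b_{k})-\mathrm{d}(x_{j},b_{k})=2\,\mathrm{d}(x_{i},b_{k})-\mathrm{d}(x_{i},x_{j}).
\end{align}
Thus $\Phi_{ijk}$ depends on $x_{k}$ only through $b_{k}$, and because all edge lengths are at least $\rho_{\min}>0$ (Assumption \ref{assupdist}), $\mathrm{d}(x_{i},b_{k})$ strictly increases as $b_{k}$ slides from $x_{i}$ to $x_{j}$. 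Hence $\Phi_{ijk}\in[-\mathrm{d}(x_{i},x_{j}),\mathrm{d}(x_{i},x_{j})]$, with the extreme values $\pm\mathrm{d}(x_{i},x_{j})$ attained exactly when $b_{k}$ coincides with an endpoint of $P_{ij}$.

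For Property (1), the hypothesis $\Phi_{ijk}=\mathrm{d}(x_{i},x_{j})$ for every $x_{k}$ forces, through the displayed identity, $\mathrm{d}(x_{i},b_{k})=\mathrm{d}(x_{i},x_{j})$ and hence $b_{k}=x_{j}$ for all $x_{k}$. This says every node other than $x_{i},x_{j}$ attaches to $P_{ij}$ at the same endpoint, so that endpoint separates the other endpoint from the rest of the tree; reading off which endpoint is isolated identifies the leaf and its parent. Conversely, if one of the two nodes is a leaf adjacent to the other, the path from any $x_{k}$ to the leaf must pass through its neighbour, pinning $b_{k}$ to that neighbour for all $k$ and returning the extreme value. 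The only subtlety is ruling out an intermediate hidden node on $P_{ij}$: since $\mathbb{T}\in\mathcal{T}_{\geq 3}$, any internal node has degree at least three (Condition C1), so such a node would carry a third subtree containing some $x_{k}$ whose branch point is that node rather than the endpoint, contradicting $b_{k}=x_{j}$ for all $k$.

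For Property (2), I would first show the stated condition is equivalent to all branch points coinciding at one common \emph{interior} node $b$ of $P_{ij}$. Indeed, $\Phi_{ijk}=\Phi_{ijk'}$ for all $k,k'$ forces $\mathrm{d}(x_{i},b_{k})$ to be constant in $k$, so all $b_{k}$ sit at the same location $b$ by strict monotonicity; the strict inequalities $-\mathrm{d}(x_{i},x_{j})<\Phi_{ijk}<\mathrm{d}(x_{i},x_{j})$ then place $b$ strictly between $x_{i}$ and $x_{j}$, so $b$ is a hidden node. To conclude that $x_{i},x_{j}$ are leaves with common parent $b$, I argue as in Property (1): if $x_{i}$ were not a leaf it would have a neighbour off $P_{ij}$ leading to some $x_{k}$ with $b_{k}=x_{i}$, giving $\Phi_{ijk}=-\mathrm{d}(x_{i},x_{j})$ and violating strictness; and any intermediate hidden node between $x_{i}$ and $b$ would, by Condition C1, spawn a third subtree whose nodes branch away from $b$, contradicting that every $b_{k}$ equals $b$. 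The same reasoning applies to $x_{j}$, so $x_{i},x_{j}$ are leaves adjacent to $b$. The converse is immediate: if $x_{i},x_{j}$ are leaves sharing parent $p$, then $b_{k}=p$ for every $x_{k}$, making $\Phi_{ijk}=\mathrm{d}(x_{i},p)-\mathrm{d}(x_{j},p)$ independent of $k$ and strictly inside $(-\mathrm{d}(x_{i},x_{j}),\mathrm{d}(x_{i},x_{j}))$.

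I expect the main obstacle to be the converse (structural) directions rather than the algebra: translating the purely metric statements ``$b_{k}$ is constant'' or ``$b_{k}$ hits an endpoint'' into the combinatorial conclusions requires simultaneously invoking positivity of the information distance (Assumption \ref{assupdist}, which makes the correspondence $b_{k}\mapsto\Phi_{ijk}$ strictly monotone and hence injective) and the minimality condition $\mathbb{T}\in\mathcal{T}_{\geq 3}$ (Condition C1, which forbids degree-two hidden nodes that could otherwise host an invisible branch point). Keeping careful track of these two assumptions at each ``no intermediate node'' step is the delicate part; the rest is bookkeeping with the additive decomposition.
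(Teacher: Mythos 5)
Your branch-point argument is sound, and it is worth noting at the outset that the paper itself offers no proof of this lemma --- it is quoted from Choi et al.\ \cite{choi2011learning} --- so the natural comparison is with the original proof there, which proceeds exactly as you do: decompose $\mathrm{d}(x_i,x_k)$ and $\mathrm{d}(x_j,x_k)$ at the node $b_k$ where $x_k$'s path meets $P_{ij}$, obtain $\Phi_{ijk}=2\,\mathrm{d}(x_i,b_k)-\mathrm{d}(x_i,x_j)$, and exploit the strict monotonicity of $b\mapsto\mathrm{d}(x_i,b)$ along $P_{ij}$ guaranteed by $\rho_{\min}>0$. Your algebraic identity, the injectivity argument, and both ``only if'' reductions are correct.

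Two points need repair. First, orientation in Property (1): your identity forces $b_k\equiv x_j$, i.e.\ every other node attaches to $P_{ij}$ at $x_j$, so it is $x_i$ that is the isolated leaf with parent $x_j$; you hedge with ``reading off which endpoint is isolated,'' but you must commit, because the two orientations give different signs --- $x_i$ a leaf adjacent to $x_j$ yields $\Phi_{ijk}=+\mathrm{d}(x_i,x_j)$, while $x_j$ a leaf adjacent to $x_i$ yields $-\mathrm{d}(x_i,x_j)$. (Your computation in fact exposes a typo in the lemma as printed in this paper: the original statement in \cite{choi2011learning}, and this paper's own pseudo-code in Algorithm \ref{algo:rrg}, both assign the $+\mathrm{d}(x_i,x_j)$ case to ``$x_i$ is a leaf and $x_j$ is its parent.'') Correspondingly, your converse sentence ``returning the extreme value'' must specify \emph{which} extreme is returned. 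Second, your ``no intermediate node'' steps lean on Condition (C1) more than is needed, and as phrased they have a small hole: an intermediate node of $P_{ij}$ could be an \emph{observed} node of degree two, which carries no third subtree, so the degree-$\geq 3$ argument does not apply to it. No appeal to (C1) is necessary, however: since the quantification ranges over all $x_k\in\mathcal{V}\setminus\{x_i,x_j\}$, hidden nodes included, an intermediate node $v$ of $P_{ij}$ is itself an admissible choice of $x_k$ with $b_v=v$, immediately contradicting $b_k\equiv x_j$ in Property (1) or the constancy $b_k\equiv b$ in Property (2); the same observation replaces your (C1) argument even for hidden intermediate nodes. With these two repairs your proof is complete and coincides in route with the proof in \cite{choi2011learning}.
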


\ac{rrg} initializes the active set $\Gamma^{1}$ to be the set of all observed nodes. In the $i^{\mathrm{th}}$ iteration, as shown in Algorithm \ref{algo:rrg}, \ac{rrg} adopts Lemma \ref{lem:identify} to identify relationships among nodes in active set $\Gamma^{i}$, and it 
removes the nodes identified as siblings and children from $\Gamma^{i}$ and adds newly introduced hidden nodes to form the active set $\Gamma^{i+1}$ in the $(i+1)^{\mathrm{st}}$ iteration. The procedure of estimating the distances between the newly-introduced hidden node $x_{\mathrm{new}}$ and other nodes is as 
follows. For the node $x_{i}$ which is the child of $x_{\mathrm{new}}$, i.e., $x_{i}\in \mathcal{C}(x_{\mathrm{new}})$, the information distance is estimated as
\begin{align}\label{distup1}
	\hat{\mathrm{d}}(x_{i},x_{\mathrm{new}})=\frac{1}{2\big(|\mathcal{C}(x_{\mathrm{new}})|-1\big)}\bigg(\sum_{j\in\mathcal{C}(x_{\mathrm{new}})}\hat{\mathrm{d}}(x_{i},x_{j})+\frac{1}{|\mathcal{K}_{ij}|}\sum_{k\in\mathcal{K}_{ij}}\hat{\Phi}_{ijk}\bigg),
\end{align}
where $\mathcal{K}_{ij}=\big\{x_{k}\in \mathcal{V}\backslash \{x_{i},x_{j}\} :\max\big\{\hat{\mathrm{d}}(x_{i},x_{k}),\hat{\mathrm{d}}(x_{j},x_{k})\big\}<\tau \big\}$ for some threshold $\tau>0$. For $x_{i}\notin \mathcal{C}(x_{\mathrm{new}})$, 
the distance is estimated as
\begin{align}\label{distup2}
	\hat{\mathrm{d}}(x_{i},x_{\mathrm{new}})=\left\{
		\begin{array}{ll}
		\sum_{x_{k}\in \mathcal{C}(x_{\mathrm{new}})}\frac{\hat{\mathrm{d}}(x_{k},x_{i})-\hat{\mathrm{d}}(x_{k},x_{\mathrm{new}})}{|\mathcal{C}(x_{\mathrm{new}})|}. & \text{if }x_{i}\in \mathcal{V}_{\mathrm{obs}} \\
		\sum_{(x_{k},x_{j})\in \mathcal{C}(x_{\mathrm{new}})\times \mathcal{C}(i)}\frac{\hat{\mathrm{d}}(x_{k},x_{j})-\hat{\mathrm{d}}(x_{k},x_{\mathrm{new}})-\hat{\mathrm{d}}(x_{j},y_{i})}{|\mathcal{C}(x_{\mathrm{new}})||\mathcal{C}(i)|} &\text{otherwise }
		\end{array}
		\right. .
\end{align}
\begin{wrapfigure}{r}{4.8cm}
	\centering
	\includegraphics[width=0.335\textwidth]{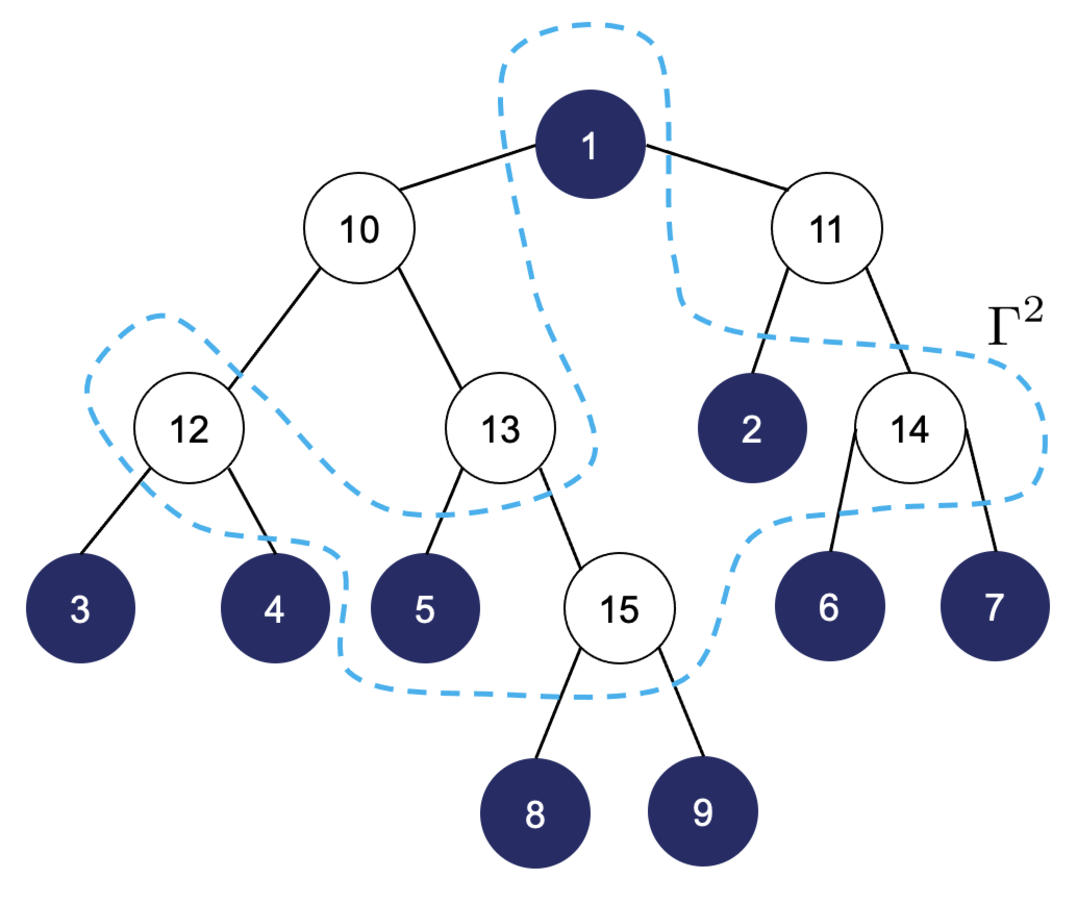}
	\caption{\small An illustration of the active set. The shaded nodes are the observed nodes and the rest are hidden nodes.}
	\label{fig:itefig}
\end{wrapfigure}
The set $\mathcal{K}_{ij}$ is designed to ensure that the nodes involved in the calculation of information distances are not too far, since estimating long distances accurately requires a large number of samples. The maximal cardinality of $\mathcal{K}_{ij}$ over all nodes $x_{i},x_{j}\in \mathcal{V}$ can be found, and we denote this as $N_{\tau}$, i.e., $|\mathcal{K}_{ij}|\leq N_{\tau}$.

The observed nodes are placed in the $0^{\mathrm{th}}$ layer. The hidden nodes introduced in $i^{\mathrm{th}}$ iteration are placed in $i^{\mathrm{th}}$ layer. The nodes in the $i^{\mathrm{th}}$ layer are in the active set $\Gamma^{i+1}$ 
in the $(i+1)^{\mathrm{st}}$ iteration, but nodes in $\Gamma^{i+1}$ can be nodes created in the $j^{\mathrm{th}}$ iteration, where $j<i$. For example, in Fig.~\ref{fig:itefig}, nodes $x_{12}$, $x_{14}$ and $x_{15}$ are created in the $1^{\mathrm{st}}$ iteration, and they 
are in $\Gamma^{2}$. Nodes $x_{1}$, $x_{2}$ and $x_{5}$ are also in $\Gamma^{2}$, which are observed nodes. Eqns.~\eqref{distup1} and~\eqref{distup2} imply that 
the estimation error in the $0^{\mathrm{th}}$ layer will propagate to the nodes in higher layers, and it is necessary to derive concentration results for the information distance related to the nodes in higher layers. To avoid repeating  complicated
expressions in the various concentration bounds to follow, we define the function
\begin{align} 
	f(x)\triangleq 2l_{\max}^{2}e^{-\frac{3n_{2}}{32\lambda\kappa n_{1}} x}+l_{\max}^{2}e^{-c\frac{n_{2}}{4\lambda^{2}\kappa^{2}}x^{2}}=: ae^{-wx}+be^{-ux^{2}},\nonumber
\end{align}
where $\lambda=2l_{\max}^{2}e^{\rho_{\max}/l_{\max}}/\delta_{\min}^{1/l_{\max}}$, $w=\frac{3n_{2}}{32\lambda\kappa n_{1}}$, $u=c\frac{n_{2}}{4\lambda^{2}\kappa^{2}}$, $a=2l_{\max}^{2}$ and $b=l_{\max}^{2}$. To assess the proximity of the estimates $\hat{\mathrm{d}}(x_{i},x_{\mathrm{new}})$ in 
\eqref{distup1} and \eqref{distup2} to their nominal versions, we define
\begin{align}
	h^{(l)}(x)\triangleq s^{l}f(m^{l}x)=s^{l}\big(ae^{-wm^{l}x}+be^{-um^{2l}x^{2}}\big) \quad \text{for all}\quad l  \in \mathbb{N}\cup\{0\}.   \label{eqn:hl}
\end{align}
where $s=d_{\max}^{2}+2d_{\max}^{3}(1+2N_{\tau})$ and $ m=2/9$. The following proposition yields {\em recursive estimates} for the errors of the distances at various layers of the learned latent tree. 
\begin{proposition}\label{errorpropg}
	With Assumptions \ref{assupleng}--\ref{assupdist}, if we implement the truncated inner product to estimate the information distance among observed nodes and adopt \eqref{distup1} and \eqref{distup2} to estimate 
	the information distances related to newly introduced hidden nodes, then the information distance related to the hidden nodes $x_{\mathrm{new}}$ created in the $l^{\mathrm{th}}$ layer $\hat{\mathrm{d}}(x_{i},x_{\mathrm{new}})$ satisfies
	\begin{align}\label{expotail}
		\mathbb{P}\Big(\big|\hat{\mathrm{d}}(x_{i},x_{\mathrm{new}})-\mathrm{d}(x_{i},x_{\mathrm{new}})\big|>\varepsilon\Big)<h^{(l)}(\varepsilon)\quad  \mbox{for all}\quad x_{i}\in \Gamma^{l+1} \quad\mbox{and}\quad l \in \mathbb{N}\cup\{0\}. 
	\end{align}
\end{proposition}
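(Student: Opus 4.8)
The plan is to prove \eqref{expotail} by induction on the layer index $l$, exploiting the fact that for a layer-$l$ node the estimate $\hat{\mathrm{d}}(x_i,x_{\mathrm{new}})$ is, via \eqref{distup1} and \eqref{distup2}, an explicit linear combination of distance estimates among nodes created in strictly earlier layers. A preliminary observation that makes the induction close is that $h^{(l)}$ is nondecreasing in $l$: since $s\ge 1$, $m=2/9<1$, and $f$ is decreasing, we have $h^{(l)}(x)=s^l f(m^l x)\ge s^{l-1}f(m^{l-1}x)=h^{(l-1)}(x)$. Consequently, every estimate $\hat{\mathrm{d}}(x_a,x_b)$ between two nodes of the active set $\Gamma^{l}$ (each created in some layer $\le l-1$) obeys the tail bound $h^{(l-1)}$: if $x_a$ was created in layer $j_a\le l-1$ and $x_b\in\Gamma^{j_a+1}$, the induction hypothesis gives the bound $h^{(j_a)}\le h^{(l-1)}$ by monotonicity.

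For the base case $l=0$ the nodes $x_i,x_{\mathrm{new}}$ are both observed and $\hat{\mathrm{d}}$ is the truncated-inner-product estimate, so Proposition \ref{distconcen} applies. I would set $t_1=t_2=\varepsilon/(2\lambda)$; then by \eqref{eq:simi} we have $2l_{\max}^2/\gamma_{\min}\le\lambda$, hence the deviation threshold obeys $\frac{2l_{\max}^2}{\gamma_{\min}}(t_1+t_2)\le\lambda\cdot\frac{\varepsilon}{\lambda}=\varepsilon$, so that $\{|\hat{\mathrm{d}}-\mathrm{d}|>\varepsilon\}\subseteq\{|\hat{\mathrm{d}}-\mathrm{d}|>\frac{2l_{\max}^2}{\gamma_{\min}}(t_1+t_2)\}$. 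Substituting these values of $t_1,t_2$ into the right-hand side of \eqref{eqn:tail} reproduces exactly $ae^{-w\varepsilon}+be^{-u\varepsilon^2}=f(\varepsilon)=h^{(0)}(\varepsilon)$, which settles the base case in the regime $\varepsilon<2\lambda\kappa$ required by Proposition \ref{distconcen}; for larger $\varepsilon$ the bound is vacuous.

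For the inductive step, assume \eqref{expotail} holds for all layers $\le l-1$. Writing $e_{ab}=\hat{\mathrm{d}}(x_a,x_b)-\mathrm{d}(x_a,x_b)$, I would substitute \eqref{distup1} and \eqref{distup2} to express the error $\hat{\mathrm{d}}(x_i,x_{\mathrm{new}})-\mathrm{d}(x_i,x_{\mathrm{new}})$ as a weighted sum $\sum_\alpha c_\alpha e_\alpha$ of base-level errors between nodes of $\Gamma^l$. The children of $x_{\mathrm{new}}$ (treated first, via \eqref{distup1}) are handled directly, while the non-children (via \eqref{distup2}) reuse the child estimates $\hat{\mathrm{d}}(x_k,x_{\mathrm{new}})$, so their errors are obtained by composing the two coefficient patterns. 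The two quantities to control are (i) the total absolute coefficient mass $\sum_\alpha|c_\alpha|$, which I expect to be at most $9/2=1/m$ after accounting for this composition, and (ii) the number of distinct error terms $\alpha$, which is at most $s=d_{\max}^2+2d_{\max}^3(1+2N_\tau)$ once $|\mathcal{C}(\cdot)|\le d_{\max}$ and $|\mathcal{K}_{ij}|\le N_\tau$ are used. Then, since $\{\,|\sum_\alpha c_\alpha e_\alpha|>\varepsilon\,\}\subseteq\bigcup_\alpha\{\,|e_\alpha|>\varepsilon/\sum_\beta|c_\beta|\,\}\subseteq\bigcup_\alpha\{|e_\alpha|>m\varepsilon\}$, a union bound over the at most $s$ terms together with the induction hypothesis and the monotonicity of $h^{(l-1)}$ yields $\mathbb{P}(|\hat{\mathrm{d}}(x_i,x_{\mathrm{new}})-\mathrm{d}(x_i,x_{\mathrm{new}})|>\varepsilon)\le s\,h^{(l-1)}(m\varepsilon)$. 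The recursion closes because $s\,h^{(l-1)}(m\varepsilon)=s\cdot s^{l-1}f(m^{l-1}\cdot m\varepsilon)=s^l f(m^l\varepsilon)=h^{(l)}(\varepsilon)$, exactly matching definition \eqref{eqn:hl}.

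The main obstacle is the constant bookkeeping in the inductive step, i.e.\ verifying that the amplification factor is $9/2$ and the term count is $s$. The delicate part is the \eqref{distup2} branch: because the non-child distances are defined through the already-perturbed child estimates $\hat{\mathrm{d}}(x_k,x_{\mathrm{new}})$, one must unfold \eqref{distup1} inside \eqref{distup2} and track how the coefficient $3/2$ carried by each child estimate composes with the outer averaging weights, as well as count the expanded set of base-level terms (each child estimate expands into $O(d_{\max}N_\tau)$ terms). Getting these two constants right---rather than the union-bound/induction skeleton, which is routine---is where the work lies, and it is what pins down the per-layer blow-up $1/m=9/2$ that ultimately makes \ac{rg} exponential in $\mathrm{Diam}(\mathbb{T})$.
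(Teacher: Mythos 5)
Your proposal matches the paper's proof essentially step for step: the same base case via Proposition~\ref{distconcen} with $t_1=t_2=\varepsilon/(2\lambda)$ and \eqref{eq:simi}, the same induction using monotonicity of $h^{(l)}$ in $l$ so that all errors among $\Gamma^{l}$ obey the $h^{(l-1)}$ tail, and the same worst-case bookkeeping from unfolding \eqref{distup1} inside \eqref{distup2}, yielding the contraction $m=2/9$ and term count $s=d_{\max}^{2}+2d_{\max}^{3}(1+2N_{\tau})$ so that the union bound closes as $s\,h^{(l-1)}(m\varepsilon)=h^{(l)}(\varepsilon)$. The only (immaterial) organizational difference is that the paper splits thresholds branch by branch ($2\varepsilon/3$ for \eqref{distup1}; $\varepsilon/2$ and $\varepsilon/3$ for \eqref{distup2}, composing to $2\varepsilon/9$ in the hidden-node branch) and takes the worst case, whereas you bound the aggregate absolute coefficient mass by $9/2$ and apply the uniform threshold $m\varepsilon$ --- an equivalent device whose claimed constants do check out.
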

We note that Proposition \ref{errorpropg} demonstrates that the coefficient of exponential terms in \eqref{expotail} grow exponentially with increasing layers (i.e., $m^{l}$ and $m^{2l}$ in \eqref{eqn:hl}), which requires a commensurately large number of samples to control 
the tail probabilities. 
\begin{theorem}\label{theo:rrgsamplecomp}
	Under Assumptions \ref{assupleng}--\ref{assupdist},   \ac{rrg}   learns the correct latent tree with probability  $1-\eta$ if
	\begin{align}\label{eqn:sampcomplex}
		n_{2}=\tilde{\Omega}\Big(\frac{l_{\max}^{4}e^{2\rho_{\max}/l_{\max}} \kappa^{2}}{\delta_{\min}^{2/l_{\max}}\rho_{\min}^{2}}\big(\frac{9}{2}\big)^{2L_{\mathrm{R}}}\log\frac{|\mathcal{V}_{\mathrm{obs}}|^{3}}{\eta}\Big)\quad \text{and}\quad n_{1}=O\Big(\frac{\sqrt{n_{2}}}{\log n_{2}}\Big),
	\end{align}
	where $L_{\mathrm{R}}$ is the number of iterations of \ac{rrg} needed to construct the tree.
\end{theorem}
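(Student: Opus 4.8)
The plan is to decouple a deterministic ``correct-identification'' statement from the probabilistic concentration already supplied by Proposition~\ref{errorpropg}, and then to invert the resulting tail bound for $n_2$ and $n_1$. Fix a tolerance $\varepsilon = c_0\rho_{\min}$ for a suitable small absolute constant $c_0>0$, and let $\mathcal{G}$ be the event that every information-distance estimate used by \ac{rrg} is within $\varepsilon$ of its nominal value: this includes the estimates among observed nodes (layer $0$, governed by Proposition~\ref{distconcen}) and all estimates $\hat{\mathrm{d}}(x_i,x_{\mathrm{new}})$ to hidden nodes created at layers $l=1,\dots,L_{\mathrm{R}}$ via \eqref{distup1}--\eqref{distup2}. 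The first step is purely deterministic: I would show that on $\mathcal{G}$, \ac{rrg} reconstructs $\mathbb{T}$ exactly. This rests on Lemma~\ref{lem:identify}. In the parent--child regime one has $|\Phi_{ijk}|=\mathrm{d}(x_i,x_j)$ for all admissible $k$, whereas in the sibling regime $\Phi_{ijk}$ is constant in $k$ but separated from $\mathrm{d}(x_i,x_j)$ by the two sibling legs, i.e.\ by at least $2\rho_{\min}$ (each leg has length $\ge\rho_{\min}$ under Assumption~\ref{assupdist}). Hence once $c_0$ is a small enough constant, the empirical statistics $\hat\Phi_{ijk}$ and the comparisons defining $\mathcal{K}_{ij}$ with threshold $\tau$ fall on the correct side of their cutoffs, so each iteration identifies the correct leaf, parent, and sibling relationships. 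An induction over the $L_{\mathrm{R}}$ layers --- the structure extracted at layer $l$ depends only on distances that are accurate to within $\varepsilon$ on $\mathcal{G}$ --- then yields exact recovery.

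The second step bounds $\mathbb{P}(\mathcal{G}^c)$. By Proposition~\ref{errorpropg}, any distance estimate attached to a layer-$l$ hidden node deviates from its nominal value by more than $\varepsilon$ with probability at most $h^{(l)}(\varepsilon)=s^{l}\bigl(ae^{-wm^{l}\varepsilon}+be^{-um^{2l}\varepsilon^2}\bigr)$, with $m=2/9$. Because $h^{(l)}(\varepsilon)$ is increasing in $l$ --- the prefactor $s^{l}$ grows while the decay rates $m^{l}$ and $m^{2l}$ weaken --- the deepest layer $l=L_{\mathrm{R}}$ is the binding one. Taking a union bound over the $O(|\mathcal{V}_{\mathrm{obs}}|^3)$ triples $(x_i,x_j,x_k)$ that enter the identification tests gives $\mathbb{P}(\mathcal{G}^c)\lesssim|\mathcal{V}_{\mathrm{obs}}|^3\,h^{(L_{\mathrm{R}})}(\varepsilon)$, and I would demand this be at most $\eta$.

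It remains to invert $|\mathcal{V}_{\mathrm{obs}}|^3\,h^{(L_{\mathrm{R}})}(\varepsilon)\le\eta$, treating its two exponential terms separately. The Gaussian term $s^{L_{\mathrm{R}}}be^{-um^{2L_{\mathrm{R}}}\varepsilon^2}$ requires $u\gtrsim(9/2)^{2L_{\mathrm{R}}}\varepsilon^{-2}\log\bigl(s^{L_{\mathrm{R}}}|\mathcal{V}_{\mathrm{obs}}|^3/\eta\bigr)$; substituting $u=cn_2/(4\lambda^2\kappa^2)$, $\varepsilon\asymp\rho_{\min}$ and $\lambda^2=4l_{\max}^4e^{2\rho_{\max}/l_{\max}}/\delta_{\min}^{2/l_{\max}}$ reproduces exactly the stated lower bound on $n_2$, with the $\log(|\mathcal{V}_{\mathrm{obs}}|^3/\eta)$ factor coming from the union bound. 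The corruption term $s^{L_{\mathrm{R}}}ae^{-wm^{L_{\mathrm{R}}}\varepsilon}$ instead constrains $w\gtrsim(9/2)^{L_{\mathrm{R}}}\varepsilon^{-1}\log(\cdot)$, the same logarithmic factor as above; writing $w=3n_2/(32\lambda\kappa n_1)$ and using that the $n_2$ scaling forces $(9/2)^{L_{\mathrm{R}}}\asymp\sqrt{n_2}$ and hence $L_{\mathrm{R}}=\Theta(\log n_2)$ converts this into $n_1=O(\sqrt{n_2}/\log n_2)$. I expect the deterministic correctness step to be the main obstacle: one must verify that the additive errors threaded through the averaged updates \eqref{distup1}--\eqref{distup2} (the source of the factor $s=d_{\max}^2+2d_{\max}^3(1+2N_\tau)$ and the contraction $m=2/9$ in $h^{(l)}$) still leave every test on the correct side of its threshold, and that the required tolerance stays $\Theta(\rho_{\min})$ \emph{uniformly} across layers, so that the single worst-layer bound $h^{(L_{\mathrm{R}})}$ indeed controls the entire recursion.
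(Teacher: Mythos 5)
Your proposal is correct and follows essentially the same route as the paper's own proof: the paper likewise reduces every misclassification event (families, non-families, parents, siblings) to the event that some distance estimate deviates by more than a threshold of order $\rho_{\min}$ (it uses $\varepsilon=\rho_{\min}/2$, with margins $(2\rho_{\min}-\varepsilon)/4$ and $\varepsilon/4$), applies Proposition~\ref{errorpropg} layer by layer, union-bounds to obtain an $O(|\mathcal{V}_{\mathrm{obs}}|^{3})\,h^{(L_{\mathrm{R}}-1)}$ failure probability, and inverts the two exponential terms exactly as you do, including the existence-of-$C'$ argument giving $n_{1}=O(\sqrt{n_{2}}/\log n_{2})$. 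The only differences are bookkeeping: the paper controls the sum over layers via a geometric-ratio argument and counts pairs with the active-set cardinality bounds (Proposition~\ref{propactset}, Lemma~\ref{optlem}) rather than invoking monotonicity of $h^{(l)}$ and a crude triple count, but both yield the same $\log(|\mathcal{V}_{\mathrm{obs}}|^{3}/\eta)$ factor up to constants.
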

Theorem \ref{theo:rrgsamplecomp} indicates that the number of clean samples $n_{2}$ required by \ac{rrg} to learn the correct structure  grows exponentially with the number of iterations $L_{\mathrm{R}}$. Specifically, for the full $m$-tree illustrated in Fig.~\ref{fig:comp},  $n_{2}$ is exponential in the depth of the tree with high probability for structure learning to succeed. The sample complexity of \ac{rrg} depends on $e^{2\rho_{\max}/l_{\max}}$, and the exponential relationship with 
$\rho_{\max}$ will be shown to be unavoidable in view of our impossibility result in Theorem~\ref{theo:converse}. Huang et al.~\cite[Lemma~7.2 ]{huang2020guaranteed} also derived a sample complexity result for learning latent trees but the algorithm is based on~\cite{anandkumar2011spectral} instead of \ac{rg}. \ac{rrg} is able to tolerate $n_{1}=O(\sqrt{n_{2}}/\log n_{2})$ corruptions. 
This  tolerance level originates from the properties of the truncated inner product; similar tolerances will also be seen for the sample complexities of  subsequent algorithms. We expect this is also the case for \cite{huang2020guaranteed}, which is based on \cite{anandkumar2011spectral}, though we have not shown this formally. In addition,  the sample complexity  is applicable to a wide class of graphical models that satisfies the Assumptions~\ref{assupleng} to~\ref{assupdist}, while the sample complexity result \cite[Theorem 11]{choi2011learning}, which hides the dependencies on the parameters,  only holds for a limited class of graphical models whose effective depths (the maximal length of paths between hidden nodes and their closest observed nodes) are bounded in $|\mathcal{V}_{\mathrm{obs}}|$.

\subsection{Robust Neighbor Joining and Spectral Neighbor Joining algorithms}\label{subsec:snjnj}
The \ac{nj} algorithm \cite{saitou1987neighbor} also makes use of additive distances to identify the existence of hidden nodes. To robustify the \ac{nj} 
algorithm, we adopt robust estimates of information distances as the additive distances in the so-called \ac{rnj} algorithm. We first recap a  result by Atteson~\cite{atteson1999performance}. 
\begin{proposition}\label{prop:njsuff}
	If all the nodes have exactly two children, \ac{nj}  will output the correct latent tree if
	\begin{align}
		\max_{x_{i},x_{j}\in\mathcal{V}_{\mathrm{obs}}} \big|\hat{\mathrm{d}}(x_{i},x_{j})-\mathrm{d}(x_{i},x_{j})\big|\leq {\rho_{\min}}/{2}.
	\end{align}
\end{proposition}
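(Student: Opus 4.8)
The plan is to reduce the statement to Atteson's $\ell^\infty$-radius characterization of \ac{nj} and then supply the one structural fact that connects the present hypothesis to Atteson's edge-weight condition. Recall that \ac{nj} operates on the active distance matrix by forming, at a stage with $r$ active nodes, the $Q$-criterion $Q(x_i,x_j)=(r-2)\mathrm{d}(x_i,x_j)-\sum_{k}\mathrm{d}(x_i,x_k)-\sum_{k}\mathrm{d}(x_j,x_k)$, joining the minimizing pair into a new node $u$, and updating distances by the averaging rule $\mathrm{d}(u,x_k)=\tfrac12(\mathrm{d}(x_i,x_k)+\mathrm{d}(x_j,x_k)-\mathrm{d}(x_i,x_j))$. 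By Proposition~\ref{prop:add}, the true information distance restricted to $\mathcal{V}_{\mathrm{obs}}$ is an additive tree metric, so the internal (hidden) nodes are recoverable from the leaf distances. The first step is to observe that the minimum edge weight $w_{\min}$ of $\mathbb{T}$ satisfies $w_{\min}\ge\rho_{\min}$: by additivity the distance of any non-adjacent pair is a sum of at least two edge weights and hence exceeds the smallest edge weight, so the overall minimum pairwise distance is attained at an edge, and Assumption~\ref{assupdist} gives $w_{\min}=\min_{\text{pairs}}\mathrm{d}\ge\rho_{\min}$. Thus the hypothesis $\|\hat{\mathrm{d}}-\mathrm{d}\|_\infty\le\rho_{\min}/2\le w_{\min}/2$ places $\hat{\mathrm{d}}$ inside the ball for which Atteson guarantees topological correctness.

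It then remains to justify Atteson's guarantee. The core is to show that on the exact tree metric the $Q$-criterion strictly prefers true cherries (sibling leaves) over every other pair, with a margin governed by the edge weights, and that this preference survives an $\ell^\infty$ perturbation of size below $w_{\min}/2$. Concretely, I would first evaluate $Q(x_i,x_j)$ for a true cherry with parent $v$; using additivity the cross terms telescope and $Q$ collapses to an expression depending only on $\mathrm{d}(x_i,x_j)$ and the distances out of $v$, which one compares against the $Q$-value of an arbitrary non-cherry pair to extract a strictly positive gap. Second, since each $Q$-value is a fixed linear functional of the distance entries with bounded coefficients, a per-entry perturbation of magnitude at most $\varepsilon$ moves $Q$ by a controlled amount, and Atteson's calibration is exactly such that $\varepsilon<w_{\min}/2$ keeps the perturbed minimizer a true cherry.

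The main obstacle is not this single-step identification but the propagation of error through the successive \ac{nj} reductions. The update rule combines three perturbed entries, so a naive bound inflates the per-entry error to $\tfrac32\varepsilon$ after one contraction and geometrically thereafter, which would exhaust the $w_{\min}/2$ budget within a few iterations. The resolution—the delicate part I would import from Atteson—is to avoid this crude induction and instead argue directly that at every stage the pair minimizing $Q$ on the reduced perturbed matrix is a cherry of the correspondingly reduced true tree. Two facts make this work: the reduced tree's minimum edge weight does not decrease ($w'_{\min}\ge w_{\min}$, since every surviving edge was already present in $\mathbb{T}$), and it is the structured cancellations in the $Q$-\emph{differences}, rather than the raw $\ell^\infty$ error of the reduced matrix, that govern cherry selection. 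Carrying this per-step argument through all of the contractions yields the correct topology, which together with the first paragraph establishes the proposition; since the robustness statement is precisely Atteson's theorem, I would cite~\cite{atteson1999performance} for the per-step analysis and contribute only the $w_{\min}\ge\rho_{\min}$ reduction.
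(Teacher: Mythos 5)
Your proposal is correct and matches the paper's treatment: the paper states Proposition~\ref{prop:njsuff} as a recap of Atteson's $\ell^\infty$-radius guarantee~\cite{atteson1999performance} with no independent proof, relying implicitly on the fact that Assumption~\ref{assupdist} bounds every pairwise distance---in particular every edge weight, since edges are node pairs in $\mathcal{V}$---below by $\rho_{\min}$, so $w_{\min}\ge\rho_{\min}$ and the hypothesis places $\hat{\mathrm{d}}$ within Atteson's radius $w_{\min}/2$. Your only substantive addition is making this bridge explicit (your detour through sums of edge weights is unnecessary, as the bound $w_{\min}\ge\rho_{\min}$ follows directly from Assumption~\ref{assupdist} applied to adjacent pairs), and your deferral of the per-step perturbation analysis to Atteson is exactly what the paper does.
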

Unlike \ac{rg}, \ac{nj} does not identify the parent relationship among nodes, so it is only applicable to binary trees in which each node has at most two children. 
\begin{theorem}\label{theo:rnjsamplecomp}
	If Assumptions \ref{assupleng}--\ref{assupdist} hold and all the nodes have exactly two children, \ac{rnj} constructs the correct latent tree with probability at least $1-\eta$ if
	\begin{align}
		n_{2}=\Omega\Big(\frac{l_{\max}^{4}e^{2\rho_{\max}/l_{\max}}\kappa^{2}}{\delta_{\min}^{2/l_{\max}}\rho_{\min}^{2}}\log\frac{|\mathcal{V}_{\mathrm{obs}}|^{2}}{\eta}\Big)\quad \text{and}\quad n_{1}=O\Big(\frac{\sqrt{n_{2}}}{\log n_{2}}\Big).
	\end{align}
\end{theorem}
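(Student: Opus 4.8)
The plan is to reduce the theorem to Atteson's deterministic guarantee (Proposition~\ref{prop:njsuff}) and then control the random estimation error using the truncated-inner-product concentration bound of Proposition~\ref{distconcen}. Proposition~\ref{prop:njsuff} states that, under the binary-tree hypothesis (every node has exactly two children, which is exactly the structural assumption of the theorem), \ac{rnj} returns the correct tree whenever the \emph{uniform} deviation $\max_{x_i,x_j\in\mathcal{V}_{\mathrm{obs}}}|\hat{\mathrm{d}}(x_i,x_j)-\mathrm{d}(x_i,x_j)|$ is at most $\rho_{\min}/2$. The crucial structural observation — and the reason \ac{rnj} avoids the $(9/2)^{2L_{\mathrm{R}}}$ factor appearing in Theorem~\ref{theo:rrgsamplecomp} — is that \ac{nj} only ever uses the information distances \emph{between observed nodes}, so there is no recursive propagation of errors to hidden-node distances. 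Consequently I do not need the layered recursion of Proposition~\ref{errorpropg}; it suffices to apply the single-pair bound of Proposition~\ref{distconcen} directly and then take a union bound.

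Thus the first step is to fix a pair $x_i,x_j\in\mathcal{V}_{\mathrm{obs}}$ and apply \eqref{eqn:tail} with deviation parameters $t_1,t_2$ chosen so that $\frac{2l_{\max}^2}{\gamma_{\min}}(t_1+t_2)\le\rho_{\min}/2$, guaranteeing that the bad event for \ac{rnj} is contained in the tail event of \eqref{eqn:tail}. I would set both $t_1$ and $t_2$ at the statistical scale $\Theta\big(\kappa\sqrt{\log(|\mathcal{V}_{\mathrm{obs}}|^2/\eta)/n_2}\big)$ and balance each exponential tail in \eqref{eqn:tail} against the $\binom{|\mathcal{V}_{\mathrm{obs}}|}{2}=O(|\mathcal{V}_{\mathrm{obs}}|^2)$ terms of the union bound so that the total failure probability is at most $\eta$. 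The clean-sample (Gaussian) term $l_{\max}^2 e^{-cn_2 t_2^2/\kappa^2}$ dictates the sample complexity: requiring it to be $\le\eta/|\mathcal{V}_{\mathrm{obs}}|^2$ while keeping $\frac{2l_{\max}^2}{\gamma_{\min}}t_2\le\rho_{\min}/4$ forces $n_2=\Omega\big(\frac{l_{\max}^4\kappa^2}{\gamma_{\min}^2\rho_{\min}^2}\log\frac{|\mathcal{V}_{\mathrm{obs}}|^2}{\eta}\big)$. The corruption term $2l_{\max}^2 e^{-3n_2 t_1/(16\kappa n_1)}$ then dictates the tolerance: requiring it to be small at the same deviation scale $t_1=\Theta(1/\sqrt{n_2})$ yields $n_1=O(\sqrt{n_2}/\log n_2)$, the tolerance inherited from the truncated inner product of \cite{chen2013robust}.

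The final step is to convert $1/\gamma_{\min}$ into the parameters displayed in the theorem. Using the relation \eqref{eq:simi}, namely $\gamma_{\min}e^{\rho_{\max}/l_{\max}}\ge\delta_{\min}^{1/l_{\max}}$, I substitute $1/\gamma_{\min}^2\le e^{2\rho_{\max}/l_{\max}}/\delta_{\min}^{2/l_{\max}}$ into the sample-complexity bound, producing the stated $e^{2\rho_{\max}/l_{\max}}/\delta_{\min}^{2/l_{\max}}$ factor; the exponent $|\mathcal{V}_{\mathrm{obs}}|^2$ inside the logarithm comes directly from the union bound over pairs (the constants $l_{\max}^2$ and the factor $\tfrac12$ from $\binom{|\mathcal{V}_{\mathrm{obs}}|}{2}$ being absorbed).

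I expect the main obstacle to be bookkeeping rather than conceptual: choosing $t_1$ and $t_2$ so that the two exponential tails of \eqref{eqn:tail} are simultaneously controlled while respecting Atteson's budget $\rho_{\min}/2$, and tracking the absolute constants $l_{\max}$, $\kappa$, $\gamma_{\min}$ through the perturbation of singular values and determinants that links $\|\hat{\mathbf{\Sigma}}_{ij}-\mathbf{\Sigma}_{ij}\|$ to $|\hat{\mathrm{d}}(x_i,x_j)-\mathrm{d}(x_i,x_j)|$ (already packaged inside Proposition~\ref{distconcen}). Since no cross-layer error propagation is involved, once these constants are in hand the theorem follows as a direct corollary of Propositions~\ref{prop:njsuff} and~\ref{distconcen}, which also explains why its sample complexity is strictly better than that of \ac{rrg}.
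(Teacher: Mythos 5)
Your proposal is correct and follows essentially the same route as the paper's own proof (Theorem~\ref{theo:rnjsamplecomp2} in the appendix): reduce to Atteson's deterministic condition in Proposition~\ref{prop:njsuff}, apply the single-pair truncated-inner-product concentration of Proposition~\ref{distconcen} with a union bound over the $O(|\mathcal{V}_{\mathrm{obs}}|^{2})$ observed pairs, split the budget $\rho_{\min}/2$ between the corruption and clean-sample tails, substitute $1/\gamma_{\min}\le e^{\rho_{\max}/l_{\max}}/\delta_{\min}^{1/l_{\max}}$ via~\eqref{eq:simi}, and convert the resulting $n_{2}/n_{1}$ ratio condition into $n_{1}=O(\sqrt{n_{2}}/\log n_{2})$ exactly as in the \ac{rrg} proof. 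Your structural observation that no layered error propagation (Proposition~\ref{errorpropg}) is needed is precisely why the paper's argument avoids the $(9/2)^{2L_{\mathrm{R}}}$ factor as well.
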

Theorem \ref{theo:rnjsamplecomp} indicates that the sample complexity of \ac{rnj} grows as $\log |\mathcal{V}_{\mathrm{obs}}|$, which is much better than \ac{rrg}. Similarly to RRG, 
the sample complexity has an exponential dependence on $\rho_{\max}$. 

In recent years, several variants of \ac{nj} algorithm have been proposed. The additivity of information distances results in certain properties of the rank of the matrix $\mathbf{R}\in \mathbb{R}^{|\mathcal{V}_{\mathrm{obs}}|\times|\mathcal{V}_{\mathrm{obs}}|}$, where 
$\mathbf{R}(i,j)=\exp(-\mathrm{d}(x_{i},x_{j}))$ for all $x_{i},x_{j}\in \mathcal{V}_{\mathrm{obs}}$. Jaffe \emph{et al.} \cite{jaffe2021spectral} proposed \ac{snj} which utilizes the rank of $\mathbf{R}$ to deduce the 
sibling relationships among nodes. We robustify the \ac{snj} algorithm by implementing the robust estimation of information distances, as shown in Algorithm \ref{algo:rsnj}.

Although   \ac{snj}   was   designed for  discrete random variables, the additivity of the information distance proved in Proposition \ref{prop:add} guarantees the consistency of \ac{rsnj} for \ac{ggm}s with vector variables. 
A sufficient condition for \ac{rsnj} to learn the correct tree can be generalized from \cite{jaffe2021spectral}.
\begin{proposition}\label{prop:snjsuff}
	If Assumptions \ref{assupleng}--\ref{assupdist} hold and all the nodes have exactly two children, a sufficient condition for \ac{rsnj} to recover 
	the correct tree from $\hat{\mathbf{R}}$ is
	\begin{align}
		\|\hat{\mathbf{R}}-\mathbf{R}\|_{2}\leq g(|\mathcal{V}_{\mathrm{obs}}|,\rho_{\min},\rho_{\max}),
	\end{align}
	where
	\begin{align}
		g(x,\rho_{\min},\rho_{\max})&=\left\{
			\begin{array}{lr}
			\frac{1}{2}(2e^{-\rho_{\max}})^{\log_{2}(x/2)}e^{-\rho_{\max}}(1-e^{-2\rho_{\min}}),&  \quad e^{-2\rho_{\max}}\leq 0.5\\
			e^{-3\rho_{\max}}(1-e^{-2\rho_{\min}}),& \quad e^{-2\rho_{\max}}> 0.5
			\end{array}
		\right. .\nonumber
	\end{align}
\end{proposition}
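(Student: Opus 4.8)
The plan is to reduce the statement to a singular-value perturbation argument built on top of the exact rank-one structure that additivity (Proposition \ref{prop:add}) confers on $\mathbf{R}$, and then to follow the recursive accounting of Jaffe \emph{et al.}~\cite{jaffe2021spectral}, with the information distance of Definition \ref{def:dist} in place of their scalar similarity. First I would record the algebraic fact that drives \ac{snj}: if a subset $A\subseteq \mathcal{V}_{\mathrm{obs}}$ of leaves is separated from its complement by a single edge $(u,v)$ of $\mathbb{T}$, then for $x_i\in A$ and $x_j\in A^{c}$ additivity gives $\mathrm{d}(x_i,x_j)=\mathrm{d}(x_i,u)+\mathrm{d}(u,v)+\mathrm{d}(v,x_j)$, so that $\mathbf{R}(i,j)=e^{-\mathrm{d}(x_i,u)}\,e^{-\mathrm{d}(u,v)}\,e^{-\mathrm{d}(v,x_j)}$ factorizes and the block $\mathbf{R}(A,A^{c})$ has rank exactly one. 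Conversely, a pair of observed nodes are siblings precisely when merging them is consistent with all such rank-one splits, so that the \ac{snj} score of a true sibling pair---the second singular value of the relevant submatrix---vanishes in the noiseless case.

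Second I would quantify the \emph{separation margin}: for any candidate pair that is \emph{not} a sibling pair, the corresponding submatrix has rank at least two, and I would lower bound its second singular value. Here the two distributional extremes enter: the entries of $\mathbf{R}$ lie in $[e^{-\rho_{\max}},e^{-\rho_{\min}}]$ by Assumption \ref{assupdist}, so a spurious merge forces a genuinely two-dimensional column space whose ``width'' is controlled from below by a factor of the form $e^{-\rho_{\max}}(1-e^{-2\rho_{\min}})$; this is exactly the $\rho_{\min},\rho_{\max}$ dependence visible in $g$. Third, by Weyl's inequality every singular value of a submatrix of $\hat{\mathbf{R}}$ differs from the corresponding singular value of the same submatrix of $\mathbf{R}$ by at most $\|\hat{\mathbf{R}}-\mathbf{R}\|_{2}$. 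Hence if $\|\hat{\mathbf{R}}-\mathbf{R}\|_{2}$ is smaller than half the separation margin at every recursion step, the minimizer of the empirical score is still a true sibling pair, and \ac{rsnj} makes no mistake.

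The remaining---and hardest---step is the \emph{error propagation through the recursion}. After a sibling pair is merged, \ac{snj} introduces a new node and induces similarities between it and the surviving nodes from the old entries of $\hat{\mathbf{R}}$; the perturbation in these induced similarities can be amplified by a factor proportional to $2e^{-\rho_{\max}}$ at each level, because the induced similarity is essentially a product/ratio of existing entries of magnitude $e^{-\rho_{\max}}$. For a binary tree on $|\mathcal{V}_{\mathrm{obs}}|$ leaves there are at most $\log_{2}(|\mathcal{V}_{\mathrm{obs}}|/2)$ such merges along any root-to-leaf chain, so the worst-case amplification compounds to $(2e^{-\rho_{\max}})^{\log_{2}(|\mathcal{V}_{\mathrm{obs}}|/2)}$. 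I would therefore require the initial perturbation to be small enough that, after this compounding, the margin condition of the previous paragraph still holds at the deepest level; solving this backward recursion for the admissible $\|\hat{\mathbf{R}}-\mathbf{R}\|_{2}$ yields the first branch of $g$.

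The dichotomy in the definition of $g$ then comes from whether the per-level factor $2e^{-\rho_{\max}}$ forces a size-dependent penalty: when $e^{-2\rho_{\max}}>1/2$ the similarities are large enough that the compounding never worsens the bound and the simpler, size-independent threshold $e^{-3\rho_{\max}}(1-e^{-2\rho_{\min}})$ suffices, whereas when $e^{-2\rho_{\max}}\le 1/2$ the compounding factor must be carried, producing the logarithmic-in-$|\mathcal{V}_{\mathrm{obs}}|$ exponent. The main obstacle is making this propagation bookkeeping precise---tracking how a spectral-norm perturbation of $\hat{\mathbf{R}}$ transfers into perturbations of the induced sub-similarity matrices at each level without conceding more than the claimed factor. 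This is where I would import the recursive estimates of Jaffe \emph{et al.}\ essentially verbatim, verifying only that each of their scalar-similarity steps survives the replacement by the additive information distance of Definition \ref{def:dist}, which is legitimate precisely because Proposition \ref{prop:add} guarantees the same additive, rank-one split structure in the vector-variable \ac{ggm} setting.
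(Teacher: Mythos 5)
Your overall skeleton is sound and, in fact, mirrors the only justification the paper itself offers: the paper gives no standalone proof of Proposition~\ref{prop:snjsuff}, stating instead that the condition ``can be generalized from~\cite{jaffe2021spectral}'' once the additivity of the vector information distance is established, which is exactly the content of Proposition~\ref{prop:add}. Your first two steps --- the rank-one structure of $\mathbf{R}(A,A^{c})$ across any edge-defined split, the lower bound on $\sigma_{2}$ of submatrices corresponding to incorrect merges, and the transfer of the spectral-norm perturbation $\|\hat{\mathbf{R}}-\mathbf{R}\|_{2}$ to every score via Weyl's inequality --- are precisely the ingredients of the Jaffe et al.\ argument that the paper imports, so up to that point you and the paper travel the same road.

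The genuine gap is your third paragraph, the ``error propagation through the recursion,'' which rests on a mischaracterization of \ac{snj} and as stated would fail. Unlike \ac{nj}, \ac{snj} never introduces a new node nor induces new similarities from old entries of $\hat{\mathbf{R}}$: as Algorithm~\ref{algo:rsnj} makes explicit, it maintains a partition $\{B_{i}\}$ of the leaves and at every step evaluates $\sigma_{2}(\hat{\mathbf{R}}^{B_{i}\cup B_{j}})$ on submatrices of the \emph{original} matrix $\hat{\mathbf{R}}$. Since the spectral norm of a submatrix never exceeds that of the full matrix, every score at every merge level is perturbed by at most $\|\hat{\mathbf{R}}-\mathbf{R}\|_{2}$, uniformly --- there is no compounding of noise by a factor $2e^{-\rho_{\max}}$ per level, and no backward recursion to solve; this absence of error propagation is precisely the advantage of \ac{snj} over distance-update schemes. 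The factor $(2e^{-\rho_{\max}})^{\log_{2}(|\mathcal{V}_{\mathrm{obs}}|/2)}$ in $g$ is therefore not a noise-amplification penalty but part of the \emph{deterministic margin}: the lower bound on $\sigma_{2}(\mathbf{R}^{A})$ for non-clan subsets shrinks with tree depth because the entries of the rank-one factors are products of up to $\log_{2}(|\mathcal{V}_{\mathrm{obs}}|/2)$ per-edge terms, each as small as $e^{-\rho_{\max}}$ in a binary tree on $|\mathcal{V}_{\mathrm{obs}}|$ leaves. The dichotomy in $g$ is likewise a statement about when this depth-dependent product in the margin can be discarded (when $e^{-2\rho_{\max}}>1/2$ the per-level factor exceeds one), not about whether ``compounding worsens the bound.'' Consequently the quantitatively hard step is the one you skipped --- lower-bounding $\sigma_{2}(\mathbf{R}^{A})$ over the non-clan subsets \ac{rsnj} can encounter, which is where both branches of $g$ actually arise --- and since you defer that wholesale to Jaffe et al., your argument, once the incorrect propagation step is excised, reduces to the paper's own citation-level justification rather than an independent proof.
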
 
Similar with \ac{rnj},  
\ac{rsnj} also does not identify the parent relationship between nodes, so it   only applies to binary trees. To state the next result succinctly, we assume that $\rho_{\max}\ge \frac{1}{2}\log 2$; this is the regime of interest because we consider large trees which implies that $\rho_{\max}$ is typically large.
\begin{theorem}\label{theo:rsnjsamplecomp}
	If Assumptions \ref{assupleng}--\ref{assupdist} hold, $\rho_{\max}\ge \frac{1}{2}\log 2$, and all the nodes have exactly two children, \ac{rsnj} reconstructs the correct latent tree with probability at least $1-\eta$ if
	\begin{align}
		n_{2}=\Omega\Big( \frac{l_{\max}^{4}e^{2\rho_{\max}(1/l_{\max}+\log_{2}(|\mathcal{V}_{\mathrm{obs}}|/2)+1)}\kappa^{2}  }{\delta_{\min}^{2/l_{\max}}e^{2\rho_{\min}}}\log\frac{|\mathcal{V}_{\mathrm{obs}}|^{2}}{\eta}\Big)\quad \text{and}\quad n_{1}=O\Big(\frac{\sqrt{n_{2}}}{\log n_{2}}\Big).
	\end{align}
\end{theorem}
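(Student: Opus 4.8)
The plan is to reduce the theorem to the deterministic sufficient condition of Proposition~\ref{prop:snjsuff} and then control the perturbation $\|\hat{\mathbf{R}}-\mathbf{R}\|_2$ through the concentration of individual information distances in Proposition~\ref{distconcen}. Write $N=|\mathcal{V}_{\mathrm{obs}}|$. Since $\mathbf{R}$ and $\hat{\mathbf{R}}$ are $N\times N$ matrices, I would first pass from the spectral norm to the entrywise error via $\|\hat{\mathbf{R}}-\mathbf{R}\|_2\le\|\hat{\mathbf{R}}-\mathbf{R}\|_F\le N\max_{i,j}|\hat{R}(i,j)-R(i,j)|$, so that it suffices to make every entry accurate to within $g(N,\rho_{\min},\rho_{\max})/N$. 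Because the hypothesis $\rho_{\max}\ge\frac12\log 2$ is exactly $e^{-2\rho_{\max}}\le 0.5$, only the first branch of $g$ is active; simplifying $(2e^{-\rho_{\max}})^{\log_2(N/2)}=\frac{N}{2}e^{-\rho_{\max}\log_2(N/2)}$ gives $g/N=\frac14 e^{-\rho_{\max}(\log_2(N/2)+1)}(1-e^{-2\rho_{\min}})$.

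Next I would convert the entrywise requirement on $\mathbf{R}$ into a requirement on the distance estimates. Since $R(i,j)=e^{-\mathrm{d}(x_i,x_j)}$ and $\hat R(i,j)=e^{-\hat{\mathrm{d}}(x_i,x_j)}$, the mean value theorem together with $\mathrm{d}(x_i,x_j)\ge\rho_{\min}$ (Assumption~\ref{assupdist}) yields $|\hat R(i,j)-R(i,j)|\le e^{-\rho_{\min}}\big(e^{|\hat{\mathrm{d}}(x_i,x_j)-\mathrm{d}(x_i,x_j)|}-1\big)$. Hence if each distance error is at most a threshold $\varepsilon^\ast\le 1$, then $|\hat R(i,j)-R(i,j)|\lesssim e^{-\rho_{\min}}\varepsilon^\ast$, and demanding $e^{-\rho_{\min}}\varepsilon^\ast\lesssim g/N$ fixes the target accuracy $\varepsilon^\ast\asymp e^{-\rho_{\max}(\log_2(N/2)+1)}\,e^{\rho_{\min}}(1-e^{-2\rho_{\min}})$. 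The factor $e^{\rho_{\min}}(1-e^{-2\rho_{\min}})=e^{\rho_{\min}}-e^{-\rho_{\min}}\asymp e^{\rho_{\min}}$ is what ultimately produces the $e^{2\rho_{\min}}$ in the denominator of the sample complexity.

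Finally I would apply Proposition~\ref{distconcen} to each of the $\binom{N}{2}=O(N^2)$ pairs of observed nodes at error level $\varepsilon^\ast$. Choosing $t_1=t_2=\varepsilon^\ast\gamma_{\min}/(4l_{\max}^2)$ makes the guaranteed deviation equal to $\varepsilon^\ast$; the Gaussian term $l_{\max}^2 e^{-cn_2 t_2^2/\kappa^2}$ then dominates the sample-size requirement, and a union bound over all pairs sets the total failure probability to $\eta$, contributing the $\log(N^2/\eta)$ factor. Substituting the lower bound $\gamma_{\min}\ge\delta_{\min}^{1/l_{\max}}e^{-\rho_{\max}/l_{\max}}$ from~\eqref{eq:simi} (so $1/\gamma_{\min}^2\le e^{2\rho_{\max}/l_{\max}}/\delta_{\min}^{2/l_{\max}}$) and $1/(\varepsilon^\ast)^2\asymp e^{2\rho_{\max}(\log_2(N/2)+1)}/e^{2\rho_{\min}}$ into $n_2\gtrsim \frac{l_{\max}^4\kappa^2}{(\varepsilon^\ast)^2\gamma_{\min}^2}\log(N^2/\eta)$ merges the two $\rho_{\max}$ exponents into $e^{2\rho_{\max}(1/l_{\max}+\log_2(N/2)+1)}$ and reproduces the claimed bound. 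The tolerable corruption level $n_1=O(\sqrt{n_2}/\log n_2)$ comes, exactly as in Theorems~\ref{theo:rrgsamplecomp} and~\ref{theo:rnjsamplecomp}, from forcing the sub-exponential corruption term $2l_{\max}^2 e^{-3n_2 t_1/(16\kappa n_1)}$ to stay below the Gaussian term.

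The step I expect to be most delicate is the bookkeeping in the second paragraph: extracting the $\log_2(N/2)$ exponent from $g$ and correctly pairing the $e^{-\rho_{\min}}$ Lipschitz factor with the $(1-e^{-2\rho_{\min}})$ factor so that the $\rho_{\min}$-dependence collapses to $e^{2\rho_{\min}}$ (absorbing the residual $(1-e^{-2\rho_{\min}})^2\in(0,1]$ into the $\Omega$ constant), while simultaneously verifying the admissibility constraint $t_2<\kappa$ of Proposition~\ref{distconcen} for the chosen $\varepsilon^\ast$. The spectral-to-entrywise reduction and the concentration/union-bound steps are routine, but this conversion is where the nonstandard exponent $\log_2(|\mathcal{V}_{\mathrm{obs}}|/2)$ that distinguishes \ac{rsnj} from \ac{rnj} is generated.
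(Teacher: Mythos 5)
Your proposal is correct and follows essentially the same route as the paper's proof: reduction to Proposition~\ref{prop:snjsuff}, the entrywise bound $\|\hat{\mathbf{R}}-\mathbf{R}\|_{2}\le |\mathcal{V}_{\mathrm{obs}}|\max_{i,j}|\hat{\mathbf{R}}_{ij}-\mathbf{R}_{ij}|$, conversion to distance errors with the $e^{-\rho_{\min}}$ factor (the paper's Proposition~\ref{prop:spectralconcen}), a union bound over $O(|\mathcal{V}_{\mathrm{obs}}|^{2})$ pairs of the tail bound from Proposition~\ref{distconcen}, the same simplification of the first branch of $g$ under $\rho_{\max}\ge\frac{1}{2}\log 2$, and the same treatment of $n_{1}=O(\sqrt{n_{2}}/\log n_{2})$ as in Theorem~\ref{theo:rsnjsamplecomp2}. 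The only deviations are cosmetic (an intermediate Frobenius-norm step, and the mean-value bound $e^{-\rho_{\min}}(e^{|\hat{\mathrm{d}}-\mathrm{d}|}-1)$ in place of the paper's direct Taylor linearization, which if anything handles small $\hat{\mathrm{d}}$ slightly more carefully).
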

Theorem \ref{theo:rsnjsamplecomp} indicates that the sample complexity of \ac{rsnj} grows as $\mathrm{poly}(|\mathcal{V}_{\mathrm{obs}}|)$. Specifically, in the binary tree case, the sample complexity grows exponentially with the depth of the tree. Also, the dependence of sample complexity on $\rho_{\max}$ is exponential, i.e., $O\big(e^{2(1/l_{\max}+\log_{2}(|\mathcal{V}_{\mathrm{obs}}|/2)+1)\rho_{\max}}\big)$, but the coefficient of $\rho_{\max}$ is 
larger than those of \ac{rrg} and \ac{rnj}, which are $O\big(e^{2\rho_{\max}/l_{\max}}\big)$. Compared to the sample complexity of \ac{snj} in \cite{jaffe2021spectral}, the sample complexity of \ac{rsnj} has the same dependence on the number of observed 
nodes $|\mathcal{V}_{\mathrm{obs}}|$, which means that the robustification of \ac{snj} using the truncated inner product  is able to tolerate $O\big(\frac{\sqrt{n_{2}}}{\log n_{2}}\big)$ corruptions.

\subsection{Robust Chow-Liu Recursive Grouping}\label{subsec:rclrg}

In this section, we show that the exponential dependence on $L_{\mathrm{R}}$ in Theorem \ref{theo:rrgsamplecomp} can be provably  mitigated with an accurate initialization of the structure. Different from \ac{rrg}, \ac{rclrg} takes Chow-Liu algorithm as the initialization stage, as shown in Algorithm \ref{algo:rclrg}. 
The Chow-Liu algorithm~\cite{chow1968approximating} learns the maximum likelihood estimate of the tree structure by finding the maximum weight spanning tree of the graph whose edge weights are the mutual information quantities 
between these variables. In the estimation of the hidden tree structure, instead of taking the mutual information as the weights, we find the \ac{mst} of the graph whose weights are information distances, i.e., 
\begin{align}
	\mathrm{MST}(\mathcal{V}_{\mathrm{obs}};\mathbf{D}):=\mathop{\arg\min}_{\mathbb{T} \in \mathcal{T}_{\mathcal{V}_{\mathrm{obs}}}} \ \ \sum_{(x_{i},x_{j})\in \mathbb{T}} \mathrm{d}(x_{i},x_{j}),
\end{align}
where $\mathcal{T}_{\mathcal{V}_{\mathrm{obs}}}$ is the set of all the trees with node set $\mathcal{V}_{\mathrm{obs}}$. To describe the process of finding the \ac{mst}, we recall the definition of the {\em surrogate node} from \cite{choi2011learning}.
\begin{definition}\label{def:surnode}
	Given the latent tree $\mathbb{T}=(\mathcal{V},\mathcal{E})$ and any node $x_{i} \in \mathcal{V}$, the surrogate node  \cite{choi2011learning} of $x_{i}$   is $\mathrm{Sg}(x_{i};\mathbb{T},\mathcal{V}_{\mathrm{obs}})=\mathop{\arg\min}_{x_{j}\in \mathcal{V}_{\mathrm{obs}}} \  \mathrm{d}(x_{i},x_{j}).$ 
\end{definition}
We introduce a new notion of distance that quantifies the sample complexity of \ac{rclrg}.
\begin{definition}\label{def:contdist}
	Given the latent tree $\mathbb{T}=(\mathcal{V},\mathcal{E})$ and any node $x_{i}\in\mathcal{V}$, the \emph{contrastive distance} of $x_{i}$ with respect to 
	$\mathcal{V}_{\mathrm{obs}}$ is defined as
	\begin{align}
		\mathrm{d_{ct}}(x_{i};\mathbb{T},\mathcal{V}_{\mathrm{obs}})=\mathop{\min}_{x_{j}\in \mathcal{V}_{\mathrm{obs}}\backslash\{\mathrm{Sg}(x_{i};\mathbb{T},\mathcal{V}_{\mathrm{obs}})\}} \ \ \mathrm{d}(x_{i},x_{j})-\mathop{\min}_{x_{j}\in \mathcal{V}_{\mathrm{obs}}} \ \ \mathrm{d}(x_{i},x_{j}).
	\end{align}
\end{definition}
Definitions \ref{def:surnode} and \ref{def:contdist} imply that the surrogate node $\mathrm{Sg}(x_{i};\mathbb{T},\mathcal{V}_{\mathrm{obs}})$ of any observed node $x_{i}$ is itself $x_{i}$, 
and its contrastive distance is the information distance between the closest observed node and itself. It is shown that the Chow-Liu tree $\mathrm{MST}(\mathcal{V}_{\mathrm{obs}};\mathbf{D})$ is equal 
to the tree where all the hidden nodes are contracted to their surrogate nodes \cite{choi2011learning}, so it will be difficult to identify the surrogate node of some node if its contrastive 
distance is small. Under this scenario, more accurate estimates of the information distances are required to construct the correct Chow-Liu tree $\mathrm{MST}(\mathcal{V}_{\mathrm{obs}};\mathbf{D})$.
\begin{proposition}\label{prop:corctmst}
	The Chow-Liu tree $\mathrm{MST}(\mathcal{V}_{\mathrm{obs}};\mathbf{\hat{D}})$ is constructed correctly if
	\begin{align}
		\big|\hat{\mathrm{d}}(x_{i},x_{j})-\mathrm{d}(x_{i},x_{j})\big|< {\Delta_{\mathrm{MST}}}/{2} \quad \text{for all}\quad x_{i},x_{j}\in \mathcal{V}_{\mathrm{obs}},
	\end{align}
	where $\Delta_{\mathrm{MST}}:=\mathop{\min}_{x_{j}\in \mathrm{Int}(\mathbb{T})} \, \mathrm{d_{ct}}(x_{j};\mathbb{T},\mathcal{V}_{\mathrm{obs}})$.
\end{proposition}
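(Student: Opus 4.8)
The plan is to reduce the statement to a deterministic robustness property of minimum spanning trees and then to control the required perturbation margin by the contrastive distance. Write $\epsilon := \max_{x_i,x_j\in\mathcal{V}_{\mathrm{obs}}}|\hat{\mathrm{d}}(x_i,x_j)-\mathrm{d}(x_i,x_j)|$, so the hypothesis reads $\epsilon<\Delta_{\mathrm{MST}}/2$, and abbreviate the weight of an edge $e=(x,y)$ by $\mathrm{d}(e):=\mathrm{d}(x,y)$. First I would invoke the standard stability lemma for MSTs: the trees $\mathrm{MST}(\mathcal{V}_{\mathrm{obs}};\mathbf{D})$ and $\mathrm{MST}(\mathcal{V}_{\mathrm{obs}};\hat{\mathbf{D}})$ coincide (and the former is the unique MST for $\hat{\mathbf{D}}$) provided every critical comparison is preserved, namely for each edge $e$ of the Chow-Liu tree $\mathbb{T}_{\mathrm{CL}}=\mathrm{MST}(\mathcal{V}_{\mathrm{obs}};\mathbf{D})$ and each non-tree edge $f$ whose fundamental cycle passes through $e$ one has $\mathrm{d}(f)-\mathrm{d}(e)>2\epsilon$; this is immediate from Kruskal's algorithm, since $\hat{\mathrm{d}}(e)\le\mathrm{d}(e)+\epsilon<\mathrm{d}(f)-\epsilon\le\hat{\mathrm{d}}(f)$ preserves the relative order of every such pair. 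Hence it suffices to establish the geometric margin bound $\mathrm{d}(f)-\mathrm{d}(e)\ge\Delta_{\mathrm{MST}}$ for every such pair, since then $\mathrm{d}(f)-\mathrm{d}(e)\ge\Delta_{\mathrm{MST}}>2\epsilon$.

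For the margin bound I would use the characterization, recalled from \cite{choi2011learning}, that $\mathbb{T}_{\mathrm{CL}}$ is exactly the tree obtained from the latent tree $\mathbb{T}$ by contracting each node to its surrogate $\mathrm{Sg}(\cdot)$. Thus a tree edge $e=(x_c,x_d)$ is the image of some latent edge $(x_p,x_q)\in\mathcal{E}$ with $x_c=\mathrm{Sg}(x_p;\mathbb{T},\mathcal{V}_{\mathrm{obs}})$ and $x_d=\mathrm{Sg}(x_q;\mathbb{T},\mathcal{V}_{\mathrm{obs}})$; deleting $e$ splits $\mathcal{V}_{\mathrm{obs}}$ into the observed nodes $S$ on the $x_p$-side and $\bar S$ on the $x_q$-side of this latent edge. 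For any crossing non-tree edge $f=(x_a,x_b)$ with $x_a\in S$ and $x_b\in\bar S$, the $\mathbb{T}$-path of $f$ traverses $x_p$ then $x_q$, exactly as the $\mathbb{T}$-path of $e$ does, so additivity (Proposition \ref{prop:add}) gives
\[
\mathrm{d}(x_a,x_b)-\mathrm{d}(x_c,x_d)=\big[\mathrm{d}(x_a,x_p)-\mathrm{d}(x_c,x_p)\big]+\big[\mathrm{d}(x_q,x_b)-\mathrm{d}(x_q,x_d)\big],
\]
and both bracketed terms are nonnegative because $x_c$ and $x_d$ are the closest observed nodes to $x_p$ and $x_q$, respectively.

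To extract the strict gap I would note that $f\ne e$ forces at least one endpoint to differ, say $x_a\ne x_c$. Then $x_a$ is an observed node on the $x_p$-side distinct from $\mathrm{Sg}(x_p)=x_c$, so the $x_p$-side contains at least two observed nodes; this rules out $x_p$ being a leaf (a leaf-side of a latent edge contains only the leaf itself), whence $x_p\in\mathrm{Int}(\mathbb{T})$. Definition \ref{def:contdist} then yields $\mathrm{d}(x_a,x_p)-\mathrm{d}(x_c,x_p)\ge\mathrm{d_{ct}}(x_p;\mathbb{T},\mathcal{V}_{\mathrm{obs}})\ge\Delta_{\mathrm{MST}}$, while the second bracket is $\ge0$; the symmetric case $x_b\ne x_d$ is handled identically via $x_q$. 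This gives $\mathrm{d}(f)-\mathrm{d}(e)\ge\Delta_{\mathrm{MST}}$ for every critical pair, and combined with the first paragraph completes the proof.

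The step I expect to be the main obstacle is making the surrogate-contraction correspondence fully rigorous: verifying that every edge of $\mathbb{T}_{\mathrm{CL}}$ is the contracted image of a unique latent edge $(x_p,x_q)$ with the stated surrogates, that the observed-node bipartition induced by deleting $e$ from $\mathbb{T}_{\mathrm{CL}}$ coincides with the one induced by deleting $(x_p,x_q)$ from $\mathbb{T}$, and that the dichotomy "a differing endpoint forces the corresponding latent node to be internal" holds in all configurations. The MST-stability reduction is a standard lemma, and the distance identity is a routine telescoping via additivity, so it is the combinatorial bookkeeping around contraction and surrogate nodes that requires the most care.
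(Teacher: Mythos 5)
Your proposal is correct, and it takes a genuinely different route from the paper's proof. The paper argues by induction on the depth of the observed nodes: it applies the cut property of Lemma~\ref{lem:mstprop}(1) to cuts isolating one or two nodes, runs a case analysis (observed--observed edges, a hidden node contracted to one of the two candidate neighbors, a hidden node contracted to a third node), and uses the contraction invariant of Lemma~\ref{lem:mstprop}(2) to merge correctly identified subtrees into super nodes before recursing. You avoid induction entirely: you prove one global edge-wise margin bound --- for every edge $e$ of $\mathbb{T}_{\mathrm{CL}}$ and every non-tree edge $f$ crossing the cut induced by $e$, $\mathrm{d}(f)-\mathrm{d}(e)\geq\Delta_{\mathrm{MST}}$ --- and combine it with the standard perturbation-stability of the MST (the cut property applied once per tree edge under $\hat{\mathbf{D}}$, which is exactly where the factor $1/2$ in the threshold comes from). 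Your margin bound via telescoping additivity (Proposition~\ref{prop:add}) across the unique latent edge $(x_p,x_q)$ underlying $e$ is sound, and so is the dichotomy you use to get the strict gap: if the $x_p$-side of the latent edge contains two distinct observed nodes it cannot be a leaf side, and hidden nodes are internal by condition (C1), so the contrastive distance you invoke is always one of those entering the minimum defining $\Delta_{\mathrm{MST}}$ in Definition~\ref{def:contdist}. The bookkeeping you flag as the main obstacle does go through and follows from the surrogate characterization in Choi et al.\ that the paper itself invokes: surrogate classes are connected subtrees of $\mathbb{T}$, so at most one latent edge joins two classes (two would create a cycle), the class of $x_p$ lies entirely on the $x_p$-side whenever $\mathrm{Sg}(x_p)\neq\mathrm{Sg}(x_q)$ (placing $x_c$ in $S$ and $x_d$ in $\bar{S}$), and consequently the observed bipartitions induced by deleting $e$ from $\mathbb{T}_{\mathrm{CL}}$ and by deleting $(x_p,x_q)$ from $\mathbb{T}$ coincide. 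Comparing the two approaches: the paper's induction only ever reasons about cuts around single nodes or pairs and handles contraction locally, at the price of a lengthy case analysis; your argument is shorter and makes transparent why $\Delta_{\mathrm{MST}}/2$ is the right perturbation threshold, at the price of having to establish the surrogate-contraction correspondence rigorously up front (and, like the paper's proof, it implicitly needs $\Delta_{\mathrm{MST}}>0$ so that surrogates, and hence the nominal MST, are unique).
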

Hence, the contrastive distance describes the difficulty of learning the correct Chow-Liu tree.
\begin{theorem}\label{theo:clrgsamplcomp}
	With Assumptions \ref{assupleng}--\ref{assupdist}, \ac{rclrg} constructs the correct latent tree with probability 
	at least $1-\eta$ if
	\begin{align}
		\!\!\!\!\! n_{2}\!=\!\tilde{\Omega}\bigg(\!\max\Big\{\frac{1}{\rho_{\min}^{2}}\!\big(\frac{9}{2}\big)^{2L_{\mathrm{C}}},\frac{1}{\Delta_{\mathrm{MST}}^{2}}\Big\}\frac{l_{\max}^{4}e^{2\rho_{\max}/l_{\max}}\kappa^{2}}{\delta_{\min}^{2/l_{\max}}}\log\frac{|\mathcal{V}_{\mathrm{obs}}|^{3}}{\eta}\! \bigg)\;\;\mbox{and}\;\; n_{1}\!=\!O\Big(\frac{\sqrt{n_{2}}}{\log n_{2}}\Big),\!\!
	\end{align}
	where $L_{\mathrm{C}}$ is the maximum number of iterations of \ac{rrg} (over  each internal node of the constructed Chow-Liu tree) in \ac{rclrg} needed to construct the tree.
\end{theorem}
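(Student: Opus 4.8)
The plan is to split the correctness of \ac{rclrg} into two stages mirroring the algorithm: (i) the Chow-Liu initialization produces the correct \ac{mst} on $\mathcal{V}_{\mathrm{obs}}$, and (ii) conditioned on a correct Chow-Liu tree, the local \ac{rrg} runs executed over the closed neighborhood of each internal node recover the full latent structure. I would bound the failure probability of each stage by $\eta/2$ and combine them by a union bound; the two summands in the $\max\{\cdot,\cdot\}$ then arise from the two stages separately. Throughout I write $C=\tfrac{l_{\max}^4 e^{2\rho_{\max}/l_{\max}}\kappa^2}{\delta_{\min}^{2/l_{\max}}}$ for the common prefactor supplied by the constants $w,u,\lambda$ in Proposition~\ref{errorpropg}.

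For the first stage I would invoke Proposition~\ref{prop:corctmst}: the \ac{mst} is correct once $|\hat{\mathrm{d}}(x_i,x_j)-\mathrm{d}(x_i,x_j)|<\Delta_{\mathrm{MST}}/2$ for every observed pair. Since observed nodes lie in the $0^{\mathrm{th}}$ layer, their estimates obey the tail $h^{(0)}=f$ of Proposition~\ref{distconcen}. Taking $\varepsilon=\Delta_{\mathrm{MST}}/2$ and a union bound over the $\binom{|\mathcal{V}_{\mathrm{obs}}|}{2}$ pairs, I would require $O(|\mathcal{V}_{\mathrm{obs}}|^2)f(\Delta_{\mathrm{MST}}/2)\le\eta/2$; solving the Gaussian term $b\,e^{-u(\Delta_{\mathrm{MST}}/2)^2}$ for $n_2$ (with $u\propto n_2$) yields the contribution $n_2=\Omega\big(\tfrac{C}{\Delta_{\mathrm{MST}}^2}\log\tfrac{|\mathcal{V}_{\mathrm{obs}}|^2}{\eta}\big)$, matching the second term in the bound.

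For the second stage the key observation is that a correct Chow-Liu tree contracts every hidden node onto its surrogate (Definition~\ref{def:surnode}), so the residual recovery decomposes into independent \ac{rrg} instances—one per internal node—each of depth at most $L_{\mathrm{C}}$ rather than the global $L_{\mathrm{R}}$. I would then apply Proposition~\ref{errorpropg}: a distance estimate at a node created in layer $l\le L_{\mathrm{C}}$ has error tail $h^{(l)}(\varepsilon)=s^l(ae^{-wm^l\varepsilon}+be^{-um^{2l}\varepsilon^2})$ with $m=2/9$. Correctly classifying parent/sibling relations through Lemma~\ref{lem:identify} requires accuracy $\varepsilon\asymp\rho_{\min}$, so after a union bound over internal nodes and the $O(|\mathcal{V}_{\mathrm{obs}}|^3)$ triples entering the $\hat{\Phi}_{ijk}$ updates I would impose $h^{(L_{\mathrm{C}})}(\rho_{\min})\le\eta/2$. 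Since $m^{-2L_{\mathrm{C}}}=(9/2)^{2L_{\mathrm{C}}}$ and $u\propto n_2$, the Gaussian term gives $n_2=\Omega\big(\tfrac{C}{\rho_{\min}^2}(9/2)^{2L_{\mathrm{C}}}\log\tfrac{|\mathcal{V}_{\mathrm{obs}}|^3}{\eta}\big)$, the first term.

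Finally, the corruption tolerance follows from the first exponential $s^l a\,e^{-wm^l\varepsilon}$ with $w=\tfrac{3n_2}{32\lambda\kappa n_1}$: the balanced scale from the Gaussian term is $m^{L_{\mathrm{C}}}\varepsilon\asymp\sqrt{\log/n_2}$, and demanding that the corruption term be comparably small yields the same tolerance $n_1=O(\sqrt{n_2}/\log n_2)$ established for the earlier algorithms. The main obstacle is the second stage: I must argue rigorously that a correctly constructed Chow-Liu tree localizes hidden-node recovery into subproblems whose iteration count is governed by $L_{\mathrm{C}}$, and then track how the layer-dependent prefactor $s^l$ and exponents $m^l,m^{2l}$ of Proposition~\ref{errorpropg} interact with the $\rho_{\min}$-scale accuracy demanded by Lemma~\ref{lem:identify}. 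Ensuring the union bound over both stages collapses to the stated $\max\{\cdot,\cdot\}$—rather than a sum that double-counts the common prefactor $C$—is the delicate bookkeeping that requires care.
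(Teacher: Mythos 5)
Your proposal matches the paper's proof essentially step for step: the same two-stage decomposition $\mathbb{P}(\mathcal{E})\leq\mathbb{P}(\mathcal{E}_{\mathrm{MST}})+\mathbb{P}(\mathcal{E}_{\mathrm{RRG}})$, with the first stage handled by Proposition~\ref{prop:corctmst} and a union bound over observed pairs, the second by running the \ac{rrg} error analysis (Proposition~\ref{errorpropg} with the four misclassification events and accuracy $\varepsilon\asymp\rho_{\min}$) locally at each internal node of the Chow-Liu tree with depth $L_{\mathrm{C}}$, and the same derivation of $n_{1}=O(\sqrt{n_{2}}/\log n_{2})$. The only cosmetic difference is that the paper splits the budget as $r\eta$ and $(1-r)\eta$ with a specific $r$ chosen to merge the two logarithms, rather than your $\eta/2$ each, which is asymptotically equivalent and yields the same $\max\{\cdot,\cdot\}$ form.
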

If we implement \ac{rclrg} with \emph{true} information distances, $L_{\mathrm{C}}\le \lceil \frac{1}{2}\mathrm{Deg}(\mathrm{MST}(\mathcal{V}_{\mathrm{obs}};\mathbf{\hat{D}})) - 1\rceil$. Theorem~\ref{theo:clrgsamplcomp} 
indicates that the sample complexity of \ac{rclrg} grows exponentially in $L_{\mathrm{C}}\ll L_{\mathrm{R}}$. 
Compared with   
\cite[Theorem 12]{choi2011learning}, the sample complexity of \ac{rclrg} in Theorem \ref{theo:clrgsamplcomp} is applicable to a wide class of graphical models that satisfy Assumptions \ref{assupleng} to \ref{assupdist}, while the \cite[Theorem 12]{choi2011learning}  requires the assumption that the effective depths of latent trees are {\em bounded} in $|\mathcal{V}_{\mathrm{obs}}|$, which is rather restrictive.

\subsection{Comparison of robust latent tree learning algorithms}\label{subsec:comp}
Since the sample complexities of \ac{rrg}, \ac{rclrg}, \ac{rsnj} and \ac{rnj} depend on different parameters  and different structures of the underlying graphs, it is instructive to compare the sample complexities of these algorithms on some representative tree structures. These trees are illustrated in Fig.~\ref{fig:comp}.  \ac{rsnj} and \ac{rnj} are not able to identify the parent relationship among nodes, so they are only applicable to trees whose maximal degrees are no larger that $3$, 
including the double-binary tree and the \ac{hmm}. In particular, \ac{rnj} and \ac{rsnj} are not applicable to the full $m$-tree $(\text{for }m\geq 3)$ and the double star. 
Derivations  and  more detailed discussions of the sample complexities are deferred to Appendix~\ref{app:table}.  
\begin{table}[H]
	\scriptsize
	\centering
	\begin{tabular}{|c|c|c|c|c|}
	\hline
	\diagbox{Tree}{$n_{2}$}{Algorithm}&\ac{rrg}&\ac{rclrg}&\ac{rsnj}&\ac{rnj}\\
	\hline
	Double-binary tree&$ O\big(\psi(\frac{9}{2})^{\mathrm{Diam}(\mathbb{T})}\big)$&$O\big(\psi(\frac{9}{2})^{\frac{1}{2}\mathrm{Diam}(\mathbb{T})}\big)$&$O\big(e^{2t\rho_{\max}}\mathrm{Diam}(\mathbb{T})\big)$&$O\big(\psi\mathrm{Diam}(\mathbb{T})\big)$\\
	\hline
	\ac{hmm}&$O\big(\psi(\frac{9}{2})^{\mathrm{Diam}(\mathbb{T})}\big)$&$O\big(\psi\log\mathrm{Diam}(\mathbb{T})\big)$&$O\big(e^{2t\rho_{\max}}\log \mathrm{Diam}(\mathbb{T})$&$O\big(\psi\log\mathrm{Diam}(\mathbb{T})\big)$\\
	\hline
	Full $m$-tree&$O\big(\psi(\frac{9}{2})^{\mathrm{Diam}(\mathbb{T})}\big) $&$O\big(\psi\mathrm{Diam}(\mathbb{T})\big)$& N.A.& N.A.\\
	\hline
	Double star&$O(\psi\log d_{\max})$&$O\big(\psi\log d_{\max}\big)$&N.A.&N.A.\\
	\hline
	\end{tabular}
	\vspace{.1in}
	\caption{The sample complexities of \ac{rrg}, \ac{rclrg}, \ac{rsnj} and \ac{rnj} on the double-binary tree, the \ac{hmm}, the full $m$-tree and the double star. We set $\psi:=e^{2\rho_{\max}/l_{\max}}$ and $t=O(l_{\max}^{-1}+\log |\mathcal{V}_{\mathrm{obs}}|)$.}
	\label{table:samplecompare}
\end{table}
 
\subsection{Experimental results}\label{sec:simu}

We present simulation results to demonstrate the efficacy of the  robustified  algorithms. Samples are generated from a \ac{hmm} with $l_{\max}=3$ and $\mathrm{Diam}(\mathbb{T})=80$.  The Robinson-Foulds distance~\cite{robinson1981comparison} between the true   and  estimated trees is adopted to measure the performances of the  algorithms. For the implementations of \ac{clrg} and \ac{rg}, we use the code from~\cite{choi2011learning}. Other settings and more extensive experiments are given in Appendix~\ref{app:num}. 

We consider three corruption patterns here. (i)  {\em Uniform corruptions} are independent additive noises in $[-2A,2A]$; (ii) {\em Constant magnitude corruptions} are also independent additive noises but taking values in $\{-A,+A\}$ with probability $0.5$. These two types of noises  are distributed randomly in   $\mathbf{X}_1^n$; (iii) {\em \ac{hmm} corruptions} are generated by a \ac{hmm} which has the same structure as the original \ac{hmm}  but has different parameters. They replace the entries in $\mathbf{X}_1^n$ with   samples generated by the variables in the same positions. In our simulations, $A$ is set to $60$, and the number of  corruptions $n_{1}$ is $100$.
\begin{figure}[H]
\centering
\subfigure[Uniform corruptions]{
\begin{minipage}[t]{0.33\linewidth}
\centering
\includegraphics[width=1.9in]{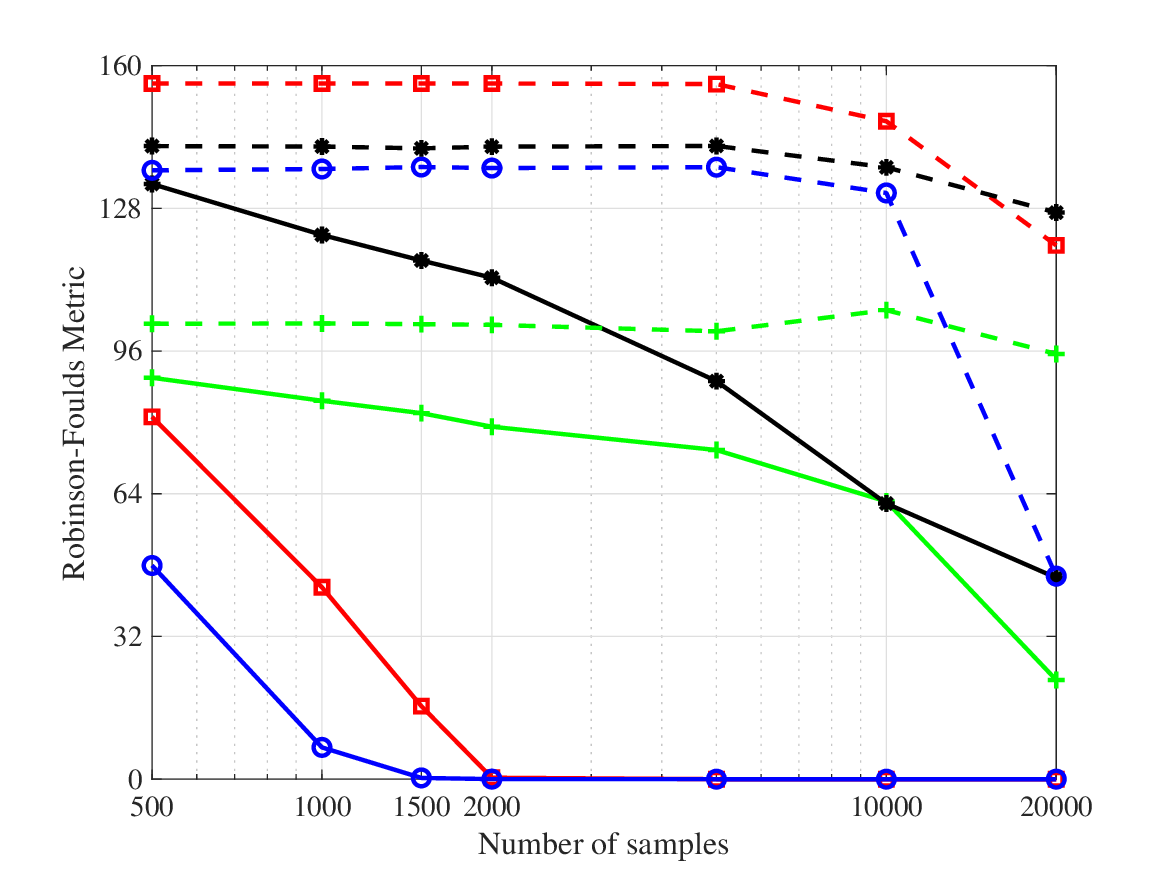}
\end{minipage}%
}%
\subfigure[Constant magnitude corruptions]{
\begin{minipage}[t]{0.33\linewidth}
\centering
\includegraphics[width=1.9in]{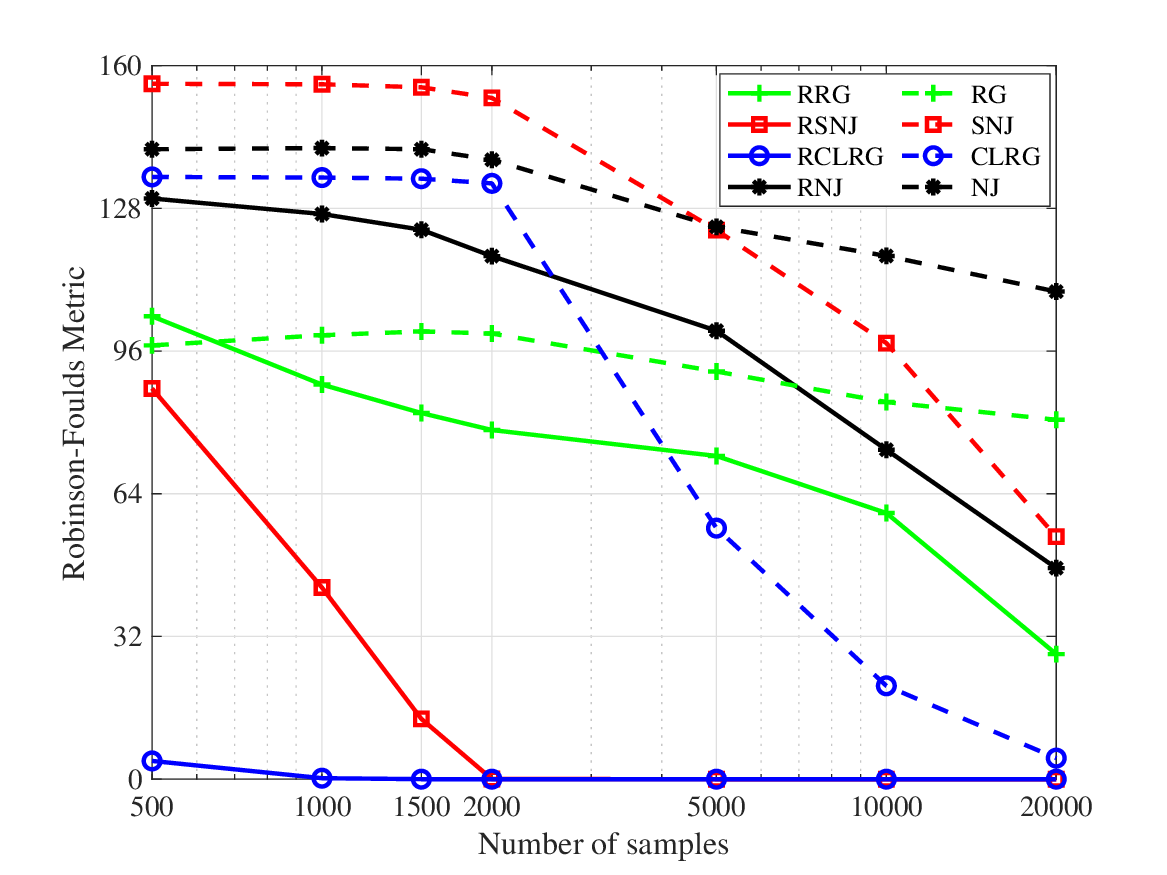}
\end{minipage}%
}%
\subfigure[\ac{hmm} corruptions]{
\begin{minipage}[t]{0.33\linewidth}
\centering
\includegraphics[width=1.9in]{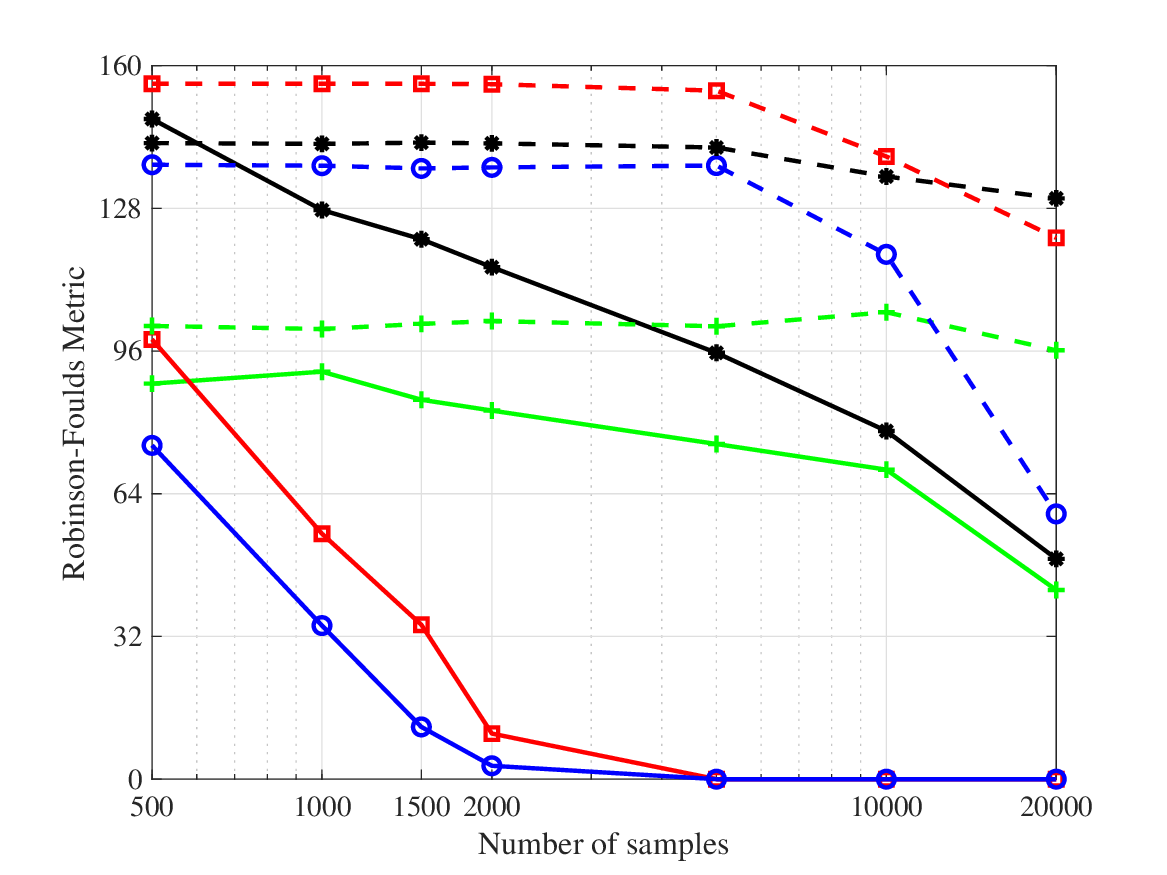}
\end{minipage}
}%
\centering
\caption{Robinson-Foulds distances of robustified and original  algorithms averaged over $100$ trials}
\label{fig:simu}
\end{figure}

Fig.~\ref{fig:simu} (error bars  are in Appendix~\ref{app:stddev}) demonstrates the  superiority of \ac{rclrg} in learning \ac{hmm}s compared to other algorithms. 
The robustified   algorithms also result in smaller estimation  errors  (Robinson-Foulds distances) 
compared to their unrobustified counterparts in  presence of   corruptions.


\section{Impossibility result}\label{subsec:converse}
\begin{definition}
	Given a triple $(|\mathcal{V}_{\mathrm{obs}}|,\rho_{\max},l_{\max})$, the set $\mathcal{T}(|\mathcal{V}_{\mathrm{obs}}|,\rho_{\max},l_{\max})$ consists of all multivariate Gaussian distributions $\mathcal{N}(\mathbf{0},\mathbf{\Sigma})$ such that: (1) The 
	underlying graph $\mathbb{T}=(\mathcal{V},\mathcal{E})$ is a tree $\mathbb{T}\in \mathcal{T}_{\geq 3}$, and the size of the set of observed nodes is $|\mathcal{V}_{\mathrm{obs}}|$. (2) The distribution $\mathcal{N}(\mathbf{0},\mathbf{\Sigma})$ satisfies Assumptions~\ref{assupleng} and~\ref{assupdist} 
	with parameters $l_{\max}$ and $\rho_{\max}$.
\end{definition}

For the given class of graphical models $\mathcal{T}(|\mathcal{V}_{\mathrm{obs}}|,\rho_{\max},l_{\max})$, nature chooses some parameter $\theta=\mathbf{\Sigma}$ and generates $n$ i.i.d.\ samples $\mathbf{X}_{1}^{n}$ from $\mathbb{P}_{\theta}$. The 
goal of the statistician is to use the observations $\mathbf{X}_{1}^{n}$ to learn the underlying graph $\mathbb{T}$, which entails the design of a {\em decoder} $\phi:\mathbb{R}^{n\times |\mathcal{V}_{\mathrm{obs}}|l_{\max} } \rightarrow  \mathcal{T}_{ |\mathcal{V}_{\mathrm{obs}}|}$, 
where $\mathcal{T}_{ |\mathcal{V}_{\mathrm{obs}}|}$ is the set of latent trees whose size of the observed node set is $|\mathcal{V}_{\mathrm{obs}}|$. 

\begin{theorem}\label{theo:converse}
	Consider the class of graphical models $\mathcal{T}(|\mathcal{V}_{\mathrm{obs}}|,\rho_{\max},l_{\max})$, where $|\mathcal{V}_{\mathrm{obs}}|\geq 3$.   If there exists a graph decoder learns from $n$ i.i.d.\ samples such that
	\begin{align}
		\max_{\theta(\mathbb{T})\in \mathcal{T}(|\mathcal{V}_{\mathrm{obs}}|,\rho_{\max},l_{\max})} \mathbb{P}_{\theta(\mathbb{T})}(\phi(\mathbf{X}_{1}^{n})\neq \mathbb{T})< \delta,
	\end{align}
	then 	(as $\rho_{\max}\rightarrow \infty$ and $|\mathcal{V}_{\mathrm{obs}}|\rightarrow \infty$),
	\begin{align}
		n=\max\big\{\Omega\big((1-\delta)e^{\frac{\rho_{\max}}{\lfloor\log_{3} |\mathcal{V}_{\mathrm{obs}}| \rfloor l_{\max}}} \log |\mathcal{V}_{\mathrm{obs}}|\big),\Omega\big((1-\delta)e^{\frac{2\rho_{\max}}{3l_{\max}}}\big)\big\}. \label{eqn:n_bound}
	\end{align}
\end{theorem}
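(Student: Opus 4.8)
The plan is to convert the structure-learning problem into a multiple-hypothesis-testing problem and invoke two complementary information-theoretic lower bounds: Fano's inequality over a large family of topologies for the first term in \eqref{eqn:n_bound}, and the two-point (Le Cam) method for the second. Both reduce to the same underlying question—how large must $n$ be before $n$ i.i.d.\ samples can separate two latent trees whose \emph{observed} marginals are close in KL divergence—so the two terms in the $\max$ simply correspond to two different hard sub-families of $\mathcal{T}(|\mathcal{V}_{\mathrm{obs}}|,\rho_{\max},l_{\max})$, and the final bound is their maximum.

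For the first (Fano) term I would build a balanced ternary tree (each hidden node has three children, so it lies in $\mathcal{T}_{\geq 3}$) of depth $D=\lfloor\log_3|\mathcal{V}_{\mathrm{obs}}|\rfloor$, with every edge assigned information distance $\approx \rho_{\max}/(2D)$ so that the largest leaf-to-leaf distance is exactly $\rho_{\max}$ and Assumption~\ref{assupdist} holds. The $M$ hypotheses are obtained by applying a \emph{local} topological perturbation—relocating one observed leaf to a sibling position, or swapping two adjacent subtrees—at one of $\Theta(|\mathcal{V}_{\mathrm{obs}}|)$ admissible sites, so that $\log M=\Theta(\log|\mathcal{V}_{\mathrm{obs}}|)$. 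Writing $J$ for the uniform index over the family and noting that the maximal error exceeds the average error, Fano gives $\delta > 1-\frac{I(J;\mathbf{X}_1^n)+\log 2}{\log M}$, while tensorization yields $I(J;\mathbf{X}_1^n)\le n\max_{j,j'}D_{\mathrm{KL}}(\mathbb{P}_{\theta_j}\|\mathbb{P}_{\theta_{j'}})$. Rearranging produces $n=\Omega\big((1-\delta)\log|\mathcal{V}_{\mathrm{obs}}|/\max_{j,j'}D_{\mathrm{KL}}\big)$.

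For the second term I would take $M=2$ hypotheses differing only in the pairing of four observed nodes attached to two hidden nodes—a quartet $(12\mid 34)$ versus $(13\mid 24)$, embedded in the remaining (common) tree so that $|\mathcal{V}_{\mathrm{obs}}|\ge 3$ is met—with the three edges on the critical leaf-to-leaf path each of information distance $\approx \rho_{\max}/3$. Le Cam's bound $\mathbb{P}(\mathrm{err})\ge \tfrac12\big(1-\sqrt{\tfrac{n}{2}D_{\mathrm{KL}}(\mathbb{P}_{\theta_1}\|\mathbb{P}_{\theta_2})}\big)$ (Pinsker plus tensorization), together with the requirement that this probability be below $\delta$, forces $n=\Omega\big((1-\delta)/D_{\mathrm{KL}}\big)$; since the two marginals disagree only through the correlation routed across the internal edge of length $\approx\rho_{\max}/3$, the single-sample KL scales like $e^{-2\rho_{\max}/(3l_{\max})}$, giving the claimed bound.

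The main obstacle is the KL estimate $\max_{j,j'}D_{\mathrm{KL}}(\mathbb{P}_{\theta_j}\|\mathbb{P}_{\theta_{j'}})=O\big(e^{-\rho_{\max}/(\lfloor\log_3|\mathcal{V}_{\mathrm{obs}}|\rfloor l_{\max})}\big)$ for the Fano family (and its analogue for the quartet). Here each $\mathbb{P}_{\theta}$ is the \emph{marginal} Gaussian on $\mathcal{V}_{\mathrm{obs}}$, so I must express its covariance through the linear-Gaussian channels of \eqref{eqn:chanl} and the additivity of $\mathrm{d}(\cdot,\cdot)$ from Proposition~\ref{prop:add}, then bound $D_{\mathrm{KL}}(\mathcal{N}(\mathbf{0},\mathbf{\Sigma}_1)\|\mathcal{N}(\mathbf{0},\mathbf{\Sigma}_2))=\tfrac12\big(\mathrm{tr}(\mathbf{\Sigma}_2^{-1}\mathbf{\Sigma}_1)-|\mathcal{V}_{\mathrm{obs}}|l_{\max}+\log\tfrac{\det\mathbf{\Sigma}_2}{\det\mathbf{\Sigma}_1}\big)$ through the off-diagonal blocks where the two topologies disagree. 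Because a local perturbation alters only correlations carried along a path of length $\Theta(\rho_{\max}/D)$, and correlations decay as $e^{-\mathrm{d}}$, every such block has operator norm $O(e^{-\rho_{\max}/D})$; a second-order expansion of the KL in this perturbation then yields the $e^{-\rho_{\max}/(D l_{\max})}$ rate once the $l_{\max}$ dimensions are accounted for. Care is needed to (i) verify that each constructed $\theta$ genuinely satisfies Assumptions~\ref{assupleng} and~\ref{assupdist} with the prescribed parameters, and (ii) control the perturbed inverse and determinant uniformly, which is precisely where the singular-value lower bound from Assumption~\ref{assupsing} and the factor $1/l_{\max}$ in the exponent enter.
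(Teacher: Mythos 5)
Your plan is sound and arrives at the right bound, but it follows a genuinely different route from the paper, and one of your two prongs is quantitatively weaker than the stated theorem. The paper proves both terms via Fano's inequality applied to two \emph{global} families: (A) the set of \emph{all} full ternary trees of every depth $k\le L=\lfloor\log_{3}|\mathcal{V}_{\mathrm{obs}}|\rfloor$ with labeled leaves, counted explicitly via Stirling so that $\log M=\Theta(|\mathcal{V}_{\mathrm{obs}}|\log|\mathcal{V}_{\mathrm{obs}}|)$, and (B) exponentially many double-star trees with $\log M=\Theta(|\mathcal{V}_{\mathrm{obs}}|)$. Crucially, instead of your pairwise-KL tensorization $I(J;\mathbf{X}_{1}^{n})\le n\max_{j,j'}D_{\mathrm{KL}}(\mathbb{P}_{\theta_{j}}\Vert\mathbb{P}_{\theta_{j'}})$, the paper bounds $I(\mathbf{X}_{1};K)\le\mathbb{E}_{K}[D(\mathbb{P}_{\mathbb{T}_{K}}\Vert\mathbb{Q})]$ for the fixed reference $\mathbb{Q}=\mathcal{N}(\mathbf{0},\mathbf{I})$, and evaluates this \emph{exactly} through a determinant identity for tree-structured block matrices ($\det\mathbf{\bar{D}}(\mathbb{T},\mathbf{A})=[\det(\mathbf{I}-\mathbf{A}^{2})]^{p-1}$, Proposition~\ref{prop:distgraphdet}) combined with the Minkowski determinant theorem to handle marginalization over the hidden nodes; the large $|\mathcal{V}_{\mathrm{obs}}|$ factors in $\log M$ and in the information radius then cancel. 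This buys the paper two things your route does not automatically give: it sidesteps all second-order perturbation analysis (your hardest step---uniform control of $\mathbf{\Sigma}^{-1}$ and of the sum of perturbed blocks over $\Theta(|\mathcal{V}_{\mathrm{obs}}|)$ nodes, delicate when per-edge correlations $e^{-\rho_{\max}/(2Dl_{\max})}$ are not small, which you flag but do not resolve), and it yields the factor $(1-\delta)$ uniformly in $\delta\in(0,1)$ for \emph{both} terms. Conversely, your local-packing Fano argument for the first term is more standard and lighter on combinatorics, and your KL estimate is if anything conservative: a genuine second-order expansion would give $D_{\mathrm{KL}}=O(e^{-2\rho_{\max}/(Dl_{\max})})$ rather than your claimed $O(e^{-\rho_{\max}/(Dl_{\max})})$ (note also that the block operator norm at information distance $\rho_{\max}/D$ is $e^{-\rho_{\max}/(Dl_{\max})}$, not $e^{-\rho_{\max}/D}$; your parenthetical ``$l_{\max}$ accounted for'' conflates these).

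The one concrete shortfall is your second term. A two-point Le Cam bound gives $\max$-error at least $\tfrac12\bigl(1-\sqrt{nD_{\mathrm{KL}}/2}\bigr)$, hence $n=\Omega\bigl((1-2\delta)^{2}e^{2\rho_{\max}/(3l_{\max})}\bigr)$, which is vacuous for $\delta\ge\tfrac12$ and does not establish the claimed $(1-\delta)$ dependence of \eqref{eqn:n_bound} on that range: with only two hypotheses, random guessing already achieves error $\tfrac12$. The paper avoids this by running Fano over the family of $M>\tfrac12\binom{|\mathcal{V}_{\mathrm{obs}}|}{\lceil|\mathcal{V}_{\mathrm{obs}}|/2\rceil}$ double stars (diameter $3$, per-edge distance $\rho_{\max}/3$, each hidden hub having at least three neighbors so membership in $\mathcal{T}_{\ge3}$ holds), where $\log M=\Theta(|\mathcal{V}_{\mathrm{obs}}|)$ cancels against the $\Theta(|\mathcal{V}_{\mathrm{obs}}|)$ information radius to leave $n=\Omega\bigl((1-\delta)e^{2\rho_{\max}/(3l_{\max})}\bigr)$ for all $\delta$. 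You should either adopt such an exponentially large quartet/star family for your second prong, or weaken the claimed $\delta$-dependence accordingly; with that repair, and with the perturbation-KL step carried out rigorously (or replaced by the paper's exact determinant computation), your argument would constitute a valid alternative proof.
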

Theorem~\ref{theo:converse} implies that the optimal sample complexity grows as $\Omega (\log|\mathcal{V}_{\mathrm{obs}}|)$ as $|\mathcal{V}_{\mathrm{obs}}|$ grows. To prove this theorem, we construct several classes of Gaussian latent trees parametrized as linear dynamical systems (see Appendix~\ref{app:converse}) and apply the ubiquitous Fano technique to derive the desired impossibility result. Table~\ref{table:samplecompare} indicates that the sample complexity of \ac{rclrg} when the underlying latent tree is a  full $m$-tree  (for $m\geq3$) or a \ac{hmm}  is optimal in the dependence on $|\mathcal{V}_{\mathrm{obs}}|$. The sample complexity of \ac{rnj} is also optimal in  $|\mathcal{V}_{\mathrm{obs}}|$ for double binary trees and \ac{hmm}s. In contrast, the derived sample complexities  of \ac{rrg} and \ac{rsnj} are suboptimal in relation  to Theorem~\ref{theo:converse}. However, one caveat of our analyses of the latent tree learning algorithms in Section~\ref{sec:robustifying}  is that we are not claiming that they are the best possible for the given algorithm; there may be room for improvement.

  
 When the maximum information distance $\rho_{\max}$ grows, Theorem~\ref{theo:converse} indicates that the optimal sample 
complexity grows as $\Omega (e^{\frac{2\rho_{\max}}{3l_{\max}}})$. Table \ref{table:samplecompare} shows the sample complexities of \ac{rrg}, \ac{rclrg} and \ac{rnj} grow as $O(e^{2\frac{\rho_{\max}}{l_{\max}}})$, which has the alike dependence as the impossibility result. However, the sample complexity of \ac{rsnj} grows as $O\big(e^{2t\rho_{\max}}\big)$, 
which is larger (looser) than that prescribed by Theorem~\ref{theo:converse}.

\section{Conclusions and future works}
In this paper, we first derived the more refined sample complexities of \ac{rg} and \ac{clrg}. The effectiveness of \ac{clrg} was observed to be due to the reduction in the effective length that the  error propagates, i.e., from  $L_{\mathrm{R}}$ to $L_{\mathrm{C}}\ll L_{\mathrm{R}}$. Second, to combat potential adversarial corruptions in the data matrix, we robustified \ac{rg}, \ac{clrg}, \ac{nj} and \ac{snj} by adopting the truncated inner product technique. The derived sample complexity results showed that all the common latent tree learning algorithms can tolerate level-$O\big(\frac{\sqrt{n_{2}}}{\log n_{2}}\big)$ arbitrary corruptions. The varying efficacies of these robustified algorithms were then corroborated through extensive simulations with different types of corruptions and on different graphs. Finally, we derived the first known instance-dependent impossibility result for learning latent trees. The optimalities of \ac{rclrg} and \ac{rnj} in their dependencies on $|\mathcal{V}_{\mathrm{obs}}|$ were also discussed in the context of   various  latent tree structures.

There are several promising avenues for future research. First, the design and   analysis of the initialization process of \ac{clrg} can be further improved. The correctness of \ac{clrg} relies only  on the fact that if a hidden node is contracted to an observed node, then {\em all} the hidden nodes on the path between the hidden node and the observed nodes are  contracted to the {\em same} observed node. One can conceive of   a more general initialization algorithm other than that using the \ac{mst} of the weighted graph with weights being the  information distances. Second, the analysis of \ac{rg} can be tightened with more sophisticated concentration bounds. In particular, the exponential behavior of the sample complexity of \ac{rg} can also refined by performing a more careful analysis of the error propagation through the learned tree.

\paragraph{Acknowledgements}
We would like to thank the NeurIPS reviewers for their valuable and detailed reviews. This work is supported by a National University of Singapore (NUS) President's Graduate Fellowship, Singapore National Research Foundation (NRF) Fellowship (R-263-000-D02-281), a Singapore Ministry of Education (MoE) AcRF Tier 1 Grant (R-263-000-E80-114), and a Singapore MoE AcRF Tier 2 Grant (R-263-000-C83-112).

\bibliographystyle{unsrt}
\bibliography{TGroup}

\clearpage
\appendix

\begin{center}
{\Large {\bf Supplementary materials for the NeurIPS 2021 submission \\ ``Robustifying Algorithms of Learning Latent Trees with Vector Variables''}}
\end{center}

\setcounter{equation}{0}
\counterwithin*{equation}{section}

\renewcommand{\theequation}{\thesection.\arabic{equation}}
\section{Illustrations of corruption patterns in Section~\ref{subsec:sysmodel}}

\begin{figure}[H]
	\centering\includegraphics[width=1\columnwidth,draft=false]{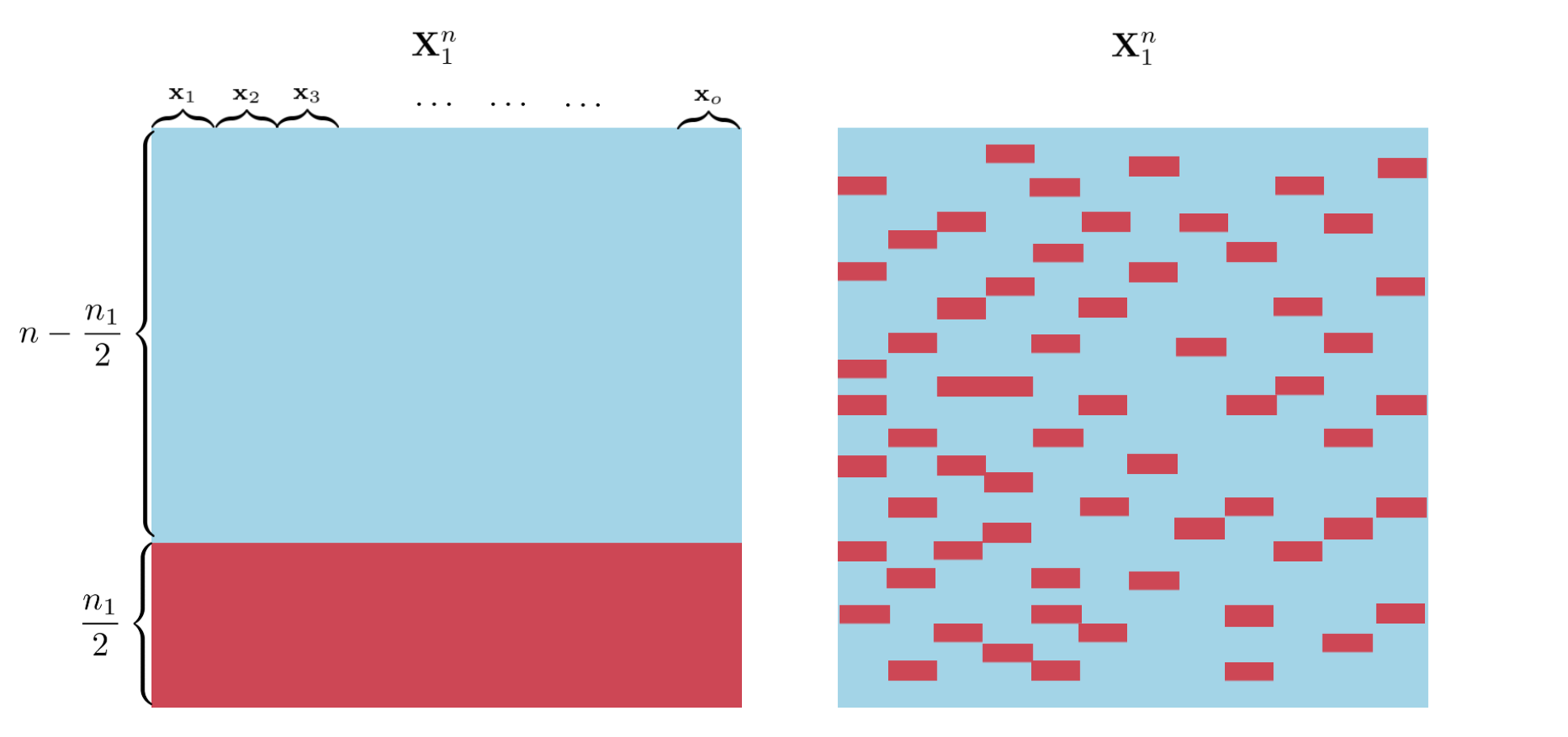}
	\caption{The left figure shows the corruption pattern that corrupted terms lie {\em in the same rows}. This corruption patterm is known as {\em outliers}. The right figure shows an {\em arbitrary corruption pattern} where corrupted entries in each column can be in {\em any} 
	$n_{1}/2$ rows. }
	\label{corrup_pic}
\end{figure}

\section{Illustrations of active sets defined in Section~\ref{subsec:rrg}}
\begin{figure}[H]
	\centering
	
	\subfigure[Illustration of $\Gamma^{1}$]{
	\begin{minipage}[t]{0.32\linewidth}
	\centering
	\includegraphics[width=1.8in]{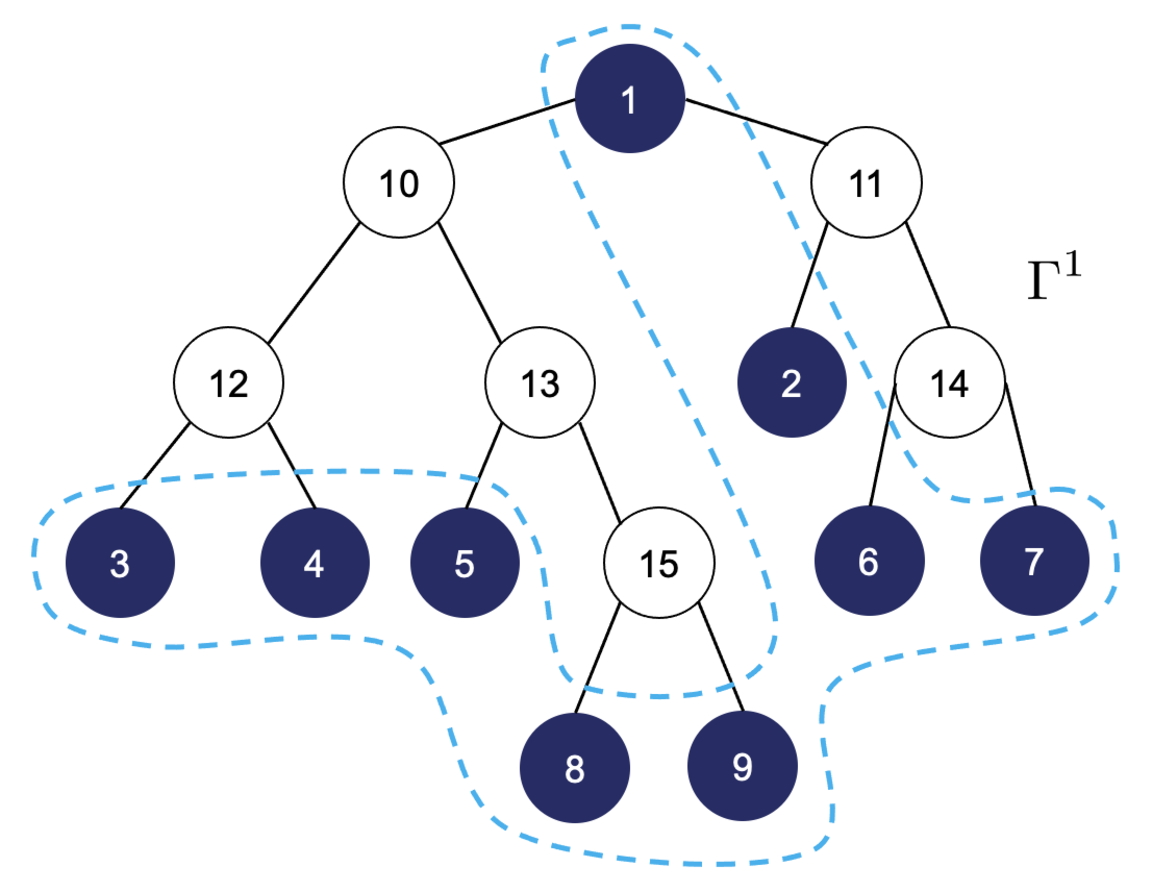}
	\end{minipage}%
	}%
	\subfigure[Illustration of $\Gamma^{2}$]{
	\begin{minipage}[t]{0.32\linewidth}
	\centering
	\includegraphics[width=1.8in]{ite_num2.eps}
	\end{minipage}%
	}%
	\subfigure[Illustration of $\Gamma^{3}$]{
	\begin{minipage}[t]{0.32\linewidth}
	\centering
	\includegraphics[width=1.8in]{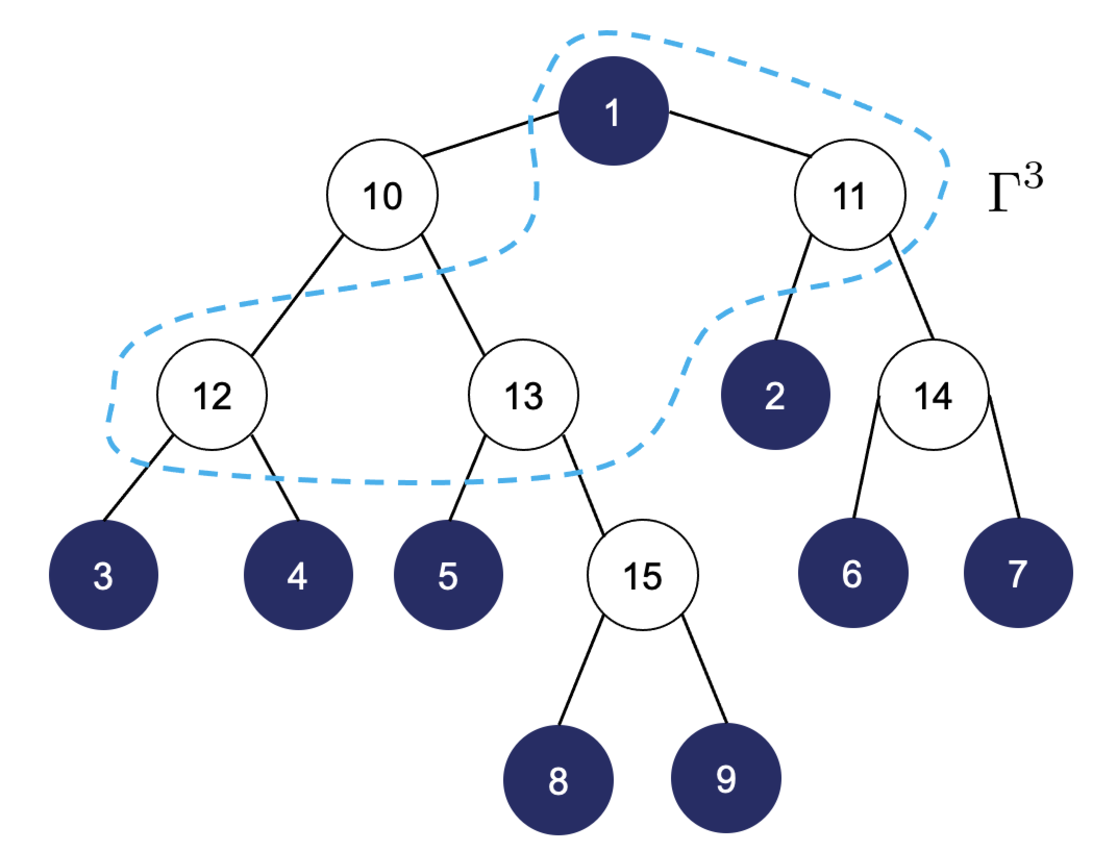}
	\end{minipage}
	}%
	\label{fig:activeset}
	\centering
	\caption{Illustration of active sets.}
\end{figure}

\section{Pseudo-code of RRG in Section~\ref{subsec:rrg}}
\begin{algorithm}[H]
	\caption{\ac{rrg}}
	\textbf{Input:} Data matrix $\mathbf{X}$, corruption level $n_{1}$, threshold $\varepsilon$ \\
	\textbf{Output:} Adjacency matrix $\mathbf{A}$ \\
	\textbf{Procedure:}
	\begin{algorithmic}[1]\label{algo:rrg}
		\STATE Active set $\Gamma^{1} \leftarrow$ all the observed nodes\\
		\STATE Implement truncated inner product to compute $\hat{\mathrm{d}}(x_{i},x_{j}) \text{ for all } x_{i},x_{j}\in \mathcal{V}_{\mathrm{obs}}$.
		\WHILE{$|\Gamma^{i}|>2$}
			\STATE Update $\hat{\mathrm{d}}(x_{\mathrm{new}},x_{i}) \text{ for all } x_{i}\in \Gamma^{i}$ for all new hidden nodes. 
			\STATE Compute $\hat{\Phi}_{ijk}=\hat{\mathrm{d}}(x_{i},x_{k})-\hat{\mathrm{d}}(x_{j},x_{k})$ for all $x_{i}, x_{j}, x_{k}\in\Gamma^{i}$
			\FOR{all nodes $x_{i}$ and $x_{j}$ in $\Gamma^{i}$}
				\IF{$|\hat{\Phi}_{ijk}-\hat{\Phi}_{ijk^{\prime}}|<\varepsilon$ for all $x_{k},x_{k^{\prime}}\in\Gamma^{i}$}
					\IF{$|\hat{\Phi}_{ijk}-\hat{\mathrm{d}}(x_{i},x_{j})|<\varepsilon$ for all $x_{k}\in \Gamma^{i}$}
					\STATE $x_{j}$ is the parent of $x_{i}$.
					\STATE Eliminate $x_{i}$ from $\Gamma^{i}$
					\ELSE
					\STATE $x_{j}$ and $x_{i}$ are siblings.
					\STATE Create a hidden node $x_{\mathrm{new}}$ as the parent of $x_{j}$ and $x_{i}$
					\STATE Add $x_{\mathrm{new}}$ and eliminate $x_{j}$ and $x_{i}$ from $\Gamma^{i}$
					\ENDIF
				\ENDIF
			\ENDFOR
		\ENDWHILE
	\end{algorithmic}
\end{algorithm}

\section{Pseudo-code of RSNJ in Section~\ref{subsec:snjnj}}

\begin{algorithm}[H]
	\caption{\ac{rsnj}}
	\textbf{Input:} Data matrix $\mathbf{X}$, corruption level $n_{1}$ \\
	\textbf{Output:} Adjacent matrix $\mathbf{A}$ \\
	\textbf{Procedure:}
	\begin{algorithmic}[1]\label{algo:rsnj}
		\STATE Implement truncated inner product to compute $\hat{\mathrm{d}}(x_{i},x_{j}) \text{ for all } x_{i},x_{j}\in \mathcal{V}_{\mathrm{obs}}$. 
		\STATE Compute the symmetric affinity matrix $\hat{\mathbf{R}}$ as $\hat{\mathbf{R}}(i,j)=\exp(-\hat{\mathrm{d}}(x_{i},x_{j})) \text{ for all } x_{i},x_{j}\in \mathcal{V}_{\mathrm{obs}}$\\
		\STATE Set $B_{i}=\{x_{i}\} \text{ for all } x_{i}\in\Omega$ \\
		\STATE Compute the matrix $\mathbf{S}$ as $\hat{\mathbf{S}}(i,j)=\sigma_{2}(\hat{\mathbf{R}}^{B_{i}\cup B_{j}})$\\
		\WHILE{The number of $B_{i}$'s is larger than 3}
			\STATE Find $(\hat{i},\hat{j})=\mathop{\arg\min}_{i,j} \hat{\mathbf{S}}(i,j)$.\\
			\STATE Merge $B_{\hat{i}}$ and $B_{\hat{j}}$ as $B_{\hat{i}}=B_{\hat{i}}\cup B_{\hat{j}}$ and delete $B_{\hat{j}}$.\\
			\STATE Update $\hat{\mathbf{S}}(k,\hat{i})=\sigma_{2}(\hat{\mathbf{R}}^{B_{k}\cup B_{\hat{i}}})$.\\
		\ENDWHILE
	\end{algorithmic}
\end{algorithm}

\section{Pseudo-code of RCLRG in Section~\ref{subsec:rclrg}}

\begin{algorithm}[H]
	\caption{\ac{rclrg}}\label{algo:rclrg}
	\textbf{Input:} Data matrix $\mathbf{X}$, corruption level $n_{1}$, threshold $\varepsilon$ \\
	\textbf{Output:} Adjacency matrix $\mathbf{A}$ \\
	\textbf{Procedure:}
	\begin{algorithmic}[1]\label{algo:rclg}
		\STATE Construct a Chow-Liu tree with $\hat{\mathrm{d}}(x_{j},x_{k})$ for observed nodes $x_{j},x_{k}\in \mathcal{V}_{\mathrm{obs}}$\\
		\STATE Identify the set of internal nodes of the Chow-Liu tree\\
		\FOR{all internal nodes $x_{i}$ of the Chow-Liu tree}
			\STATE Implement \ac{rrg} algorithm on the closed neighborhood of $x_{i}$\\
			\STATE Replace the  closed neighborhood of $x_{i}$ with the output of \ac{rrg}\\
		\ENDFOR
	\end{algorithmic}
\end{algorithm}

\section{Illustrations of representative trees in Section~\ref{subsec:comp}}

\begin{figure}[H]
	\centering
	\subfigure[Double-binary tree]{
	\begin{minipage}[t]{0.45\linewidth}
	\centering
	\includegraphics[width=2.6in]{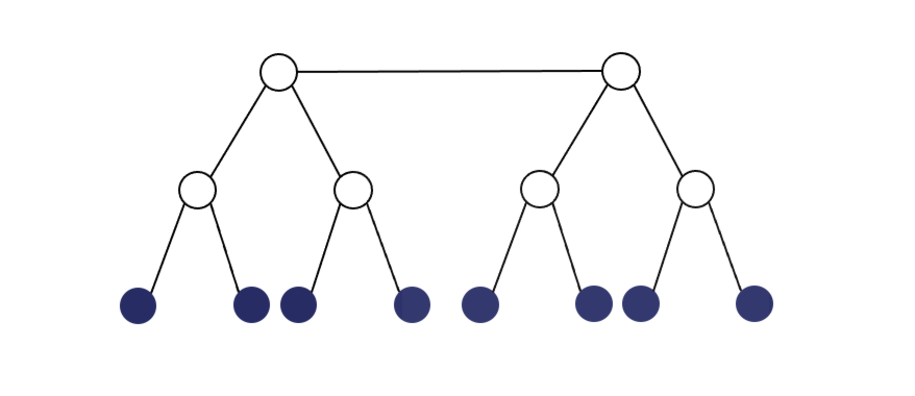}
	\end{minipage}%
	}%
	\subfigure[\ac{hmm}]{
	\begin{minipage}[t]{0.45\linewidth}
	\centering
	\includegraphics[width=2.6in]{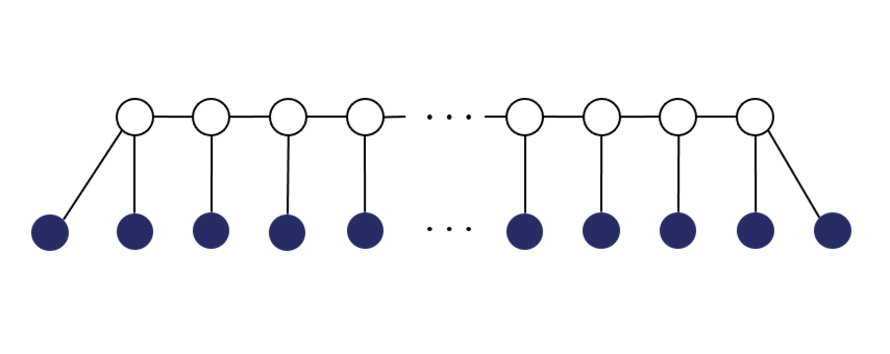}
	\end{minipage}%
	}%

	\subfigure[Full $m$-tree, $m=3$]{
	\begin{minipage}[t]{0.45\linewidth}
	\centering
	\includegraphics[width=2.6in]{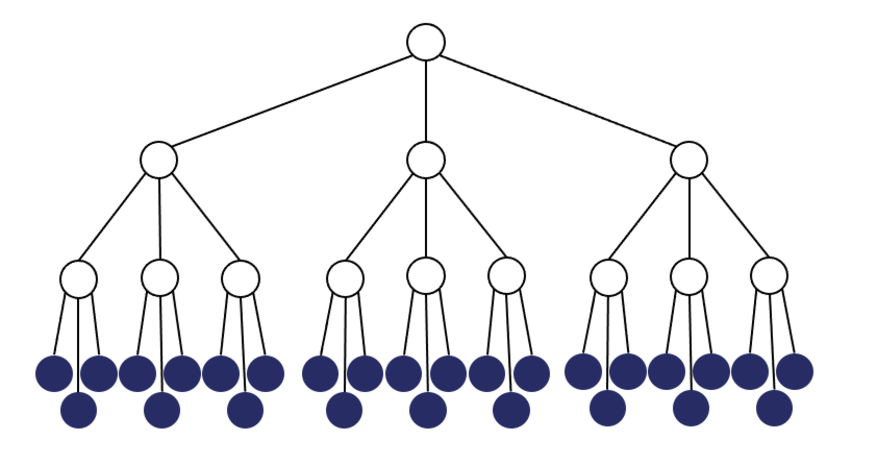}
	\end{minipage}
	}%
	\subfigure[Double star]{
	\begin{minipage}[t]{0.45\linewidth}
	\centering
	\includegraphics[width=2.6in]{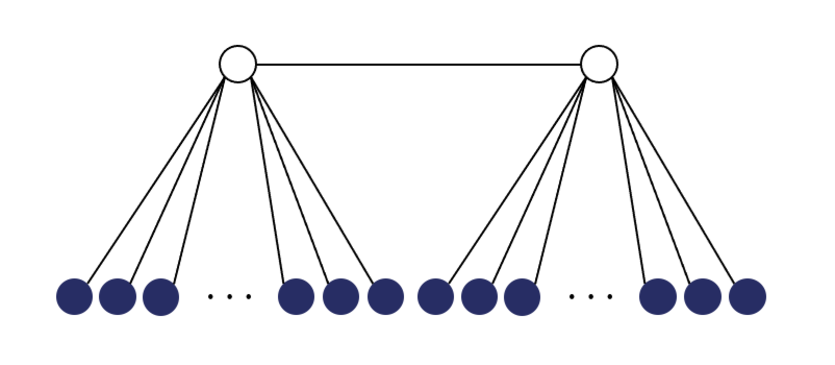}
	\end{minipage}
	}%
\caption{Representative tree structures.}
\label{fig:comp}
\end{figure}

\section{Proofs of results in Section~\ref{subsec:infodistest}}\label{appendix:add_dist}

\begin{proof}[Proof of Proposition~\ref{prop:add}]
	For the sake of brevity, we prove the additivity property for paths of length~$2$. The proof for the general cases can be derived similarly. We consider 
	the case $x_{j}$ is on the path connected $x_{i}$ and $x_{k}$ and $x_{i},x_{j},x_{k}\in \mathcal{V}$.

	For any square matrix $\mathbf{A}\in \mathbb{R}^{n\times n}$, the determinant of $\mathbf{A}$ is denoted as $|\mathbf{A}| = \mathrm{det}(\mathbf{A})$.

	Then we can write information distance as
	\begin{align}\label{eqn:psedist}
		\mathrm{d}(x_{i},x_{k})=-\frac{1}{2}\log\big|\mathbf{\Sigma}_{ik}\mathbf{\Sigma}_{ik}^{\top}\big|+\frac{1}{4}\log\big|\mathbf{\Sigma}_{ii}\mathbf{\Sigma}_{ii}^{\top}\big|+\frac{1}{4}\log\big|\mathbf{\Sigma}_{kk}\mathbf{\Sigma}_{kk}^{\top}\big|
	\end{align}
	Note that $\mathbb{E}[\mathbf{x}_{i}|\mathbf{x}_{j}]=\mathbf{\Sigma}_{ij}\mathbf{\Sigma}_{jj}^{-1}\mathbf{x}_{j}$ and $\mathbf{\Sigma}_{ij}$ is of full rank by Assumption \ref{assupsing}, and
	\begin{align}
		\mathbf{A}_{i|j}=\mathbf{\Sigma}_{ij}\mathbf{\Sigma}_{jj}^{-1}
	\end{align}
	is also of full rank.

	Furthermore, we have
	\begin{align}
		\mathbf{\Sigma}_{ik}=\mathbf{A}_{i|j}\mathbf{\Sigma}_{jj}\mathbf{A}_{k|j}^{\top} \quad \text{and} \quad \mathbf{\Sigma}_{ik}\mathbf{\Sigma}_{ik}^{\top}=\mathbf{A}_{i|j}\mathbf{\Sigma}_{jj}\mathbf{A}_{k|j}^{\top}\mathbf{A}_{k|j}\mathbf{\Sigma}_{jj}\mathbf{A}_{i|j}^{\top}.
	\end{align}
	Then we have 
	\begin{align}\label{eqn:covdet1}
		\big|\mathbf{\Sigma}_{ik}\mathbf{\Sigma}_{ik}^{\top}\big|&=\big|\mathbf{A}_{i|j}\mathbf{\Sigma}_{jj}\mathbf{A}_{k|j}^{\top}\mathbf{A}_{k|j}\mathbf{\Sigma}_{jj}\mathbf{A}_{i|j}^{\top}\big| \\
		&=\big|\mathbf{A}_{i|j}^{\top}\mathbf{A}_{i|j}\mathbf{\Sigma}_{jj}\mathbf{A}_{k|j}^{\top}\mathbf{A}_{k|j}\mathbf{\Sigma}_{jj}\big|\\
		&=\frac{\big|\mathbf{\Sigma}_{jj}^{\top}\mathbf{A}_{i|j}^{\top}\mathbf{A}_{i|j}\mathbf{\Sigma}_{jj}\big|}{\big|\mathbf{\Sigma}_{jj}\big|}\frac{\big|\mathbf{\Sigma}_{jj}^{\top}\mathbf{A}_{k|j}^{\top}\mathbf{A}_{k|j}\mathbf{\Sigma}_{jj}\big|}{\big|\mathbf{\Sigma}_{jj}\big|}.
	\end{align}
	Furthermore, 
	\begin{align}\label{eqn:covdet2}
		\big|\mathbf{\Sigma}_{jj}^{\top}\mathbf{A}_{i|j}^{\top}\mathbf{A}_{i|j}\mathbf{\Sigma}_{jj}\big|&=\big|\mathbf{A}_{i|j}\mathbf{\Sigma}_{jj}\mathbf{\Sigma}_{jj}^{\top}\mathbf{A}_{i|j}^{\top}\big|=\big|\mathbf{\Sigma}_{ij}\mathbf{\Sigma}_{ij}^{\top}\big|,\\
		\big|\mathbf{\Sigma}_{jj}^{\top}\mathbf{A}_{k|j}^{\top}\mathbf{A}_{k|j}\mathbf{\Sigma}_{jj}\big|&=\big|\mathbf{\Sigma}_{kj}\mathbf{\Sigma}_{kj}^{\top}\big|.
	\end{align}

	Substituting \eqref{eqn:covdet1} and \eqref{eqn:covdet2} into \eqref{eqn:psedist}, we have
	\begin{align}
		\mathrm{d}(x_{i},x_{k})=&-\frac{1}{2}\log\big|\mathbf{\Sigma}_{ij}\mathbf{\Sigma}_{ij}^{\top}\big|+\frac{1}{4}\log\big|\mathbf{\Sigma}_{ii}\mathbf{\Sigma}_{ii}^{\top}\big|+\frac{1}{4}\log\big|\mathbf{\Sigma}_{jj}\mathbf{\Sigma}_{jj}^{\top}\big|\nonumber\\
		&\quad-\frac{1}{2}\log\big|\mathbf{\Sigma}_{kj}\mathbf{\Sigma}_{kj}^{\top}\big|+\frac{1}{4}\log\big|\mathbf{\Sigma}_{kk}\mathbf{\Sigma}_{kk}^{\top}\big|+\frac{1}{4}\log\big|\mathbf{\Sigma}_{jj}\mathbf{\Sigma}_{jj}^{\top}\big|\\
		&=\mathrm{d}(x_{i},x_{j})+\mathrm{d}(x_{j},x_{k}),
	\end{align}
	as desired.
\end{proof}
\begin{lemma}{(Bernstein-type inequality~\cite{vershynin2010introduction})}\label{subexp1}
	Let $X_{1},\ldots,X_{n}$ be $n$ centered sub-exponential random variables, and $K=\max_{1\le i\le n} \|X_{i}\|_{\psi_{1}}$, where $\|\cdot\|_{\psi_1}$ is  the sub-exponential norm  and is defined as 
	    \begin{align}
        \|X\|_{\psi_{1}}:=\sup_{p\geq 1}  p^{-1}\big(\mathbb{E}|X|^{p}\big)^{1/p}.
    \end{align}
	Then for every $a=(a_{1},\ldots,a_{n})\in \mathbb{R}^{n}$ and 
	every $t>0$, we have
	\begin{align}
		\mathbb{P}\bigg(\Big|\sum_{i=1}^{n} a_{i}X_{i}\Big|\geq t\bigg)\leq 2\exp\bigg[-c\min\Big\{\frac{t^{2}}{K^{2}\|a\|_{2}^{2}},\frac{t}{K\|a\|_{\infty}}\Big\}\bigg]
	\end{align}
\end{lemma}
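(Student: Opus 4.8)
The plan is to follow the classical Chernoff-bound argument for sums of independent sub-exponential random variables. The first step is to convert the moment-based definition of the sub-exponential norm $\|\cdot\|_{\psi_{1}}$ given in the statement into a bound on the moment generating function (MGF). Concretely, I would show that there are absolute constants $C,c_{0}>0$ such that any centered random variable $X$ with $\|X\|_{\psi_{1}}\le K$ satisfies $\mathbb{E}[e^{\lambda X}]\le \exp(C\lambda^{2}K^{2})$ whenever $|\lambda|\le c_{0}/K$. This follows by Taylor-expanding the exponential, using the definition $\|X\|_{\psi_{1}}\le K$ to bound $\mathbb{E}|X|^{p}\le (pK)^{p}$, discarding the linear term via $\mathbb{E}[X]=0$, and summing the resulting geometric-type series in the regime $|\lambda|\le c_{0}/K$.

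Next I would apply the exponential Markov inequality to the one-sided tail. For $\lambda>0$, independence gives
\[
	\mathbb{P}\Big(\sum_{i=1}^{n} a_{i}X_{i}\ge t\Big)\le e^{-\lambda t}\prod_{i=1}^{n}\mathbb{E}\big[e^{\lambda a_{i}X_{i}}\big].
\]
Provided $|\lambda|\,\|a\|_{\infty}\le c_{0}/K$, each factor can be controlled by the MGF bound above applied to $a_{i}X_{i}$, yielding $\mathbb{E}[e^{\lambda a_{i}X_{i}}]\le \exp(C\lambda^{2}a_{i}^{2}K^{2})$, so the product is at most $\exp(C\lambda^{2}K^{2}\|a\|_{2}^{2})$. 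This leaves the bound $\mathbb{P}(\sum_{i}a_{i}X_{i}\ge t)\le \exp(-\lambda t+C\lambda^{2}K^{2}\|a\|_{2}^{2})$, valid for every $0<\lambda\le c_{0}/(K\|a\|_{\infty})$.

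The final step is the optimization over $\lambda$, which is where the $\min\{\cdot,\cdot\}$ structure emerges and which I expect to be the most delicate part. The unconstrained minimizer of $-\lambda t+C\lambda^{2}K^{2}\|a\|_{2}^{2}$ is $\lambda^{\star}=t/(2CK^{2}\|a\|_{2}^{2})$. If $\lambda^{\star}$ respects the constraint $\lambda^{\star}\le c_{0}/(K\|a\|_{\infty})$ (the \emph{sub-Gaussian regime}, corresponding to small $t$), substituting $\lambda=\lambda^{\star}$ produces a tail of order $\exp(-c\,t^{2}/(K^{2}\|a\|_{2}^{2}))$. Otherwise (the \emph{sub-exponential regime}, large $t$), the objective is decreasing on the feasible interval, so I would take the boundary value $\lambda=c_{0}/(K\|a\|_{\infty})$, producing a tail of order $\exp(-c\,t/(K\|a\|_{\infty}))$. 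The careful bookkeeping of constants in this second case is the main obstacle, since one must verify that the linear term indeed dominates there; taking the smaller of the two exponents then yields exactly the claimed $\min$. Applying the identical argument to $-X_{i}$ controls the lower tail and contributes the factor $2$ in front, completing the bound.
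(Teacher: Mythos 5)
Your proposal is correct and is essentially the canonical argument: the paper supplies no proof of Lemma~\ref{subexp1}, importing it verbatim from Vershynin~\cite{vershynin2010introduction}, whose proof proceeds by exactly your three steps --- converting the moment bound $\mathbb{E}|X|^{p}\le (pK)^{p}$ into the MGF bound $\mathbb{E}[e^{\lambda X}]\le \exp(C\lambda^{2}K^{2})$ for $|\lambda|\le c_{0}/K$, applying the Chernoff bound factor-by-factor via independence, and performing the constrained optimization over $\lambda$ whose two regimes (interior optimum versus boundary value $\lambda=c_{0}/(K\|a\|_{\infty})$) produce the $\min$ in the exponent, with the two-sided bound obtained by repeating the argument for $-X_{i}$. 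One point worth flagging: the statement as printed omits the word \emph{independent}, yet independence is indispensable for the product step in your Chernoff bound (and is satisfied in the paper's application to i.i.d.\ samples), so your tacit assumption is the correct reading of the lemma.
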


\begin{lemma}\label{lemtrucate}
	Let the estimate of the covariance matrix $\mathbf{\Sigma}_{ij}$ based on the truncated inner product be $\hat{\mathbf{\Sigma}}_{ij}$. If $t_{2}<\kappa=\max\{\sigma_{\max}^{2},\rho_{\min}\}$, 
	we have
	\begin{align}
		\mathbb{P}\big(\|\hat{\mathbf{\Sigma}}_{ij}-\mathbf{\Sigma}_{ij}\|_{\infty,\infty}> t_{1}+t_{2}\big)\le 2l_{\max}^{2}e^{-\frac{3n_{2}}{16\kappa n_{1}} t_{1}}+l_{\max}^{2}e^{-c\frac{t_{2}^{2}n_{2}}{\kappa^{2}}}\quad \forall x_{i},x_{j}\in \mathcal{V}_{\mathrm{obs}}.
	\end{align}
\end{lemma}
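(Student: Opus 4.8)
The plan is to reduce the entrywise-max bound to a scalar concentration statement, split the error for each entry into a \emph{clean-sample} part and a \emph{corruption} part, and control these two parts by the Bernstein inequality (Lemma~\ref{subexp1}) and a combinatorial truncation argument, respectively.

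Since $\|\hat{\mathbf{\Sigma}}_{ij}-\mathbf{\Sigma}_{ij}\|_{\infty,\infty}=\max_{s,t}\big|[\hat{\mathbf{\Sigma}}_{ij}]_{st}-[\mathbf{\Sigma}_{ij}]_{st}\big|$, a union bound over the $l_{\max}^2$ entries reduces the claim to showing, for each fixed $(s,t)$, that $\big|[\hat{\mathbf{\Sigma}}_{ij}]_{st}-[\mathbf{\Sigma}_{ij}]_{st}\big|>t_1+t_2$ holds with probability at most $2e^{-\frac{3n_2}{16\kappa n_1}t_1}+e^{-c t_2^2 n_2/\kappa^2}$; the prefactor $l_{\max}^2$ then reappears after the union bound. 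Fix such an entry, write $a_k,b_k$ for the two relevant scalar sample sequences and $q_k=a_kb_k$, and let $\mu=\mathbb{E}[q_k]=[\mathbf{\Sigma}_{ij}]_{st}$. Because each column carries at most $n_1/2$ corrupted samples, the products $q_k$ contain at least $n_2=n-n_1$ \emph{clean} entries; fix a clean index set $S_c$ with $|S_c|=n_2$, and recall that the truncated estimate sums $q_k$ over the set $\Upsilon$ of the $n_2$ smallest magnitudes. I would then split
\begin{align}
[\hat{\mathbf{\Sigma}}_{ij}]_{st}-\mu=\frac{1}{n_2}\Big(\sum_{k\in\Upsilon}q_k-\sum_{k\in S_c}q_k\Big)+\Big(\frac{1}{n_2}\sum_{k\in S_c}q_k-\mu\Big),\nonumber
\end{align}
so that by the triangle inequality it suffices to bound the first (corruption) term by $t_1$ and the second (clean-sample) term by $t_2$.

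For the clean-sample term, each $q_k-\mu$ is a centered sub-exponential variable: since $a_k,b_k$ are jointly Gaussian with variances at most $\sigma_{\max}^2$ by Assumption~\ref{assupdet}, $\|q_k-\mu\|_{\psi_1}\lesssim\sigma_{\max}^2\le\kappa$. Applying Lemma~\ref{subexp1} with weights $1/n_2$ (so $\|a\|_2^2=\|a\|_\infty=1/n_2$) yields the tail $2\exp\!\big(\!-c\min\{t_2^2 n_2/\kappa^2,\,t_2 n_2/\kappa\}\big)$, and the hypothesis $t_2<\kappa$ puts us in the quadratic regime, producing exactly the second summand $e^{-c t_2^2 n_2/\kappa^2}$.

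The corruption term is the crux. Writing $K=\Upsilon\setminus S_c$ (kept but dirty) and $D=S_c\setminus\Upsilon$ (dropped but clean), the equality $|\Upsilon|=|S_c|=n_2$ forces $|K|=|D|\le n_1$, while truncation guarantees $|q_k|\le|q_j|$ for every $k\in K$ and $j\in D$. Pairing the two sets gives $\big|\sum_{\Upsilon}q_k-\sum_{S_c}q_k\big|\le 2\sum_{k\in D}|q_k|\le 2\sum_{\text{top-}n_1\text{ clean}}|q_k|$. The top-$n_1$ sum can exceed $n_2 t_1/2$ only if some clean product exceeds $n_2 t_1/(2n_1)$; a union bound over positions together with the sub-exponential tail $\mathbb{P}(|q_k|>u)\le 2e^{-3u/(8\kappa)}$ delivers an exponential-in-$t_1$ bound at rate $\tfrac{3n_2}{16\kappa n_1}$, i.e.\ the first summand. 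Combining the two tails and re-applying the entry union bound gives the stated inequality. The main obstacle is this last step: justifying the pairing bound from the truncation, and carefully tracking the sub-exponential constant and the polynomial prefactors (absorbing the position union bound into the constants) so that the rate $\tfrac{3n_2}{16\kappa n_1}$ and the prefactor $2l_{\max}^2$ emerge precisely.
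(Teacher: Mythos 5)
Your decomposition is exactly the paper's: your $S_c$, $D=S_c\setminus\Upsilon$ and $K=\Upsilon\setminus S_c$ are the paper's $I_{ij,1}^{st}$, $I_{ij,2}^{st}$ and $I_{ij,3}^{st}$, the pairing bound $\big|\sum_{\Upsilon}q_k-\sum_{S_c}q_k\big|\le 2\sum_{k\in D}|q_k|$ is the paper's inequality \eqref{gaupro}, and the clean-sample term is handled identically (Bernstein via Lemma~\ref{subexp1}, with $t_2<\kappa$ forcing the quadratic regime). The one place you genuinely diverge is the control of $\frac{2}{n_2}\sum_{k\in D}|q_k|$, and that is where there is a real gap. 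Your ``some clean product must exceed $n_2t_1/(2n_1)$'' argument is correct as far as it goes, but the union bound it requires runs over the $n_2$ clean sample positions, so per entry you get
\begin{align}
\mathbb{P}\Big(\tfrac{2}{n_2}\textstyle\sum_{k\in D}|q_k|>t_1\Big)\le 2\,n_2\,e^{-\frac{3n_2}{16\kappa n_1}t_1},\nonumber
\end{align}
i.e.\ an overall prefactor $2n_2 l_{\max}^2$ rather than $2l_{\max}^2$. This factor of $n_2$ multiplies the tail; it sits outside the exponent and cannot be ``absorbed into the constants'' as your last sentence claims --- absolute constants can eat changes inside the exponent (as with $c$), not a polynomial-in-$n_2$ multiplier. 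So your argument proves a strictly weaker statement than the lemma. (Downstream it is nearly harmless, since the sample-complexity theorems already tolerate logarithmic slack via $n_1=O(\sqrt{n_2}/\log n_2)$, but the lemma as stated is not established.)

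The paper avoids this by never passing to the maximum element: it bounds the moment generating function of the whole sum. Concretely, $\mathbb{E}\big[e^{\lambda|q|}\big]\le(1-2\sigma_{\max}^2\lambda)^{-1/2}$ for a single product of Gaussians (your tail bound $\mathbb{P}(|q|>u)\le 2e^{-3u/(8\kappa)}$ is the Chernoff consequence of this at $\lambda=3/(8\sigma_{\max}^2)$), and then the power-mean inequality $e^{\sum_{k\in D}x_k}\le\frac{1}{|D|}\sum_{k\in D}e^{|D|x_k}$ with $|D|\le n_1$ yields $\mathbb{E}\big[e^{\lambda\cdot\frac{2}{n_2}\sum_{k\in D}|q_k|}\big]\le\big(1-\frac{4\sigma_{\max}^2 n_1}{n_2}\lambda\big)^{-1/2}$; taking $\lambda=\frac{3n_2}{16\sigma_{\max}^2 n_1}$ makes the square root equal to $2$ and delivers $2e^{-\frac{3n_2}{16\kappa n_1}t_1}$ per entry, hence $2l_{\max}^2$ after the union bound over the $l_{\max}^2$ matrix entries only. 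Replacing your position-wise union bound with this Chernoff-on-the-sum step closes the gap; everything else in your write-up matches the paper's proof.
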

\begin{proof}[Proof of Lemma~\ref{lemtrucate}]
	Let $I_{ij,1}^{st}$ be the set of indexes of the uncorrupted samples of $[\mathbf{x}_{i}]_{s}[\mathbf{x}_{j}]_{t}$. Without loss of generality, we assume that $|I_{ij,1}^{st}|=n_{2}$. Let $I_{ij,2}^{st}$ 
	and $I_{ij,3}^{st}$ be the sets of the indexes of truncated uncorrupted samples and the reserved corrupted samples, respectively.
	\begin{figure}[H]
		\centering\includegraphics[width=0.6\columnwidth,draft=false]{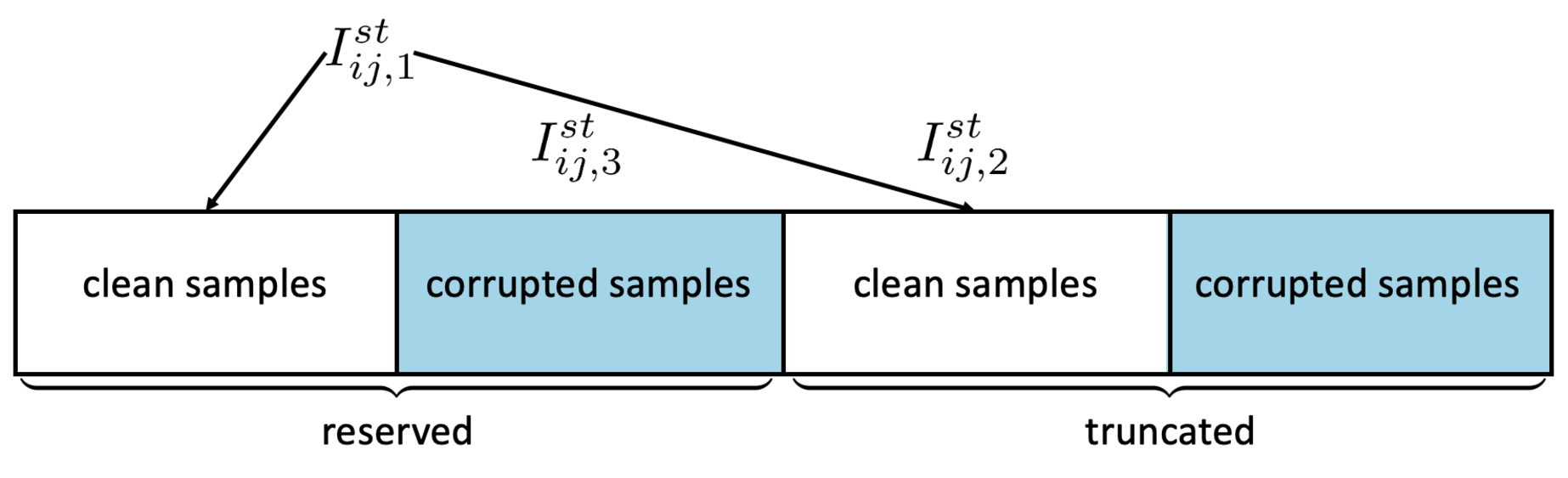}
		\caption{Illustration of the truncated inner product.}
		\label{fig:trucinnerprod}
	\end{figure}
	Then,
	\begin{align}
		[\hat{\mathbf{\Sigma}}_{ij}]_{st}=\frac{1}{n_{2}}\bigg(\sum_{m\in I_{ij,1}^{st}} [\mathbf{x}_{i}]_{s}^{(m)}[\mathbf{x}_{j}]_{t}^{(m)}-\sum_{m\in I_{ij,2}^{st}} [\mathbf{x}_{i}]_{s}^{(m)}[\mathbf{x}_{j}]_{t}^{(m)}+\sum_{m\in I_{ij,3}^{st}} [\mathbf{x}_{i}]_{s}^{(m)}[\mathbf{x}_{j}]_{t}^{(m)}\bigg)
	\end{align}
	The $(s,t)^{\mathrm{th}}$ entry of the error covariance matrix $\tilde{\mathbf{\Sigma}}_{ij}  = \hat{\mathbf{\Sigma}}_{ij}-\mathbf{\Sigma}_{ij}\in \mathbb{R}^{d\times d}$ is defined as 
	\begin{align}\label{truerror}
		[\tilde{\mathbf{\Sigma}}_{ij}]_{st}=\frac{1}{n_{2}}\bigg(-\sum_{m\in I_{ij,2}^{st}} [\mathbf{x}_{i}]_{s}^{(m)}[\mathbf{x}_{j}]_{t}^{(m)}+\sum_{m\in I_{ij,3}^{st}} [\mathbf{x}_{i}]_{s}^{(m)}[\mathbf{x}_{j}]_{t}^{(m)}\bigg).
	\end{align}
	From the definition of the truncated inner product, we can bound the right-hand side of \eqref{truerror} as
	\begin{align}\label{gaupro}
		\big|[\tilde{\mathbf{\Sigma}}_{ij}]_{st}\big|\leq \frac{2}{n_{2}}\sum_{m\in I_{ij,2}^{st}} \big|[\mathbf{x}_{i}]_{s}^{(m)}[\mathbf{x}_{j}]_{t}^{(m)}\big|.
	\end{align}
	Equipped with the expression of the moment-generating function of a chi-squared distribution, the moment-generating function of each term in the sum of \eqref{gaupro} can be upper bounded as
	\begin{align}
		\mathbb{E}\Big[e^{\lambda|[\mathbf{x}_{i}]_{s}^{(m)}[\mathbf{x}_{j}]_{t}^{(m)}|}\Big]&\leq \mathbb{E}\Big[e^{\lambda\frac{([\mathbf{x}_{i}]_{s}^{(m)})^{2}+([\mathbf{x}_{j}]_{t}^{(m)})^{2}}{2}}\Big]\\
		&\leq \sqrt{\mathbb{E}\big[e^{\lambda([\mathbf{x}_{i}]_{s}^{(m)})^{2}}\big]\mathbb{E}\big[e^{\lambda([\mathbf{x}_{j}]_{t}^{(m)})^{2}}\big]} \\
		&\leq \frac{1}{\sqrt{1-2\sigma_{\max}^{2}\lambda}}.
	\end{align}
	Using the power mean inequality, we have
	\begin{align}
		\Big(e^{\frac{2\lambda}{n_{2}}\sum_{m\in I_{ij,2}^{st}} |[\mathbf{x}_{i}]_{s}^{(m)}[\mathbf{x}_{j}]_{t}^{(m)}|}\Big)^{\frac{1}{|I_{ij,2}^{st}|}}&\leq \frac{\sum_{m\in I_{ij,2}^{st}}e^{\frac{2\lambda}{n_{2}}|[\mathbf{x}_{i}]_{s}^{(m)}[\mathbf{x}_{j}]_{t}^{(m)}| }}{|I_{ij,2}^{st}|}\\
		&\leq \bigg(\frac{\sum_{m\in I_{ij,2}^{st}}e^{\frac{2\lambda n_{1}}{n_{2}}|[\mathbf{x}_{i}]_{s}^{(m)}[\mathbf{x}_{j}]_{t}^{(m)}| }}{|I_{ij,2}^{st}|}\bigg)^{\frac{1}{n_{1}}}.
	\end{align}
	Thus, 
	\begin{align}
		\mathbb{E}\Big[e^{\lambda|[\tilde{\mathbf{\Sigma}}_{ij}]_{st}|}\Big]&\leq \mathbb{E}\Big[e^{\frac{2\lambda}{n_{2}}\sum_{m\in I_{ij,2}^{st}} \big|[\mathbf{x}_{i}]_{s}^{(m)}[\mathbf{x}_{j}]_{t}^{(m)}\big|}\Big]\leq \max_{m\in I_{ij,2}^{st}} \mathbb{E}\Big[e^{\frac{2\lambda n_{1}}{n_{2}}\big|[\mathbf{x}_{i}]_{s}^{(m)}[\mathbf{x}_{j}]_{t}^{(m)}\big|}\Big]  \\ 
		&\leq \frac{1}{\sqrt{1-\frac{4\sigma_{\max}^{2}n_{1}}{n_{2}}\lambda}}
	\end{align}
	and
	\begin{align}
		\mathbb{E}\Big[e^{\lambda\max_{s,t}\big|[\tilde{\mathbf{\Sigma}}_{ij}]_{st}\big|}\Big]&=\mathbb{E}\Big[\max_{s,t}e^{\lambda\big|[\tilde{\mathbf{\Sigma}}_{ij}]_{st}\big|}\Big]\leq l_{\max}^{2}\mathbb{E}\Big[e^{\lambda\big|[\tilde{\mathbf{\Sigma}}_{ij}]_{st}\big|}\Big] \leq \frac{l_{\max}^{2}}{\sqrt{1-\frac{4\sigma_{\max}^{2}n_{1}}{n_{2}}\lambda}}.
	\end{align}
	Thus,
	\begin{align}
		\mathbb{P}\big(\|\tilde{\mathbf{\Sigma}}_{ij}\|_{\infty,\infty}>t\big)&=\mathbb{P}\Big(\max_{s,t}\big|[\tilde{\mathbf{\Sigma}}_{ij}]_{st}\big|>t\Big)=\mathbb{P}\Big(e^{\lambda\max_{s,t}\big|[\tilde{\mathbf{\Sigma}}_{ij}]_{st}\big|}>e^{\lambda t}\Big)\\
		&\leq e^{-\lambda t}\mathbb{E}\Big[e^{\lambda\max_{s,t}\big|[\tilde{\mathbf{\Sigma}}_{ij}]_{st}\big|}\Big]\leq e^{-\lambda t}\frac{l_{\max}^{2}}{\sqrt{1-\frac{4\sigma_{\max}^{2}n_{1}}{n_{2}}\lambda}}.
	\end{align}
	Let $\lambda=\frac{3n_{2}}{16\sigma_{\max}^{2}n_{1}}$, then we have
	\begin{align}
		\mathbb{P}\big(\|\tilde{\mathbf{\Sigma}}_{ij}\|_{\infty,\infty}>t\big)\leq 2l_{\max}^{2}e^{-\frac{3n_{2}}{16\sigma_{\max}^{2}n_{1}} t}.
	\end{align}
	According to Lemma \ref{subexp1} (since the involved random variables are sub-exponential), we have
	\begin{align}
		\mathbb{P}\bigg(\Big|\frac{1}{n_{2}}\sum_{m\in I_{ij,1}^{st}} \big([\mathbf{x}_{i}]_{s}^{(m)}[\mathbf{x}_{j}]_{t}^{(m)}-[\mathbf{\Sigma}_{ij}]_{st}\big)\Big|>t\bigg)\leq \exp\bigg(-c\min\Big\{\frac{t^{2}n_{2}}{K^{2}},\frac{tn_{2}}{K}\Big\}\bigg),
	\end{align}
	where $K=\sigma_{\max}^{2}$.

	Thus, if $t<\kappa$, we have
	\begin{align}
		\mathbb{P}\big(\|\hat{\mathbf{\Sigma}}_{ij}-\mathbf{\Sigma}_{ij}\|_{\infty,\infty}> t_{1}+t_{2}\big)\le  2l_{\max}^{2}e^{-\frac{3n_{2}}{16\kappa n_{1}} t_{1}}+l_{\max}^{2}e^{-c\frac{t_{2}^{2}n_{2}}{\kappa^{2}}},
	\end{align}
	as desired.
\end{proof}

\begin{proof}[Proof of Proposition ~\ref{distconcen}]
	From the definition of the information distance, we have
	\begin{align}
		\mathrm{d}(x_{i},x_{j})=-\sum_{n=1}^{l_{\max}} \log \sigma_{n}\big(\mathbf{\Sigma}_{ij}\big)+\frac{1}{2}\log \det\big(\mathbf{\Sigma}_{ii}\big)+\frac{1}{2}\log \det\big(\mathbf{\Sigma}_{jj}\big).
	\end{align}
	According to the inequality $\|\mathbf{A}\|_{2}\leq \sqrt{\|\mathbf{A}\|_{1}\|\mathbf{A}\|_{\infty}}$ which holds for all   $\mathbf{A}\in \mathbb{R}^{n\times m}$ \cite{Gen:B13}, we have
	\begin{align}\label{sigbound}
		\big|\sigma_{k}(\hat{\mathbf{\Sigma}}_{ij})-\sigma_{k}(\mathbf{\Sigma}_{ij})\big|&\leq \|\hat{\mathbf{\Sigma}}_{ij}-\mathbf{\Sigma}_{ij}\|_{2}\\
		&\leq \sqrt{\|\hat{\mathbf{\Sigma}}_{ij}-\mathbf{\Sigma}_{ij}\|_{\infty}\|\hat{\mathbf{\Sigma}}_{ij}-\mathbf{\Sigma}_{ij}\|_{1}}\leq l_{\max}\|\hat{\mathbf{\Sigma}}_{ij}-\mathbf{\Sigma}_{ij}\|_{\infty,\infty}.
	\end{align}
	Using the triangle inequality, we arrive at
	\begin{align}
		\big|\hat{\mathrm{d}}(x_{i},x_{j})-\mathrm{d}(x_{i},x_{j})\big|\leq &\sum_{n=1}^{l_{\max}}\big|\log \sigma_{n}(\hat{\mathbf{\Sigma}}_{ij})-\log \sigma_{n}(\mathbf{\Sigma}_{ij})\big|+\frac{1}{2}\sum_{n=1}^{\mathrm{dim}(\mathbf{x}_{i})}\big|\log \sigma_{n}(\hat{\mathbf{\Sigma}}_{ii})-\log \sigma_{n}(\mathbf{\Sigma}_{ii})\big|\nonumber \\
		&+\frac{1}{2}\sum_{n=1}^{\mathrm{dim}(\mathbf{x}_{j})}\big|\log \sigma_{n}(\hat{\mathbf{\Sigma}}_{jj})-\log \sigma_{n}(\mathbf{\Sigma}_{jj})\big|.
	\end{align}
	Furthermore, since the singular value is lower bounded by $\gamma_{\min}$, using Taylor's theorem and \eqref{sigbound}, we obtain
	\begin{align}
		\big|\log \sigma_{n}(\hat{\mathbf{\Sigma}}_{ij})-\log \sigma_{n}(\mathbf{\Sigma}_{ij})\big|\leq \frac{1}{\gamma_{\min}} \big|\sigma_{n}(\hat{\mathbf{\Sigma}}_{ij})-\sigma_{n}(\mathbf{\Sigma}_{ij})\big|\leq \frac{l_{\max}}{\gamma_{\min}}\|\hat{\mathbf{\Sigma}}_{ij}-\mathbf{\Sigma}_{ij}\|_{\infty,\infty}.
	\end{align}
	Finally, 
	\begin{align}
		\big|\hat{\mathrm{d}}(x_{i},x_{j})-\mathrm{d}(x_{i},x_{j})\big|&\leq \Big(l_{\max}+\frac{\mathrm{dim}(\mathbf{x}_{i})+\mathrm{dim}(\mathbf{x}_{j})}{2}\Big)\frac{l_{\max}}{\gamma_{\min}}\|\hat{\mathbf{\Sigma}}_{ij}-\mathbf{\Sigma}_{ij}\|_{\infty,\infty}\nonumber\\
			&\leq \frac{2l_{\max}^{2}}{\gamma_{\min}}\|\hat{\mathbf{\Sigma}}_{ij}-\mathbf{\Sigma}_{ij}\|_{\infty,\infty}.
	\end{align}
	From Lemma \ref{lemtrucate}, the proposition is proved.
\end{proof}

\section{Proofs of results in Section~\ref{subsec:rrg}}

\begin{lemma}\label{optlem}
	Consider the optimization problem
	\begin{align}
		\mathscr{P}:\quad \max\limits_{ \{x_{i}\} } \quad &f(\mathbf{x})=\sum_{i=1}^{n} x_{i}(x_{i}-1) \notag \\
			{\rm s.t.} \quad & \sum_{i=1}^{N}x_{i}\leq N \quad  0\leq x_{i}\leq k \qquad i=1,\ldots,N.
	\end{align}
	Assume   $nk\geq N$. An optimal  solution is given by $x_{i}=k $ for all $i=1,\ldots,\lfloor\frac{N}{k}\rfloor$ and  $x_{\lfloor\frac{N}{k}\rfloor+1}=N-k\lfloor\frac{N}{k}\rfloor$, and $x_{i}=0$ for $i=\lfloor\frac{N}{k}\rfloor+2,\ldots,n$.
\end{lemma}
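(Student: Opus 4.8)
The plan is to exploit that the objective $f(\mathbf{x})=\sum_i x_i(x_i-1)=\sum_i(x_i^2-x_i)$ is \emph{separable and convex}, while the feasible set $\mathcal{F}=\{\mathbf{x}:0\le x_i\le k,\ \sum_i x_i\le N\}$ is a compact polytope, so the maximum is attained at an extreme point. Rather than invoke extreme-point machinery abstractly, I would argue constructively: show that any feasible $\mathbf{x}$ can be driven, by elementary moves that never decrease $f$, to the configuration in the statement. Two kinds of moves suffice, a mass-preserving exchange between interior coordinates and a one-dimensional push of a single interior coordinate to a bound.

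First I would reduce the number of \emph{interior} coordinates (those with $x_i\in(0,k)$) to at most one. If $x_i,x_j\in(0,k)$, consider the sum-preserving perturbation $x_i\mapsto x_i+\epsilon$, $x_j\mapsto x_j-\epsilon$. The restriction $\varphi(\epsilon)=(x_i+\epsilon)(x_i+\epsilon-1)+(x_j-\epsilon)(x_j-\epsilon-1)+\mathrm{const}$ has $\varphi''(\epsilon)=4>0$, so $\varphi$ is convex and attains its maximum over the admissible range of $\epsilon$ at an endpoint, where $x_i$ or $x_j$ reaches $\{0,k\}$. This preserves $\sum_i x_i$ and feasibility, does not decrease $f$, and strictly lowers the count of interior coordinates; iterating leaves at most one.

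Next I would locate the surviving interior coordinate. Writing $m$ for the number of coordinates equal to $k$ and $r\in[0,k)$ for the lone interior value (the rest being $0$), feasibility reads $mk+r\le N$, and $nk\ge N$ guarantees enough coordinates to realise the target (so a zero coordinate is available whenever needed). As a function of $r$ alone, $f$ is again convex ($r(r-1)$ plus a constant), so I push $r$ to an endpoint of $[0,\min\{k,N-mk\}]$: if the upper endpoint is $k$ the coordinate leaves $(0,k)$ and $m$ increases, otherwise $r\in\{0,\,N-mk\}$. Repeating lands at $m=\lfloor N/k\rfloor$ coordinates equal to $k$ with the interior one equal to $0$ or to $r^{\ast}:=N-k\lfloor N/k\rfloor$. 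Comparing the two survivors, whose values are $\lfloor N/k\rfloor\,k(k-1)$ and $\lfloor N/k\rfloor\,k(k-1)+r^{\ast}(r^{\ast}-1)$, and using $k\ge 1$ (which makes $k(k-1)=\max_{t\in[0,k]}t(t-1)$ and hence justifies driving mass to $k$ rather than to $0$ in the earlier steps), the claimed point attains the larger value exactly when $r^{\ast}(r^{\ast}-1)\ge 0$.

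The main obstacle, and the only delicate point, is that $t\mapsto t(t-1)$ is \emph{not} monotone on $[0,k]$: it dips below zero on $(0,1)$, so one cannot simply argue that enlarging a coordinate always helps. This is precisely why I use the one-dimensional convexity argument (maximum at endpoints) rather than a monotonicity argument, and why the final comparison turns on the sign of $r^{\ast}(r^{\ast}-1)$. In the regime of interest the data $N$ and $k$ are integers, whence $r^{\ast}=N\bmod k\in\{0,1,\dots,k-1\}$ and $r^{\ast}(r^{\ast}-1)\ge 0$ automatically; this certifies that the configuration in the statement is an optimal solution, with ties occurring only at $r^{\ast}\in\{0,1\}$.
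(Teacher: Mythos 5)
Your proof is correct, and there is nothing in the paper to compare it against: the paper's entire ``proof'' of this lemma is the sentence that it ``can be verified by direct calculation, and so we will omit the details.'' Your argument therefore supplies what the paper omits, and it does so soundly: the sum-preserving exchange with $\varphi''(\epsilon)=4>0$ strictly reduces the number of coordinates in $(0,k)$ at each application, the subsequent one-dimensional pushes are licensed by endpoint-optimality of a convex function, the choice to push upward when the upper endpoint is $k$ is justified by $k(k-1)=\max_{t\in[0,k]}t(t-1)$ for $k\geq 1$, and $nk\geq N$ indeed guarantees $\lfloor N/k\rfloor+1\leq n$ when $r^{\ast}>0$, so the target configuration is feasible. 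Most valuably, you isolate the genuine subtlety that ``direct calculation'' conceals: since $t\mapsto t(t-1)$ is negative on $(0,1)$, the stated solution is \emph{not} optimal for arbitrary real data --- e.g.\ with $n=2$, $k=2$, $N=5/2$ the stated point yields $7/4$ while $(2,0)$ yields $2$ --- and optimality holds precisely because in the paper's application ($N=|\Gamma^{l}|$, $k=d_{\max}$, bounding $|\Gamma^l_{\mathrm{p}}|\leq\sum_i\binom{n_i}{2}$) the data are positive integers, whence $r^{\ast}=N-k\lfloor N/k\rfloor\in\{0,1,\dots,k-1\}$ and $r^{\ast}(r^{\ast}-1)\geq 0$; your final comparison via the sign of $r^{\ast}(r^{\ast}-1)$, with ties exactly at $r^{\ast}\in\{0,1\}$, is the right certificate. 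Two minor points of hygiene: the lemma's statement mixes $n$ and $N$ as the number of variables (a typo you implicitly repair by summing over $i=1,\dots,n$ throughout), and your terminal-stage bookkeeping should say explicitly that once the lone interior coordinate is absorbed into the $k$-block with $m<\lfloor N/k\rfloor$ still, the next push is applied to a coordinate sitting at the boundary value $0$ rather than to an ``interior'' one --- the endpoint argument covers this case verbatim, but your phrasing ``push of a single interior coordinate'' does not literally include it.
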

This lemma can be verified by direct calculation, and so we will omit the details.
\begin{proof}[Proof of Proposition~\ref{errorpropg}]
	We prove the proposition by induction.

	Proposition \ref{distconcen} and Eqn. \eqref{eq:simi} show that at the $0^{\mathrm{th}}$ layer \cite{stewart1998perturbation}
	\begin{align}
		\mathbb{P}(|\Delta_{ij}|>\varepsilon)<f(\varepsilon)=h^{(0)}(\varepsilon).
	\end{align}
	Now suppose that the distances related to the nodes created in the $(l-1)^{\mathrm{st}}$ iteration satisfy
	\begin{align}\label{eq:layer}
		\mathbb{P}\Big(\big|\hat{\mathrm{d}}(x_{i},x_{h})-\mathrm{d}(x_{i},x_{h})\big|>\varepsilon\Big)<h^{(l-1)}(\varepsilon).
	\end{align}
	Since $s>1$ and $m<1$, it is obvious that
	\begin{align}
		h^{(l)}(\varepsilon)\leq h^{(l+k)}(\varepsilon) \quad  \mbox{for all}\quad  l,k\in \mathbb{N} \quad\text{ and for all}\quad  \varepsilon>0.
	\end{align}
	Then we can deduce that
	\begin{align}
		\mathbb{P}(|\hat{\mathrm{d}}(x_{i},x_{j})-\mathrm{d}(x_{i},x_{j})|>\varepsilon)<h^{(l-1)}(\varepsilon) \quad \mbox{for all}\quad  x_{i},x_{j}\in \Gamma^{l}.
	\end{align}
	From the update equation of the distance in~\eqref{distup1}, we have
	\begin{align}
		\hat{\mathrm{d}}(x_{i},x_{h})&=\frac{1}{2(\big|\mathcal{C}(h)-1|\big)}\bigg(\sum_{j\in\mathcal{C}(h)}(\mathrm{d}(x_{i},x_{j})+\Delta_{ij} )+\frac{1}{|\mathcal{K}_{ij}|}\sum_{k\in\mathcal{K}_{ij}}(\Phi_{ijk}+\Delta_{ik}-\Delta_{jk})\bigg)
	\end{align}
	and
	\begin{align}
		\hat{\mathrm{d}}(x_{i},x_{h})&=\frac{1}{2(\big|\mathcal{C}(h)-1|\big)}\bigg(\sum_{j\in\mathcal{C}(h)}\Delta_{ij}+\frac{1}{|\mathcal{K}_{ij}|}\sum_{k\in\mathcal{K}_{ij}}(\Delta_{ik}-\Delta_{jk})\bigg)+\mathrm{d}(x_{i},x_{h}).
	\end{align}
	Using the union bound, we find that
	\begin{align}
		&\mathbb{P}\Big(\big|\hat{\mathrm{d}}(x_{i},x_{h})-\mathrm{d}(x_{i},x_{h})\big|>\varepsilon\Big)\nonumber\\
		&\leq \mathbb{P}\bigg(\bigcup_{j\in \mathcal{C}(h)}\Big\{\big|\Delta_{ij}+\frac{1}{|\mathcal{K}_{ij}|}\sum_{k\in\mathcal{K}_{ij}}(\Delta_{ik}-\Delta_{jk})\big|>2\varepsilon\Big\}\bigg) \\
		&\leq \sum_{j\in \mathcal{C}(h)} \mathbb{P}\bigg(\big|\Delta_{ij}+\frac{1}{|\mathcal{K}_{ij}|}\sum_{k\in\mathcal{K}_{ij}}(\Delta_{ik}-\Delta_{jk})\big|>2\varepsilon\bigg) \\
		&\leq \sum_{j\in \mathcal{C}(h)} \bigg[\mathbb{P}\Big(|\Delta_{ij}|>\frac{2}{3}\varepsilon\Big)+\sum_{k\in\mathcal{K}_{ij}}\mathbb{P}\Big( |\Delta_{ik}|>\frac{2}{3}\varepsilon \Big)+\mathbb{P}\Big( |\Delta_{jk}|>\frac{2}{3}\varepsilon \Big)\bigg].
	\end{align}
	The estimates of the distances related to the nodes in the $l^{\mathrm{th}}$ layer satisfy
	\begin{align}
		\mathbb{P}\Big(\big|\hat{\mathrm{d}}(x_{i},x_{h})-\mathrm{d}(x_{i},x_{h})\big|>\varepsilon\Big)&<|\mathcal{C}(h)|\big(1+2|\mathcal{K}_{ij}|\big)h^{(l-1)}\big(\frac{2}{3}\varepsilon\big)  \\
		&\leq d_{\max}(1+2N_{\tau})h^{(l-1)}\big(\frac{2}{3}\varepsilon\big).
	\end{align}
	Similarly, from \eqref{distup2}, we have
	\begin{align}
		&\mathbb{P}\Big(\big|\hat{\mathrm{d}}(x_{k},x_{h})-\mathrm{d}(x_{k},x_{h})\big|>\varepsilon\Big) \nonumber\\
		&\leq\left\{
			\begin{array}{lr}
			\sum_{i\in \mathcal{C}(h)}\mathbb{P}\big(|\Delta_{ik}|>\frac{1}{2}\varepsilon\big)+\mathbb{P}\big(|\Delta_{ih}|>\frac{1}{2}\varepsilon\big), & \text{if }k\in \mathcal{V}_{\mathrm{obs}} \\
			\sum_{(i,j)\in \mathcal{C}(h)\times \mathcal{C}(k)}\mathbb{P}\big(|\Delta_{ij}|>\frac{1}{3}\varepsilon\big)+\mathbb{P}\big(|\Delta_{ih}|>\frac{1}{3}\varepsilon\big)+\mathbb{P}\big(|\Delta_{jk}|>\frac{1}{3}\varepsilon\big), & \text{otherwise.}
			\end{array}
			\right.
	\end{align}
	Using the concentration bound at the $(l-1)^{\mathrm{st}}$ layer in inequality~\eqref{eq:layer}, we have
	\begin{align}
		\mathbb{P}\Big(\big|\hat{\mathrm{d}}(x_{k},x_{h})&-\mathrm{d}(x_{k},x_{h})\big|>\varepsilon\Big) \nonumber\\
		&\leq\left\{
			\begin{array}{lr}
			d_{\max}h^{(l-1)}(\frac{1}{2}\varepsilon)+d_{\max}^2(1+2N_{\tau})h^{(l-1)}(\frac{1}{3}\varepsilon), & \text{if }k\in \mathcal{V}_{\mathrm{obs}} \\
			d_{\max}^{2}h^{(l-1)}(\frac{2}{3}\varepsilon)+2d_{\max}^{3}(1+2N_{\tau})h^{(l-1)}(\frac{2}{9}\varepsilon), & \text{otherwise.}
			\end{array}
			\right.
	\end{align}
	Summarizing the above three concentration bounds, we have that for the nodes at the $l^{\mathrm{th}}$ layer, estimates of the information distances (based on the truncated inner product) satisfy
	\begin{align}
		\mathbb{P}\Big(\big|\hat{\mathrm{d}}(x_{k},x_{h})&-\mathrm{d}(x_{k},x_{h})\big|>\varepsilon\Big)< \big[d_{\max}^{2}+2d_{\max}^{3}(1+2N_{\tau})\big]h^{(l-1)}\Big(\frac{2}{9}\varepsilon\Big)=h^{(l)}(\varepsilon).
	\end{align}
\end{proof}

\begin{proposition}\label{propactset}
	The cardinalities of the active sets in $l^{\mathrm{th}}$ and $(l+1)^{\mathrm{st}}$ iterations admit following relationship
	\begin{align}
		\frac{|\Gamma^{l}|}{d_{\max}}\leq|\Gamma^{l+1}|\leq |\Gamma^{l}|-2.
	\end{align}
\end{proposition}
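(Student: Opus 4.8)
The plan is to analyze a single iteration of \ac{rrg} combinatorially by introducing an auxiliary tree that records exactly the structure resolved in that step. Index the iteration by $l$ and write $N:=|\Gamma^{l}|$, where we may assume $N\ge 3$ since otherwise the while-loop in Algorithm~\ref{algo:rrg} terminates. By Lemma~\ref{lem:identify}, the tests performed in iteration $l$ sort the currently identifiable nodes into two types: (i) leaf-children, each removed and attached to a surviving parent already lying in $\Gamma^{l}$, and (ii) sibling groups, each removed and attached to a \emph{single} newly created hidden node. Let $R$ denote the removed nodes, $H$ the newly created hidden nodes, and $S:=\Gamma^{l}\setminus R$ the survivors, so that $\Gamma^{l+1}=S\cup H$. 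The first step is to argue that the set $\Gamma^{l}\cup H$, equipped with the resolved parent/sibling edges, forms a tree $\hat{\mathbb{T}}^{l}$ whose leaves are exactly $R$ and whose internal nodes are exactly $\Gamma^{l+1}$; this single structural fact drives both inequalities. Counting nodes of $\hat{\mathbb{T}}^{l}$ then gives the identity $|\Gamma^{l+1}|=|\Gamma^{l}|+|H|-|R|$.

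For the upper bound I would establish $|R|-|H|\ge 2$. If $H=\emptyset$, then $\hat{\mathbb{T}}^{l}=\mathbb{T}^{l}$ is a tree on $N\ge 3$ nodes, which has at least two leaves, so $|R|\ge 2$. If $|H|\ge 1$, then every new hidden node groups at least two siblings, whence $|R|\ge 2|H|$; when $|H|\ge 2$ this already yields $|R|-|H|\ge|H|\ge 2$. The delicate residual case is $|H|=1$ with $|R|=2$, which I would rule out using condition~(C1): the introduced hidden node is a genuine latent node of degree at least three, and since $N\ge 3$ there is at least one further active node in its parent-direction, so it has degree at least three in $\hat{\mathbb{T}}^{l}$; but a tree with only two leaves is a path, in which every internal node has degree two, a contradiction. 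Hence $|R|-|H|\ge 2$ always, giving $|\Gamma^{l+1}|\le|\Gamma^{l}|-2$.

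For the lower bound I would use a charging argument on $\hat{\mathbb{T}}^{l}$. Each removed node in $R$ is a child of exactly one node of $\Gamma^{l+1}$ (either its surviving parent or the new hidden parent of its sibling group), and by Assumption~\ref{assupdegree} every node of $\Gamma^{l+1}$ has at most $d_{\max}$ neighbours; counting such a node together with its removed children, it therefore accounts for at most $d_{\max}$ nodes of $\Gamma^{l}$ (a survivor contributes itself plus at most $d_{\max}-1$ children, since its parent-direction edge is not a removed child, and a new hidden node contributes only its at most $d_{\max}$ children). Summing over $\Gamma^{l+1}$ yields $|\Gamma^{l}|=|S|+|R|\le d_{\max}\,|\Gamma^{l+1}|$, i.e.\ $|\Gamma^{l+1}|\ge|\Gamma^{l}|/d_{\max}$.

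I expect the main obstacle to be the bookkeeping required to justify the structural claim about $\hat{\mathbb{T}}^{l}$ rigorously: namely that a single pass of \ac{rrg} resolves \emph{all} current leaves (so that $R$ is genuinely the leaf set of $\hat{\mathbb{T}}^{l}$), and that a newly introduced hidden node always retains its parent-direction as an edge of $\hat{\mathbb{T}}^{l}$, which is what forbids the net reduction of only one. Once this is settled, both inequalities reduce to elementary tree counting; the only remaining care is the terminal iteration, where $\Gamma^{l+1}$ can collapse to a single root absorbing up to $d_{\max}$ children, and which I would treat separately since the loop stops there.
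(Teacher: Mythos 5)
Your proposal is correct in substance, and it is instructive to see where it coincides with and departs from the paper's proof. Your lower bound is the paper's argument in charging form: the paper partitions $\Gamma^{l}$ into $|\Gamma^{l+1}|$ families of sizes $n_{i}$ with $\sum_{i} n_{i}=|\Gamma^{l}|$ and $1\leq n_{i}\leq d_{\max}$, which is exactly your ``each node of $\Gamma^{l+1}$ accounts for at most $d_{\max}$ nodes of $\Gamma^{l}$.'' Your upper bound is organized genuinely differently: the paper deletes $\Gamma^{l}$, observes that the remaining \emph{undiscovered hidden} nodes form a forest with at least two end-of-chain nodes (possibly degenerate single-node chains), and applies Condition~(C1) to those ends --- each has at most one hidden neighbour, hence at least two neighbours in $\Gamma^{l}$, yielding two families of size $\geq 2$ and so $\sum_{i} n_{i}\geq |\Gamma^{l+1}|+2$. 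You instead track the net change $|R|-|H|$ and apply (C1) to the \emph{newly created} hidden node in the residual case $|H|=1$, $|R|=2$. Your accounting is cleaner case-by-case, and you even catch a terminal-iteration edge case (an observed star centre can form a family of size $d_{\max}+1$) that the paper glosses over.

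That said, your structural claim needs repair, and the repair is precisely where the paper's key idea re-enters. The set $\Gamma^{l}\cup H$ equipped with only the resolved parent/sibling edges is a forest of stars, not a tree: survivors and new hidden nodes are joined only through undiscovered hidden nodes. To obtain your $\hat{\mathbb{T}}^{l}$ you must contract each undiscovered hidden node into an adjacent node, and ``leaves exactly $R$'' fails for the uncontracted Steiner tree of $\Gamma^{l}$ --- in an \ac{hmm}, a mid-chain observed node is a Steiner leaf yet survives the iteration as a singleton family. More importantly, in your $H=\emptyset$ case the assertion that every leaf lies in $R$ silently assumes that no active leaf has an undiscovered hidden parent with a single active child (such a leaf is neither removed nor grouped). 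Ruling this out requires observing that any nonempty forest of undiscovered hidden nodes has an end node which, by (C1), has at least two active children and therefore forces a sibling group, contradicting $H=\emptyset$ --- i.e., exactly the paper's end-of-chain argument. So the ``bookkeeping'' you flag is not mere bookkeeping: carried out rigorously, it reproduces the paper's central step, after which your case analysis goes through.
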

\begin{proof}[Proof of Proposition ~\ref{propactset}]
	Note that at the $l^{\mathrm{th}}$ iteration, the number of families is $|\Gamma^{l+1}|$, and thus we have
	\begin{align}
		\sum_{i=1}^{|\Gamma^{l+1}|} n_{i}=|\Gamma^{l}|,
	\end{align}
	where $n_{i}$ is the number of nodes in $\Gamma^{l}$ in each family. Since $1\leq n_{i}\leq d_{\max}$, we have $\frac{\Gamma^{l}}{d_{\max}}\leq|\Gamma^{l+1}|$.

	\begin{figure}[H]
		\centering\includegraphics[width=0.5\columnwidth,draft=false]{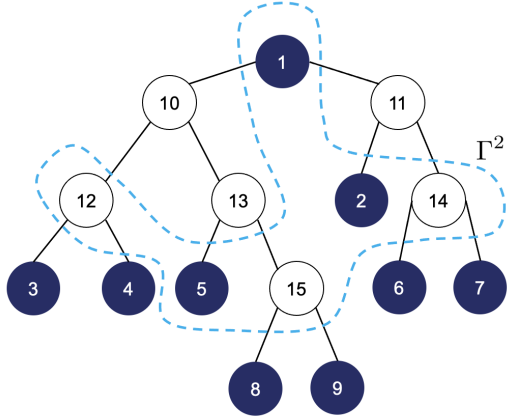}
		\caption{Illustration of \ac{rrg}. The shaded nodes are the observed nodes and the rest are hidden nodes. $\Gamma^{1}=\{x_{1},x_{2},\ldots,,x_{9}\}$, and $\Gamma^{2}$ is the nodes in 
		the dotted lines. If we delete the nodes in $\Gamma^{2}$, the remained unknown hidden nodes are $x_{10}$, $x_{11}$ and $x_{13}$. Nodes $x_{10}$ and $x_{13}$ are at the end of the chain 
		formed by these two nodes, and $x_{11}$ is at the end of the degenerate chain formed by itself.}
		\label{fig:ite_num2}
	\end{figure}
	
	We next prove that there are at least two of $n_{i}$'s not less than $2$. If we delete the nodes in active set $\Gamma^{l}$, the remaining hidden nodes form a 
	single tree or a forest. There will at least two nodes at the end of the chain, which means that they only have one neighbor in hidden nodes, as shown in Fig. \ref{fig:ite_num2}. Since they 
	at least have three neighbors, they have at least two neighbors in $\Gamma^{l}$. Thus, there are at least two of $n_{i}$'s not less than 2, and thus $|\Gamma^{l+1}|\leq |\Gamma^{l}|-2$.
\end{proof}

\begin{corollary}
	The maximum number of iterations of Algorithm \ref{algo:rrg}, $L_{\mathrm{R}}$, is bounded as
	\begin{align}
		\frac{\log \frac{|\mathcal{V}_{\mathrm{obs}}|}{2}}{\log d_{\max}}\leq L_{\mathrm{R}} \leq|\mathcal{V}_{\mathrm{obs}}|-2.
	\end{align}
\end{corollary}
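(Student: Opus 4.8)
The plan is to derive both inequalities by simply telescoping the two-sided recursion of Proposition~\ref{propactset}, using the initialization $|\Gamma^{1}| = |\mathcal{V}_{\mathrm{obs}}|$ and the termination rule read off from Algorithm~\ref{algo:rrg}: the while loop executes iteration $l$ precisely when $|\Gamma^{l}| > 2$, so $L_{\mathrm{R}}$ is the largest index $l$ with $|\Gamma^{l}| > 2$, and the first active set produced after the loop stops satisfies $|\Gamma^{L_{\mathrm{R}}+1}| \le 2$.

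For the upper bound I would invoke only the right-hand inequality $|\Gamma^{l+1}| \le |\Gamma^{l}| - 2$. This forces the integer sequence $(|\Gamma^{l}|)_{l\ge 1}$ to be strictly decreasing (by at least $2$, hence certainly by at least $1$) at every iteration, starting from $|\mathcal{V}_{\mathrm{obs}}|$. A strictly decreasing integer sequence can remain above the threshold $2$ only while it takes values in $\{3,4,\ldots,|\mathcal{V}_{\mathrm{obs}}|\}$, a set of $|\mathcal{V}_{\mathrm{obs}}|-2$ elements, and it visits each value at most once; hence the loop can run at most $|\mathcal{V}_{\mathrm{obs}}|-2$ times, giving $L_{\mathrm{R}}\le |\mathcal{V}_{\mathrm{obs}}|-2$.

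For the lower bound I would chain the left-hand inequality $|\Gamma^{l+1}| \ge |\Gamma^{l}|/d_{\max}$ across all $L_{\mathrm{R}}$ iterations to get
\begin{align}
	|\Gamma^{L_{\mathrm{R}}+1}| \ge \frac{|\Gamma^{1}|}{d_{\max}^{L_{\mathrm{R}}}} = \frac{|\mathcal{V}_{\mathrm{obs}}|}{d_{\max}^{L_{\mathrm{R}}}}. \nonumber
\end{align}
Since the loop has terminated we have $|\Gamma^{L_{\mathrm{R}}+1}| \le 2$, so $|\mathcal{V}_{\mathrm{obs}}|/d_{\max}^{L_{\mathrm{R}}} \le 2$, i.e.\ $d_{\max}^{L_{\mathrm{R}}} \ge |\mathcal{V}_{\mathrm{obs}}|/2$. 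Taking logarithms and dividing by $\log d_{\max}>0$ yields $L_{\mathrm{R}} \ge \log(|\mathcal{V}_{\mathrm{obs}}|/2)/\log d_{\max}$, as claimed.

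Because the whole argument is just iterating the two inequalities of Proposition~\ref{propactset}, there is no substantive obstacle; the only care needed is bookkeeping — fixing $|\Gamma^{1}| = |\mathcal{V}_{\mathrm{obs}}|$, correctly extracting from the pseudocode that termination happens exactly at $|\Gamma^{i}|\le 2$, and checking that the log manipulation is legitimate. The latter holds since any tree in $\mathcal{T}_{\ge 3}$ has $d_{\max}\ge 3$, so $\log d_{\max}>0$, and $|\mathcal{V}_{\mathrm{obs}}|\ge 3$ keeps the bound well defined.
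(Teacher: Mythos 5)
Your proposal is correct and is essentially the paper's own argument: the paper's proof simply says that termination gives $|\Gamma|\le 2$ and that combining this with the two-sided recursion of Proposition~\ref{propactset} yields the bounds, which is exactly the telescoping you carry out explicitly (your upper bound via strict decrease by at least $2$ per iteration and your lower bound via iterating $|\Gamma^{l+1}|\ge |\Gamma^{l}|/d_{\max}$). Your write-up just supplies the bookkeeping the paper leaves implicit, including the valid observation that $d_{\max}\ge 3$ for trees in $\mathcal{T}_{\ge 3}$ makes the logarithmic manipulation legitimate.
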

\begin{proof}
	When Algorithm \ref{algo:rrg} terminates, $|\Gamma|\leq 2$. Combining Proposition \ref{propactset} and $|\Gamma|\leq 2$ proves the corollary.
\end{proof}

\begin{theorem}\label{theo:rrgsamplecomp2}
	Under Assumptions \ref{assupleng}--\ref{assupdist}, \ac{rrg} algorithm constructs the correct latent tree with probability at least $1-\eta$ if
	\begin{align}
		n_{2}&\geq \frac{64\lambda^{2} \kappa^{2}}{c\varepsilon^{2}}\big(\frac{9}{2}\big)^{2L_{\mathrm{R}}-2}\log\frac{17l_{\max}^{2}s^{L_{\mathrm{R}}-1}|\mathcal{V}_{\mathrm{obs}}|^{3}}{\eta}\label{ieq:rrg1}\\
		\frac{n_{2}}{n_{1}}&\geq \frac{128\lambda \kappa }{3\varepsilon}\big(\frac{9}{2}\big)^{L_{\mathrm{R}}-1}\log\frac{34l_{\max}^{2}s^{L_{\mathrm{R}}-1}|\mathcal{V}_{\mathrm{obs}}|^{3}}{\eta},\label{ieq:rrg2}
	\end{align}
	where 
	\begin{align}
		\lambda=\frac{2l_{\max}^{2}e^{\rho_{\max}/l_{\max}}}{\delta_{\min}^{1/l_{\max}}}\quad \kappa=\max\{\sigma_{\max}^{2},\rho_{\min}\} \quad s=d_{\max}^{2}+2d_{\max}^{3}(1+2N_{\tau})\quad \varepsilon=\frac{\rho_{\min}}{2},
	\end{align}
	$c$ is an absolute constant, and $L_{\mathrm{R}}$ is the number of iterations of \ac{rrg} needed to construct the tree.
\end{theorem}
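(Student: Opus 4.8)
The plan is to reduce the event ``\ac{rrg} returns the correct tree'' to a single high-probability event, namely that \emph{every} information-distance estimate used by the algorithm is accurate to within the threshold $\varepsilon=\rho_{\min}/2$, and then to bound the probability of the complementary event by combining the recursive tail bound of Proposition~\ref{errorpropg} with a union bound over all triples of nodes and all iterations. Concretely, the failure probability is at most the probability that some $\hat{\mathrm{d}}(x_i,x_j)$ deviates from $\mathrm{d}(x_i,x_j)$ by more than $\varepsilon$, and the sample-complexity conditions \eqref{ieq:rrg1} and \eqref{ieq:rrg2} are exactly what is needed to drive this aggregate probability below $\eta$.

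First I would establish the deterministic sufficient condition for correctness. By Lemma~\ref{lem:identify}, the true quantities $\Phi_{ijk}$ obey a strict dichotomy: they equal $\mathrm{d}(x_i,x_j)$ exactly in the leaf--parent case, and they are constant in $k$ and strictly inside $(-\mathrm{d}(x_i,x_j),\mathrm{d}(x_i,x_j))$ in the shared-parent (sibling) case. Because every edge length is at least $\rho_{\min}$, the gaps separating the leaf--parent, sibling, and remaining configurations are bounded below in terms of $\rho_{\min}$, while the estimate $\hat\Phi_{ijk}=\hat{\mathrm{d}}(x_i,x_k)-\hat{\mathrm{d}}(x_j,x_k)$ inherits only an $O(\varepsilon)$ perturbation from its two constituent distance errors. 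Hence, if all distance estimates among nodes that ever share a common active set $\Gamma^l$ are within $\varepsilon=\rho_{\min}/2$, the thresholded comparisons in Algorithm~\ref{algo:rrg} classify every pair correctly. This reduces the whole analysis to uniform $\varepsilon$-accuracy of the distances. The step most prone to error is verifying that $\varepsilon=\rho_{\min}/2$ (rather than a smaller fraction of $\rho_{\min}$) genuinely suffices for all of the algorithm's decisions simultaneously.

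Next I would control the accuracy event layer by layer. Proposition~\ref{errorpropg} gives $\mathbb{P}(|\hat{\mathrm{d}}(x_i,x_{\mathrm{new}})-\mathrm{d}(x_i,x_{\mathrm{new}})|>\varepsilon)<h^{(l)}(\varepsilon)$ for any distance involving a hidden node created in the $l^{\mathrm{th}}$ layer, and the monotonicity $h^{(l)}(\varepsilon)\le h^{(l+k)}(\varepsilon)$ noted in its proof identifies the worst case as the last layer $l=L_{\mathrm{R}}-1$. Since the tree lies in $\mathcal{T}_{\geq 3}$ we have $|\mathcal{V}|=O(|\mathcal{V}_{\mathrm{obs}}|)$ and $L_{\mathrm{R}}\le|\mathcal{V}_{\mathrm{obs}}|-2$, so the number of triples appearing across all iterations is $O(|\mathcal{V}_{\mathrm{obs}}|^3)$; a union bound then caps the overall failure probability by $O(|\mathcal{V}_{\mathrm{obs}}|^3)\,h^{(L_{\mathrm{R}}-1)}(\varepsilon)$.

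Finally I would solve $O(|\mathcal{V}_{\mathrm{obs}}|^3)\,h^{(L_{\mathrm{R}}-1)}(\varepsilon)\le\eta$ for the sample sizes. Writing $h^{(L_{\mathrm{R}}-1)}(\varepsilon)=s^{L_{\mathrm{R}}-1}\big(ae^{-wm^{L_{\mathrm{R}}-1}\varepsilon}+be^{-um^{2(L_{\mathrm{R}}-1)}\varepsilon^2}\big)$ with $m=2/9$, I would force each exponential term to be at most $\eta/2$ over the event count. Imposing that the sub-exponential (corruption) term be small, $w\,m^{L_{\mathrm{R}}-1}\varepsilon\ge\log(\cdot/\eta)$, and substituting $w=\tfrac{3n_2}{32\lambda\kappa n_1}$ together with $m^{-(L_{\mathrm{R}}-1)}=(9/2)^{L_{\mathrm{R}}-1}$ yields the bound \eqref{ieq:rrg2} on $n_2/n_1$; imposing that the sub-Gaussian (clean-sample) term be small, $u\,m^{2(L_{\mathrm{R}}-1)}\varepsilon^2\ge\log(\cdot/\eta)$, and substituting $u=\tfrac{cn_2}{4\lambda^2\kappa^2}$ yields \eqref{ieq:rrg1} on $n_2$. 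The main obstacle is bookkeeping rather than any deep idea: one must carry the factor $s^{L_{\mathrm{R}}-1}$ (which grows with $d_{\max}$ and $N_\tau$) together with the blow-up $(9/2)^{2(L_{\mathrm{R}}-1)}$ coming from $m^{-2(L_{\mathrm{R}}-1)}$, keep these inside the logarithm where they belong, and check that splitting the tail into its two pieces reproduces the stated constants.
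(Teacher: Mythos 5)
Your overall architecture---layer-wise concentration via Proposition~\ref{errorpropg}, a union bound over triples across iterations, and separate treatment of the sub-exponential (corruption) and sub-Gaussian (clean-sample) tails---matches the paper's proof. But there is a genuine gap in your deterministic reduction. You claim that uniform accuracy $|\hat{\mathrm{d}}(x_i,x_j)-\mathrm{d}(x_i,x_j)|\le\varepsilon=\rho_{\min}/2$ for every distance implies that all of the algorithm's thresholded decisions are correct. This is false as stated: the algorithm thresholds $|\hat\Phi_{ijk}-\hat\Phi_{ijk'}|<\varepsilon$, and each $\hat\Phi$ is a difference of \emph{two} distance estimates, so for a true sibling pair (where $\Phi_{ijk}=\Phi_{ijk'}$ exactly) the test statistic can deviate by as much as $4\cdot(\rho_{\min}/2)=2\rho_{\min}>\varepsilon$, causing the sibling test to reject a correct sibling pair. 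The correct reduction---which is what the paper carries out---decomposes failure into four distinct events (misclassification of families, non-families, parents, and siblings), uses an enumeration of four-node topologies to establish the separation $|\Phi_{ijk}-\Phi_{ijk'}|\ge 2\rho_{\min}$ for non-family pairs, and then demands per-distance accuracy at the scales $\varepsilon/4$, $\varepsilon/3$, $(2\rho_{\min}-\varepsilon)/4$ and $(2\rho_{\min}-\varepsilon)/3$, with the choice $\varepsilon<\rho_{\min}$ making $\varepsilon/4$ the binding margin. This factor of $4$ is not cosmetic: it is exactly where the theorem's constants come from. Your own computation betrays the problem---imposing $u\,m^{2(L_{\mathrm R}-1)}\varepsilon^{2}\ge\log(\cdot/\eta)$ with $u=cn_2/(4\lambda^2\kappa^2)$ yields $n_2\ge \frac{4\lambda^2\kappa^2}{c\varepsilon^2}\big(\frac{9}{2}\big)^{2L_{\mathrm R}-2}\log(\cdot)$, a factor $16$ weaker than \eqref{ieq:rrg1}; only the margin $\varepsilon/4$ reproduces $\frac{64\lambda^2\kappa^2}{c\varepsilon^2}$, and likewise only $\varepsilon/4$ in the sub-exponential piece reproduces $\frac{128\lambda\kappa}{3\varepsilon}$ in \eqref{ieq:rrg2} rather than the $\frac{32\lambda\kappa}{3\varepsilon}$ your version would give.

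A secondary issue is the counting. Your crude bound of $O(|\mathcal{V}_{\mathrm{obs}}|^3)\,h^{(L_{\mathrm R}-1)}(\varepsilon)$ is order-wise adequate because $h^{(l)}$ is nondecreasing in $l$, but the theorem as stated carries explicit prefactors $17\,l_{\max}^2 s^{L_{\mathrm R}-1}|\mathcal{V}_{\mathrm{obs}}|^{3}$ and $34\,l_{\max}^2 s^{L_{\mathrm R}-1}|\mathcal{V}_{\mathrm{obs}}|^{3}$ inside the logarithms. The paper obtains these by counting each error class per layer separately (e.g., $|\Gamma^l_{\mathrm p}|\le\frac12|\Gamma^l|(d_{\max}-1)$ via an extremal lemma, $|\Gamma^l_{\mathrm f}|\le\binom{|\Gamma^l|}{2}$, together with $|\Gamma^{l+1}|\le|\Gamma^l|-2$ from Proposition~\ref{propactset}) and then summing the layer series against the geometric growth $s^l$ of $h^{(l)}$, bounding the ratio of the layer-$l$ term to the layer-$(L-1)$ term by $|\mathcal{V}_{\mathrm{obs}}|^{2}/(2s^{L-1-l})$ before collapsing the combinatorial factors using $s<7N_\tau|\mathcal{V}_{\mathrm{obs}}|^3$ and $d_{\max}N_\tau<s/4$. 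Without the corrected margins from the first paragraph and this bookkeeping, your argument establishes at best a variant of the theorem with different---and in the sibling test's case, unjustified---constants, not the statement as given.
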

\begin{proof}[Proof of Theorem ~\ref{theo:rrgsamplecomp2}]
	It is easy to see by substituting the constants $\lambda$, $\kappa$, $s$ and $\varepsilon$ into \eqref{ieq:rrg1} and \eqref{ieq:rrg2} that Theorem \ref{theo:rrgsamplecomp2} implies Theorem \ref{theo:rrgsamplecomp}, so we provide the proof of Theorem \ref{theo:rrgsamplecomp2} here.

	The error events of learning structure in the $l^{\mathrm{th}}$ layer of the latent tree (the $0^{\mathrm{th}}$ layer consists of the observed nodes, and the $(l+1)^{\mathrm{st}}$ layer is the active 
	set formed from $l^{\mathrm{th}}$ layer). The error events could be enumerated as: misclassification of families $\mathcal{E}^{l}_{\mathrm{f}}$, misclassification of non-families $\mathcal{E}^{l}_{\mathrm{nf}}$, misclassification of 
	parents $\mathcal{E}^{l}_{\mathrm{p}}$ and misclassification of siblings $\mathcal{E}^{l}_{\mathrm{s}}$. We will bound the probabilities of these four error events in the following.
	
	The event representing misclassification of families $\mathcal{E}^{l}_{\mathrm{f}}$ represents classifying the nodes that are not in the same family as a family. Suppose nodes $x_{i}$ and $x_{j}$ are in different families. The event that classifying 
	them to be in the same family $\mathcal{E}^{l}_{\mathrm{f},ij}$ at layer $l$ can be expressed as
	\begin{align}
		\mathcal{E}^{l}_{\mathrm{f},ij}=\big\{|\hat{\Phi}_{ijk}-\hat{\Phi}_{ijk^{\prime}}|<\varepsilon \quad\text{for all}\quad x_{k},x_{k^{\prime}}\in\Gamma^{l}\big\}.
	\end{align}
	We have
	\begin{align}
		\mathbb{P}(\mathcal{E}^{l}_{\mathrm{f},ij})&=\mathbb{P}\Big(\bigcap_{x_{k},x_{k^{\prime}}\in \Gamma}\big\{|\hat{\Phi}_{ijk}-\hat{\Phi}_{ijk^{\prime}}|<\varepsilon\big\}\Big)\leq \min_{x_{k},x_{k^{\prime}}\in \Gamma} \mathbb{P}\Big( |\hat{\Phi}_{ijk}-\hat{\Phi}_{ijk^{\prime}}|<\varepsilon \Big),\\
		\mathbb{P}(\mathcal{E}^{l}_{\mathrm{f}})&=\mathbb{P}\Big(\bigcup_{x_{i},x_{j} \text{not in same family}}\mathcal{E}^{l}_{\mathrm{f},ij}\Big)=\mathbb{P}\Big(\bigcup_{(x_{i},x_{j})\in \Gamma^{l}_{\mathrm{f}}}\mathcal{E}^{l}_{\mathrm{f},ij}\Big).
	\end{align}
	We enumerate all possible structural relationships between $x_{i}$, $x_{j}$, $x_{k}$ and $x_{k^{\prime}}$
	\begin{figure}[H]
		\centering\includegraphics[width=1\columnwidth,draft=false]{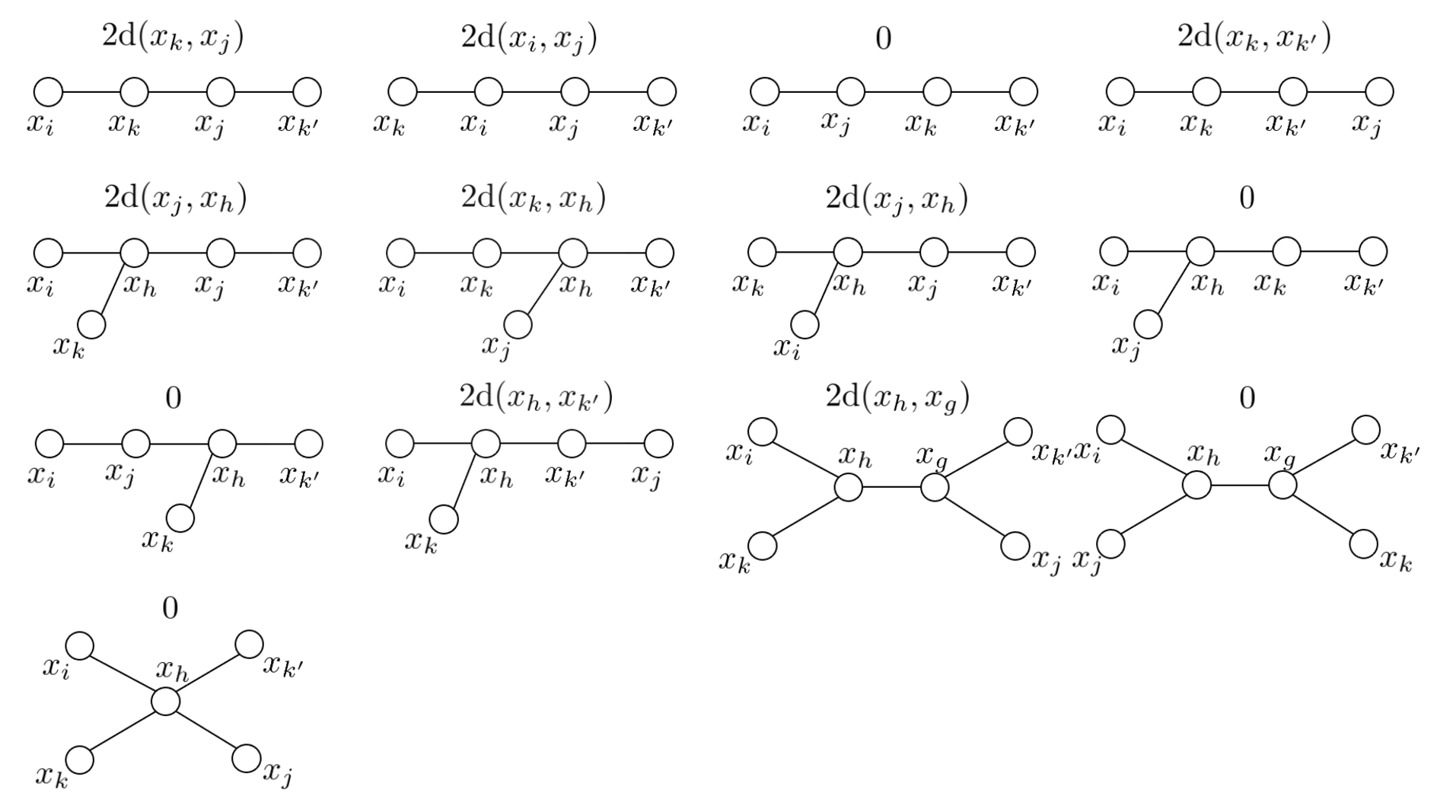}
		\caption{Enumerating of four-node topology and the corresponding $|\Phi_{ijk}-\Phi_{ijk^{\prime}}|$.}
		\label{fig:enumerate}
	\end{figure}
	Let $\varepsilon<2\rho_{\min}$, by decomposing the estimate of the information distance as $\hat{\mathrm{d}}(x_{i},x_{j})=\mathrm{d}(x_{i},x_{j})+\Delta_{ij}$, we have
	\begin{align}
		\mathbb{P}\Big(|\hat{\Phi}_{ijk}-\hat{\Phi}_{ijk^{\prime}}|<\varepsilon\Big)&=\mathbb{P}\Big(|\Phi_{ijk}-\Phi_{ijk^{\prime}}+\Delta_{ik}-\Delta_{jk}-\Delta_{ik^{\prime}}+\Delta_{jk^{\prime}}|<\varepsilon\Big)\nonumber\\
		&\leq \mathbb{P}\Big(\Delta_{ik}-\Delta_{jk}-\Delta_{ik^{\prime}}+\Delta_{jk^{\prime}}<\varepsilon-(\Phi_{ijk}-\Phi_{ijk^{\prime}})\Big)\nonumber \\
		&\leq \mathbb{P}\Big(\Delta_{ik}-\Delta_{jk}-\Delta_{ik^{\prime}}+\Delta_{jk^{\prime}}<\varepsilon-2\rho_{\min}\Big)\nonumber\\
		&\leq \mathbb{P}\Big(|\Delta_{ik}|>\frac{2\rho_{\min}-\varepsilon}{4}\Big)+\mathbb{P}\Big(|\Delta_{jk}|>\frac{2\rho_{\min}-\varepsilon}{4}\Big)\nonumber\\
		&\quad+\mathbb{P}\Big(\Delta_{jk^{\prime}}|>\frac{2\rho_{\min}-\varepsilon}{4}\Big)+\mathbb{P}\Big(|\Delta_{ik^{\prime}}|>\frac{2\rho_{\min}-\varepsilon}{4}\Big).
	\end{align}
	The event representing misclassification of the parents $\mathcal{E}^{l}_{\mathrm{p}}$ represents classifying a sibling relationship as a parent relationship. Following similar procedures, we have
	\begin{align}
		\mathbb{P}(\mathcal{E}^{l}_{\mathrm{p}})&=\mathbb{P}\Big(\bigcup_{x_{i},x_{j}\text{ are siblings}}\mathcal{E}^{l}_{\mathrm{p},ij}\Big)=\mathbb{P}\Big(\bigcup_{(x_{i},x_{j})\in \Gamma^{l}_{\mathrm{p}}}\mathcal{E}^{l}_{\mathrm{p},ij}\Big)\\
		\mathbb{P}(\mathcal{E}^{l}_{\mathrm{p},ij})&=\mathbb{P}\bigg(\bigcap_{x_{k}\in \Gamma^{l}}\Big\{\big|\hat{\Phi}_{ijk}-\hat{\mathrm{d}}(x_{i},x_{j})\big|<\varepsilon\Big\}\bigg)\leq \min_{x_{k}\in \Gamma^{l}} \mathbb{P}\Big(\big|\hat{\Phi}_{ijk}-\hat{\mathrm{d}}(x_{i},x_{j})\big|<\varepsilon\Big)\\
		&\leq \mathbb{P}\Big(|\Delta_{ij}|>\frac{2\rho_{\min}-\varepsilon}{3}\Big)+\mathbb{P}\Big(|\Delta_{ik}|>\frac{2\rho_{\min}-\varepsilon}{3}\Big)+\mathbb{P}\Big(|\Delta_{jk}|>\frac{2\rho_{\min}-\varepsilon}{3}\Big)
	\end{align}
	The event representing misclassification of non-families $\mathcal{E}^{l}_{\mathrm{nf}}$ represents classifying family members as non-family members. We have
	\begin{align}
		\mathbb{P}(\mathcal{E}^{l}_{\mathrm{nf}})&=\mathbb{P}\big(\bigcup_{x_{i},x_{j}\text{ in the same family}}\mathcal{E}^{l}_{\mathrm{nf},ij}\big)=\mathbb{P}\big(\bigcup_{(x_{i},x_{j})\in \Gamma^{l}_{\mathrm{nf}}}\mathcal{E}^{l}_{\mathrm{nf},ij}\big)\\
		\mathbb{P}(\mathcal{E}^{l}_{\mathrm{nf}})&=\mathbb{P}\Big(\bigcup_{x_{i},x_{j}\text{ in the same family}}\bigcup_{x_{k},x_{k^{\prime}}\in\Gamma}\big\{|\hat{\Phi}_{ijk}-\hat{\Phi}_{ijk^{\prime}}|> \varepsilon\big\}\Big)
	\end{align}
	and
	\begin{align}
		\mathbb{P} \Big(|\hat{\Phi}_{ijk}-\hat{\Phi}_{ijk^{\prime}}|\geq \varepsilon\Big) 
		&\leq \mathbb{P}\Big(|\Delta_{ik}|>\frac{\varepsilon}{4}\Big)+\mathbb{P}\Big(|\Delta_{jk}|>\frac{\varepsilon}{4}\Big)+\mathbb{P}\Big(|\Delta_{jk^{\prime}}|>\frac{\varepsilon}{4}\Big)+\mathbb{P}\Big(|\Delta_{ik^{\prime}}|>\frac{\varepsilon}{4}\Big)
	\end{align}
	The event representing misclassification of siblings $\mathcal{E}^{l}_{\mathrm{s}}$ represents classifying parent relationship as sibling relationship. Similarly, we have
	\begin{align}
		\mathbb{P}(\mathcal{E}^{l}_{\mathrm{s}})&=\mathbb{P}\bigg(\bigcup_{x_{i}\text{ is the parent of }x_{j}}\mathcal{E}^{l}_{\mathrm{s},ij}\Big)=\mathbb{P}\bigg(\bigcup_{(x_{i},x_{j})\in \Gamma^{l}_{\mathrm{s}}}\mathcal{E}^{l}_{\mathrm{s},ij}\bigg) \\
		\mathbb{P}(\mathcal{E}^{l}_{\mathrm{s},ij})&=\mathbb{P}\bigg(\bigcup_{x_{k}\in \Gamma}\Big\{\big|\hat{\Phi}_{jik}-\hat{\mathrm{d}}(x_{i},x_{j})\big|>\varepsilon\Big\}\bigg)
	\end{align}
	and
	\begin{align}
		\mathbb{P}\Big(\big|\hat{\Phi}_{jik}-\hat{\mathrm{d}}(x_{i},x_{j})\big|>\varepsilon\Big)\leq \mathbb{P}\Big(|\Delta_{ij}|>\frac{\varepsilon}{3}\Big)+\mathbb{P}\Big(|\Delta_{ik}|>\frac{\varepsilon}{3}\Big)+\mathbb{P}\Big(|\Delta_{jk}|>\frac{\varepsilon}{3}\Big)
	\end{align}
	
	To bound the probability of error event in $l^{\mathrm{th}}$ layer, we first analyze the cardinalities of $\Gamma^{l}_{\mathrm{f}}$, $\Gamma^{l}_{\mathrm{p}}$, $\Gamma^{l}_{\mathrm{nf}}$ and $\Gamma^{l}_{\mathrm{s}}$. Note 
	that the definitions of these four sets are
	\begin{align}
		\Gamma^{l}_{\mathrm{f}}&=\big\{(x_{i},x_{j}) \,:\, x_{i} \text{ and } x_{j} \text{ are not in the same family } x_{i},x_{j}\in \Gamma^{l}\big\} \\
		\Gamma^{l}_{\mathrm{p}}&=\big\{(x_{i},x_{j})\,:\,  x_{i} \text{ and } x_{j} \text{ are siblings } x_{i},x_{j}\in \Gamma^{l}\big\} \\
		\Gamma^{l}_{\mathrm{nf}}&=\big\{(x_{i},x_{j})\,:\,  x_{i} \text{ and } x_{j} \text{ are in the same family } x_{i},x_{j}\in \Gamma^{l}\big\} \\
		\Gamma^{l}_{\mathrm{s}}&=\big\{(x_{i},x_{j})\,:\, x_{i} \text{ and } x_{j} \text{ is the parent of } x_{i},x_{j}\in \Gamma^{l}\big\} .
	\end{align}
	
	Clearly, we have
	\begin{align}
		|\Gamma^{l}_{\mathrm{f}}|\leq \binom{|\Gamma^{l}|}{2} \quad\text{ and }\quad |\Gamma^{l}_{\mathrm{s}}|\leq |\Gamma^{l}|.
	\end{align}
	The cardinality of $\Gamma^{l}_{\mathrm{p}}$ can be bounded as
	\begin{align}
		|\Gamma^{l}_{\mathrm{p}}| \leq \sum_{i=1}^{|\Gamma^{l+1}|} \binom{n_{i}}{2}
	\end{align}
	where $n_{i}$ is the size of each family in $\Gamma^{l}$.
	
	From Lemma \ref{optlem}, we deduce that
	\begin{align}
		|\Gamma^{l}_{\mathrm{p}}|\leq \frac{1}{2}d_{\max}(d_{\max}-1)\frac{|\Gamma^{l}|}{d_{\max}}=\frac{1}{2}|\Gamma^{l}|(d_{\max}-1).
	\end{align}
	Similarly, we have
	\begin{align}
		|\Gamma^{l}_{\mathrm{nf}}|\leq \frac{1}{2}|\Gamma^{l}|(d_{\max}-1).
	\end{align}
	The probability of the error event in $l^{\mathrm{th}}$ layer can be bounded as
	\begin{align}
		\mathbb{P}(\mathcal{E}^{l})&=\mathbb{P}(\mathcal{E}^{l}_{\mathrm{f}}\cup\mathcal{E}^{l}_{\mathrm{p}}\cup\mathcal{E}^{l}_{\mathrm{nf}}\cup\mathcal{E}^{l}_{\mathrm{s}}) \nonumber\\
			&\leq \mathbb{P}(\mathcal{E}^{l}_{\mathrm{f}})+\mathbb{P}(\mathcal{E}^{l}_{\mathrm{p}})+\mathbb{P}(\mathcal{E}^{l}_{\mathrm{nf}})+\mathbb{P}(\mathcal{E}^{l}_{\mathrm{s}})\nonumber\\
			&\leq 4\binom{|\Gamma^{l}|}{2}h^{(l)}\Big(\frac{2\rho_{\min}-\varepsilon}{4}\Big)+\frac{3}{2}|\Gamma^{l}|(d_{\max}-1)h^{(l)}\Big(\frac{2\rho_{\min}-\varepsilon}{3}\Big)\nonumber\\
			&\quad +3|\Gamma^{l}|^{2}h^{(l)}\Big(\frac{\varepsilon}{3}\Big)+2|\Gamma^{l}|^{3}(d_{\max}-1)h^{(l)}\Big(\frac{\varepsilon}{4}\Big).
	\end{align}
	The probability of learning the wrong structure is
	\begin{align}
		\mathbb{P}(\mathcal{E})&=\mathbb{P}\bigg(\bigcup_{l}\mathcal{E}^{l}\bigg)\leq \sum_{l} \mathbb{P}(\mathcal{E}^{l})\\
		&\leq \sum_{l} 4\binom{|\Gamma^{l}|}{2}h^{(l)}\Big(\frac{2\rho_{\min}-\varepsilon}{4}\Big)+\frac{3}{2}|\Gamma^{l}|(d_{\max}-1)h^{(l)}\Big(\frac{2\rho_{\min}-\varepsilon}{3}\Big)\nonumber\\
		&\quad +3|\Gamma^{l}|^{2}h^{(l)}\Big(\frac{\varepsilon}{3}\Big)+2|\Gamma^{l}|^{3}(d_{\max}-1)h^{(l)}\Big(\frac{\varepsilon}{4}\Big)
	\end{align}
	With Proposition \ref{propactset}, we have
	\begin{align}
		\mathbb{P}(\mathcal{E})\leq & \sum_{l=0}^{L-1} 4\binom{|\mathcal{V}_{\mathrm{obs}}|-2l}{2}h^{(l)}\Big(\frac{2\rho_{\min}-\varepsilon}{4}\Big)+\frac{3}{2}(|\mathcal{V}_{\mathrm{obs}}|-2l)(d_{\max}-1)h^{(l)}\Big(\frac{2\rho_{\min}-\varepsilon}{3}\Big)\nonumber\\
		&\quad +3(|\mathcal{V}_{\mathrm{obs}}|-2l)^{2}h^{(l)}\Big(\frac{\varepsilon}{3}\Big)+2(|\mathcal{V}_{\mathrm{obs}}|-2l)^{3}(d_{\max}-1)h^{(l)}\Big(\frac{\varepsilon}{4}\Big),
	\end{align}
	where $L$ is the number of iterations of \ac{rrg}.
	
	We can separately bound the two parts of the first term in the summation $\binom{|\mathcal{V}_{\mathrm{obs}}|-2l}{2}h^{(l)}\Big(\frac{2\rho_{\min}-\varepsilon}{4}\Big)$ as
	\begin{align}
		\frac{4\binom{|\mathcal{V}_{\mathrm{obs}}|-2l}{2}s^{l}ae^{-wm^{l}x}}{4\binom{|\mathcal{V}_{\mathrm{obs}}|-2L}{2}s^{L-1}ae^{-wm^{L-1}x}}\leq \frac{\big(|\mathcal{V}_{\mathrm{obs}}|-2l\big)\big(|\mathcal{V}_{\mathrm{obs}}|-2l-1\big)}{2s^{L-1-l}}\leq \frac{|\mathcal{V}_{\mathrm{obs}}|^{2}}{2s^{L-1-l}} \text{ for }x>0\nonumber
	\end{align}
	and
	\begin{align}
		\frac{4\binom{|\mathcal{V}_{\mathrm{obs}}|-2l}{2}s^{l}be^{-um^{2l}x^{2}}}{4\binom{|\mathcal{V}_{\mathrm{obs}}|-2L}{2}s^{L-1}be^{-um^{2L-2}x^{2}}}\leq \frac{\big(|\mathcal{V}_{\mathrm{obs}}|-2l\big)\big(|\mathcal{V}_{\mathrm{obs}}|-2l-1\big)}{2s^{L-1-l}}\leq \frac{|\mathcal{V}_{\mathrm{obs}}|^{2}}{2s^{L-1-l}}.
	\end{align}
	These bounds imply that
	\begin{align}
		\sum_{l=0}^{L-1} 4\binom{|\mathcal{V}_{\mathrm{obs}}|-2l}{2}h^{(l)}(x)\leq \big[1+\frac{|\mathcal{V}_{\mathrm{obs}}|^{2}}{2}\sum_{i=1}^{\infty}\frac{1}{s^{i}}\big]4h^{(L-1)}(x)=4\Big(1+\frac{|\mathcal{V}_{\mathrm{obs}}|^{2}}{2(s-1)}\Big)4h^{(L-1)}(x) \text{ for }x>0 \nonumber
	\end{align}
	Similar procedures could be implemented on other terms, and we will obtain
	\begin{align}\label{eqn:conserrorp}
		\mathbb{P}(\mathcal{E})\leq &\bigg[4\Big(1+\frac{|\mathcal{V}_{\mathrm{obs}}|^{2}}{2(s-1)}\Big)+\frac{3}{2}(d_{\max}-1)\Big(1+\frac{|\mathcal{V}_{\mathrm{obs}}|}{2(s-1)}\Big)\bigg]h^{(L-1)}\Big(\frac{2\rho_{\min}-\varepsilon}{4}\Big) \nonumber\\
		&+\bigg[3\Big(1+\frac{|\mathcal{V}_{\mathrm{obs}}|^{2}}{4(s-1)}\Big)+2(d_{\max}-1)\Big(1+\frac{|\mathcal{V}_{\mathrm{obs}}|^3}{8(s-1)}\Big)\bigg]h^{(L-1)}\Big(\frac{\varepsilon}{4}\Big)\\
		=&\bigg[4\Big(1+\frac{|\mathcal{V}_{\mathrm{obs}}|^{2}}{2(s-1)}\Big)+\frac{3}{2}(d_{\max}-1)\Big(1+\frac{|\mathcal{V}_{\mathrm{obs}}|}{2(s-1)}\Big)\bigg]\Big(ae^{-wm^{L-1}\frac{2\rho_{\min}-\varepsilon}{4}}+be^{-um^{2L-2}(\frac{2\rho_{\min}-\varepsilon}{4})^{2}}\Big)\nonumber\\
		&+\bigg[3\Big(1+\frac{|\mathcal{V}_{\mathrm{obs}}|^{2}}{4(s-1)}\Big)+2(d_{\max}-1)\Big(1+\frac{|\mathcal{V}_{\mathrm{obs}}|^3}{8(s-1)}\Big)\bigg]\Big(ae^{-wm^{L-1}\frac{\varepsilon}{4}}+be^{-um^{2L-2}(\frac{\varepsilon}{4})^{2}}\Big)\leq \eta . \nonumber
	\end{align}
	Upper bounding each of the four  terms in inequality \eqref{eqn:conserrorp} by ${\eta}/{4}$, we obtain the following sufficient conditions of $n_{1}$ and $n_{2}$ to ensure that $\mathbb{P}(\mathcal{E})\leq \eta$:
	\begin{align}
		n_{2}\geq \max\Bigg\{&\frac{64\lambda^{2} \kappa^{2}}{c(2\rho_{\min}-\varepsilon)^{2}}\Big(\frac{9}{2}\Big)^{2L-2}\log\frac{4l_{\max}^{2}s^{L-1}\big[4(1+\frac{|\mathcal{V}_{\mathrm{obs}}|^{2}}{2(s-1)})+\frac{3}{2}(d_{\max}-1)(1+\frac{|\mathcal{V}_{\mathrm{obs}}|}{2(s-1)})\big]}{\eta},\nonumber\\
		&\frac{64\lambda^{2} \kappa^{2}}{c\varepsilon^{2}}\Big(\frac{9}{2}\Big)^{2L-2}\log\frac{4l_{\max}^{2}s^{L-1}\big[3(1+\frac{|\mathcal{V}_{\mathrm{obs}}|^{2}}{4(s-1)})+2(d_{\max}-1)(1+\frac{|\mathcal{V}_{\mathrm{obs}}|^3}{8(s-1)})\big]}{\eta}\Bigg\},\nonumber\\
		\frac{n_{2}}{n_{1}}\geq \max\Bigg\{&\frac{128\lambda \kappa }{3(2\rho_{\min}-\varepsilon)}\Big(\frac{9}{2}\Big)^{L-1}\log\frac{8l_{\max}^{2}s^{L-1}\big[4(1+\frac{|\mathcal{V}_{\mathrm{obs}}|^{2}}{2(s-1)})+\frac{3}{2}(d_{\max}-1)(1+\frac{|\mathcal{V}_{\mathrm{obs}}|}{2(s-1)})\big]}{\eta},\nonumber\\
		&\frac{128\lambda \kappa }{3\varepsilon}\Big(\frac{9}{2}\Big)^{L-1}\log\frac{8l_{\max}^{2}s^{L-1}\big[3(1+\frac{|\mathcal{V}_{\mathrm{obs}}|^{2}}{4(s-1)})+2(d_{\max}-1)(1+\frac{|\mathcal{V}_{\mathrm{obs}}|^3}{8(s-1)})\big]}{\eta}\Bigg\}.\nonumber
	\end{align}
	Note that
	\begin{align}
		&\max\Bigg\{ 4\Big(1+\frac{|\mathcal{V}_{\mathrm{obs}}|^{2}}{2(s-1)}\Big)+\frac{3}{2}(d_{\max}-1)\Big(1+\frac{|\mathcal{V}_{\mathrm{obs}}|}{2(s-1)}\Big), \nonumber\\
		&\qquad\qquad 3\Big(1+\frac{|\mathcal{V}_{\mathrm{obs}}|^{2}}{(s-1)}\Big)+2(d_{\max}-1)\Big(1+\frac{|\mathcal{V}_{\mathrm{obs}}|^3}{8(s-1)}\Big) \Bigg\} \nonumber  \\
		&<4\Big(1+\frac{|\mathcal{V}_{\mathrm{obs}}|^{2}}{2(s-1)}\Big)+2(d_{\max}-1)\Big(1+\frac{|\mathcal{V}_{\mathrm{obs}}|^3}{2(s-1)}\Big) \\
		&\leq 2(d_{\max}-1)\Big(2+\frac{|\mathcal{V}_{\mathrm{obs}}|^3+2|\mathcal{V}_{\mathrm{obs}}|^{2}}{2(s-1)}\Big)\\
		&<2(d_{\max}-1)\Big(2+\frac{|\mathcal{V}_{\mathrm{obs}}|^3+2|\mathcal{V}_{\mathrm{obs}}|^{2}}{s}\Big) \\
		&\overset{(a)}{<}2(d_{\max}-1)\frac{|\mathcal{V}_{\mathrm{obs}}|^3+2|\mathcal{V}_{\mathrm{obs}}|^{2}+7N_{\tau}|\mathcal{V}_{\mathrm{obs}}|^{3}}{s}\\
		&<17d_{\max}N_{\tau}\frac{|\mathcal{V}_{\mathrm{obs}}|^{3}}{s} \\
		&\overset{(b)}{<}\frac{17}{4}|\mathcal{V}_{\mathrm{obs}}|^{3},
	\end{align}
	where inequality $(a)$ and $(b)$ result from $s< 7N_{\tau}|\mathcal{V}_{\mathrm{obs}}|^{3}$ and $d_{\max}N_{\tau}<\frac{s}{4}$, respectively. Choosing $\varepsilon<\rho_{\min}$, we then can derive the sufficient conditions to ensure that $\mathbb{P}(\mathcal{E})\leq \eta$ as
	\begin{align}
		n_{2}&\geq \frac{64\lambda^{2} \kappa^{2}}{c\varepsilon^{2}}\Big(\frac{9}{2}\Big)^{2L-2}\log\frac{17l_{\max}^{2}s^{L-1}|\mathcal{V}_{\mathrm{obs}}|^{3}}{\eta},\\
		\frac{n_{2}}{n_{1}}&\geq \frac{128\lambda \kappa }{3\varepsilon}\Big(\frac{9}{2}\Big)^{L-1}\log\frac{34l_{\max}^{2}s^{L-1}|\mathcal{V}_{\mathrm{obs}}|^{3}}{\eta}.
	\end{align}
	In Theorem \ref{theo:rrgsamplecomp}, we choose $\varepsilon=\frac{\rho_{\min}}{2}$.
	
	Then the following conditions
	\begin{align}\label{eqn:sqrtn}
		n_{2}&\geq \frac{64\lambda^{2} \kappa^{2}}{c\varepsilon^{2}}\Big(\frac{9}{2}\Big)^{2L-2}\log\frac{17l_{\max}^{2}s^{L-1}|\mathcal{V}_{\mathrm{obs}}|^{3}}{\eta},\\
		n_{1}&=O\Big(\frac{\sqrt{n_{2}}}{\log n_{2}}\Big).
	\end{align}
	are sufficient to guarantee that $\mathbb{P}(\mathcal{E})\leq \eta$.
	
	We are going to prove that there exists $C^{\prime}>0$, such that
	\begin{align}\label{inq:sqrtn}
		C^{\prime}\frac{\sqrt{n_{2}}}{\log n_{2}}\leq \frac{n_{2}}{\frac{128\lambda \kappa }{3\varepsilon}(\frac{9}{2})^{L-1}\log\frac{34l_{\max}^{2}s^{L-1}|\mathcal{V}_{\mathrm{obs}}|^{3}}{\eta}},
	\end{align}
	which is equivalent to 
	\begin{align}
		C^{\prime}\frac{128\lambda \kappa }{3\varepsilon}\Big(\frac{9}{2}\Big)^{L-1}\log\frac{34l_{\max}^{2}s^{L-1}|\mathcal{V}_{\mathrm{obs}}|^{3}}{\eta}\leq \sqrt{n_{2}}\log n_{2}.
	\end{align}
	Since $n_{2}$ is lower bounded as in \eqref{eqn:sqrtn}, it is sufficient to show that there exists $C^{\prime}>0$, such that
	\begin{align}
		&(C^{\prime })^2\bigg(\frac{128\lambda \kappa }{3\varepsilon}\Big(\frac{9}{2}\Big)^{L-1}\log\frac{34l_{\max}^{2}s^{L-1}|\mathcal{V}_{\mathrm{obs}}|^{3}}{\eta}\bigg)^{2}\nonumber\\
		&\leq \frac{64\lambda^{2} \kappa^{2}}{c\varepsilon^{2}}\Big(\frac{9}{2}\Big)^{2L-2}\log\frac{17l_{\max}^{2}s^{L-1}|\mathcal{V}_{\mathrm{obs}}|^{3}}{\eta}\log\bigg[\frac{64\lambda^{2} \kappa^{2}}{c\varepsilon^{2}}\Big(\frac{9}{2}\Big)^{2L-2}\log\frac{17l_{\max}^{2}s^{L-1}|\mathcal{V}_{\mathrm{obs}}|^{3}}{\eta}\bigg], \nonumber
	\end{align}
	which is equivalent to
	\begin{align}
		(C^{\prime})^2\leq \frac{9}{256c}\frac{\log\frac{17l_{\max}^{2}s^{L-1}|\mathcal{V}_{\mathrm{obs}}|^{3}}{\eta}}{\log\frac{34l_{\max}^{2}s^{L-1}|\mathcal{V}_{\mathrm{obs}}|^{3}}{\eta}}\frac{\Big(\log\big(\frac{64\lambda^{2} \kappa^{2}}{c\varepsilon^{2}}(\frac{9}{2})^{2L-2}\big)+\log\log\frac{17l_{\max}^{2}s^{L-1}|\mathcal{V}_{\mathrm{obs}}|^{3}}{\eta}\Big)^{2}}{\log\frac{34l_{\max}^{2}s^{L-1}|\mathcal{V}_{\mathrm{obs}}|^{3}}{\eta}}.
	\end{align}
	We have
	\begin{align}
		\log\frac{17l_{\max}^{2}s^{L-1}|\mathcal{V}_{\mathrm{obs}}|^{3}}{\eta}/\log\frac{34l_{\max}^{2}s^{L-1}|\mathcal{V}_{\mathrm{obs}}|^{3}}{\eta}&>\frac{1}{2}\quad\text{and}\\
		\frac{\Big(\log\big(\frac{64\lambda^{2} \kappa^{2}}{c\varepsilon^{2}}(\frac{9}{2})^{2L-2}\big)+\log\log\frac{17l_{\max}^{2}s^{L-1}|\mathcal{V}_{\mathrm{obs}}|^{3}}{\eta}\Big)^{2}}{\log\frac{34l_{\max}^{2}s^{L-1}|\mathcal{V}_{\mathrm{obs}}|^{3}}{\eta}}&>\frac{\Big(\log\big(\frac{64\lambda^{2} \kappa^{2}}{c\varepsilon^{2}}(\frac{9}{2})^{2L-2}\big)\Big)^{2}}{\log\frac{34l_{\max}^{2}s^{L-1}|\mathcal{V}_{\mathrm{obs}}|^{3}}{\eta}}.
	\end{align}
	Since
	\begin{align}
		\lim_{L\to\infty}\frac{\Big(\log\big(\frac{64\lambda^{2} \kappa^{2}}{c\varepsilon^{2}}(\frac{9}{2})^{2L-2}\big)\Big)^{2}}{\log\frac{34l_{\max}^{2}s^{L-1}|\mathcal{V}_{\mathrm{obs}}|^{3}}{\eta}}=+\infty,
	\end{align}
	we can see that there exists $C^{\prime}>0$ that satisfies inequality \eqref{inq:sqrtn}.
\end{proof}

\section{Proofs of results in Section~\ref{subsec:snjnj}}
\begin{theorem}\label{theo:rnjsamplecomp2}
	If Assumptions \ref{assupleng} to \ref{assupdist} hold and all the nodes have exactly two children, \ac{rnj} constructs the correct latent tree with probability at least $1-\eta$ if
	\begin{align}
		n_{2}&>\frac{16\lambda^{2}\kappa^{2}}{c\rho_{\min}^{2}}\log\Big(\frac{2|\mathcal{V}_{\mathrm{obs}}|^{2}l_{\max}^{2}}{\eta}\Big)\label{ieq:rnj1}\\
		\frac{n_{2}}{n_{1}}&>\frac{64\lambda\kappa }{3\rho_{\min}}\log\Big(\frac{4|\mathcal{V}_{\mathrm{obs}}|^{2}l_{\max}^{2}}{\eta}\Big)\label{ieq:rnj2}
	\end{align}
	where 
	\begin{align}
		\lambda=\frac{2l_{\max}^{2}e^{\rho_{\max}/l_{\max}}}{\delta_{\min}^{1/l_{\max}}}\quad\text{ and }\quad \kappa=\max\{\sigma_{\max}^{2},\rho_{\min}\}, 
	\end{align}
	and $c$ is an absolute constant.
\end{theorem}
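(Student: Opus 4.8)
The plan is to derive Theorem~\ref{theo:rnjsamplecomp2} by chaining three ingredients: Atteson's deterministic recovery guarantee for \ac{nj} on binary trees (Proposition~\ref{prop:njsuff}), a union bound over all pairs of observed nodes, and the per-pair concentration inequality for the robust distance estimates (Proposition~\ref{distconcen}). Since Theorem~\ref{theo:rnjsamplecomp2} is the explicit-constant form, establishing it immediately yields the order-wise statement in Theorem~\ref{theo:rnjsamplecomp} once the two inequalities \eqref{ieq:rnj1}--\eqref{ieq:rnj2} are translated into the tolerance form $n_1 = O(\sqrt{n_2}/\log n_2)$ by the same final limiting computation used at the end of the \ac{rrg} proof.

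First I would note that, by Proposition~\ref{prop:njsuff}, under the binary-tree hypothesis \ac{rnj} returns the correct tree whenever the global deviation $\max_{x_i,x_j\in\mathcal{V}_{\mathrm{obs}}}|\hat{\mathrm{d}}(x_i,x_j)-\mathrm{d}(x_i,x_j)|$ does not exceed $\rho_{\min}/2$. Hence the failure event is contained in its complement, and a union bound over the at most $|\mathcal{V}_{\mathrm{obs}}|^{2}$ pairs gives
\begin{align}
	\mathbb{P}(\text{\ac{rnj} fails})\le |\mathcal{V}_{\mathrm{obs}}|^{2}\max_{x_i,x_j\in\mathcal{V}_{\mathrm{obs}}}\mathbb{P}\Big(\big|\hat{\mathrm{d}}(x_i,x_j)-\mathrm{d}(x_i,x_j)\big|>\tfrac{\rho_{\min}}{2}\Big). \nonumber
\end{align}
To bound each per-pair probability I would invoke Proposition~\ref{distconcen}. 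Using the relation \eqref{eq:simi}, which yields $\tfrac{2l_{\max}^{2}}{\gamma_{\min}}\le\lambda$, I would choose the symmetric split $t_1=t_2=\tfrac{\rho_{\min}}{4\lambda}$, so that $\tfrac{2l_{\max}^{2}}{\gamma_{\min}}(t_1+t_2)\le\lambda(t_1+t_2)=\tfrac{\rho_{\min}}{2}$, matching Atteson's threshold exactly. One checks the admissibility requirement $t_2<\kappa$ of Proposition~\ref{distconcen}: since $\kappa\ge\rho_{\min}$ and $\lambda\ge 1$ in the regime of interest (indeed $\lambda$ grows exponentially in $\rho_{\max}$), we have $t_2=\rho_{\min}/(4\lambda)<\rho_{\min}\le\kappa$. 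This produces the per-pair tail bound $2l_{\max}^{2}e^{-\frac{3n_2\rho_{\min}}{64\kappa n_1\lambda}}+l_{\max}^{2}e^{-c\frac{n_2\rho_{\min}^{2}}{16\kappa^{2}\lambda^{2}}}$.

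Finally I would force each of the two resulting exponential terms, after multiplication by the pair count $|\mathcal{V}_{\mathrm{obs}}|^{2}$, to be at most $\eta/2$. The clean (sub-exponential) term gives $|\mathcal{V}_{\mathrm{obs}}|^{2}l_{\max}^{2}e^{-c n_2\rho_{\min}^{2}/(16\kappa^{2}\lambda^{2})}\le\eta/2$, which rearranges to the first condition $n_2>\tfrac{16\lambda^{2}\kappa^{2}}{c\rho_{\min}^{2}}\log\tfrac{2|\mathcal{V}_{\mathrm{obs}}|^{2}l_{\max}^{2}}{\eta}$, i.e.\ \eqref{ieq:rnj1}; the corruption term gives $2|\mathcal{V}_{\mathrm{obs}}|^{2}l_{\max}^{2}e^{-3n_2\rho_{\min}/(64\kappa n_1\lambda)}\le\eta/2$, which rearranges to $\tfrac{n_2}{n_1}>\tfrac{64\lambda\kappa}{3\rho_{\min}}\log\tfrac{4|\mathcal{V}_{\mathrm{obs}}|^{2}l_{\max}^{2}}{\eta}$, i.e.\ \eqref{ieq:rnj2}. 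The argument is essentially a clean telescoping of the three lemmas and involves no layered recursion, so it is far simpler than the \ac{rrg} analysis; the only genuinely delicate point is selecting the split $(t_1,t_2)$ that \emph{simultaneously} meets Atteson's threshold $\rho_{\min}/2$ and respects the admissibility constraint $t_2<\kappa$ of Proposition~\ref{distconcen}, which is precisely why the balanced choice $t_1=t_2=\rho_{\min}/(4\lambda)$ is the convenient one to carry through.
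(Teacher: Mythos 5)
Your proposal is correct and follows essentially the same route as the paper: Atteson's condition from Proposition~\ref{prop:njsuff}, a union bound over the at most $|\mathcal{V}_{\mathrm{obs}}|^{2}$ pairs, and the concentration bound of Proposition~\ref{distconcen} evaluated at $\rho_{\min}/2$ (the paper packages your symmetric split $t_{1}=t_{2}=\rho_{\min}/(4\lambda)$ into the function $f$, which yields exactly your exponents $3n_{2}\rho_{\min}/(64\lambda\kappa n_{1})$ and $cn_{2}\rho_{\min}^{2}/(16\lambda^{2}\kappa^{2})$), with each term bounded by $\eta/2$ and the tolerance $n_{1}=O(\sqrt{n_{2}}/\log n_{2})$ deferred to the RRG argument just as you indicate. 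Your explicit verification of the admissibility constraint $t_{2}<\kappa$ is a small point the paper leaves implicit, but it does not change the argument.
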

\begin{proof}[Proof of Theorem ~\ref{theo:rnjsamplecomp2}]
	It is easy to see by substituting the constants $\lambda$ and $\kappa$ into \eqref{ieq:rnj1} and \eqref{ieq:rnj2} that Theorem \ref{theo:rnjsamplecomp2} implies Theorem \ref{theo:rnjsamplecomp}, so we provide the proof of Theorem \ref{theo:rnjsamplecomp2} here.

	With the sufficient condition in Proposition \ref{prop:njsuff}, we can bound the probability of error event by the union bound as follows
	\begin{align}
		\mathbb{P}(\mathcal{E})&\le \mathbb{P}\Big(\max_{x_{i},x_{j}\in\mathcal{V}_{\mathrm{obs}}} \big|\hat{\mathrm{d}}(x_{i},x_{j})-\mathrm{d}(x_{i},x_{j})\big|> \frac{\rho_{\min}}{2}\Big) \\
		&\le |\mathcal{V}_{\mathrm{obs}}|^{2}\mathbb{P}\big(\big|\hat{\mathrm{d}}(x_{i},x_{j})-\mathrm{d}(x_{i},x_{j})\big|> \frac{\rho_{\min}}{2}\big).
	\end{align}
	We bound two terms in the tail probability separately as
	\begin{align}
		2l_{\max}^{2}e^{-\frac{3n_{2}}{64\lambda\kappa n_{1}} \rho_{\min}}&<\frac{\eta}{2|\mathcal{V}_{\mathrm{obs}}|^{2}}\\
		l_{\max}^{2}e^{-c\frac{n_{2}}{16\lambda^{2}\kappa^{2}}\rho_{\min}^{2}}&<\frac{\eta}{2|\mathcal{V}_{\mathrm{obs}}|^{2}}.
	\end{align}
	Then we have
	\begin{align}
		n_{2}&>\frac{16\lambda^{2}\kappa^{2}}{c\rho_{\min}^{2}}\log\Big(\frac{2|\mathcal{V}_{\mathrm{obs}}|^{2}l_{\max}^{2}}{\eta}\Big),\\
		\frac{n_{2}}{n_{1}}&>\frac{64\lambda\kappa }{3\rho_{\min}}\log\Big(\frac{4|\mathcal{V}_{\mathrm{obs}}|^{2}l_{\max}^{2}}{\eta}\Big).
	\end{align}
	The proof that $n_{1}=O(\sqrt{n_{2}}/\log n_{2})$ can be derived by following the similar procedures in the proof of Theorem \ref{theo:rrgsamplecomp}.
\end{proof}

\begin{proposition}\label{prop:spectralconcen}
	If Assumption \ref{assupleng} to \ref{assupdist} hold and the truncated inner product is adopted to estimate the information distances, 
	\begin{align}
		\mathbb{P}\big(\|\hat{\mathbf{R}}-\mathbf{R}\|_{2}>t\big)\leq |\mathcal{V}_{\mathrm{obs}}|^{2}f\Big(e^{\rho_{\min}}\frac{t}{|\mathcal{V}_{\mathrm{obs}}|}\Big),
	\end{align}
	where the function $f$ is defined as
	\begin{align}
		f(x)\triangleq 2l_{\max}^{2}e^{-\frac{3n_{2}}{32\lambda\kappa n_{1}} x}+l_{\max}^{2}e^{-c\frac{n_{2}}{4\lambda^{2}\kappa^{2}}x^{2}}=ae^{-wx}+be^{-ux^{2}},
	\end{align}
	with $\lambda=2l_{\max}^{2}e^{\rho_{\max}/l_{\max}}/\delta_{\min}^{1/l_{\max}}$, $w=\frac{3n_{2}}{32\lambda\kappa n_{1}}$, $u=c\frac{n_{2}}{4\lambda^{2}\kappa^{2}}$, $a=2l_{\max}^{2}$ and $b=l_{\max}^{2}$.
\end{proposition}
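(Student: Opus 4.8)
The plan is to reduce the spectral-norm deviation to an entrywise deviation, convert each entrywise deviation of $\hat{\mathbf{R}}$ into a deviation of the corresponding information distance, and then invoke the distance concentration bound already at our disposal. Concretely, writing $\mathbf{E} = \hat{\mathbf{R}} - \mathbf{R}$, I would first control $\|\mathbf{E}\|_2$ by its largest-magnitude entry. Since for any $N \times N$ matrix one has $\|\mathbf{E}\|_2 \le \sqrt{\|\mathbf{E}\|_1 \|\mathbf{E}\|_\infty} \le N \max_{i,j}|E_{ij}|$ (the same inequality used in the proof of Proposition~\ref{distconcen}), taking $N = |\mathcal{V}_{\mathrm{obs}}|$ yields $\|\hat{\mathbf{R}} - \mathbf{R}\|_2 \le |\mathcal{V}_{\mathrm{obs}}| \max_{i,j}|\hat{\mathbf{R}}(i,j) - \mathbf{R}(i,j)|$. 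Hence the event $\{\|\hat{\mathbf{R}} - \mathbf{R}\|_2 > t\}$ is contained in $\{\max_{i,j}|\hat{\mathbf{R}}(i,j) - \mathbf{R}(i,j)| > t/|\mathcal{V}_{\mathrm{obs}}|\}$.

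Next I would translate an entrywise deviation of the affinity matrix into a deviation of the information distance. Because $\mathbf{R}(i,j) = e^{-\mathrm{d}(x_i,x_j)}$ and $\hat{\mathbf{R}}(i,j) = e^{-\hat{\mathrm{d}}(x_i,x_j)}$, the mean value theorem gives $|\hat{\mathbf{R}}(i,j) - \mathbf{R}(i,j)| = e^{-\xi}|\hat{\mathrm{d}}(x_i,x_j) - \mathrm{d}(x_i,x_j)|$ for some $\xi$ between the two distances. Invoking Assumption~\ref{assupdist}, the true distance satisfies $\mathrm{d}(x_i,x_j) \ge \rho_{\min}$ for $i \ne j$ (and on the diagonal both matrices equal $1$, so the error vanishes), and since $y \mapsto e^{-y}$ is convex and decreasing, its secant slope over an interval whose left endpoint is at least $\rho_{\min}$ is bounded in magnitude by $e^{-\rho_{\min}}$. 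This yields the Lipschitz bound $|\hat{\mathbf{R}}(i,j) - \mathbf{R}(i,j)| \le e^{-\rho_{\min}}|\hat{\mathrm{d}}(x_i,x_j) - \mathrm{d}(x_i,x_j)|$, so that the event $\{\max_{i,j}|\hat{\mathbf{R}}(i,j) - \mathbf{R}(i,j)| > t/|\mathcal{V}_{\mathrm{obs}}|\}$ is contained in $\{\max_{i,j}|\hat{\mathrm{d}}(x_i,x_j) - \mathrm{d}(x_i,x_j)| > e^{\rho_{\min}}t/|\mathcal{V}_{\mathrm{obs}}|\}$.

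Finally I would apply the distance concentration inequality. From Proposition~\ref{distconcen} together with \eqref{eq:simi}, the choice $t_1 = t_2 = \varepsilon/(2\lambda)$ (which makes $\tfrac{2l_{\max}^2}{\gamma_{\min}}(t_1+t_2) \le \lambda\cdot\tfrac{\varepsilon}{\lambda} = \varepsilon$ via \eqref{eq:simi}) gives exactly $\mathbb{P}(|\hat{\mathrm{d}}(x_i,x_j) - \mathrm{d}(x_i,x_j)| > \varepsilon) \le f(\varepsilon)$, which is the single-pair version of the claim and coincides with the $h^{(0)}$ bound used in the proof of Proposition~\ref{errorpropg}. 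A union bound over the at most $|\mathcal{V}_{\mathrm{obs}}|^2$ pairs then yields
\begin{align}
	\mathbb{P}\big(\|\hat{\mathbf{R}} - \mathbf{R}\|_2 > t\big) \le \mathbb{P}\Big(\max_{i,j}\big|\hat{\mathrm{d}}(x_i,x_j) - \mathrm{d}(x_i,x_j)\big| > e^{\rho_{\min}}\tfrac{t}{|\mathcal{V}_{\mathrm{obs}}|}\Big) \le |\mathcal{V}_{\mathrm{obs}}|^2 f\Big(e^{\rho_{\min}}\tfrac{t}{|\mathcal{V}_{\mathrm{obs}}|}\Big),
\end{align}
which is the desired bound.

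The main obstacle is the Lipschitz step: the estimate $e^{-\xi} \le e^{-\rho_{\min}}$ is immediate when $\hat{\mathrm{d}} \ge \mathrm{d}$, but when the estimate undershoots below $\rho_{\min}$ the secant slope can exceed $e^{-\rho_{\min}}$. This is a genuine but harmless technicality, resolved either by projecting $\hat{\mathrm{d}}$ onto $[\rho_{\min}, \rho_{\max}]$ (which only decreases the error, since the true value lies in this range) or by absorbing the low-probability large-undershoot event into the tail; neither alters the stated bound.
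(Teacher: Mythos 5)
Your proposal is correct and follows essentially the same route as the paper's own proof: the same reduction $\|\hat{\mathbf{R}}-\mathbf{R}\|_{2}\leq|\mathcal{V}_{\mathrm{obs}}|\max_{i,j}|\hat{\mathbf{R}}_{ij}-\mathbf{R}_{ij}|$, the same Lipschitz conversion of affinity errors into distance errors with constant $e^{-\rho_{\min}}$ (the paper invokes ``Taylor's Theorem'' for exactly this step), and the same application of Proposition~\ref{distconcen} with \eqref{eq:simi} followed by a union bound over the $|\mathcal{V}_{\mathrm{obs}}|^{2}$ pairs, with your choice $t_{1}=t_{2}=\varepsilon/(2\lambda)$ reproducing $f$ precisely. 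The only divergence is that you explicitly flag and repair the undershoot case $\hat{\mathrm{d}}<\rho_{\min}$, where the secant slope can exceed $e^{-\rho_{\min}}$ --- a technicality the paper's proof silently glosses over, and which your projection onto $[\rho_{\min},\rho_{\max}]$ handles cleanly without changing the stated bound.
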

\begin{proof}[Proof of Proposition ~\ref{prop:spectralconcen}]
	Noting that $\mathbf{R}_{ij}=\exp\big(-\mathrm{d}(x_{i},x_{j})\big)$, we have
	\begin{align}
		\mathbb{P}\big(|\hat{\mathbf{R}}_{ij}-\mathbf{R}_{ij}|>t\big)&=\mathbb{P}\bigg(\Big|\exp\big(-\hat{\mathrm{d}}(x_{i},x_{j})\big)-\exp\big(-\mathrm{d}(x_{i},x_{j})\big)\Big|>t\bigg) \\
		&\overset{(a)}{\leq}\mathbb{P}\Big(\big|\hat{\mathrm{d}}(x_{i},x_{j})-\mathrm{d}(x_{i},x_{j})\big|>e^{\rho_{\min}}t\Big) \\
		&<f(e^{\rho_{\min}}t),
	\end{align}
	where inequality $(a)$ is derived from Taylor's Theorem.

	Since
	\begin{align}
		\|\hat{\mathbf{R}}-\mathbf{R}\|_{2}\leq |\mathcal{V}_{\mathrm{obs}}|\max_{i,j} |\hat{\mathbf{R}}_{ij}-\mathbf{R}_{ij}|,
	\end{align}
	we have
	\begin{align}
		\mathbb{P}\big(\|\hat{\mathbf{R}}-\mathbf{R}\|_{2}>t\big)\leq \mathbb{P}\Big(\max_{i,j} |\hat{\mathbf{R}}_{ij}-\mathbf{R}_{ij}|>\frac{t}{|\mathcal{V}_{\mathrm{obs}}|}\Big)\leq |\mathcal{V}_{\mathrm{obs}}|^{2}f\Big(e^{\rho_{\min}}\frac{t}{|\mathcal{V}_{\mathrm{obs}}|}\Big)
	\end{align}
	as desired.
\end{proof}

\begin{theorem}\label{theo:rsnjsamplecomp2}
	If Assumptions \ref{assupleng} to \ref{assupdist} hold and all the nodes have exactly two children, \ac{rsnj} constructs the correct latent tree with probability at least $1-\eta$ if
	\begin{align}
		n_{2}&\geq \frac{16\lambda^{2}\kappa^{2}|\mathcal{V}_{\mathrm{obs}}|^{2}}{ce^{2\rho_{\min}}g(|\mathcal{V}_{\mathrm{obs}}|,\rho_{\min},\rho_{\max})^{2}}\log\frac{2|\mathcal{V}_{\mathrm{obs}}|^{2}l_{\max}^{2}}{\eta}\label{ieq:rsnj1}\\
		\frac{n_{2}}{n_{1}}&\geq \frac{64\lambda\kappa |\mathcal{V}_{\mathrm{obs}}|}{3e^{\rho_{\min}}g(|\mathcal{V}_{\mathrm{obs}}|,\rho_{\min},\rho_{\max})}\log\frac{4|\mathcal{V}_{\mathrm{obs}}|^{2}l_{\max}^{2}}{\eta}\label{ieq:rsnj2}
	\end{align}
	where
	\begin{align}
		g(x,\rho_{\min},\rho_{\max})&=\left\{
			\begin{array}{lr}
			\frac{1}{2}(2e^{-\rho_{\max}})^{\log_{2}(x/2)}e^{-\rho_{\max}}(1-e^{-2\rho_{\min}}),&  \quad e^{-2\rho_{\max}}\leq 0.5\\
			e^{-3\rho_{\max}}(1-e^{-2\rho_{\min}}),& \quad e^{-2\rho_{\max}}> 0.5
			\end{array}
			\right.   \\
		\lambda&=\frac{2l_{\max}^{2}e^{\rho_{\max}/l_{\max}}}{\delta_{\min}^{1/l_{\max}}}\quad \kappa=\max\{\sigma_{\max}^{2},\rho_{\min}\},
	\end{align}
	and $c$ is an absolute constant.
\end{theorem}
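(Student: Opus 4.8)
The plan is to combine the deterministic sufficient condition of Proposition~\ref{prop:snjsuff} with the spectral concentration bound of Proposition~\ref{prop:spectralconcen}, and then to optimize over the two competing exponential tails to read off the two sample-complexity conditions. Since both of these propositions are already available, the argument for this theorem is essentially a deterministic reduction followed by a tail inversion; there is no further combinatorial structure of the tree to exploit here, in contrast with the \ac{rrg} analysis.

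First I would note that Proposition~\ref{prop:snjsuff} guarantees exact recovery of the latent tree whenever $\|\hat{\mathbf{R}}-\mathbf{R}\|_{2}\leq g(|\mathcal{V}_{\mathrm{obs}}|,\rho_{\min},\rho_{\max})$. Hence the error event $\mathcal{E}$ of \ac{rsnj} is contained in $\{\|\hat{\mathbf{R}}-\mathbf{R}\|_{2}> g\}$, and it suffices to show $\mathbb{P}(\|\hat{\mathbf{R}}-\mathbf{R}\|_{2}>g)\leq \eta$ under the stated conditions. Applying Proposition~\ref{prop:spectralconcen} with $t=g$ gives
\begin{align}
	\mathbb{P}(\mathcal{E})\leq |\mathcal{V}_{\mathrm{obs}}|^{2} f\Big(e^{\rho_{\min}}\frac{g}{|\mathcal{V}_{\mathrm{obs}}|}\Big)=|\mathcal{V}_{\mathrm{obs}}|^{2}\big(ae^{-wx}+be^{-ux^{2}}\big), \nonumber
\end{align}
where $x:=e^{\rho_{\min}}g/|\mathcal{V}_{\mathrm{obs}}|$ and $a,b,w,u$ are as defined in Proposition~\ref{prop:spectralconcen}.

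The key step is to force each summand below $\eta/2$. Because $w=\frac{3n_{2}}{32\lambda\kappa n_{1}}$ scales with the ratio $n_{2}/n_{1}$ while $u=c\frac{n_{2}}{4\lambda^{2}\kappa^{2}}$ scales with $n_{2}$, the corruption term $ae^{-wx}$ controls the required ratio $n_{2}/n_{1}$ and the clean term $be^{-ux^{2}}$ controls $n_{2}$. Solving $|\mathcal{V}_{\mathrm{obs}}|^{2}be^{-ux^{2}}\leq \eta/2$ for $n_{2}$ and substituting $x=e^{\rho_{\min}}g/|\mathcal{V}_{\mathrm{obs}}|$ together with $a=2l_{\max}^{2}$, $b=l_{\max}^{2}$ yields \eqref{ieq:rsnj1}; solving $|\mathcal{V}_{\mathrm{obs}}|^{2}ae^{-wx}\leq \eta/2$ yields \eqref{ieq:rsnj2}. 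To obtain the cleaner form $n_{1}=O(\sqrt{n_{2}}/\log n_{2})$ stated in Theorem~\ref{theo:rsnjsamplecomp}, I would repeat the final step of the proof of Theorem~\ref{theo:rrgsamplecomp2}: combine the two inequalities and verify that the ratio bound is implied by the $n_{2}$ bound for a suitable absolute constant.

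I expect the only genuinely delicate book-keeping --- rather than a conceptual obstacle --- to be inverting the piecewise definition of $g$, which is needed only to deduce the main-text Theorem~\ref{theo:rsnjsamplecomp} from \eqref{ieq:rsnj1} (the statement of Theorem~\ref{theo:rsnjsamplecomp2} keeps $g$ symbolic). In the regime $\rho_{\max}\geq \frac{1}{2}\log 2$, which selects the first branch, one has $g=\frac{|\mathcal{V}_{\mathrm{obs}}|}{4}e^{-\rho_{\max}(\log_{2}(|\mathcal{V}_{\mathrm{obs}}|/2)+1)}(1-e^{-2\rho_{\min}})$, so that $|\mathcal{V}_{\mathrm{obs}}|^{2}/g^{2}$ carries the factor $e^{2\rho_{\max}(\log_{2}(|\mathcal{V}_{\mathrm{obs}}|/2)+1)}$. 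Substituting this and $\lambda=2l_{\max}^{2}e^{\rho_{\max}/l_{\max}}/\delta_{\min}^{1/l_{\max}}$ into \eqref{ieq:rsnj1} recovers the $e^{2\rho_{\max}(1/l_{\max}+\log_{2}(|\mathcal{V}_{\mathrm{obs}}|/2)+1)}$ dependence advertised in Theorem~\ref{theo:rsnjsamplecomp}. The main structural difference from the \ac{rnj} analysis is that the relevant perturbation is measured in the spectral norm of the affinity matrix $\mathbf{R}$ rather than entrywise in the distances, which is precisely what the extra factor $|\mathcal{V}_{\mathrm{obs}}|$ inside the argument of $f$, and hence in the denominator of \eqref{ieq:rsnj1}--\eqref{ieq:rsnj2}, accounts for.
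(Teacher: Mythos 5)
Your proposal is correct and follows essentially the same route as the paper's proof: contain the error event via Proposition~\ref{prop:snjsuff}, apply Proposition~\ref{prop:spectralconcen} with $t=g$, split the two exponential tails at $\eta/2$ to read off \eqref{ieq:rsnj1} and \eqref{ieq:rsnj2}, and defer $n_{1}=O(\sqrt{n_{2}}/\log n_{2})$ to the argument in the proof of Theorem~\ref{theo:rrgsamplecomp2}. The only discrepancy is a benign constant (your direct inversion yields prefactors $4$ and $32$ rather than the paper's $16$ and $64$, which correspond to evaluating $f$ at $e^{\rho_{\min}}g/(2|\mathcal{V}_{\mathrm{obs}}|)$; your derived conditions are weaker, hence still imply the stated sufficient conditions), and your explicit inversion of the first branch of $g$ correctly recovers the $e^{2\rho_{\max}(1/l_{\max}+\log_{2}(|\mathcal{V}_{\mathrm{obs}}|/2)+1)}$ dependence of Theorem~\ref{theo:rsnjsamplecomp}.
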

\begin{proof}[Proof of Theorem ~\ref{theo:rsnjsamplecomp2}]
	It is easy to see by substituting the constants $\lambda$ and $\kappa$ into \eqref{ieq:rsnj1} and \eqref{ieq:rsnj2} that Theorem \ref{theo:rsnjsamplecomp2} implies Theorem \ref{theo:rsnjsamplecomp}, so we provide the proof of Theorem \ref{theo:rsnjsamplecomp2} here.

	Proposition \ref{prop:snjsuff} shows that the probability of learning the wrong tree $\mathbb{P}(\mathcal{E})$ could be bounded as
	\begin{align}\label{ieq:errp}
		\mathbb{P}(\mathcal{E})\leq \mathbb{P}\big(\|\hat{\mathbf{R}}-\mathbf{R}\|_{2}>g(|\mathcal{V}_{\mathrm{obs}}|,\rho_{\min},\rho_{\max})\big)\leq |\mathcal{V}_{\mathrm{obs}}|^{2}f\Big(e^{\rho_{\min}}\frac{g(|\mathcal{V}_{\mathrm{obs}}|,\rho_{\min},\rho_{\max})}{|\mathcal{V}_{\mathrm{obs}}|}\Big).
	\end{align}
	Substituting the expression of $f$ and bounding  the right-hand-side of inequality \eqref{ieq:errp} by $\eta$, we have
	\begin{align}
		n_{2}&\geq \frac{16\lambda^{2}\kappa^{2}|\mathcal{V}_{\mathrm{obs}}|^{2}}{ce^{2\rho_{\min}}g(|\mathcal{V}_{\mathrm{obs}}|,\rho_{\min},\rho_{\max})^{2}}\log\frac{2|\mathcal{V}_{\mathrm{obs}}|^{2}l_{\max}^{2}}{\eta} \quad\mbox{and}\\
		\frac{n_{2}}{n_{1}}&\geq \frac{64\lambda\kappa |\mathcal{V}_{\mathrm{obs}}|}{3e^{\rho_{\min}}g(|\mathcal{V}_{\mathrm{obs}}|,\rho_{\min},\rho_{\max})}\log\frac{4|\mathcal{V}_{\mathrm{obs}}|^{2}l_{\max}^{2}}{\eta}.
	\end{align}
	The proof that $n_{1}=O(\sqrt{n_{2}}/\log n_{2})$ can be derived by following the similar procedures in the proof of Theorem \ref{theo:rrgsamplecomp}.
\end{proof}

\section{Proofs of results in Section~\ref{subsec:rclrg}} \label{app:sec34}

\begin{lemma}\label{lem:mstprop}
	The \ac{mst} of a weighted graph $\mathbb{T}$ has the following properties:
	\begin{itemize}
		\item[(1)] For any cut $C$ of the graph, if the weight of an edge $e$ in the cut-set of $C$ is strictly smaller than the weights of all other edges of the cut-set of $C$, then this edge belongs to all \ac{mst}s of the graph.
		\item[(2)] If $\mathbb{T}^{\prime}$ is a tree of \ac{mst} edges, then we can contract $\mathbb{T}^{\prime}$ into a single vertex while maintaining the invariant that the \ac{mst} of the contracted graph plus $\mathbb{T}^{\prime}$ gives the \ac{mst} for the graph before contraction \cite{pettie2002optimal}.
	\end{itemize}
\end{lemma}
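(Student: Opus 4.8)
The plan is to establish both properties by the classical exchange (swap) arguments, treating $\mathbb{T}$ as a generic edge-weighted graph with weights $\omega(\cdot)$ and keeping the write-up short since both facts are standard and cited from \cite{pettie2002optimal}.

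For property~(1), the \emph{cut property}, I would argue by contradiction. Let $C$ be the given cut with parts $S$ and its complement, and let $e=(u,v)$ be the strictly lightest edge of the cut-set. Suppose some minimum spanning tree $T$ omits $e$. Since $T$ is a spanning tree, adjoining $e$ creates a unique cycle $\mathcal{O}$; because $u$ and $v$ lie on opposite sides of $C$, the cycle $\mathcal{O}$ must cross the cut at least once more and therefore contains another cut-set edge $e^{\prime}\neq e$. The strict-minimality hypothesis gives $\omega(e)<\omega(e^{\prime})$, so $T^{\prime}=(T\setminus\{e^{\prime}\})\cup\{e\}$ is again a spanning tree with strictly smaller total weight, contradicting the optimality of $T$. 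Hence every minimum spanning tree contains $e$.

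For property~(2), the \emph{contraction invariant}, I would first invoke property~(1) (or its standard consequence) to fix a minimum spanning tree $T$ of $\mathbb{T}$ that contains all edges of $\mathbb{T}^{\prime}$, which is possible because each edge of $\mathbb{T}^{\prime}$ is an MST edge. Contracting $\mathbb{T}^{\prime}$ to a single supervertex produces a graph $\mathbb{T}_{c}$ whose spanning trees are in bijection with the spanning trees of $\mathbb{T}$ that contain $\mathbb{T}^{\prime}$: deleting the edges of $\mathbb{T}^{\prime}$ from such a spanning tree yields a spanning tree of $\mathbb{T}_{c}$, and conversely adjoining $\mathbb{T}^{\prime}$ to any spanning tree of $\mathbb{T}_{c}$ (after lifting its edges back to their original endpoints) yields a spanning tree of $\mathbb{T}$. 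This correspondence changes the total weight only by the fixed constant $\sum_{e\in\mathbb{T}^{\prime}}\omega(e)$, so one side is weight-minimal exactly when its image is. Therefore $\mathrm{MST}(\mathbb{T}_{c})$ together with $\mathbb{T}^{\prime}$ is a minimum spanning tree of $\mathbb{T}$.

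The step I expect to be the main obstacle is verifying, in~(2), that the lift is well defined and acyclic: one must check that adjoining $\mathbb{T}^{\prime}$ to a spanning tree of the contracted graph exactly reconnects the vertices collapsed into the supervertex while introducing no cycle. This uses that $\mathbb{T}^{\prime}$ is a \emph{tree} (so it spans its own vertex set with the correct edge count) rather than merely a forest, and that each edge incident to the supervertex lifts to a unique original endpoint. The remaining content is bookkeeping of total edge weight, which is routine.
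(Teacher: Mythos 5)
The paper gives no proof of this lemma at all---it is stated as a known fact and cited to Pettie and Ramachandran---so your write-up is judged against the standard arguments rather than against anything in the paper. Your proof of property (1) is correct and is the classical exchange argument: the cycle created by adjoining $e$ crosses the cut an even number of times, hence contains a second cut-set edge $e^{\prime}\neq e$, and the strict inequality $\omega(e)<\omega(e^{\prime})$ makes the swap strictly improving, contradicting minimality.

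Property (2) has one genuine gap, in the very first step. You justify ``fix an MST $T$ containing all edges of $\mathbb{T}^{\prime}$'' by saying this is possible ``because each edge of $\mathbb{T}^{\prime}$ is an MST edge.'' If ``MST edge'' means ``belongs to \emph{some} MST,'' this inference is false once weights can tie, and in fact the lemma itself fails under that reading: take vertices $\{1,2,3\}$ with $\omega(1,2)=\omega(2,3)=1$ and $\omega(1,3)=1/2$. Each of $(1,2)$ and $(2,3)$ lies in some MST, and $\mathbb{T}^{\prime}=\{(1,2),(2,3)\}$ is a tree, yet no single MST contains both; contracting $\mathbb{T}^{\prime}$ collapses everything to one vertex, and ``MST of the contracted graph plus $\mathbb{T}^{\prime}$'' has weight $2>3/2$. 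Property (1) does not rescue the step either: it certifies membership in \emph{all} MSTs only for edges that are strict minima across some cut, which is strictly stronger than being an MST edge. The intended hypothesis---as in Pettie and Ramachandran, who work with distinct weights so the MST is unique---is that $\mathbb{T}^{\prime}$ is a subtree of a fixed MST; under that reading your problematic sentence holds by hypothesis and needs no justification. The remainder of your argument is then sound: spanning trees of the contracted multigraph correspond bijectively, with total weight shifted by the constant $\sum_{e\in\mathbb{T}^{\prime}}\omega(e)$, to spanning trees of the original graph containing $\mathbb{T}^{\prime}$, and you correctly identify why the lift is acyclic ($\mathbb{T}^{\prime}$ is a tree spanning the collapsed vertex set, and each contracted edge remembers its original endpoint; note also that chords joining two collapsed vertices become self-loops and are never usable by a spanning tree containing $\mathbb{T}^{\prime}$). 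Minimality among trees containing $\mathbb{T}^{\prime}$, combined with the existence of a global MST containing $\mathbb{T}^{\prime}$, then gives the conclusion. So: repair the hypothesis (or assume distinct weights) and delete the appeal to property (1) in that step, and the proof is complete.
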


\begin{proof}[Proof of Proposition ~\ref{prop:corctmst}]
	We prove this argument by induction. Choosing any node as the root node, we first prove that the edges which are related to the observed nodes with the largest depth are identified or contracted correctly.

	Since we consider the edges which involve at least one observed node, we only need to discuss the edges formed by two observed nodes and one observed node and one hidden node. We first consider the 
	identification of the edges between two observed nodes.

	\begin{figure}[H]
		\centering\includegraphics[width=0.5\columnwidth,draft=false]{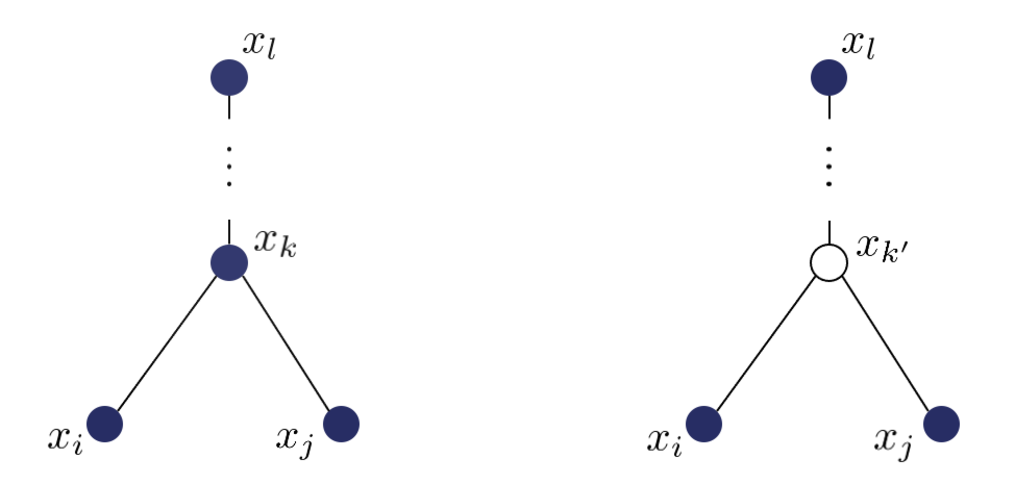}
		\caption{Two kinds of edges related at least one observed node.}
		\label{fig:contract_induc}
	\end{figure}

	To correctly identify the edge $(x_{k},x_{j})$ in Fig.~\ref{fig:contract_induc}, we consider the cut of the graph which splits the nodes into $\{x_{j}\}$ and all the other nodes. Lemma \ref{lem:mstprop} 
	says that the condition that 
	\begin{align}
		\hat{\mathrm{d}}(x_{k},x_{j})<\hat{\mathrm{d}}(x_{l},x_{j})\quad \forall x_{l}\in \mathcal{V}_{\mathrm{obs}},x_{l}\neq x_{k},x_{j}
	\end{align}
	is sufficient to guarantee that this edge is identified correctly. This condition is equivalent to
	\begin{align}
		\Delta_{kj}<\Delta_{lj}+\mathrm{d}(x_{l},x_{k})\quad \forall x_{l}\in \mathcal{V}_{\mathrm{obs}},x_{l}\neq x_{k},x_{j}, 
	\end{align}
	which is guaranteed by choosing $\Delta_{\mathrm{MST}}= \mathrm{d_{ct}}(x_{k};\mathbb{T},\mathcal{V}_{\mathrm{obs}})$.
	
	Furthermore, we need to guarantee that $x_{j}$ is not connected to other nodes except $x_{k}$. We consider the cut of the graph which split the nodes into $\{x_{j},x_{k}\}$ and all the other nodes. Lemma \ref{lem:mstprop} 
	says that the condition that 
	\begin{align}
		\hat{\mathrm{d}}(x_{l},x_{k})<\hat{\mathrm{d}}(x_{l},x_{j})\quad \forall x_{l}\in \mathcal{V}_{\mathrm{obs}},x_{l}\neq x_{k},x_{j}
	\end{align}
	is sufficient to guarantee $x_{j}$ is not connected to other nodes. This condition is equivalent to
	\begin{align}
		\Delta_{lk}<\Delta_{lj}+\mathrm{d}(x_{k},x_{j})\quad \forall x_{l}\in \mathcal{V}_{\mathrm{obs}},x_{l}\neq x_{k},x_{j},
	\end{align}
	which is guaranteed by choosing $\Delta_{\mathrm{MST}}= \mathrm{d_{ct}}(x_{k};\mathbb{T},\mathcal{V}_{\mathrm{obs}})$.

	A similar proof can be used to guarantee $(x_{i},x_{k})$ can be identified correctly. Then we can contract $x_{i},x_{j}$ to $x_{k}$ to form a super node in the subsequent edges identification for Lemma \ref{lem:mstprop}.

	Now we discuss the edges involving one observed node and one hidden node. There are two cases: (i) The hidden node $x_{k^{\prime}}$ should be contracted to either $x_{i}$ 
	or $x_{j}$. (ii) The hidden node $x_{k^{\prime}}$ should be contracted to $x_{l}\in \mathcal{V}_{\mathrm{obs}},x_{l}\neq x_{i},x_{j}$.

	We first consider the case (i). Without loss of generality, we assume that $x_{k^{\prime}}$ should be contracted to $x_{j}$. Contracting $x_{k^{\prime}}$ to $x_{j}$ is equivalent to that 
	$x_{i}$ is not connected to other nodes except $x_{j}$. Lemma \ref{lem:mstprop} shows that 
	\begin{align}\label{cond:hid}
		&\hat{\mathrm{d}}(x_{i},x_{j})<\hat{\mathrm{d}}(x_{i},x_{l})\quad \hat{\mathrm{d}}(x_{j},x_{l})<\hat{\mathrm{d}}(x_{i},x_{l})
		\qquad \forall\, x_{l}\in \mathcal{V}_{\mathrm{obs}},x_{l}\neq x_{i},x_{j}
	\end{align}
	is sufficient to achieve that $x_{i}$ is not connected to other nodes except $x_{j}$. This condition is equivalent to
	\begin{align}
		&\Delta_{ij}+\mathrm{d}(x_{k^{\prime}},x_{j})<\Delta_{lj}+\mathrm{d}(x_{k^{\prime}},x_{l})\quad\mbox{and}\quad \Delta_{jl}+\mathrm{d}(x_{k^{\prime}},x_{j})<\Delta_{il}+\mathrm{d}(x_{k^{\prime}},x_{i})  \nonumber\\
		&\qquad\qquad\qquad\forall x_{l}\in \mathcal{V}_{\mathrm{obs}},x_{l}\neq x_{k},x_{j},
	\end{align}
	which is guaranteed by choosing $\Delta_{\mathrm{MST}}= \mathrm{d_{ct}}(x_{k^{\prime}};\mathbb{T},\mathcal{V}_{\mathrm{obs}})$. Then 
	we can contract $x_{i}$ to $x_{j}$ to form a super node in the subsequent edges identification for Lemma \ref{lem:mstprop}.

	Then we consider the case (ii). Here we need to prove that $x_{k^{\prime}}$ will not be contracted to $x_{i}$ or $x_{j}$. Without loss of generality, we assume that $x_{k^{\prime}}$ 
	is contracted to $x_{l}$, which guaranteed by that there is no edge between $x_{i}$ and $x_{j}$. Lemma \ref{lem:mstprop} shows that 
	\begin{align}\label{cond:hid2}
		\hat{\mathrm{d}}(x_{i},x_{l})<\hat{\mathrm{d}}(x_{i},x_{j})\quad \hat{\mathrm{d}}(x_{j},x_{l})<\hat{\mathrm{d}}(x_{i},x_{j})
	\end{align}
	is sufficient to guarantee that there is no edge between $x_{i}$ and $x_{j}$. This condition is equivalent to
	\begin{align}
		\Delta_{il}+\mathrm{d}(x_{k^{\prime}},x_{l})<\Delta_{ij}+\mathrm{d}(x_{k^{\prime}},x_{j}) \quad \Delta_{jl}+\mathrm{d}(x_{k^{\prime}},x_{l})<\Delta_{ij}+\mathrm{d}(x_{k^{\prime}},x_{i}),
	\end{align}
	which is guaranteed by choosing $\Delta_{\mathrm{MST}}= \mathrm{d_{ct}}(x_{k^{\prime}};\mathbb{T},\mathcal{V}_{\mathrm{obs}})$.

	Assume that all the edges related to the nodes with depths larger than $l$ are identified or contracted correctly. We now consider the edges related 
	to the nodes with depths $l$. For the edges between two observed nodes and edges of case (i) and (ii), similar procedures can be adopted to prove the statements. 
	Here we discuss the case where $x_{l}$ should contract the hidden nodes which are its descendants. Contracting $x_{k^{\prime}}$ to $x_{l}$ is equivalent to that there are 
	edges between $(x_{l},x_{i})$ and $(x_{l},x_{j})$, and there is no other edges related to $x_{i}$ and $x_{j}$. Recall that condition \eqref{cond:hid2} is satisfied by 
	the induction hypothesis. Lemma \ref{lem:mstprop} shows that 
	\begin{align}
		&\hat{\mathrm{d}}(x_{i},x_{l})<\hat{\mathrm{d}}(x_{i},x_{k})\quad \hat{\mathrm{d}}(x_{k},x_{l})<\hat{\mathrm{d}}(x_{k},x_{i})\quad \forall x_{k}\in \mathcal{V}_{\mathrm{obs}},x_{k}\neq x_{i},x_{j},x_{l}\\
		&\hat{\mathrm{d}}(x_{j},x_{l})<\hat{\mathrm{d}}(x_{j},x_{k})\quad \hat{\mathrm{d}}(x_{k},x_{l})<\hat{\mathrm{d}}(x_{k},x_{j})\quad \forall x_{k}\in \mathcal{V}_{\mathrm{obs}},x_{k}\neq x_{i},x_{j},x_{l}
	\end{align}
	is sufficient to guarantee that $x_{k^{\prime}}$ is contracted to $x_{l}$. This condition is equivalent to
	\begin{align}
		&\Delta_{il}<\Delta_{ik}+\mathrm{d}(x_{k},x_{l})\quad \Delta_{kl}<\Delta_{ki}+\mathrm{d}(x_{i},x_{l})\quad \forall x_{k}\in \mathcal{V}_{\mathrm{obs}},x_{k}\neq x_{i},x_{j},x_{l}\\
		&\Delta_{jl}<\Delta_{jk}+\mathrm{d}(x_{k},x_{l})\quad \Delta_{kl}<\Delta_{kj}+\mathrm{d}(x_{j},x_{l})\quad \forall x_{k}\in \mathcal{V}_{\mathrm{obs}},x_{k}\neq x_{i},x_{j},x_{l}
	\end{align}
	which is guaranteed by choosing $\Delta_{\mathrm{MST}}= \mathrm{d_{ct}}(x_{l};\mathbb{T},\mathcal{V}_{\mathrm{obs}})$. Then 
	we can contract $x_{i}$ and $x_{j}$ to $x_{l}$ to form a super node in the subsequent edges identification for Lemma \ref{lem:mstprop}.

	Thus, the results are proved by induction. \end{proof}

\begin{theorem}\label{theo:clrgsamplcomp2}
	If Assumptions \ref{assupleng}--\ref{assupdegree}, \ac{rclrg} constructs the correct latent tree with probability 
	at least $1-\eta$ if
	\begin{align}
		n_{2}&\geq \max\bigg\{\frac{4}{\varepsilon^{2}}\Big(\frac{9}{2}\Big)^{2L_{\mathrm{C}}-2},\frac{1}{\Delta_{\mathrm{MST}}^{2}}\bigg\}\frac{16\lambda^{2} \kappa^{2}}{c}\log\frac{17l_{\max}^{2}s^{L_{\mathrm{C}}-1}|\mathcal{V}_{\mathrm{obs}}|^{3}+l_{\max}^{2}|\mathcal{V}_{\mathrm{obs}}|^{2}}{\eta},\label{ieq:rclrg1}\\
		\frac{n_{2}}{n_{1}}&\geq \max\bigg\{\frac{2}{\varepsilon}\Big(\frac{9}{2}\Big)^{L_{\mathrm{C}}-1},\frac{1}{\Delta_{\mathrm{MST}}}\bigg\}\frac{64\lambda \kappa }{3}\log\frac{34l_{\max}^{2}s^{L_{\mathrm{C}}-1}|\mathcal{V}_{\mathrm{obs}}|^{3}+2l_{\max}^{2}|\mathcal{V}_{\mathrm{obs}}|^{2}}{\eta},\label{ieq:rclrg2}
	\end{align}
	where 
	\begin{align}
		\lambda=\frac{2l_{\max}^{2}e^{\rho_{\max}/l_{\max}}}{\delta_{\min}^{1/l_{\max}}}\quad \kappa=\max\{\sigma_{\max}^{2},\rho_{\min}\} \quad s=d_{\max}^{2}+2d_{\max}^{3}(1+2N_{\tau})\quad \varepsilon=\frac{\rho_{\min}}{2}, 
	\end{align}
	$c$ is an absolute constant, and $L_{\mathrm{C}}$ is the number of iterations of \ac{rclrg} needed to construct the tree.
\end{theorem}
\begin{proof}[Proof of Theorem ~\ref{theo:clrgsamplcomp2}]
	It is easy to see by substituting the constants $\lambda$, $\kappa$, $s$ and $\varepsilon$ into \eqref{ieq:rclrg1} and \eqref{ieq:rclrg2} that Theorem \ref{theo:clrgsamplcomp2} implies Theorem \ref{theo:clrgsamplcomp}, so we provide the proof of Theorem \ref{theo:clrgsamplcomp2} here.

	The \ac{rclrg} algorithm consists of two stages: Calculation of \ac{mst} and implementation of \ac{rrg} on internal nodes. The probability of error of \ac{rclrg} could be decomposed as
	\begin{align}
		\mathbb{P}(\mathcal{E})=\mathbb{P}\big(\mathcal{E}_{\mathrm{MST}}\cup(\mathcal{E}_{\mathrm{MST}}^{c}\cap \mathcal{E}_{\mathrm{RRG}})\big)=\mathbb{P}(\mathcal{E}_{\mathrm{MST}})+\mathbb{P}(\mathcal{E}_{\mathrm{MST}}^{c}\cap \mathcal{E}_{\mathrm{RRG}})\leq \mathbb{P}(\mathcal{E}_{\mathrm{MST}})+\mathbb{P}(\mathcal{E}_{\mathrm{RRG}})\nonumber
	\end{align}
	We define the correct event of calculation of the \ac{mst} as
	\begin{align}
		\mathcal{C}_{\mathrm{MST}}=\bigcap_{x_{i},x_{j}\in \mathcal{V}_{\mathrm{obs}}} \Big\{\big|\hat{\mathrm{d}}(x_{i},x_{j})-\mathrm{d}(x_{i},x_{j})\big|<\frac{\Delta_{\mathrm{MST}}}{2}\Big\}=\bigcap_{x_{i},x_{j}\in \mathcal{V}_{\mathrm{obs}}} \mathcal{C}_{ij}
	\end{align}
	Proposition \ref{prop:corctmst} shows that
	\begin{align}
		\mathbb{P}(\mathcal{E}_{\mathrm{MST}})&\leq 1- \mathbb{P}(\mathcal{C}_{\mathrm{MST}})=\mathbb{P}\big((\bigcap_{x_{i},x_{j}\in \mathcal{V}_{\mathrm{obs}}} \mathcal{C}_{ij})^{c}\big)\\
		&=\mathbb{P}\bigg(\bigcup_{x_{i},x_{j}\in \mathcal{V}_{\mathrm{obs}}} \mathcal{C}_{ij}^{c}\bigg)\leq \sum_{x_{i},x_{j}\in \mathcal{V}_{\mathrm{obs}}}\mathbb{P}(\mathcal{C}_{ij}^{c})\leq \binom{|\mathcal{V}_{\mathrm{obs}}|}{2} f\Big(\frac{\Delta_{\mathrm{MST}}}{2}\Big)
	\end{align}
	We define the event that \ac{rrg} yields the correct subtree based on $\mathrm{nbd}[x_{i},\mathbb{T}]$
	\begin{align}
		\mathcal{C}_{\mathrm{RRG}}&=\bigcap_{x_{i}\in \mathrm{Int}(\mathrm{MST}(\mathcal{V}_{\mathrm{obs}};\mathbf{\hat{D}}))} \{\text{Output of RRG is correct with input }\mathrm{nbd}[x_{i},\mathbb{T}]\} \\
		&=\bigcap_{x_{i}\in \mathrm{Int}(\mathrm{MST}(\mathcal{V}_{\mathrm{obs}};\mathbf{\hat{D}}))} \mathcal{C}_{i}
	\end{align}
	Then we have
	\begin{align}
		\!\mathbb{P}(\mathcal{E}_{\mathrm{RRG}}) =1\!-\!\mathbb{P}(\mathcal{C}_{\mathrm{RRG}}) \! = \! \mathbb{P}\bigg(\Big(\bigcap_{x_{i}\in \mathrm{Int}(\mathrm{MST}(\mathcal{V}_{\mathrm{obs}};\mathbf{\hat{D}}))}  \! \mathcal{C}_{i}\Big)^{c}\bigg)=\mathbb{P}\bigg(\bigcup_{x_{i}\in \mathrm{Int}(\mathrm{MST}(\mathcal{V}_{\mathrm{obs}};\mathbf{\hat{D}}))} \! \mathcal{C}_{i}^{c}\bigg).
	\end{align}

	By defining $L_{\mathrm{C}}=\lceil \frac{\mathrm{Deg}(\mathrm{MST}(\mathcal{V}_{\mathrm{obs}};\mathbf{\hat{D}}))}{2}-1\rceil$, we have
	\begin{align}
		\mathbb{P}(\mathcal{E})&\leq \mathbb{P}(\mathcal{E}_{\mathrm{MST}})+\mathbb{P}(\mathcal{E}_{\mathrm{RRG}}) \\
		&\leq\binom{|\mathcal{V}_{\mathrm{obs}}|}{2} f\Big(\frac{\Delta_{\mathrm{MST}}}{2}\Big)+\big(|\mathcal{V}_{\mathrm{obs}}|-2\big)\Bigg\{\bigg[4\Big(1+\frac{|\mathcal{V}_{\mathrm{obs}}|^{2}}{2(s-1)}\Big)+\frac{3}{2}(d_{\max}-1)\Big(1+\frac{|\mathcal{V}_{\mathrm{obs}}|}{2(s-1)}\Big)\bigg]\nonumber\\
		&\times h^{(L_{\mathrm{C}}-1)}\Big(\frac{2\rho_{\min}-\varepsilon}{4}\Big)+\bigg[3\Big(1+\frac{|\mathcal{V}_{\mathrm{obs}}|^{2}}{4(s-1)}\Big)+2(d_{\max}-1)\Big(1+\frac{|\mathcal{V}_{\mathrm{obs}}|^3}{8(s-1)}\Big)\bigg]h^{(L_{\mathrm{C}}-1)}(\frac{\varepsilon}{4})\Bigg\} 
	\end{align}
	To derive the sufficient conditions of $\mathbb{P}(\mathcal{E})\leq \eta$, we consider the following conditions
	\begin{align}
		\mathbb{P}(\mathcal{E}_{\mathrm{MST}})\leq (1-r)\eta \quad\mbox{and}\quad \mathbb{P}(\mathcal{E}_{\mathrm{RRG}})\leq r\eta \quad\mbox{for some}\quad r\in(0,1)
	\end{align}
	Following the same calculations with inequalities \eqref{eqn:sampcomplex}, we have
	\begin{align}
		n_{2}&\geq \max\bigg\{\frac{64\lambda^{2} \kappa^{2}}{c\varepsilon^{2}}\Big(\frac{9}{2}\Big)^{2L_{\mathrm{C}}-2}\log\frac{17l_{\max}^{2}s^{L_{\mathrm{C}}-1}|\mathcal{V}_{\mathrm{obs}}|^{3}}{r\eta},\frac{16\lambda^{2}\kappa^{2}}{c\Delta_{\mathrm{MST}}^{2}}\log\frac{l_{\max}^{2}|\mathcal{V}_{\mathrm{obs}}|^{2}}{(1-r)\eta}\bigg\}\\
		\frac{n_{2}}{n_{1}}&\geq \max\bigg\{\frac{128\lambda \kappa }{3\varepsilon}\Big(\frac{9}{2}\Big)^{L_{\mathrm{C}}-1}\log\frac{34l_{\max}^{2}s^{L_{\mathrm{C}}-1}|\mathcal{V}_{\mathrm{obs}}|^{3}}{r\eta},\frac{64\lambda \kappa }{3\Delta_{\mathrm{MST}}}\log\frac{2l_{\max}^{2}|\mathcal{V}_{\mathrm{obs}}|^{2}}{(1-r)\eta}\bigg\}
	\end{align}
	By choosing $r=\frac{17s^{L_{\mathrm{C}}-1}|\mathcal{V}_{\mathrm{obs}}|^{3}}{17s^{L_{\mathrm{C}}-1}|\mathcal{V}_{\mathrm{obs}}|^{3}+|\mathcal{V}_{\mathrm{obs}}|^{2}}$, we have
	\begin{align}
		n_{2}&\geq \max\bigg\{\frac{4}{\varepsilon^{2}}\Big(\frac{9}{2}\Big)^{2L_{\mathrm{C}}-2},\frac{1}{\Delta_{\mathrm{MST}}^{2}}\bigg\}\frac{16\lambda^{2} \kappa^{2}}{c}\log\frac{17l_{\max}^{2}s^{L_{\mathrm{C}}-1}|\mathcal{V}_{\mathrm{obs}}|^{3}+l_{\max}^{2}|\mathcal{V}_{\mathrm{obs}}|^{2}}{\eta},\\
		\frac{n_{2}}{n_{1}}&\geq \max\bigg\{\frac{2}{\varepsilon}\Big(\frac{9}{2}\Big)^{L_{\mathrm{C}}-1},\frac{1}{\Delta_{\mathrm{MST}}}\bigg\}\frac{64\lambda \kappa }{3}\log\frac{34l_{\max}^{2}s^{L_{\mathrm{C}}-1}|\mathcal{V}_{\mathrm{obs}}|^{3}+2l_{\max}^{2}|\mathcal{V}_{\mathrm{obs}}|^{2}}{\eta}
	\end{align}

	Following a similar proof as that for \ac{rrg}, we claim that
	\begin{align}
		n_{2}&\geq \max\bigg\{\frac{4}{\varepsilon^{2}}\Big(\frac{9}{2}\Big)^{2L_{\mathrm{C}}-2},\frac{1}{\Delta_{\mathrm{MST}}^{2}}\bigg\}\frac{16\lambda^{2} \kappa^{2}}{c}\log\frac{17l_{\max}^{2}s^{L_{\mathrm{C}}-1}|\mathcal{V}_{\mathrm{obs}}|^{3}+l_{\max}^{2}|\mathcal{V}_{\mathrm{obs}}|^{2}}{\eta},\\
		n_{1}&=O\Big(\frac{\sqrt{n_{2}}}{\log n_{2}}\Big)
	\end{align}
	are sufficient to guarantee $\mathbb{P}(\mathcal{E})\leq\eta$.
\end{proof}

\section{Discussions and Proofs of results in Section~\ref{subsec:comp}} \label{app:table}
In this section, we provide more discussions of the results in Table \ref{table:samplecompare}. We also provide the proofs of results listed in Table \ref{table:samplecompare}.

The sample complexities of \ac{rrg} and \ac{rclrg}  are achieved w.h.p., since the number of iterations $L_{\mathrm{R}}$ and $L_{\mathrm{C}}$ depend on the 
quality of the estimates of the information distances. The parameter $t$ for \ac{rsnj} scales as $O(\frac{1}{l_{\max}}+\log |\mathcal{V}_{\mathrm{obs}}|)$. 
For the dependence on $\mathrm{Diam}(\mathbb{T})$, \ac{rrg} and \ac{rsnj} have the worst performance. This is because \ac{rrg} constructs new hidden nodes and estimates the information distances related to them in each iteration (or layer), which results in more severe error propagation on larger and deeper graphs. In contrast, our impossibility result in Theorem \ref{theo:converse} suggests that \ac{rnj} has the optimal dependence on $\mathrm{Diam}(\mathbb{T})$. \ac{rclrg} also has the optimal dependence on the diameter of 
graphs on \ac{hmm}, which demonstrates that the Chow-Liu initialization procedure greatly reduces the sample complexity from $O\big((\frac{9}{2})^{\mathrm{Diam}(\mathbb{T})}\big)$ to $O\big(\log\mathrm{Diam}(\mathbb{T})\big)$. 
Since the dependence on $\rho_{\max}$ only relies on the parameters, the dependence of $\rho_{\max}$ of all these algorithms remains the same for graphical models with different underlying structures. 
\ac{rrg}, \ac{rclrg} and \ac{rnj} have the same dependence $O(e^{2\frac{\rho_{\max}}{l_{\max}}})$, while \ac{rsnj} has a worse dependence on $\rho_{\max}$. 
\subsection{Proofs of entries in Table~\ref{table:samplecompare}}
\paragraph{Double-binary tree}
For \ac{rrg}, the number of iterations  needed to construct the tree $L_{\mathrm{R}}=\frac{1}{2}(\mathrm{Diam} (\mathbb{T} ) -1)$. Thus, the sample complexity of \ac{rrg} is $O\big(e^{2\frac{\rho_{\max}}{l_{\max}}}(\frac{9}{2})^{\mathrm{Diam}(\mathbb{T})}\big)$.

For \ac{rclrg}, as mentioned previously, the \ac{mst} can be obtained by contracting the hidden nodes to its closest observed node. For example, the \ac{mst} of the double-binary tree with $\mathrm{Diam}(\mathbb{T})=5$ could be derived by 
contracting hidden nodes as Fig.~\ref{fig:clrgdb}. Then $L_{\mathrm{C}}=\lceil \frac{\mathrm{Diam}(\mathbb{T})+1}{4}\rceil-1$, and the number of observed nodes is $|\mathcal{V}_{\mathrm{obs}}|=2^{\frac{\mathrm{Diam}(\mathbb{T})+1}{2}}$. 
Thus, the sample complexity is $O\big(e^{2\frac{\rho_{\max}}{l_{\max}}}(\frac{9}{2})^{\frac{\mathrm{Diam}(\mathbb{T})}{2}}\big)$.
\begin{figure}[H]
	\centering\includegraphics[width=0.5\columnwidth,draft=false]{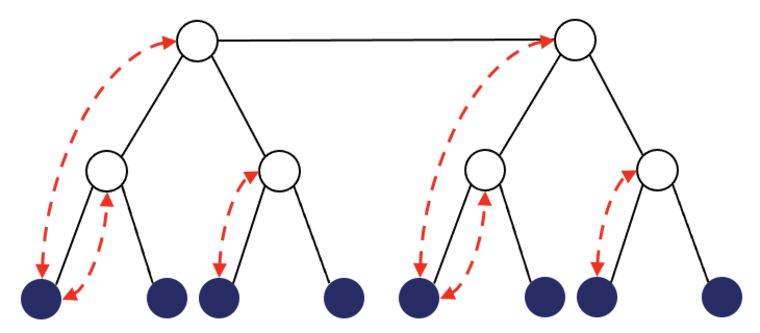}
	\caption{The contraction of hidden nodes in double-binary trees.}
	\label{fig:clrgdb}
\end{figure}

For \ac{rsnj}, the number of observed nodes is $|\mathcal{V}_{\mathrm{obs}}|=2^{\frac{\mathrm{Diam}(\mathbb{T})+1}{2}}$, so the sample complexity is $O\big(e^{2t\rho_{\max}}\mathrm{Diam}(\mathbb{T})\big)$.

For \ac{rnj}, the number of observed nodes is $|\mathcal{V}_{\mathrm{obs}}|=2^{\frac{\mathrm{Diam}(\mathbb{T})+1}{2}}$, so the sample complexity is $O\big(e^{2\frac{\rho_{\max}}{l_{\max}}}\mathrm{Diam}(\mathbb{T})\big)$.

\paragraph{HMM}
For \ac{rrg},   the number of iterations  needed to construct the tree $L_{\mathrm{R}}=\lceil\frac{\mathrm{Diam}{\mathbb{T}}}{2}-1\rceil$. Thus, the sample complexity of \ac{rrg} is $O\big(e^{2\frac{\rho_{\max}}{l_{\max}}}(\frac{9}{2})^{\mathrm{Diam}(\mathbb{T})}\big)$.

For \ac{rclrg}, \ac{mst} could be derived as contracting hidden nodes as shown in Fig. \ref{fig:clrghmm}. Then $L_{\mathrm{C}}=1$ and $|\mathcal{V}_{\mathrm{obs}}|=\mathrm{Diam}(\mathbb{T})+1$. The sample complexity 
is thus $O\big(e^{2\frac{\rho_{\max}}{l_{\max}}}\log\mathrm{Diam}(\mathbb{T})\big)$.
\begin{figure}[H]
	\centering\includegraphics[width=0.5\columnwidth,draft=false]{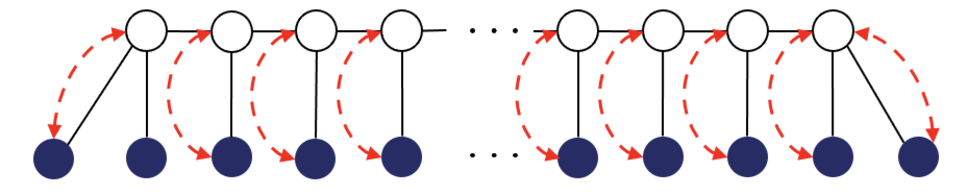}
	\caption{The contraction of hidden nodes in HMMs.}
	\label{fig:clrghmm}
\end{figure}

For \ac{rsnj}, the number of observed nodes is $|\mathcal{V}_{\mathrm{obs}}|=\mathrm{Diam}(\mathbb{T})+1$, so the sample complexity is $O\big(e^{2t\rho_{\max}}\log \mathrm{Diam}(\mathbb{T})$.

For \ac{rnj}, the number of observed nodes is $|\mathcal{V}_{\mathrm{obs}}|=\mathrm{Diam}(\mathbb{T})+1$, so the sample complexity is $O\big(e^{2\frac{\rho_{\max}}{l_{\max}}}\log\mathrm{Diam}(\mathbb{T})\big)$.

\paragraph{Full $m$-tree}
For \ac{rrg}, the number of iterations  needed to construct the tree $L_{\mathrm{R}}=\frac{1}{2} \mathrm{Diam}(\mathbb{T})$. Thus, the sample complexity of \ac{rrg} is $O\big(e^{2\frac{\rho_{\max}}{l_{\max}}}(\frac{9}{2})^{\mathrm{Diam}(\mathbb{T})}\big)$.

For \ac{rclrg}, the \ac{mst} can be derived by contracting hidden nodes as shown in Fig.~\ref{fig:clrgfull}. Then $L_{\mathrm{C}}=2$ and $|\mathcal{V}_{\mathrm{obs}}|=m^{\mathrm{Diam}(\mathbb{T})/2}$. 
Thus, its sample complexity is $O\big(e^{2\frac{\rho_{\max}}{l_{\max}}}\mathrm{Diam}(\mathbb{T})\big)$.
\begin{figure}[H]
	\centering\includegraphics[width=0.5\columnwidth,draft=false]{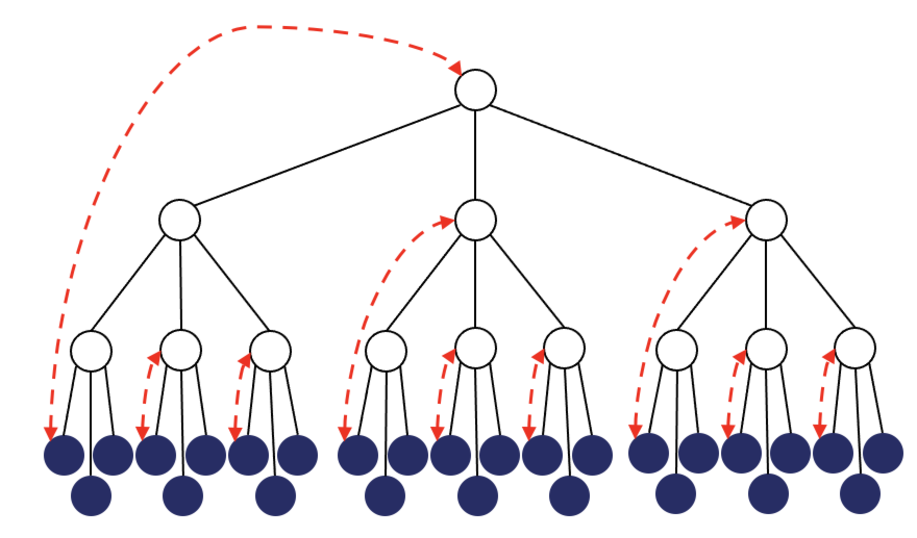}
	\caption{The contraction of hidden nodes in full $m$-trees.}
	\label{fig:clrgfull}
\end{figure}

\paragraph{Double star}
For \ac{rrg}, the number of iterations  needed to construct the tree $L_{\mathrm{R}}=1$. Thus, the sample complexity of \ac{rrg} is $O(e^{2\frac{\rho_{\max}}{l_{\max}}})$.

For \ac{rclrg}, the maximum number of iterations over each \ac{rrg} step (over each internal node of the constructed Chow-Liu tree)  in \ac{rclrg}   is. $L_{\mathrm{C}}=1$ and $|\mathcal{V}_{\mathrm{obs}}|=2d_{\max}$, so the sample complexity of \ac{rclrg} is   $O\big(e^{2\frac{\rho_{\max}}{l_{\max}}}\log d_{\max}\big)$.

\section{Additional numerical details and results}\label{app:num}
\subsection{Standard deviations of results in Fig.~\ref{fig:simu}}\label{app:stddev}

 We first report the standard deviations of the results presented in Fig.~\ref{fig:simu} in the main paper.  All results are averaged over 100 independent runs.

Constant magnitude corruptions (Fig.~\ref{fig:simu}(a))
\begin{table}[H]
	\scriptsize
	\centering
	\begin{tabular}{|c|c|c|c|c|c|c|c|}
	\hline
	\diagbox{Algorithm}{$\sigma$/($\sigma$/AVG)$\times 100$}{\# Samples}&500 & 1000	& 1500 & 2000 & 5000 & 10000 & 20000\\
	\hline
	\ac{rrg}&9.7/9.3 & 4.4/5.0 & 3.7/4.5 & 4.0/5.0 & 5.0/6.8 & 14.4/24.1 & 21.0/75.0\\
	\hline
	\ac{rsnj}&3.3/3.8 & 3.0/7.0 & 3.9/28.8 & 0.3/703.5 & 0.0/0.0 & 0.0/0.0 & 0.0/0.0\\
	\hline
	\ac{rclrg}&2.1/52.0& 0.5/229.1&0.0/0.0&0.0/0.0 &0.0/0.0&0.0/0.0&0.0/0.0\\
	\hline
	\ac{rnj}&5.6/4.3& 9.0/7.1 & 12.3/10.0 & 17.1/15.1 & 28.4/28.3& 35.4/47.9 & 32.2/68.1\\
	\hline
	\ac{rg}&9.2/9.5& 8.8/8.8 & 8.3/8.3 & 7.8/7.8 & 5.7/6.2 & 4.1/4.9 & 1.9/2.3\\
	\hline
	\ac{snj}&0.4/0.3 & 0.6/0.4 & 1.4/0.9 & 2.7/1.8 & 3.2/2.6& 3.6/3.7& 3.2/5.9\\
	\hline
	\ac{clrg}&3.0/2.2&	3.5/2.6& 3.4/2.5& 4.0/3.0& 11.2/19.9& 4.7/22.4 & 2.1/43.5\\
	\hline
	\ac{nj}&1.8/1.3&2.2/1.6&2.2/1.5&3.1/2.3&6.0/4.8&11.5/9.8&17.9/16.35\\
	\hline
	\end{tabular}
	\caption{The standard deviations and standard deviations divided by the means of  the Robinson-Foulds distances for different algorithms}
	\label{table:sd_const}
\end{table}

Uniform corruptions (Fig.~\ref{fig:simu}(b))
\begin{table}[H]
	\scriptsize
	\centering
	\begin{tabular}{|c|c|c|c|c|c|c|c|}
	\hline
	\diagbox{Algorithm}{$\sigma$/($\sigma$/AVG)$\times 100$}{\# Samples}&500 & 1000	& 1500 & 2000 & 5000 & 10000 & 20000\\
	\hline
	\ac{rrg}&4.5/5.0 & 3.3/4.0 & 3.8/4.6 & 3.1/4.0 & 4.3/5.9 & 10.9/17.4 & 23.0/103.2\\
	\hline
	\ac{rsnj}&3.3/4.0 & 2.9/6.7 & 5.0/30.1 & 0.7/230.3 & 0.0/0.0 & 0.0/0.0 & 0.0/0.0\\
	\hline
	\ac{rclrg}&4.6/9.7&	2.5/35.8&0.6/197.1&0.1/1971.0&0.0/0.0&0.0/0.0&0.0/0.0\\
	\hline
	\ac{rnj}&9.2/6.9& 11.5/9.4 & 16.4/14.1 & 18.7/16.6 & 31.1/35.0& 31.4/50.9 & 33.7/74.5\\
	\hline
	\ac{rg}&9.2/9.0& 9.8/9.6 & 8.0/7.8 & 8.1/8.0 & 9.0/9.0 & 7.8/7.4 & 5.9/6.1\\
	\hline
	\ac{snj}&0.0/0.0 & 0.0/0.0 & 0.0/0.0 & 0.2/0.1 & 0.5/0.3& 4.4/3.0& 3.5/3.0\\
	\hline
	\ac{clrg}&3.3/2.4&	3.4/2.5& 3.3/2.4& 3.5/2.5& 3.0/2.2& 6.0/4.5 & 8.0/17.5\\
	\hline
	\ac{nj}&1.7/1.2&1.9/1.3&2.0/1.4&2.0/1.4&2.1/1.5&3.9/2.8&6.1/4.8\\
	\hline
	\end{tabular}
	\caption{The standard deviations and standard deviations divided by the means of  the Robinson-Foulds  distances for different algorithms}
	\label{table:sd_indeplarge}
\end{table}
\ac{hmm} corruptions  (Fig.~\ref{fig:simu}(c))
\begin{table}[H]
	\scriptsize
	\centering
	\begin{tabular}{|c|c|c|c|c|c|c|c|}
	\hline
	\diagbox{Algorithm}{$\sigma$/($\sigma$/AVG)$\times 100$}{\# Samples}&500 & 1000	& 1500 & 2000 & 5000 & 10000 & 20000\\
	\hline
	\ac{rrg}&4.0/4.5 & 5.3/5.8 & 3.8/4.5 & 3.3/4.0 & 3.4/4.6 & 7.5/10.9 & 21.1/49.6\\
	\hline
	\ac{rsnj}&5.9/6.0 & 3.5/6.4 & 3.4/9.7 & 3.6/35.2 & 0.0/0.0 & 0.0/0.0 & 0.0/0.0\\
	\hline
	\ac{rclrg}&13.1/17.4& 4.9/14.1 & 3.4/29.1 & 1.6/54.6 & 0.0/0.0 & 0.0/0.0 & 0.0/0.0\\
	\hline
	\ac{rnj}&6.7/4.5 & 11.3/8.9 & 12.7/10.5 & 19.2/16.7 & 29.3/30.7& 38.0/48.7 & 32.3/65.4\\
	\hline
	\ac{rg}&9.3/9.1& 8.4/8.3 & 8.7/8.5 & 8.6/8.3 & 9.0/8.8 & 8.6/8.2 & 5.5/5.7\\
	\hline
	\ac{snj}&0.3/0.2 & 0.4/0.3 & 0.4/0.3 & 0.5/0.3 & 2.0/1.2 & 4.8/3.4 & 3.9/3.2\\
	\hline
	\ac{clrg}& 3.4/2.5 & 3.3/2.4 & 3.2/2.3 & 3.1/2.3 & 3.5/2.6& 15.0/12.7 & 8.2/13.7\\
	\hline
	\ac{nj}& 1.8/1.3 & 1.6/1.2 & 2.0/1.4 & 1.9/1.3 & 2.8/2.0 & 4.5/3.3 & 5.7/4.4\\
	\hline
	\end{tabular}
	\caption{The standard deviations and standard deviations divided by the means of  the Robinson-Foulds  distances for different algorithms}
	\label{table:sd_hmm}
\end{table}
We note that most of the standard deviations (relative to the means) are   reasonably small. However, some entries in Tables~\ref{table:sd_const}--\ref{table:sd_hmm} appear to be rather large, for example $0.5/229.1$. The reason is that the mean value of the errors are already quite small in these cases, so any deviation from the small means result in large standard deviations. This, however, seems unavoidable.

\subsection{More simulation results complementing those in Section~\ref{sec:simu}}\label{app:sim}

In the following more extensive simulations, we consider eight corruption patterns:
\begin{itemize}
    \item Uniform corruptions: Uniform corruptions are independent additive noises in $[-2A,2A]$ and distributed randomly in the data matrix $\mathbf{X}_1^n$.
    \item Constant magnitude corruptions: Constant magnitude corruptions are independent additive noises but taking values in $\{-A,+A\}$ with probability $0.5$ and distributed randomly in $\mathbf{X}_1^n$.
    \item Gaussian corruptions: Gaussian corruptions are independent additive Gaussian noises $\mathcal{N}(0,A^{2})$ and distributed randomly in $\mathbf{X}_1^n$.
    \item \ac{hmm} corruptions: \ac{hmm} corruptions are generated by a \ac{hmm} which shares the same structure as the original \ac{hmm}  but has different parameters. They replace the entries in $\mathbf{X}_1^n$ with the samples generated by the variables in the same positions.
    \item Double binary corruptions: Double binary corruptions are generated by a double binary tree-structured graphical model which shares the same structure as the original double binary graphical model but has different parameters. They replace the entries in $\mathbf{X}_1^n$ with the samples generated by the variables in the same positions.
    \item Gaussian outliers: Gaussian outliers are outliers that  are generated by independent Gaussian random variables distributed as $\mathcal{N}(0,A^{2})$. 
    \item \ac{hmm} outliers: \ac{hmm} outliers are outliers that  are generated by a \ac{hmm} that shares the same structure as the original \ac{hmm}  but has different parameters.
    \item Double binary outliers: Double binary outliers are outliers that  are generated by a double binary tree-structured graphical model which shares the same structure as the original \ac{hmm}  but has different parameters.
\end{itemize}
In all our experiments, the parameter $A$ is set to $60$ and the number of corruptions $n_{1}$ is set to  $100$.

Samples are generated from two graphical models: \ac{hmm} (Fig.~\ref{fig:comp}(b)) and double binary tree (Fig.~\ref{fig:comp}(a)). The dimensions of the random vectors at each node are $l_{\max}=3$. The Robinson-Foulds distance~\cite{robinson1981comparison} between the nominal tree and the estimate and the error rate (zero-one loss) are adopted to measure the performance of learning algorithms. These are computed based on $100$ independent trials. We use the code for \ac{rg} and \ac{clrg} provided by  Choi et al.~\cite{choi2011learning}. All our experiments are run on an  Intel(R) Xeon(R) CPU
E5-2697 v4 @ 2.30 GHz.
\subsubsection{HMM}
Just as in the experiments in  Choi et al.~\cite{choi2011learning}, the diameter of the \ac{hmm} (Fig.~\ref{fig:comp}(b)) is chosen to be $\mathrm{Diam}(\mathbb{T})=80$. The  matrices $(\mathbf{A},\mathbf{\Sigma}_{\mathrm{r}},\mathbf{\Sigma}_{\mathrm{n}})$ are chosen so that the condition in Proposition \ref{prop:homo} are satisfied with $\alpha=1$, and we set $\mathbf{A}$ commutable with $\mathbf{\Sigma}_{\mathrm{r}}$. The information distances between neighboring nodes are chosen to be the same value $0.24$, which implies that $\rho_{\min}=0.24$ and $\rho_{\max}=0.24\cdot\mathrm{Diam}(\mathbb{T})=19.2$.
\begin{figure}[H]
\centering
\subfigure[Robinson-Foulds distances]{
\begin{minipage}[t]{0.48\linewidth}
\centering
\includegraphics[width=2.6in]{hmm_const1_rf.eps}
\end{minipage}%
}%
\subfigure[Structure recovery error rate]{
\begin{minipage}[t]{0.48\linewidth}
\centering
\includegraphics[width=2.6in]{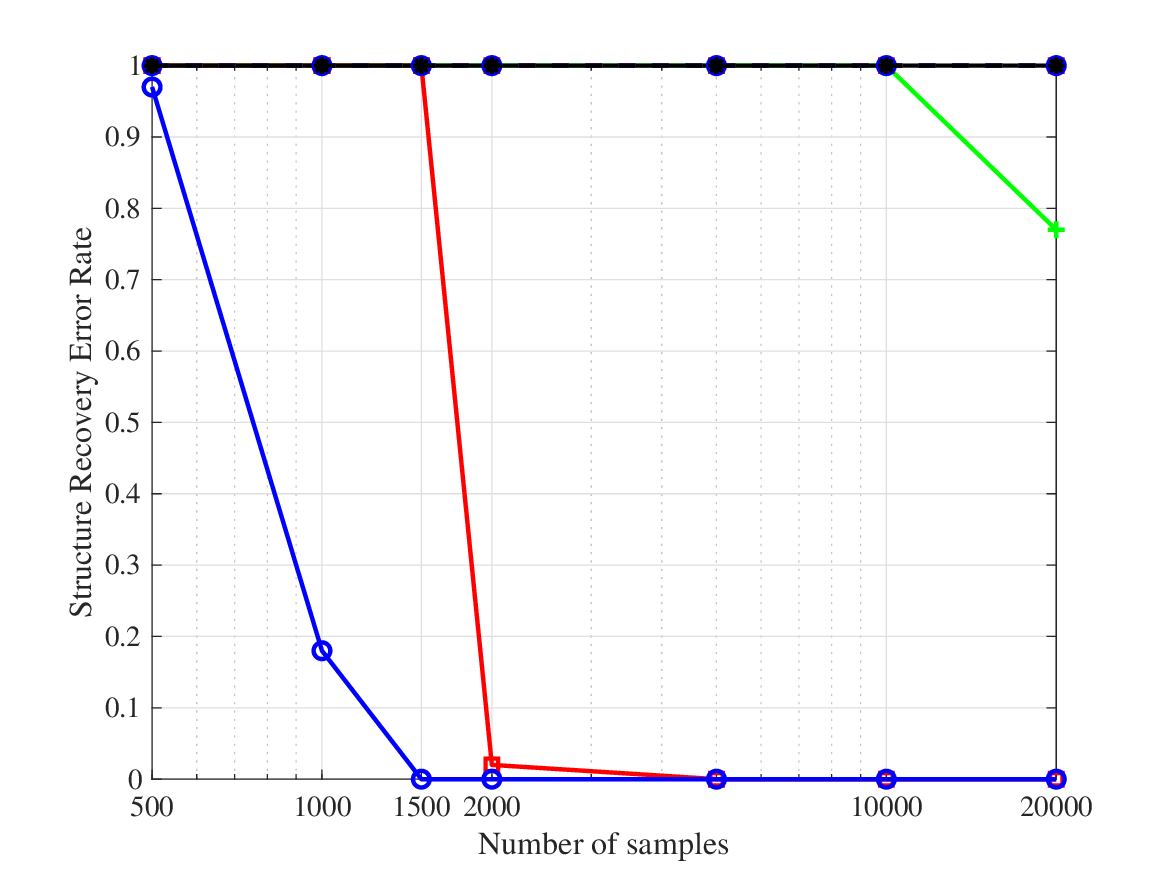}
\end{minipage}%
\label{fig:hmm_const}
}%
\centering
\caption{Performances of robustified and original learning algorithms with constant magnitude corruptions}
\end{figure}

\begin{figure}[H]
\centering
\subfigure[Robinson-Foulds distances]{
\begin{minipage}[t]{0.48\linewidth}
\centering
\includegraphics[width=2.6in]{hmm_indeplarge1_rf.eps}
\end{minipage}%
}%
\subfigure[Structure recovery error rate]{
\begin{minipage}[t]{0.48\linewidth}
\centering
\includegraphics[width=2.6in]{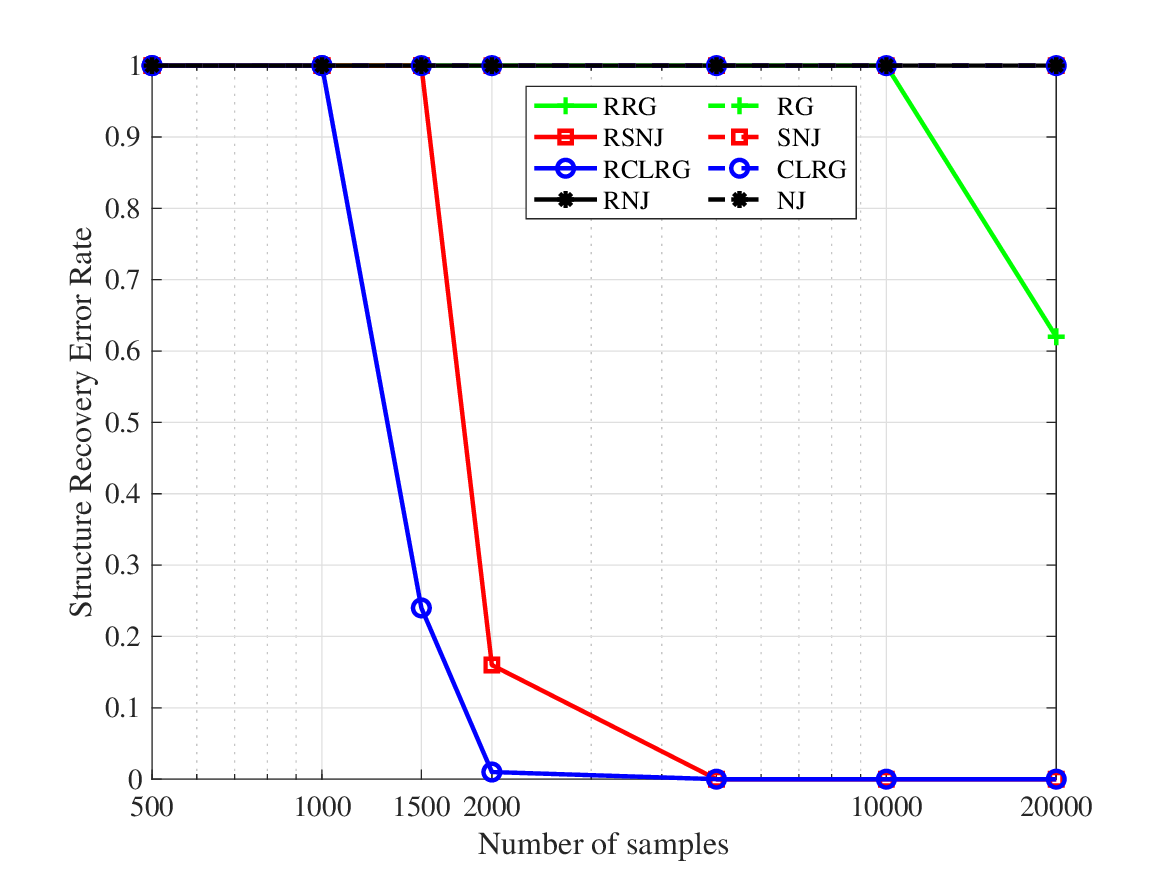}
\end{minipage}%
}%
\centering
\caption{Performances of robustified and original learning algorithms with uniform corruptions}
\end{figure}

\begin{figure}[H]
\centering
\subfigure[Robinson-Foulds distances]{
\begin{minipage}[t]{0.48\linewidth}
\centering
\includegraphics[width=2.6in]{hmm_hmm2_rf.eps}
\end{minipage}%
}%
\subfigure[Structure recovery error rate]{
\begin{minipage}[t]{0.48\linewidth}
\centering
\includegraphics[width=2.6in]{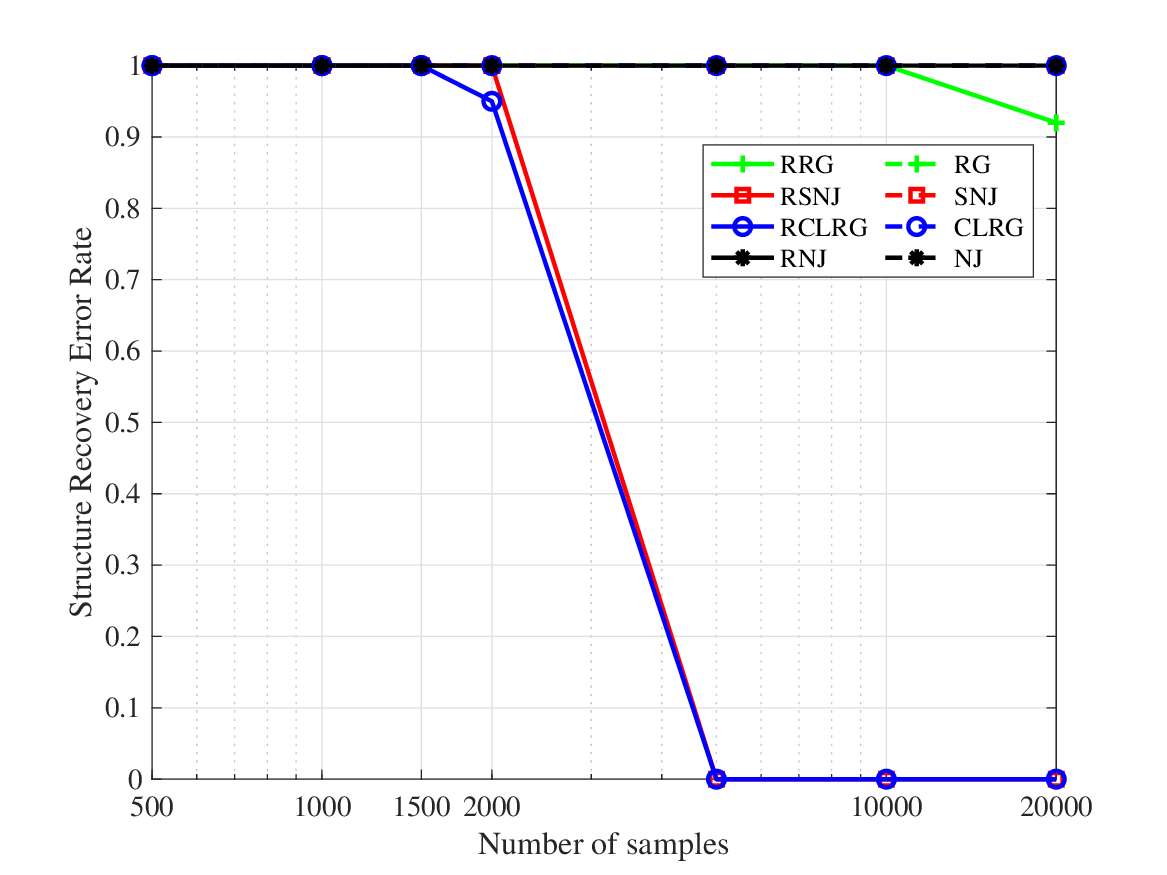}
\end{minipage}%
}%
\centering
\caption{Performances of robustified and original learning algorithms with \ac{hmm} corruptions}
\end{figure}

\begin{figure}[H]
\centering
\subfigure[Robinson-Foulds distances]{
\begin{minipage}[t]{0.48\linewidth}
\centering
\includegraphics[width=2.6in]{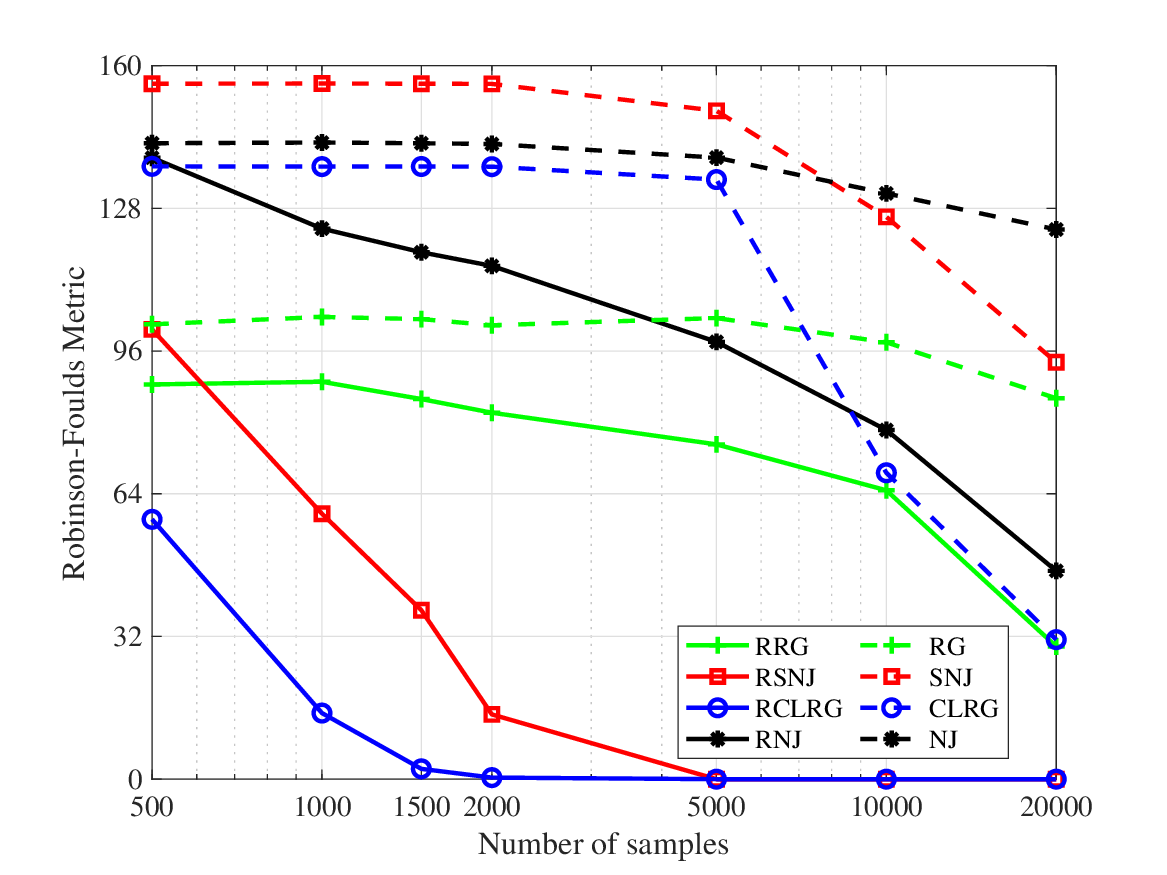}
\end{minipage}%
}%
\subfigure[Structure recovery error rate]{
\begin{minipage}[t]{0.48\linewidth}
\centering
\includegraphics[width=2.6in]{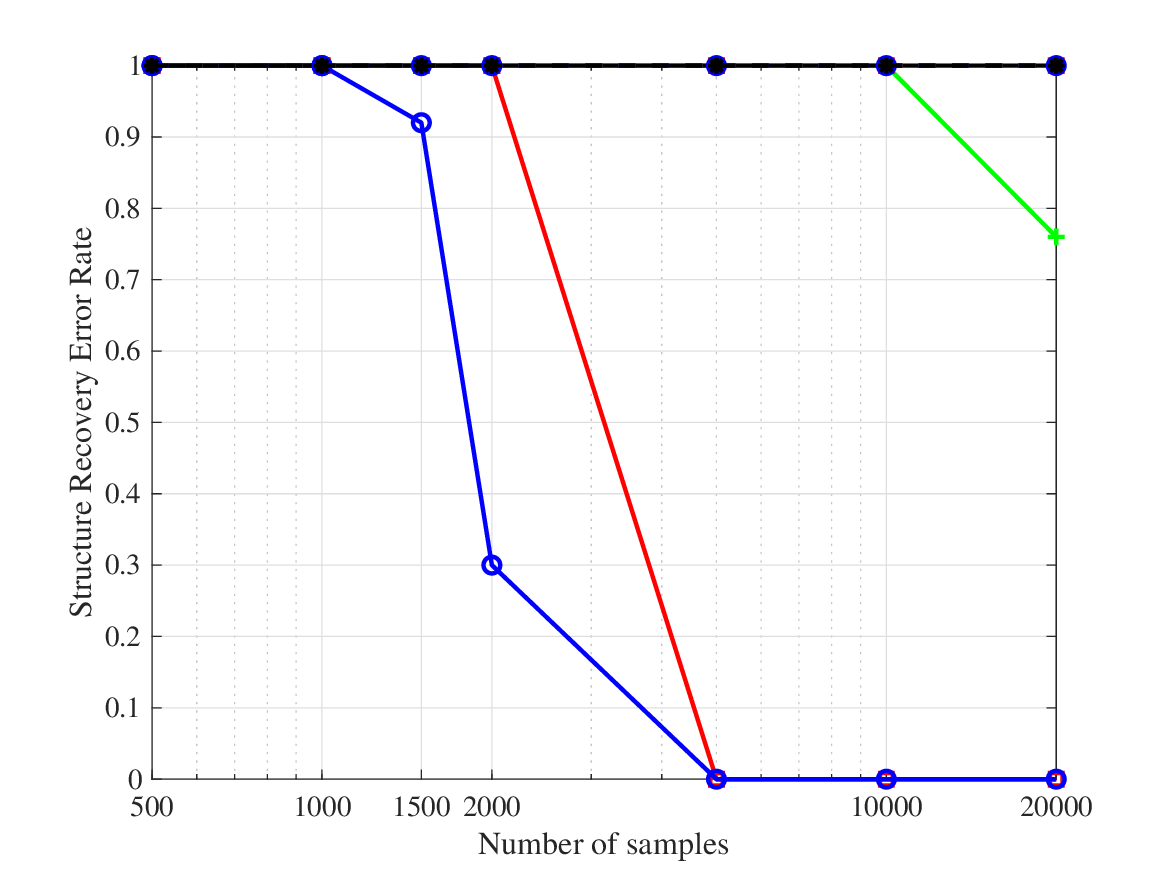}
\end{minipage}%
}%
\centering
\caption{Performances of robustified and original learning algorithms with Gaussian corruptions}
\end{figure}

\begin{figure}[H]
\centering
\subfigure[Robinson-Foulds distances]{
\begin{minipage}[t]{0.48\linewidth}
\centering
\includegraphics[width=2.6in]{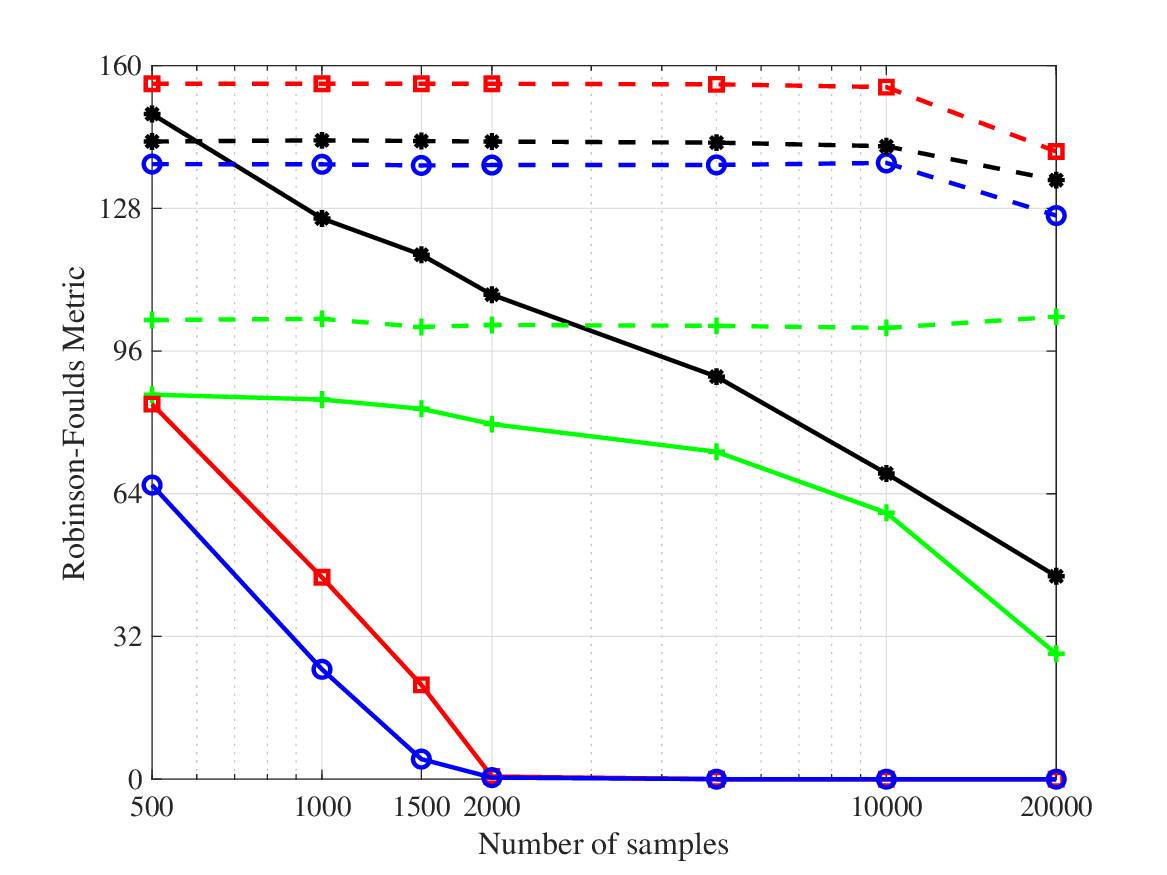}
\end{minipage}%
}%
\subfigure[Structure recovery error rate]{
\begin{minipage}[t]{0.48\linewidth}
\centering
\includegraphics[width=2.6in]{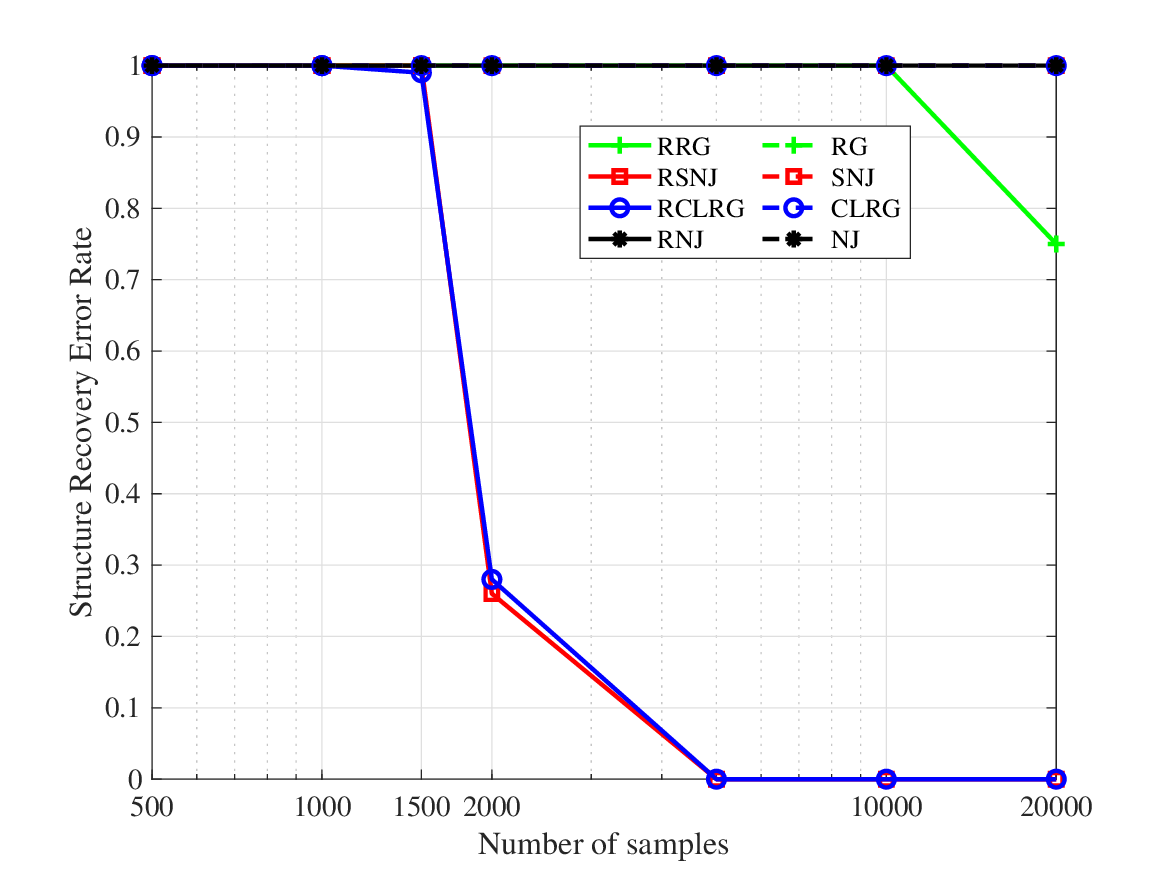}
\end{minipage}%
}%
\centering
\caption{Performances of robustified and original learning algorithms with double binary corruptions}
\end{figure}

\begin{figure}[H]
\centering
\subfigure[Robinson-Foulds distances]{
\begin{minipage}[t]{0.48\linewidth}
\centering
\includegraphics[width=2.6in]{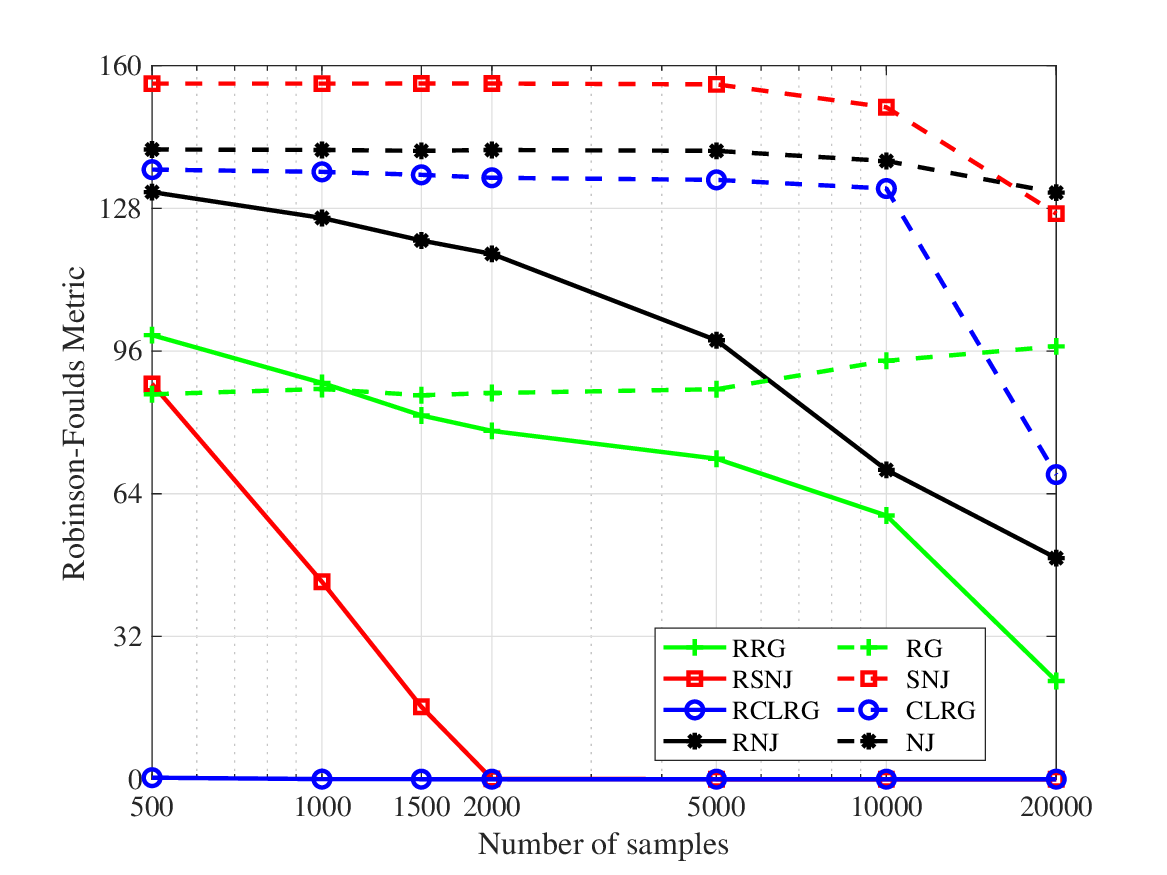}
\end{minipage}%
}%
\subfigure[Structure recovery error rate]{
\begin{minipage}[t]{0.48\linewidth}
\centering
\includegraphics[width=2.6in]{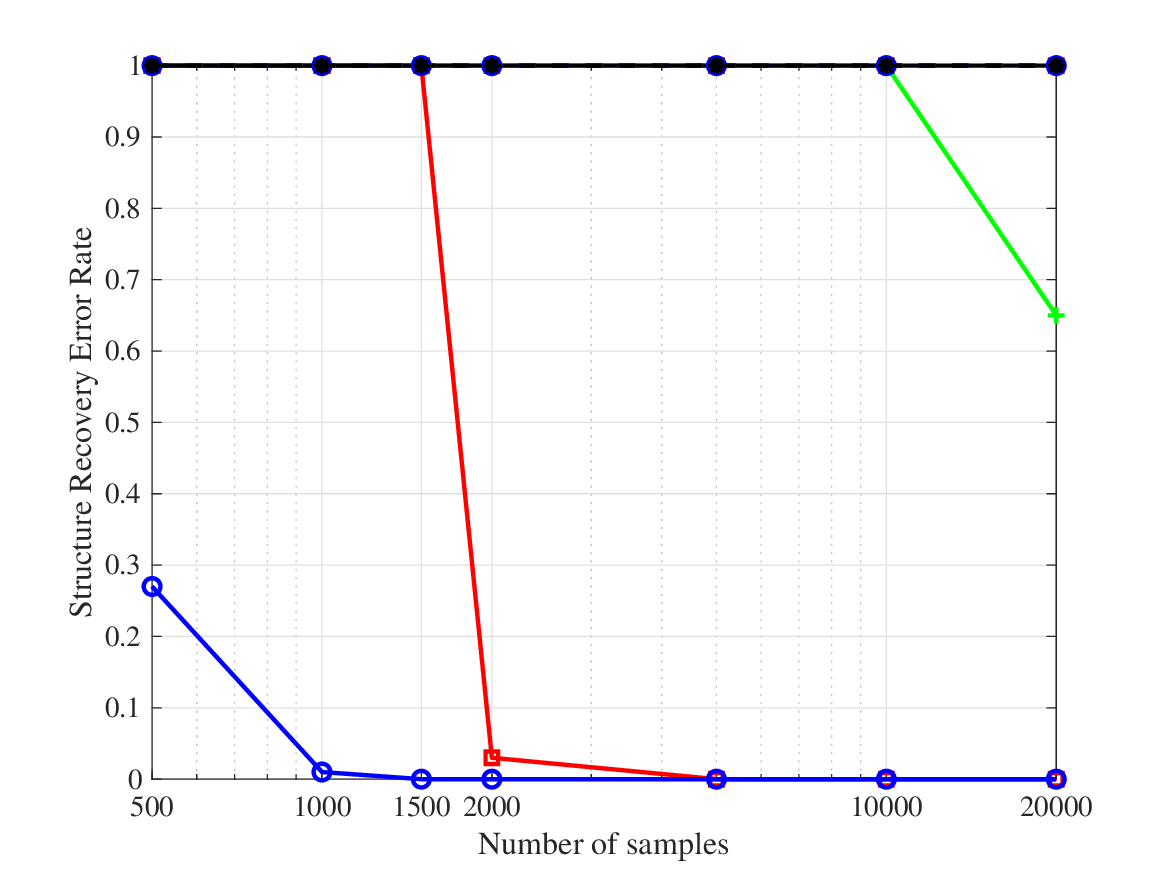}
\end{minipage}%
}%
\centering
\caption{Performances of robustified and original learning algorithms with Gaussian outliers}
\end{figure}

\begin{figure}[H]
\centering
\subfigure[Robinson-Foulds distances]{
\begin{minipage}[t]{0.48\linewidth}
\centering
\includegraphics[width=2.6in]{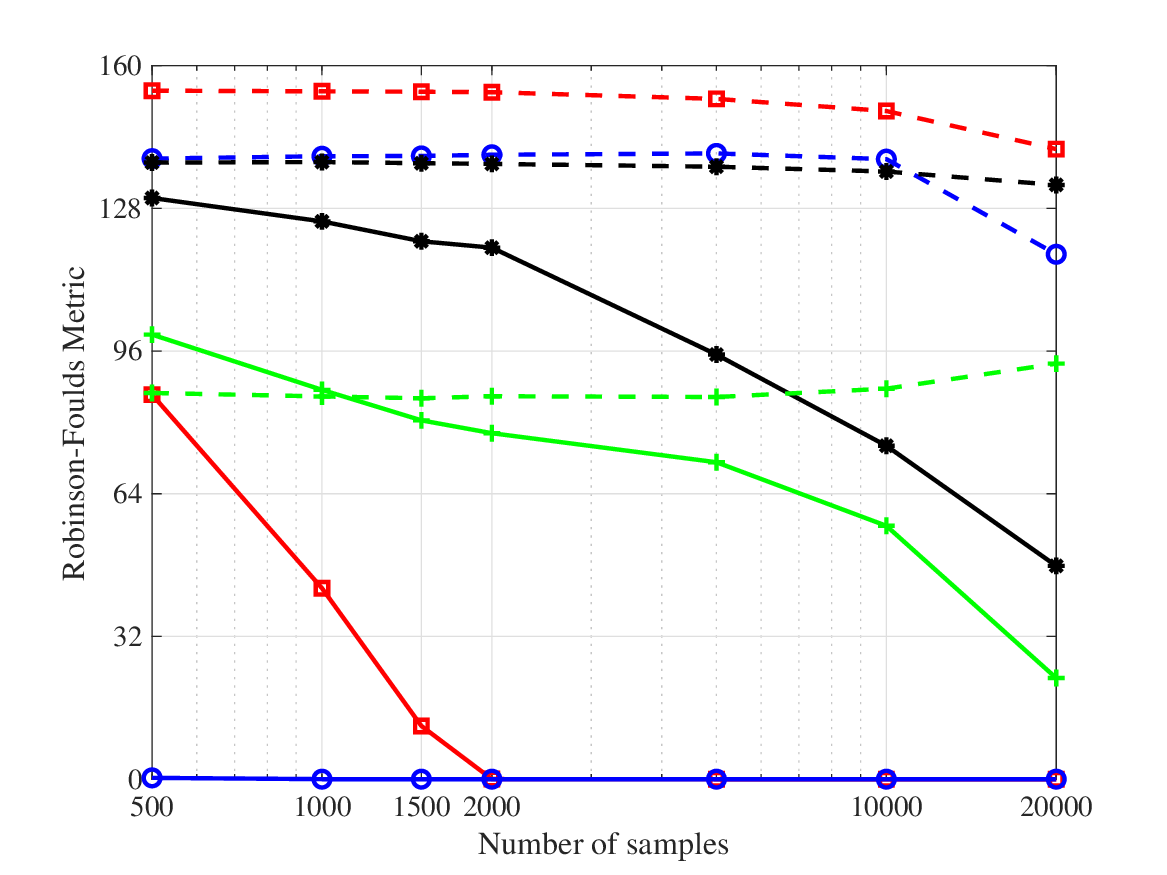}
\end{minipage}%
}%
\subfigure[Structure recovery error rate]{
\begin{minipage}[t]{0.48\linewidth}
\centering
\includegraphics[width=2.6in]{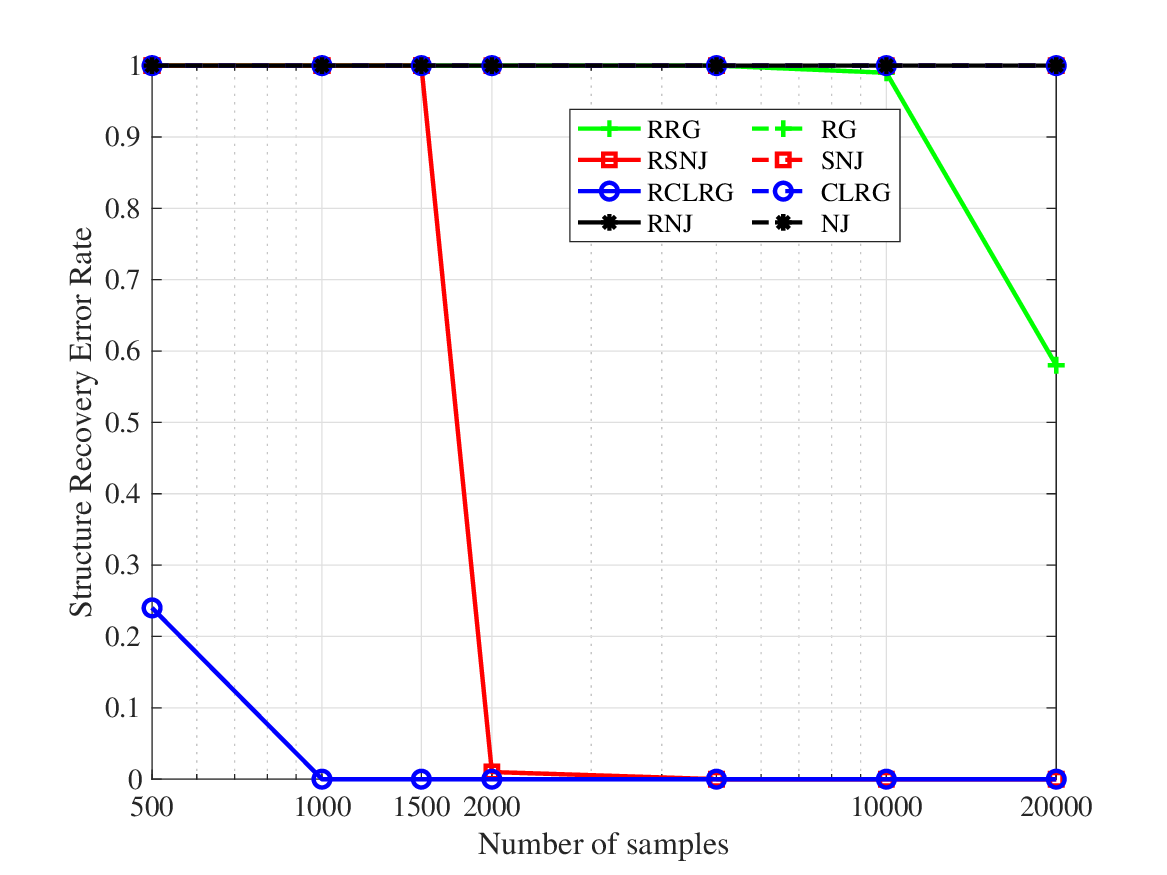}
\end{minipage}%
}%
\centering
\caption{Performances of robustified and original learning algorithms with \ac{hmm} outliers}
\end{figure}

\begin{figure}[H]
\centering
\subfigure[Robinson-Foulds distances]{
\begin{minipage}[t]{0.48\linewidth}
\centering
\includegraphics[width=2.6in]{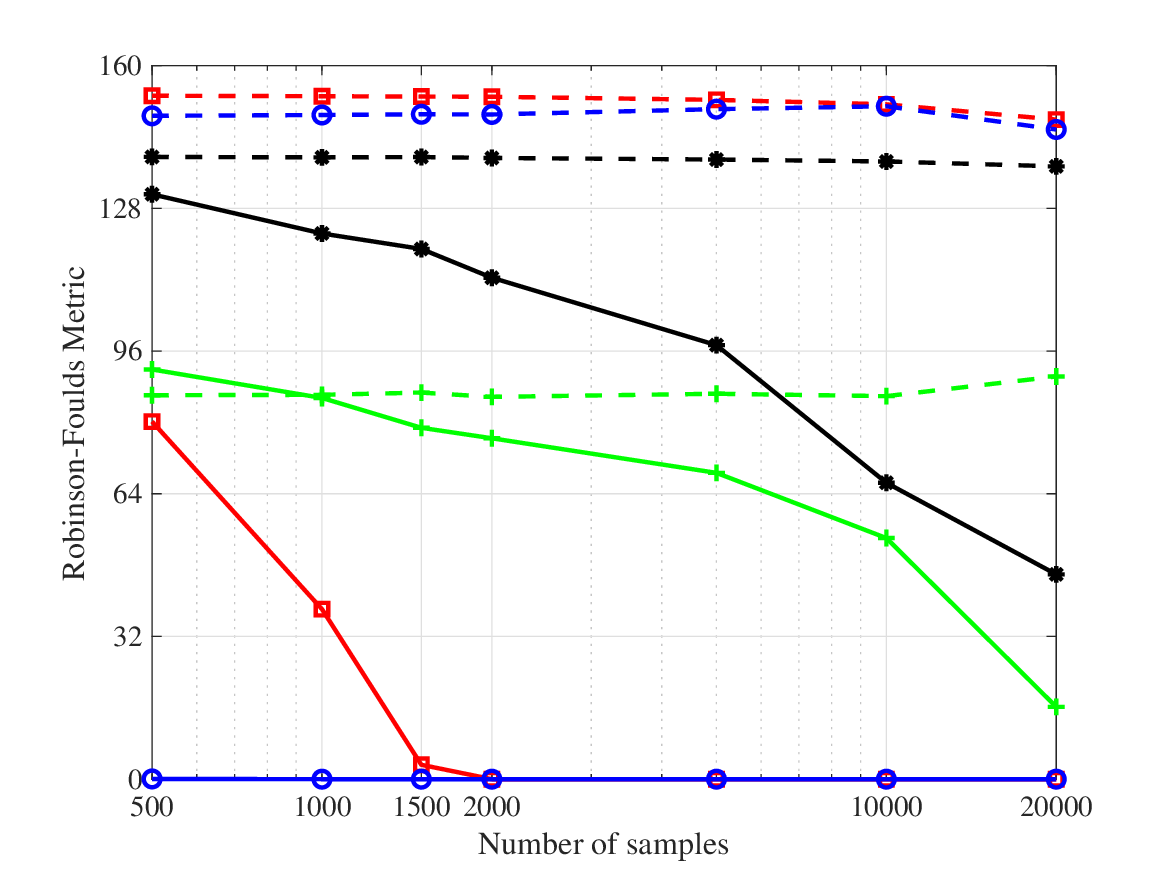}
\end{minipage}%
}%
\subfigure[Structure recovery error rate]{
\begin{minipage}[t]{0.48\linewidth}
\centering
\includegraphics[width=2.6in]{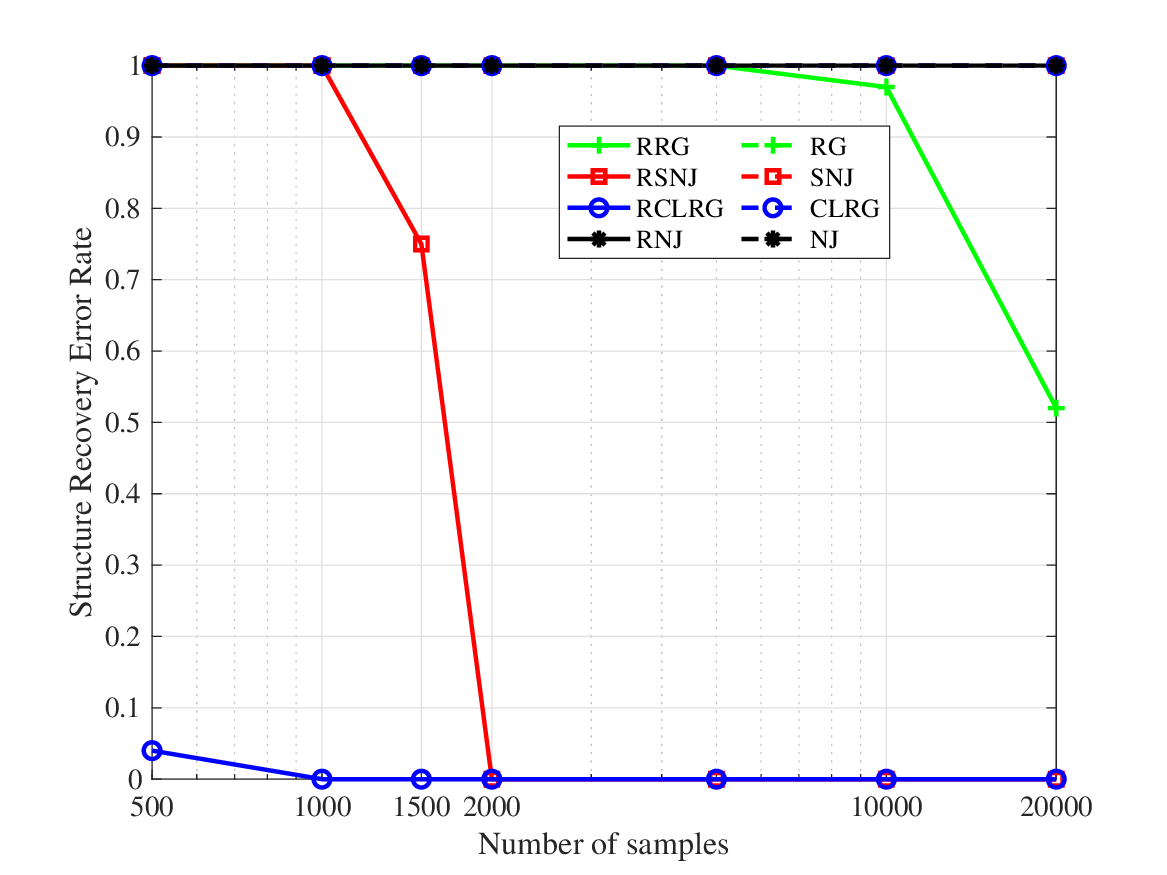}
\end{minipage}%
}%
\centering
\caption{Performances of robustified and original learning algorithms with double binary outliers}
\end{figure}

These figures show that for the \ac{hmm}, \ac{rclrg} performs best among all these algorithms. The reason is that the Chow-Liu initialization greatly reduces the effective depth of the original tree, which mitigates the error propagation.   These simulation results also corroborate the effectiveness of the truncated inner product in combating any form of corruptions. We observe that the errors of robustified algorithms are significantly less that those of original algorithms.  

Table~\ref{table:samplecompare} shows that for the \ac{hmm}, \ac{rclrg} and \ac{rnj} both have optimal dependence on the diameter of the tree. In fact, by changing the parameters $\rho_{\min}$ and $\rho_{\max}$, we find that \ac{rnj} can sometimes perform better than \ac{rclrg} when $\rho_{\min}$ and $\rho_{\max}$ are both very small. In the experiments shown above, the parameters favor \ac{rclrg}.

Finally, it is also instructive to observe the effect of the different corruption patterns. By comparing the simulation results of \ac{hmm} (resp.\ Gaussian and double binary) corruptions and \ac{hmm} (resp.\ Gaussian and double binary) outliers, we can see that the algorithms perform worse in the presence of \ac{hmm} (resp.\ Gaussian and double binary) corruptions. Since the truncated inner product truncates the samples with large absolute values, if corruptions appear in the {\em same} positions for all the samples, i.e., they appear as outliers, it is easier for the truncated inner product to identify these outliers and truncate them, resulting in higher quality estimates.

\subsubsection{Double binary tree}
The diameter of the double binary tree (Fig.~\ref{fig:comp}(a)) is $\mathrm{Diam}(\mathbb{T})=11$. The  matrices $(\mathbf{A},\mathbf{\Sigma}_{\mathrm{r}},\mathbf{\Sigma}_{\mathrm{n}})$ are chosen so that the condition in Proposition \ref{prop:homo} are satisfied with $\alpha=1$, and we set $\mathbf{A}$ commutable with $\mathbf{\Sigma}_{\mathrm{r}}$. The information distance between neighboring nodes is $1$, which implies that $\rho_{\min}=1$ and $\rho_{\max}=\mathrm{Diam}(\mathbb{T})=11$.
\begin{figure}[H]
\centering
\subfigure[Robinson-Foulds distances]{
\begin{minipage}[t]{0.48\linewidth}
\centering
\includegraphics[width=2.6in]{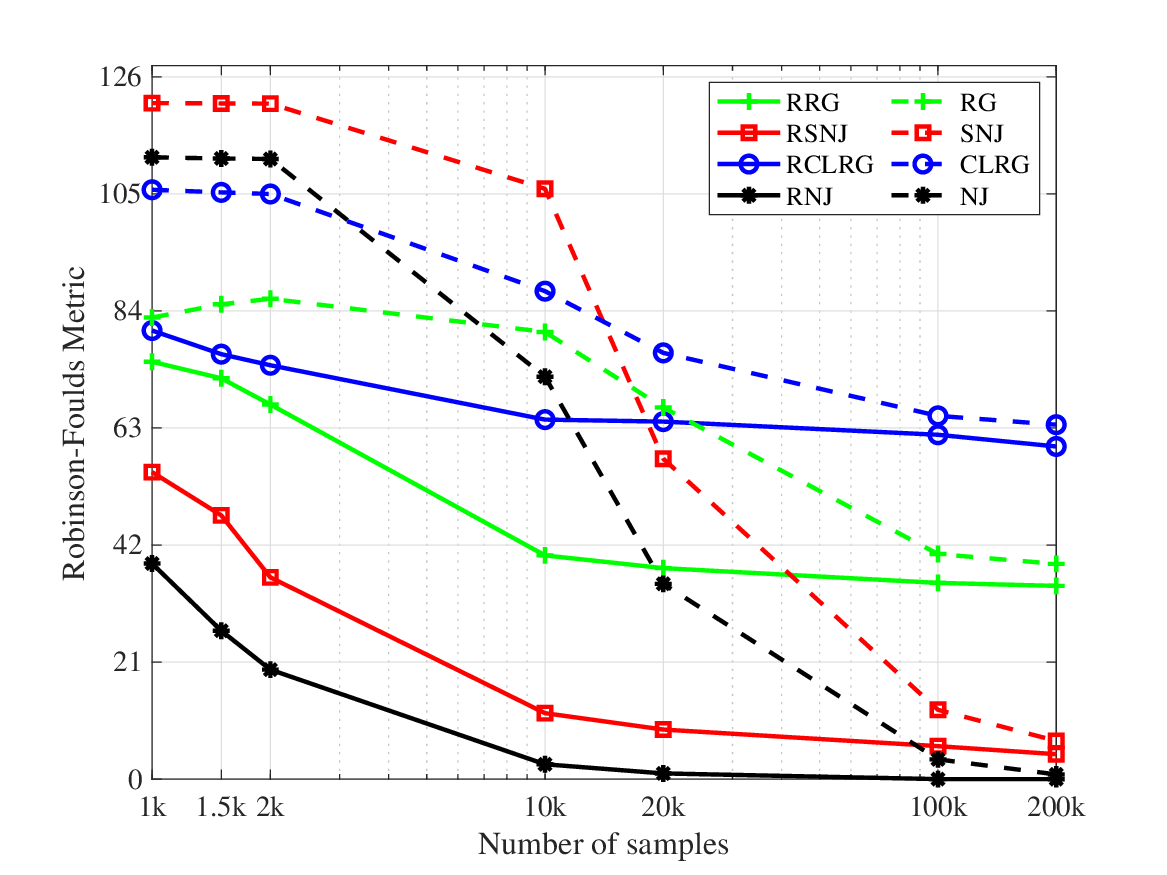}
\end{minipage}%
}%
\subfigure[Structure recovery error rate]{
\begin{minipage}[t]{0.48\linewidth}
\centering
\includegraphics[width=2.6in]{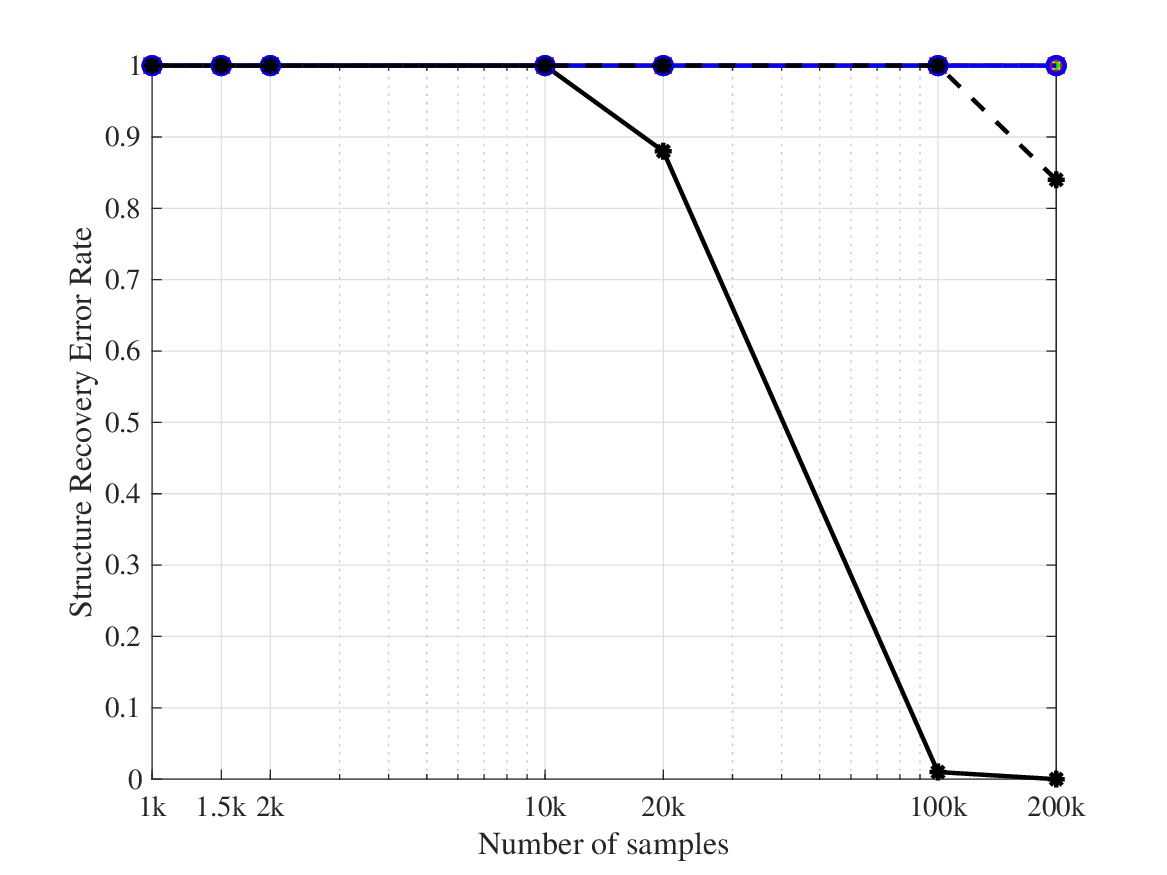}
\end{minipage}%
}%
\centering
\caption{Performances of robustified and original learning algorithms with constant magnitude corruptions}
\end{figure}

\begin{figure}[H]
\centering
\subfigure[Robinson-Foulds distances]{
\begin{minipage}[t]{0.48\linewidth}
\centering
\includegraphics[width=2.6in]{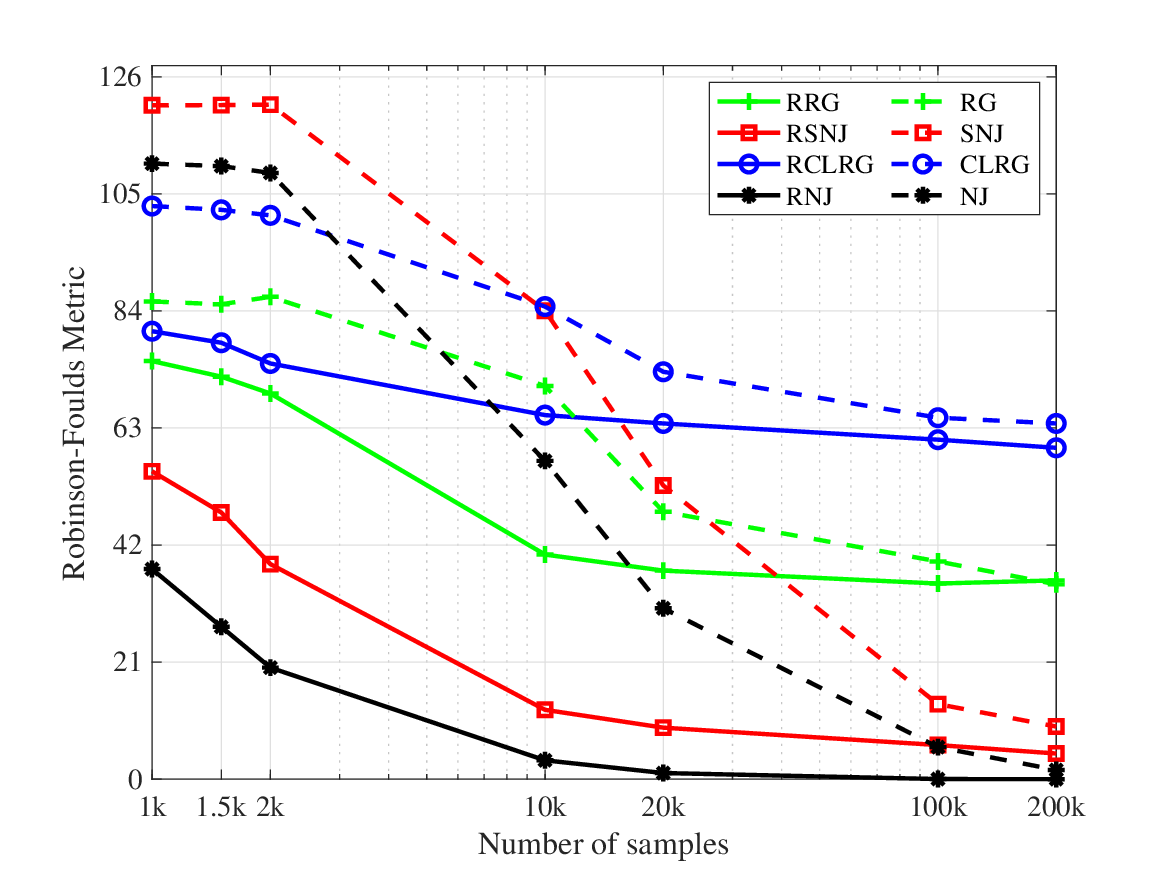}
\end{minipage}%
}%
\subfigure[Structure recovery error rate]{
\begin{minipage}[t]{0.48\linewidth}
\centering
\includegraphics[width=2.6in]{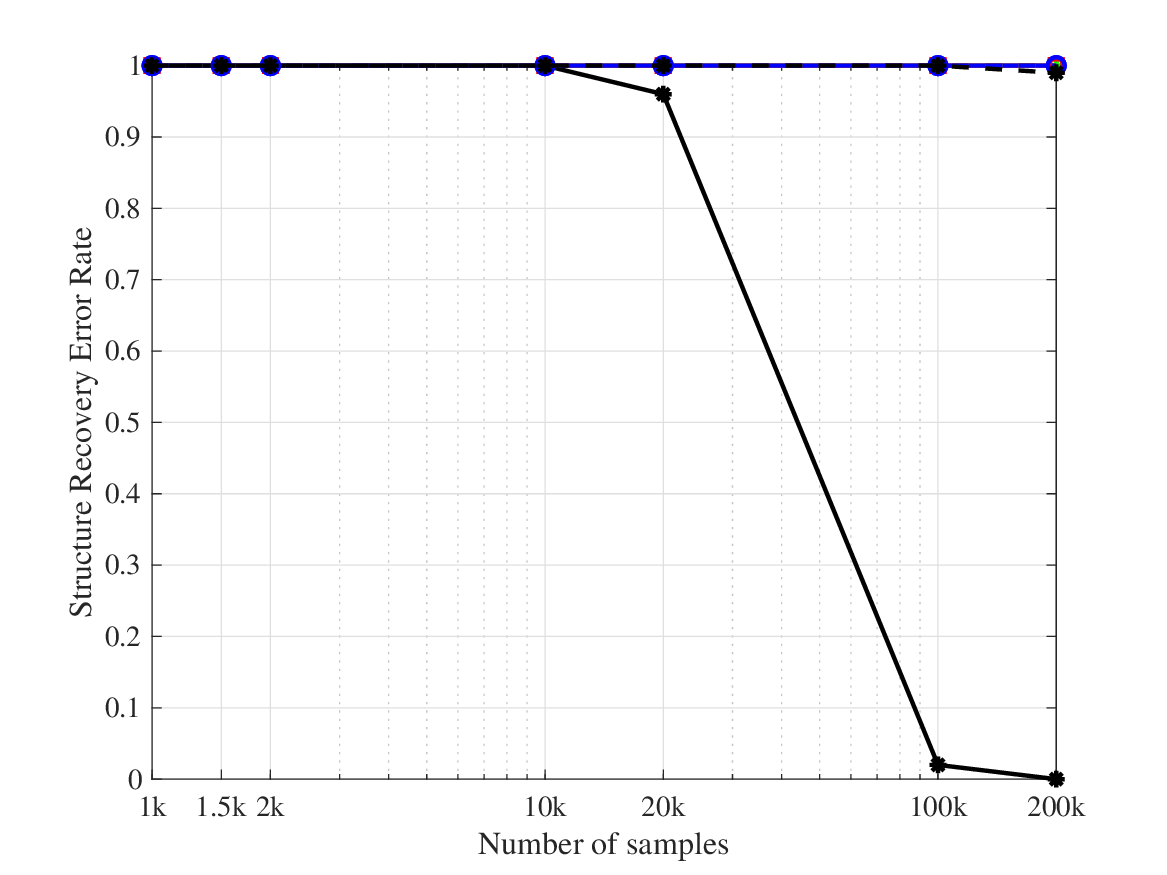}
\end{minipage}%
}%
\centering
\caption{Performances of robustified and original learning algorithms with uniform corruptions}
\end{figure}

\begin{figure}[H]
\centering
\subfigure[Robinson-Foulds distances]{
\begin{minipage}[t]{0.48\linewidth}
\centering
\includegraphics[width=2.6in]{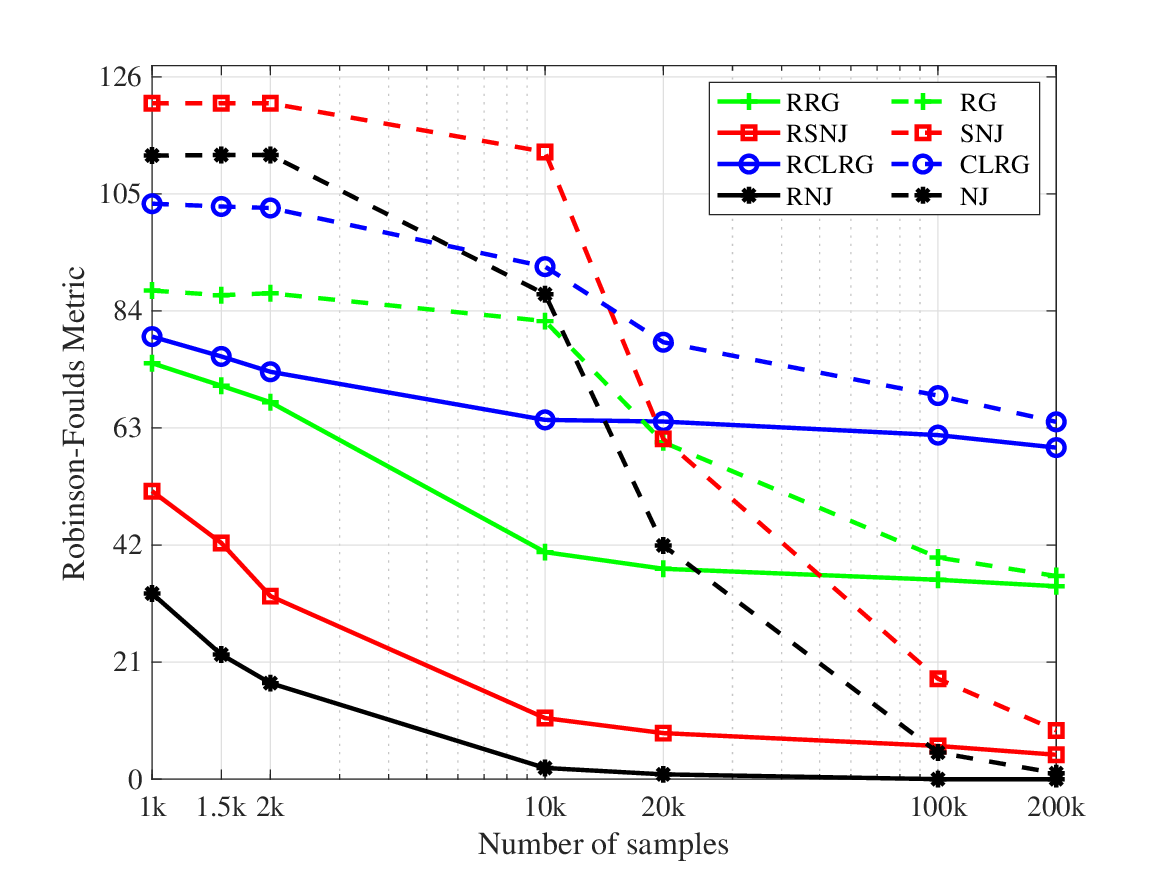}
\end{minipage}%
}%
\subfigure[Structure recovery error rate]{
\begin{minipage}[t]{0.48\linewidth}
\centering
\includegraphics[width=2.6in]{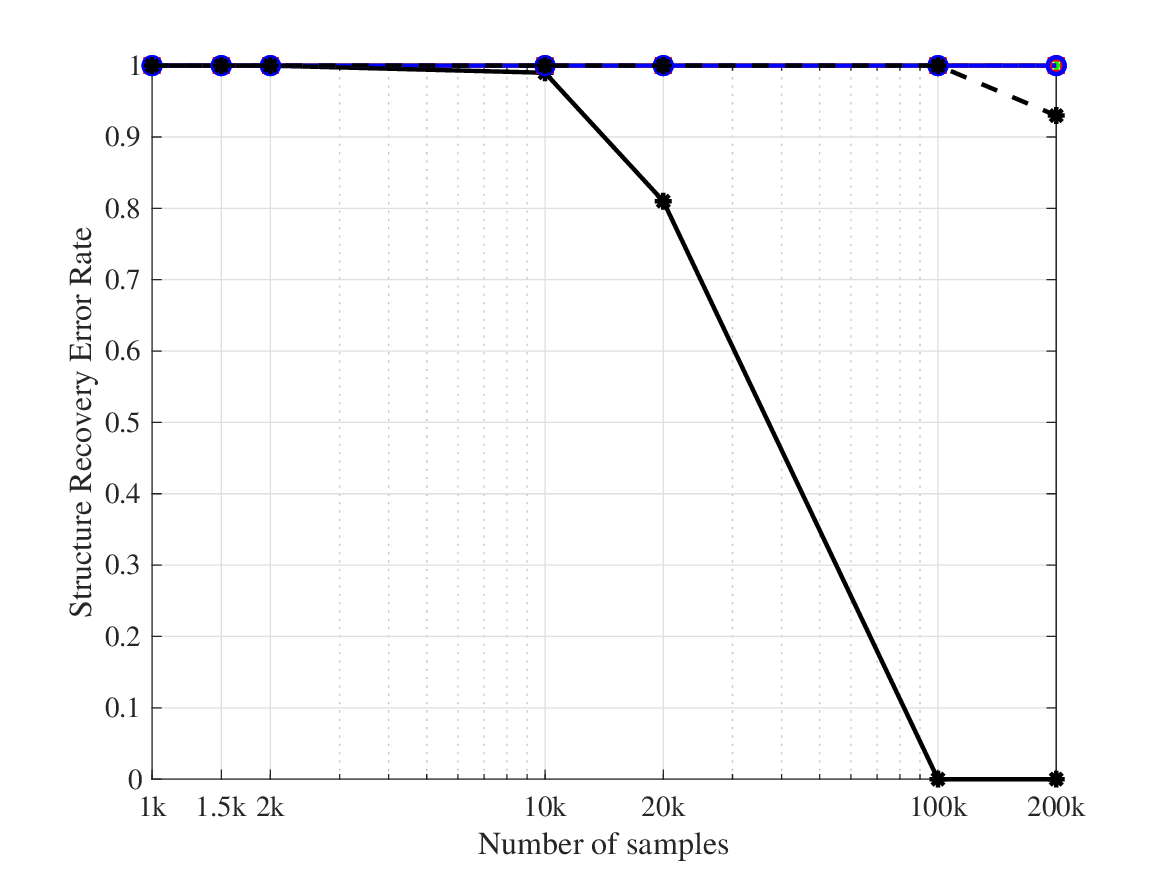}
\end{minipage}%
}%
\centering
\caption{Performances of robustified and original learning algorithms with Gaussian corruptions}
\end{figure}

\begin{figure}[H]
\centering
\subfigure[Robinson-Foulds distances]{
\begin{minipage}[t]{0.48\linewidth}
\centering
\includegraphics[width=2.6in]{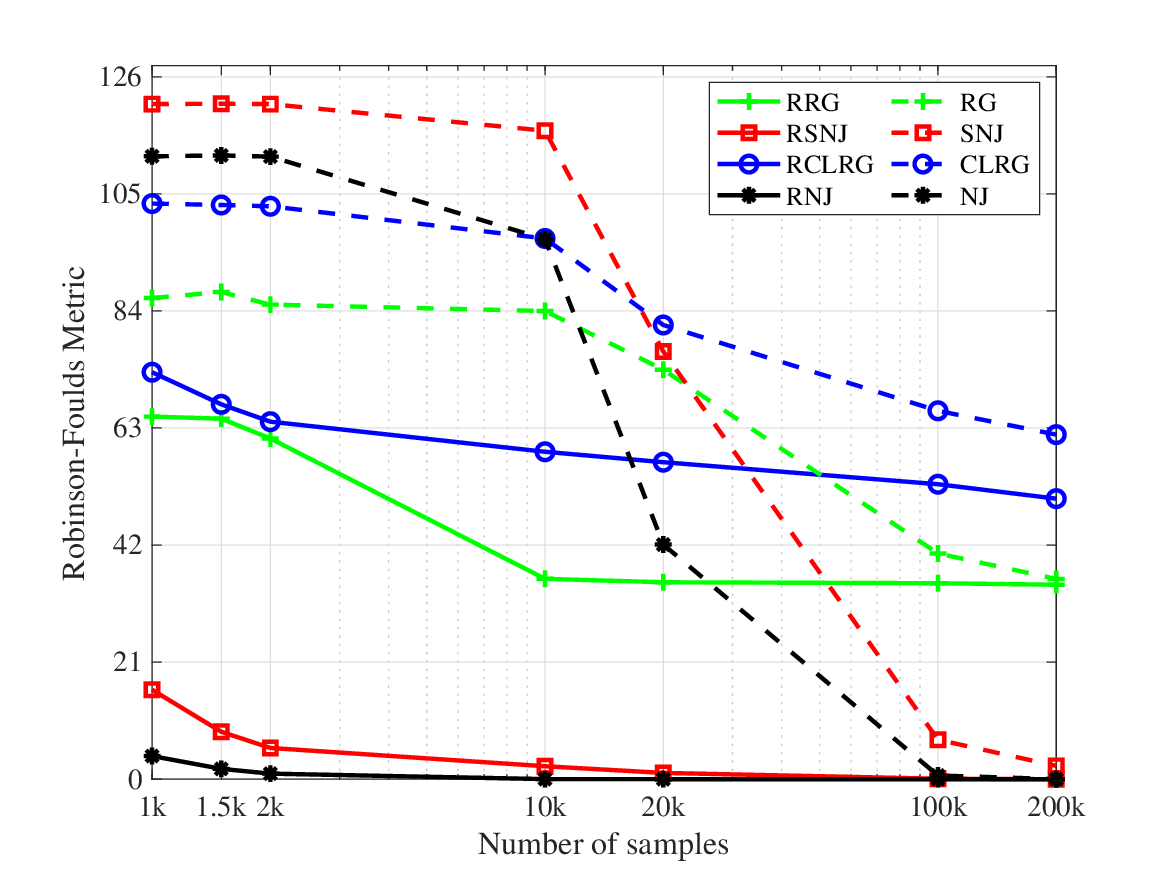}
\end{minipage}%
}%
\subfigure[Structure recovery error rate]{
\begin{minipage}[t]{0.48\linewidth}
\centering
\includegraphics[width=2.6in]{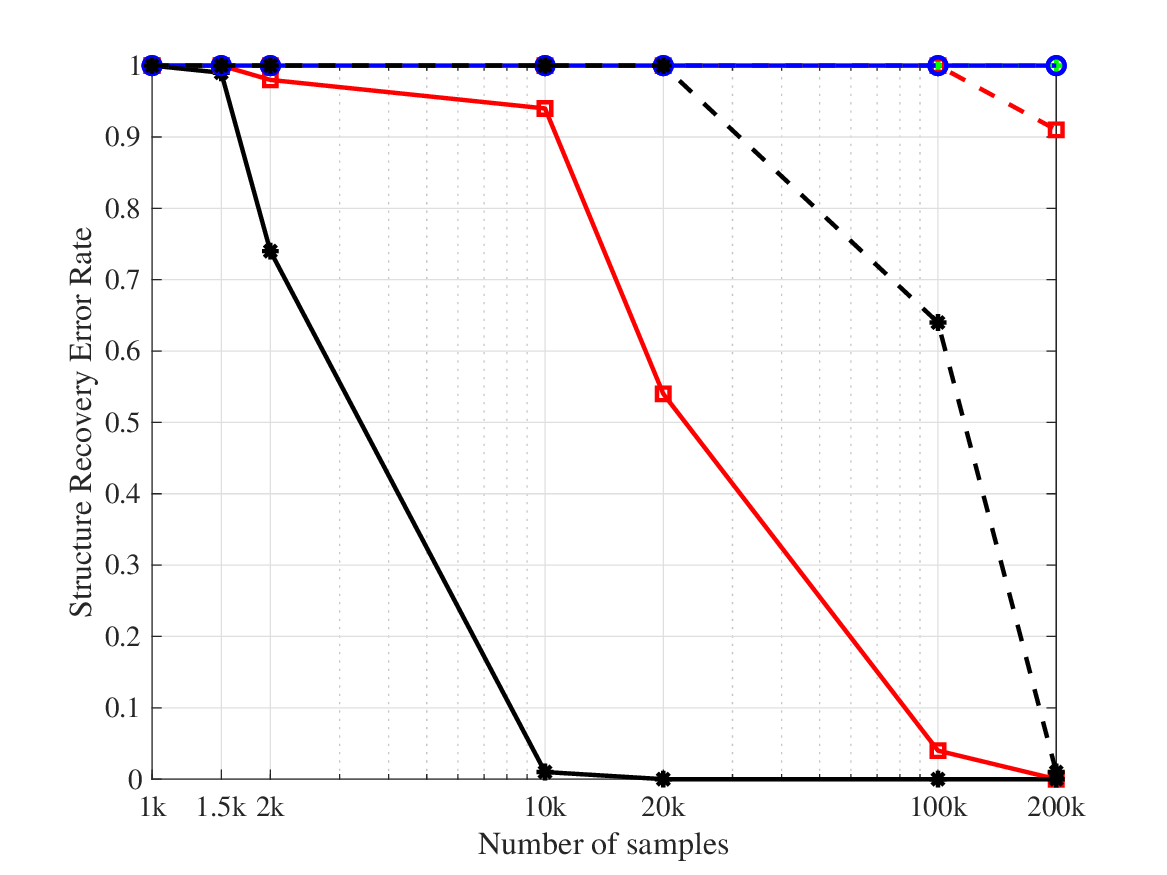}
\end{minipage}%
}%
\centering
\caption{Performances of robustified and original learning algorithms with \ac{hmm} corruptions}
\end{figure}

\begin{figure}[H]
\centering
\subfigure[Robinson-Foulds distances]{
\begin{minipage}[t]{0.48\linewidth}
\centering
\includegraphics[width=2.6in]{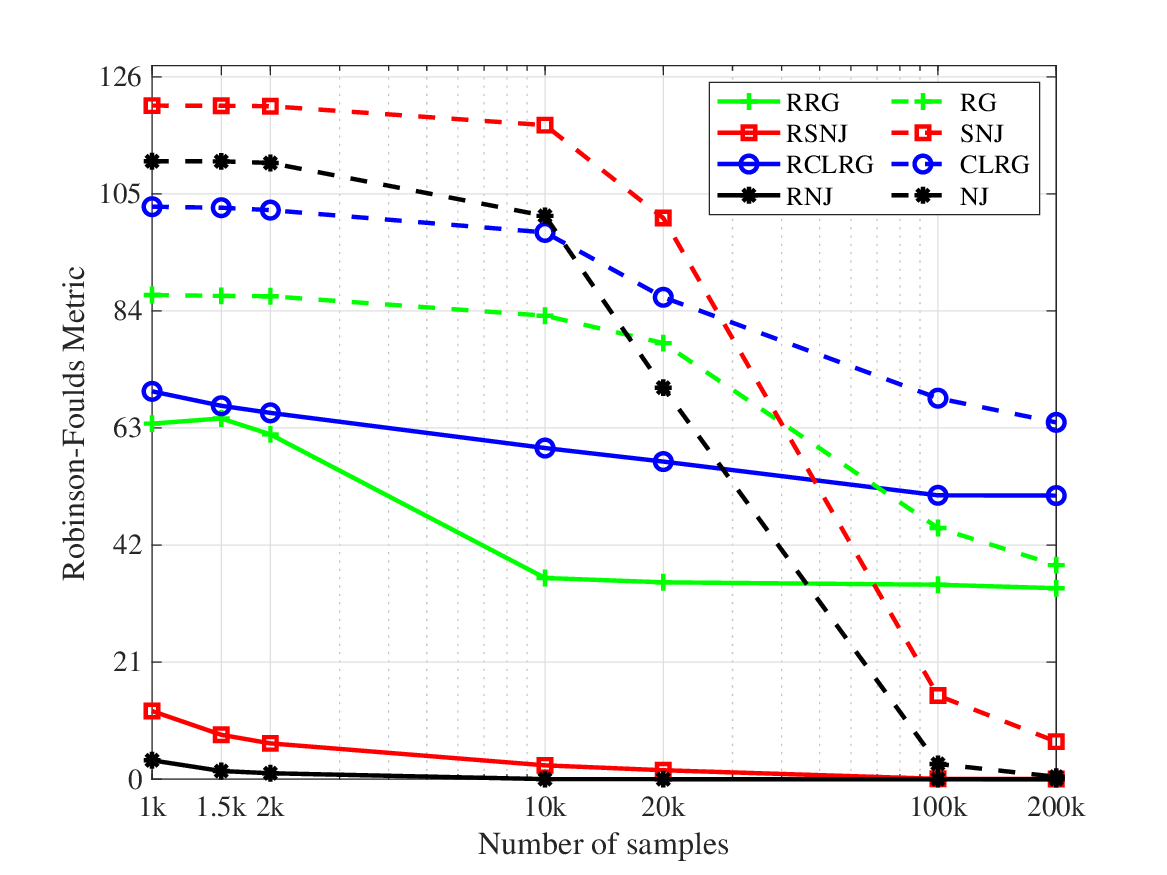}
\end{minipage}%
}%
\subfigure[Structure recovery error rate]{
\begin{minipage}[t]{0.48\linewidth}
\centering
\includegraphics[width=2.6in]{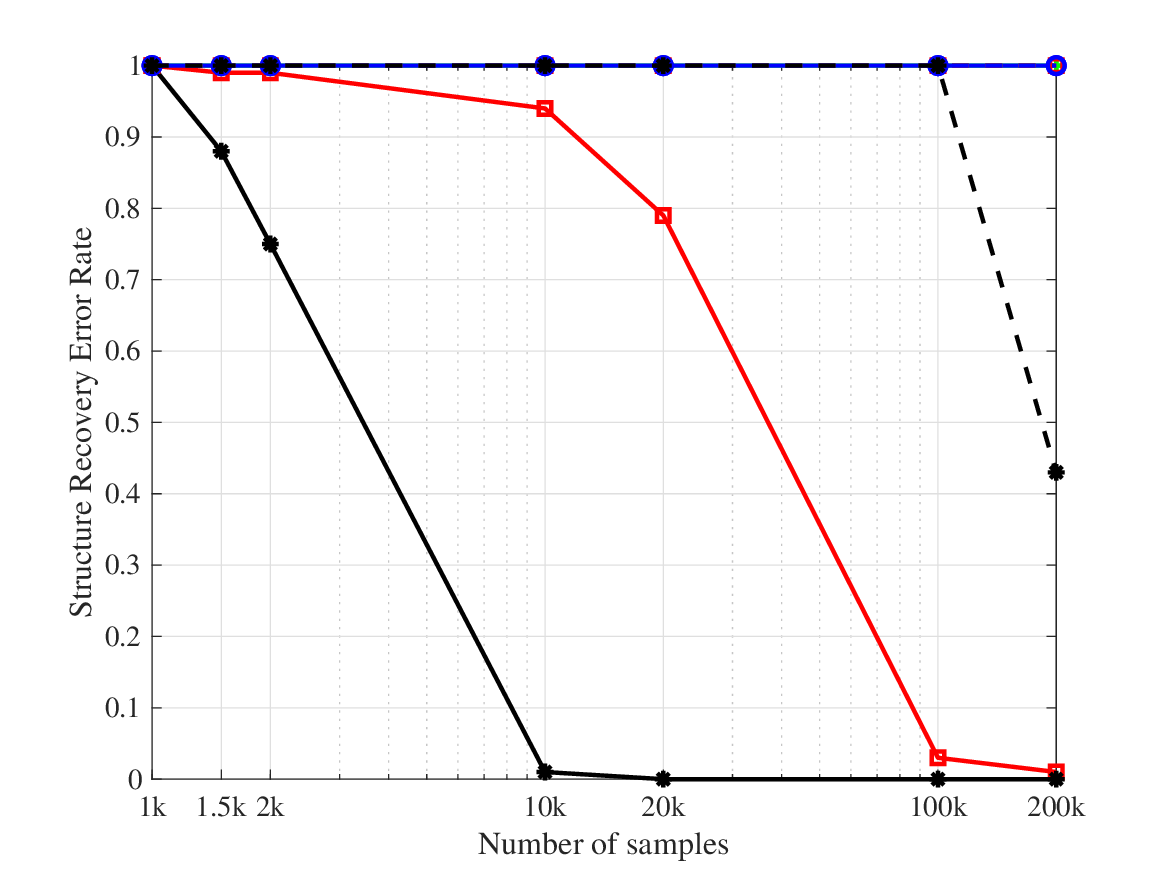}
\end{minipage}%
}%
\centering
\caption{Performances of robustified and original learning algorithms with double binary corruptions}
\end{figure}

\begin{figure}[H]
\centering
\subfigure[Robinson-Foulds distances]{
\begin{minipage}[t]{0.48\linewidth}
\centering
\includegraphics[width=2.6in]{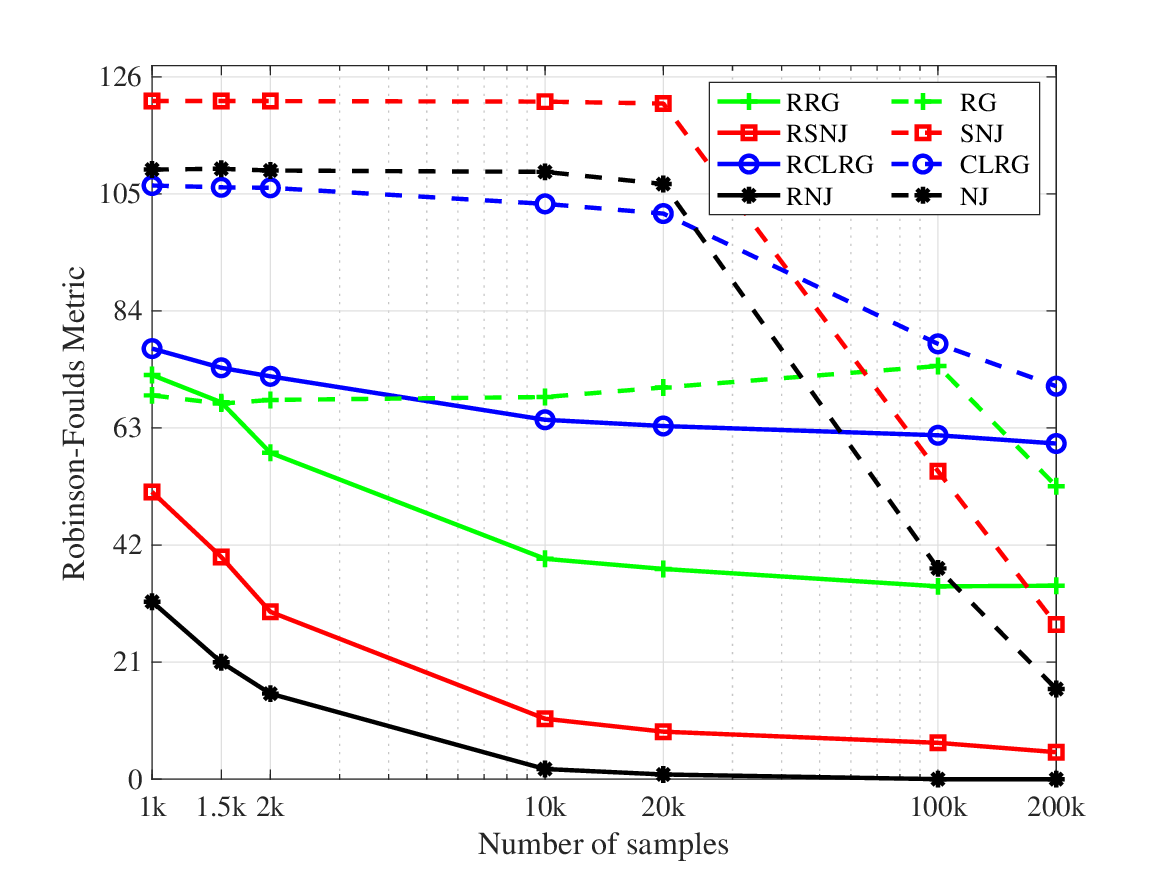}
\end{minipage}%
}%
\subfigure[Structure recovery error rate]{
\begin{minipage}[t]{0.48\linewidth}
\centering
\includegraphics[width=2.6in]{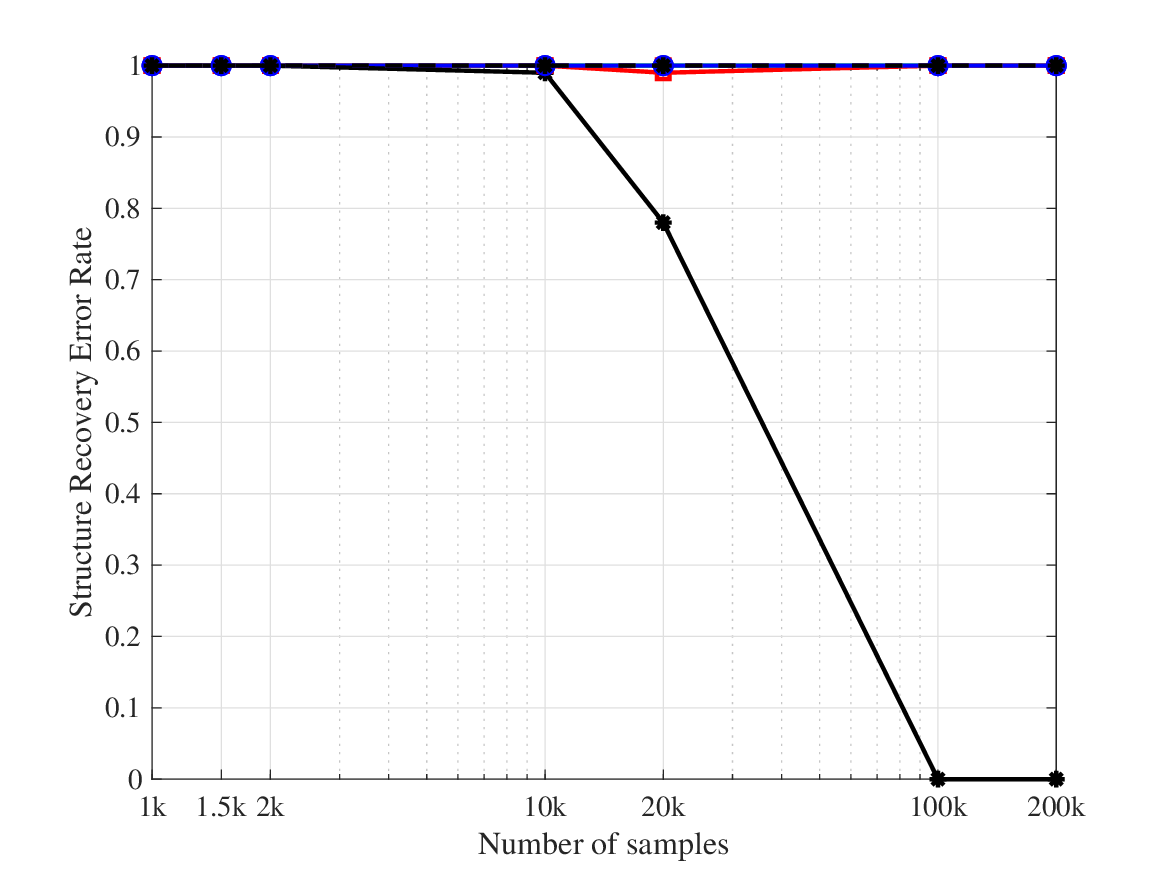}
\end{minipage}%
}%
\centering
\caption{Performances of robustified and original learning algorithms with Gaussian outliers}
\end{figure}

\begin{figure}[H]
\centering
\subfigure[Robinson-Foulds distances]{
\begin{minipage}[t]{0.48\linewidth}
\centering
\includegraphics[width=2.6in]{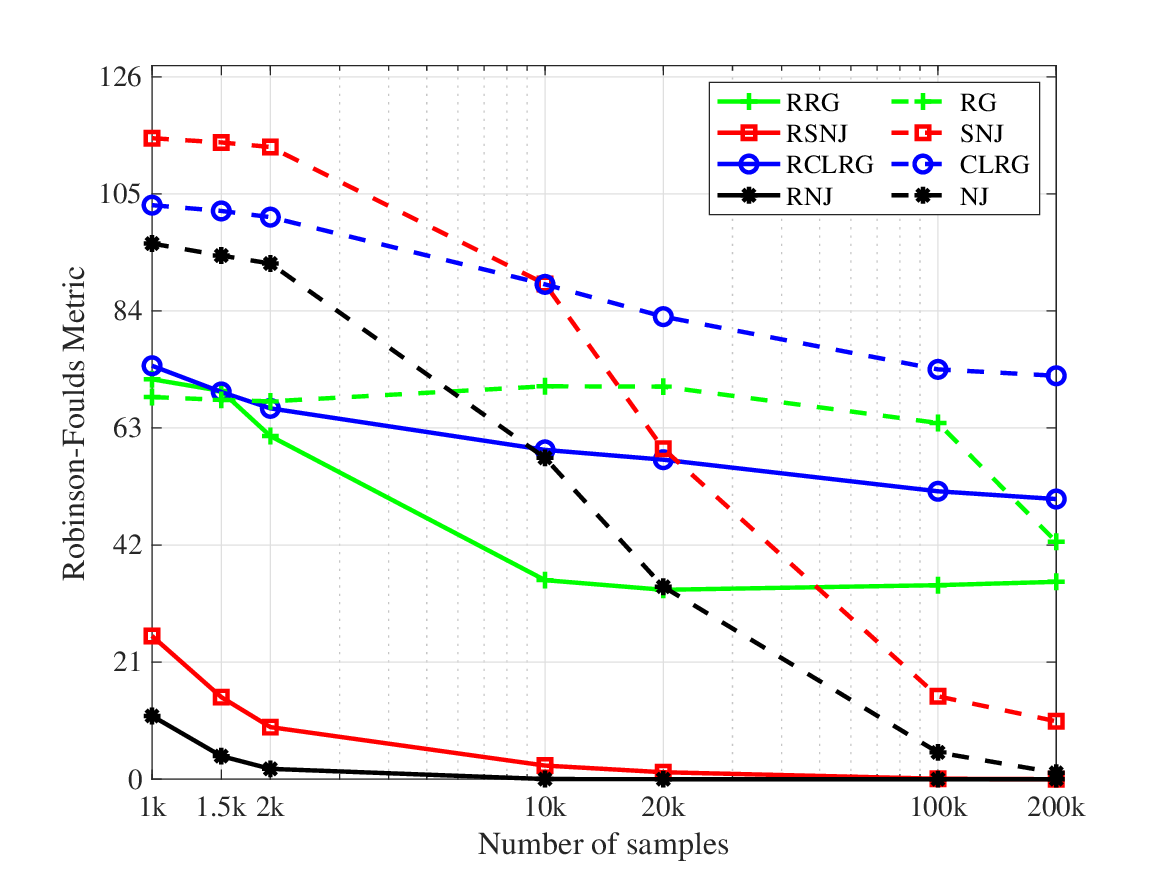}
\end{minipage}%
}%
\subfigure[Structure recovery error rate]{
\begin{minipage}[t]{0.48\linewidth}
\centering
\includegraphics[width=2.6in]{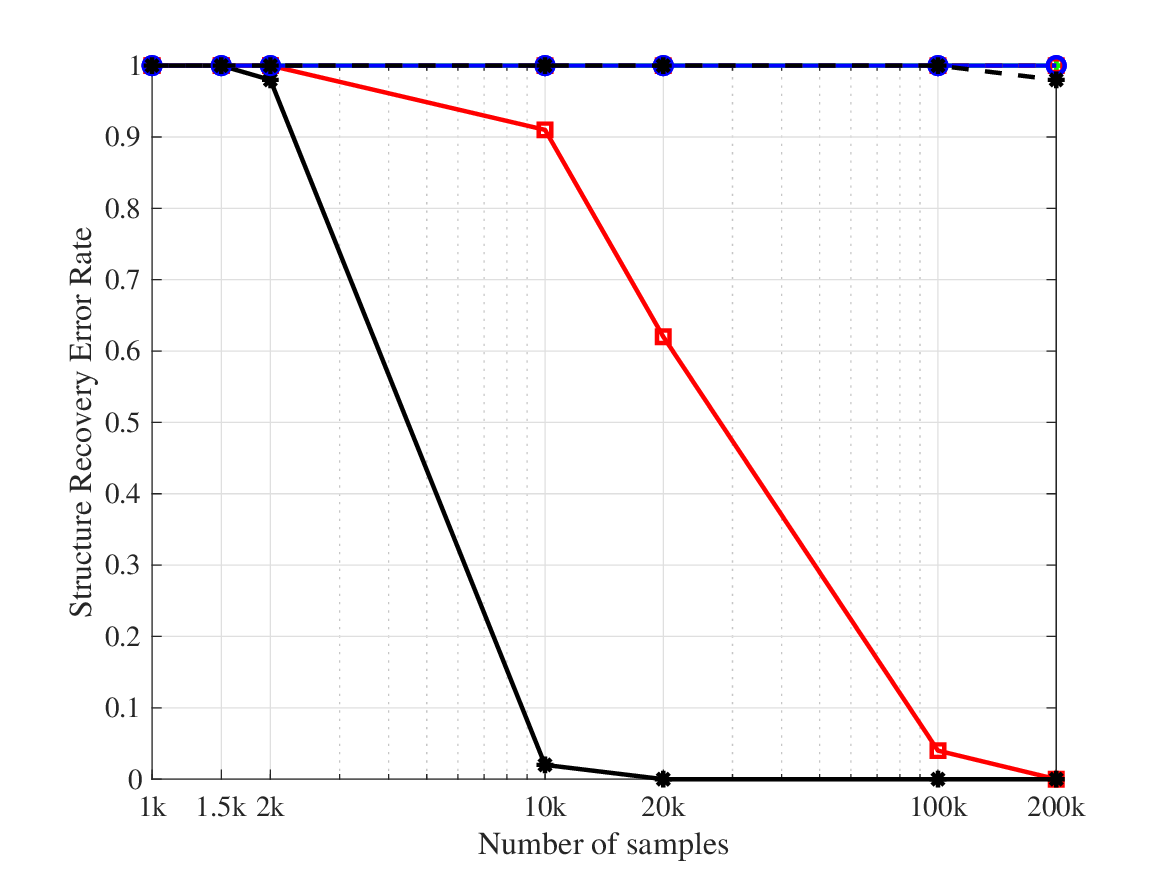}
\end{minipage}%
}%
\centering
\caption{Performances of robustified and original learning algorithms with \ac{hmm} outliers}
\end{figure}

\begin{figure}[H]
\centering
\subfigure[Robinson-Foulds distances]{
\begin{minipage}[t]{0.48\linewidth}
\centering
\includegraphics[width=2.6in]{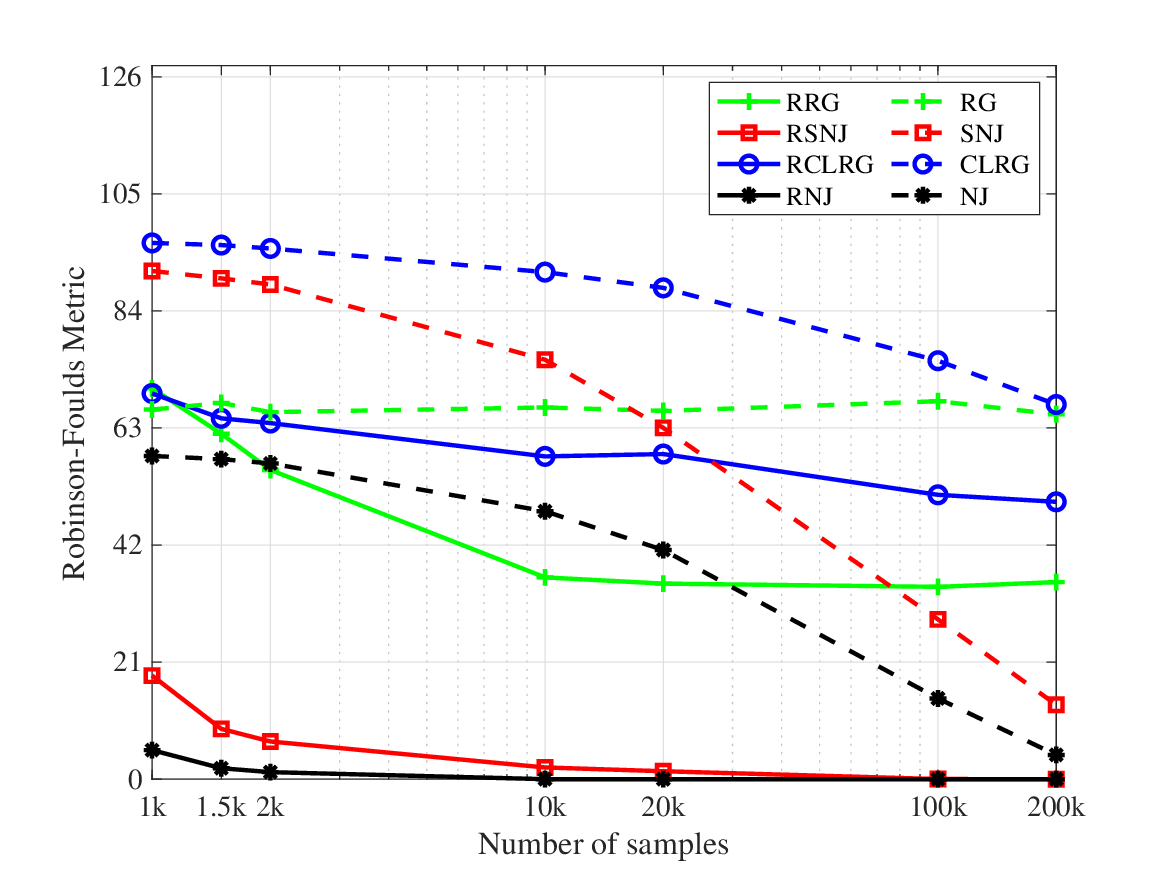}
\end{minipage}%
}%
\subfigure[Structure recovery error rate]{
\begin{minipage}[t]{0.48\linewidth}
\centering
\includegraphics[width=2.6in]{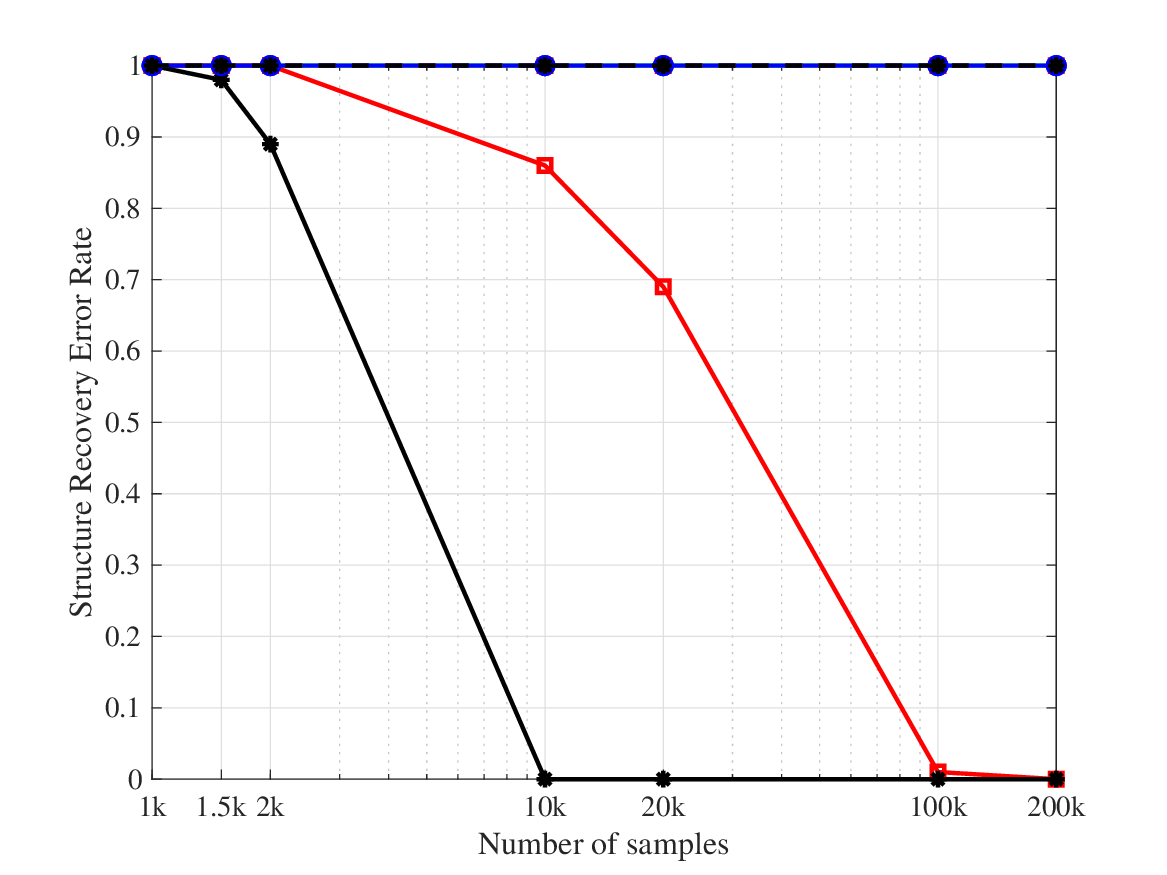}
\end{minipage}%
}%
\centering
\caption{Performances of robustified and original learning algorithms with double binary outliers}
\end{figure}

These figures reinforce that the robustification procedure is  highly effective in combating the corruptions. Furtheremore, we observe that \ac{rnj} performs  the  best among all these algorithms for the double binary tree. However, the simulation results in Jaffe et al.~\cite{jaffe2021spectral} shows that \ac{snj} performs better than \ac{nj}. This does not contradict our observations here. The reason lies on the choice of the parameters of the model $\rho_{\min}$ and $\rho_{\max}$. In the simulations of \cite{jaffe2021spectral}, the parameter $\delta$ (defined in therein) is set to $0.9$, but in our simulation, the equivalent parameter $e^{-2\rho_{\max}/\mathrm{Diam}(\mathbb{T})}$ is $0.1$. The exponential dependence on $\rho_{\max}$ of \ac{rsnj} listed in Table \ref{table:samplecompare} explains the difference between simulation results in \cite{jaffe2021spectral} and our simulation results.

\section{Proofs of results in Section~\ref{subsec:converse}} \label{app:converse}
To derive the impossibility results, we will apply  Fano's inequality on two special families of graphical models, each contained in $\mathcal{T}(|\mathcal{V}_{\mathrm{obs}}|,\rho_{\max},l_{\max})$. Each graphical model in the families is parameterized by a quartet $(\mathbf{A},\mathbf{\Sigma}_{\mathrm{r}},\mathbf{\Sigma}_{\mathrm{n}},\alpha)$. This quartet defines the Gaussian graphical model as follows. We choose a node in the tree as the root node $\mathbf{x}_{\mathrm{r}}$, and define the parent node and set of children nodes  (in the rooted tree) of any node $x_{i}$ as  $\mathrm{pa}(i)$ and $\mathcal{C}(x_{i})$ respectively. The depth of a node $x_{i}$  (with respect to the root node $x_{\mathrm{r}}$) is 
$\mathrm{d}_{\mathbb{T}}(x_{i},x_{\mathrm{r}})$. We specify the model in which   
\begin{align}\label{eqn:chanl}
	\mathbf{x}_{i}=\mathbf{A}\mathbf{x}_{\text{pa}(i)}+\mathbf{n}_{i} \quad\text{for all}\quad x_{i}\in \mathcal{V}
\end{align}
where $\mathbf{A} \in \mathbb{R}^{l_{\max}\times l_{\max}}$ is  non-singular, $\mathbf{n}_{i}\sim \mathcal{N}(\mathbf{0},\alpha^{\mathrm{d}_{\mathbb{T}}(x_{i},x_{\mathrm{r}})-1}\mathbf{\Sigma}_{\mathrm{n}})$ and $\mathbf{n}_{i}$'s are mutually independent. Since the root node has no parent, it is natural to set  $\mathbf{x}_{\text{pa}(\mathrm{r})}=\mathbf{0}$ and  $\mathbf{n}_{\mathrm{r}}\sim \mathcal{N}(\mathbf{0},\mathbf{\Sigma}_{\mathrm{r}})$. 
It is easy to verify that the model specified by \eqref{eqn:chanl} and this initial condition is an undirected \ac{ggm}. Then the covariance matrix of the random vector $\mathbf{x}_{i}$ is $\alpha^{\mathrm{d}_{\mathbb{T}}(x_{i},x_{\mathrm{r}})}\mathbf{\Sigma}_{\mathrm{r}}$.
\begin{proposition}\label{prop:homo}
	If $\mathbf{n}_{i}$'s for the variables at depth $l$ are distributed as  $\mathcal{N}(0,\alpha^{l-1}\mathbf{\Sigma}_{\mathrm{n}})$, and 
	\begin{align}\label{eqn:homo}
		\mathbf{A}\mathbf{\Sigma}_{\mathrm{r}}\mathbf{A}^{\top}+\mathbf{\Sigma}_{\mathrm{n}}=\alpha\mathbf{\Sigma}_{\mathrm{r}}
	\end{align}
	where $\alpha>0$ is a constant, then the covariance matrix of the variable at depth $l$ is $\alpha^{l}\mathbf{\Sigma}_{\mathrm{r}}$.
\end{proposition}
We term   \eqref{eqn:homo} as the \emph{$(\mathbf{A},\mathbf{\Sigma}_{\mathrm{r}},\mathbf{\Sigma}_{\mathrm{n}})$-homogenous} condition, which guarantees that covariance matrices of the random vectors in the tree are same up to a scale 
factor. 
\begin{proof}[Proof of Proposition ~\ref{prop:homo}]
	The statement in Proposition \ref{prop:homo} is equivalent to
	\begin{align}\label{eqn:stacov}
		\mathbf{A}^{l}\mathbf{\Sigma}_{\mathrm{r}}(\mathbf{A}^{l})^{\top}+\sum_{i=1}^{l}\alpha^{i-1}\mathbf{A}^{l-i}\mathbf{\Sigma}_{\mathrm{n}}(\mathbf{A}^{l-i})^{\top}=\alpha^{l}\mathbf{\Sigma}_{\mathrm{r}}.
	\end{align}
	We prove \eqref{eqn:stacov} by induction.

	When $l=1$, the homogenous condition guarantees that $\mathbf{A}\mathbf{\Sigma}_{\mathrm{r}}\mathbf{A}^{\top}+\mathbf{\Sigma}_{\mathrm{n}}=\alpha\mathbf{\Sigma}_{\mathrm{r}}$.

	If \eqref{eqn:stacov} holds for $l=1,\ldots,n$, then for $l=n+1$
	\begin{align}
		&\mathbf{A}^{n+1}\mathbf{\Sigma}_{\mathrm{r}}(\mathbf{A}^{n+1})^{\top}+\sum_{i=1}^{n+1}\alpha^{i-1}\mathbf{A}^{n+1-i}\mathbf{\Sigma}_{\mathrm{n}}(\mathbf{A}^{n+1-i})^{\top}\nonumber \\
		&=\mathbf{A}(\mathbf{A}^{n}\mathbf{\Sigma}_{\mathrm{r}}(\mathbf{A}^{n})^{\top}+\sum_{i=1}^{n+1}\alpha^{i-1}\mathbf{A}^{n-i}\mathbf{\Sigma}_{\mathrm{n}}(\mathbf{A}^{n-i})^{\top})\mathbf{A}^{\top} \\
		&=\mathbf{A}(\alpha^{n}\mathbf{\Sigma}_{\mathrm{r}}+\alpha^{n}\mathbf{A}^{-1}\mathbf{\Sigma}_{\mathrm{n}}\mathbf{A}^{-\top})\mathbf{A}^{\top}\\
		&=\alpha^{n}(\mathbf{A}\mathbf{\Sigma}_{\mathrm{r}}\mathbf{A}^{\top}+\mathbf{\Sigma}_{\mathrm{n}}) \\
		&=\alpha^{n+1}\mathbf{\Sigma}_{\mathrm{r}}
	\end{align}
	as desired.
\end{proof}

\begin{proposition}\label{prop:isggm}
	The undirected graphical model specified by \eqref{eqn:chanl} and the initial condition $\mathbf{x}_{\mathrm{pa}(\mathrm{r})}=\mathbf{0}$, $\mathbf{n}_{\mathrm{r}}\sim\mathcal{N}(\mathbf{0},\mathbf{\Sigma}_{r})$ is \ac{ggm}.
\end{proposition}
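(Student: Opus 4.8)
The plan is to show two things that together constitute the definition (given in the Preliminaries) of a GGM that is Markov on $\mathbb{T}$: that the joint law of $\mathbf{x}=[\mathbf{x}_1^{\mathrm{T}},\ldots,\mathbf{x}_{o+h}^{\mathrm{T}}]^{\mathrm{T}}$ is a nondegenerate zero-mean Gaussian, and that its precision matrix $\mathbf{\Theta}=\mathbf{\Sigma}^{-1}$ satisfies $\mathbf{\Theta}_{ij}=\mathbf{0}$ exactly when $(x_i,x_j)\notin\mathcal{E}$. The key observation is that although \eqref{eqn:chanl} is a \emph{directed} recursion from parent to child, reading off the precision matrix from the resulting density recovers precisely the undirected edge set of the tree.

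First I would establish Gaussianity and nondegeneracy. Listing the nodes in a topological order of the rooted tree and stacking the recursion \eqref{eqn:chanl}, the linear map from the noise vector $\mathbf{n}=[\mathbf{n}_1^{\mathrm{T}},\ldots,\mathbf{n}_{o+h}^{\mathrm{T}}]^{\mathrm{T}}$ to $\mathbf{x}$ is block lower triangular with identity diagonal blocks, since $\mathbf{n}_i=\mathbf{x}_i-\mathbf{A}\mathbf{x}_{\mathrm{pa}(i)}$; hence it is a bijection of unit Jacobian. Because the $\mathbf{n}_i\sim\mathcal{N}(\mathbf{0},\mathbf{\Sigma}_i)$ are independent nondegenerate zero-mean Gaussians (the $\mathbf{\Sigma}_i$ are positive definite), $\mathbf{x}$ is a nondegenerate zero-mean Gaussian and $\mathbf{\Theta}$ is well defined.

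Second, I would extract the block structure of $\mathbf{\Theta}$ from the directed factorization $p(\mathbf{x})=\prod_i p(\mathbf{x}_i\mid \mathbf{x}_{\mathrm{pa}(i)})$. Each conditional contributes to the log-density the quadratic form $-\tfrac12(\mathbf{x}_i-\mathbf{A}\mathbf{x}_{\mathrm{pa}(i)})^{\mathrm{T}}\mathbf{\Sigma}_i^{-1}(\mathbf{x}_i-\mathbf{A}\mathbf{x}_{\mathrm{pa}(i)})$; expanding and summing over all $i$, the only cross term coupling two distinct nodes is $\mathbf{x}_i^{\mathrm{T}}\mathbf{\Sigma}_i^{-1}\mathbf{A}\mathbf{x}_{\mathrm{pa}(i)}$. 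Matching $-\tfrac12\mathbf{x}^{\mathrm{T}}\mathbf{\Theta}\mathbf{x}$ term by term then gives $\mathbf{\Theta}_{i,\mathrm{pa}(i)}=-\mathbf{\Sigma}_i^{-1}\mathbf{A}$ (and its transpose in the mirror block), a diagonal block $\mathbf{\Theta}_{ii}=\mathbf{\Sigma}_i^{-1}+\sum_{c\in\mathcal{C}(x_i)}\mathbf{A}^{\mathrm{T}}\mathbf{\Sigma}_c^{-1}\mathbf{A}$, and $\mathbf{\Theta}_{ij}=\mathbf{0}$ for every other pair.

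Finally I would match this sparsity pattern to $\mathbb{T}$. In a rooted tree every edge is a parent--child pair, so for $(x_i,x_j)\in\mathcal{E}$ the block equals $-\mathbf{\Sigma}_i^{-1}\mathbf{A}$, which is nonzero because $\mathbf{A}$ is non-singular and $\mathbf{\Sigma}_i$ is invertible; for non-adjacent pairs the block vanishes. This yields $\mathbf{\Theta}_{ij}\neq\mathbf{0}\iff(x_i,x_j)\in\mathcal{E}$, which is exactly the Markov property on $\mathbb{T}$, so the model is an undirected GGM. The computation is otherwise routine; the one point that needs care is verifying that \emph{no} cross term arises between non-adjacent nodes, and this hinges on the tree structure guaranteeing that each node has a unique parent, so that all coupling in the quadratic form is confined to tree edges.
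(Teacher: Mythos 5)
Your proof is correct, and it takes a genuinely different route from the paper's. You verify the paper's definition of a GGM head-on: stack the recursion \eqref{eqn:chanl} in topological order to get a unit-Jacobian block-triangular bijection $\mathbf{n}\mapsto\mathbf{x}$ (giving nondegenerate joint Gaussianity), then expand $\log p(\mathbf{x})=\sum_i -\tfrac12(\mathbf{x}_i-\mathbf{A}\mathbf{x}_{\mathrm{pa}(i)})^{\mathrm{T}}\mathbf{\Sigma}_i^{-1}(\mathbf{x}_i-\mathbf{A}\mathbf{x}_{\mathrm{pa}(i)})$ and read off the precision blocks: $\mathbf{\Theta}_{i,\mathrm{pa}(i)}=-\mathbf{\Sigma}_i^{-1}\mathbf{A}\neq\mathbf{0}$ on edges (a product of invertible matrices), $\mathbf{\Theta}_{ij}=\mathbf{0}$ off edges. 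The paper instead proves Gaussianity via the linear-combination argument and then verifies conditional independence directly: for nodes $x_n,x_m$ separated by $x_s$, it computes the conditional cross-covariance block $\tilde{\mathbf{\Sigma}}_{12}$ using the explicit formulas $\mathbb{E}[\mathbf{x}_t\mathbf{x}_n^{\mathrm{T}}]=\mathbf{A}^{t}\mathbf{\Sigma}_{\mathrm{r}}(\mathbf{A}^{n})^{\mathrm{T}}+\sum_i \mathbf{A}^{t-i}\mathbf{\Sigma}_i(\mathbf{A}^{n-i})^{\mathrm{T}}$ and shows it vanishes, arguing one topological case in detail and invoking symmetry for the rest. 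Your approach buys completeness and economy: it handles \emph{all} node pairs at once with no case analysis on tree topology or separator placement, and it explicitly establishes the ``only if'' direction of the paper's definition (nonvanishing precision blocks on actual edges), which the paper's separation computation does not address. The paper's approach buys concreteness: the covariance formulas it develops are the same ones reused in the proof of Proposition~\ref{prop:mono}, so the computation does double duty. One small point you could make explicit is that the factorization $p(\mathbf{x})=\prod_i p(\mathbf{x}_i\mid\mathbf{x}_{\mathrm{pa}(i)})$ follows from your own change-of-variables step, $p_{\mathbf{x}}(\mathbf{x})=\prod_i p_{\mathbf{n}_i}(\mathbf{x}_i-\mathbf{A}\mathbf{x}_{\mathrm{pa}(i)})$, rather than being assumed; as written this is implicit but harmless.
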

\begin{proof}[Proof of Proposition ~\ref{prop:isggm}]
	To prove that the specified model is a \ac{ggm}, we need to prove that the joint distribution of all variables is Gaussian   and that the conditional 
	independence relationship induced by the edges is achieved.

	According to \eqref{eqn:chanl} and the initial condition, it is easy to see that any linear combination of variables is the linear combination of independent 
	Gaussian variables, which is   Gaussian. Thus, the joint distribution of all variables is indeed Gaussian.

	To show that the conditional independence is guaranteed, we show that
	\begin{align}
		A\independent B \mid S \text{ for any }S\text{ separates }A \text{ and }B.
	\end{align}
	where $S$, $A$ and $B$ are all sets of nodes, and $S\text{ separates }A \text{ and }B$ means that any path connected nodes in $A$ and $B$ goes through a node in  $S$.

	Without loss of generality, we consider the case where $S$, $A$ and $B$ consist of a single node for conciseness of the proof. The case where these sets consist of multiple nodes can 
	be easily proved by generalizing the proof we show here.

	\begin{figure}[H]
		\centering\includegraphics[width=0.4\columnwidth,draft=false]{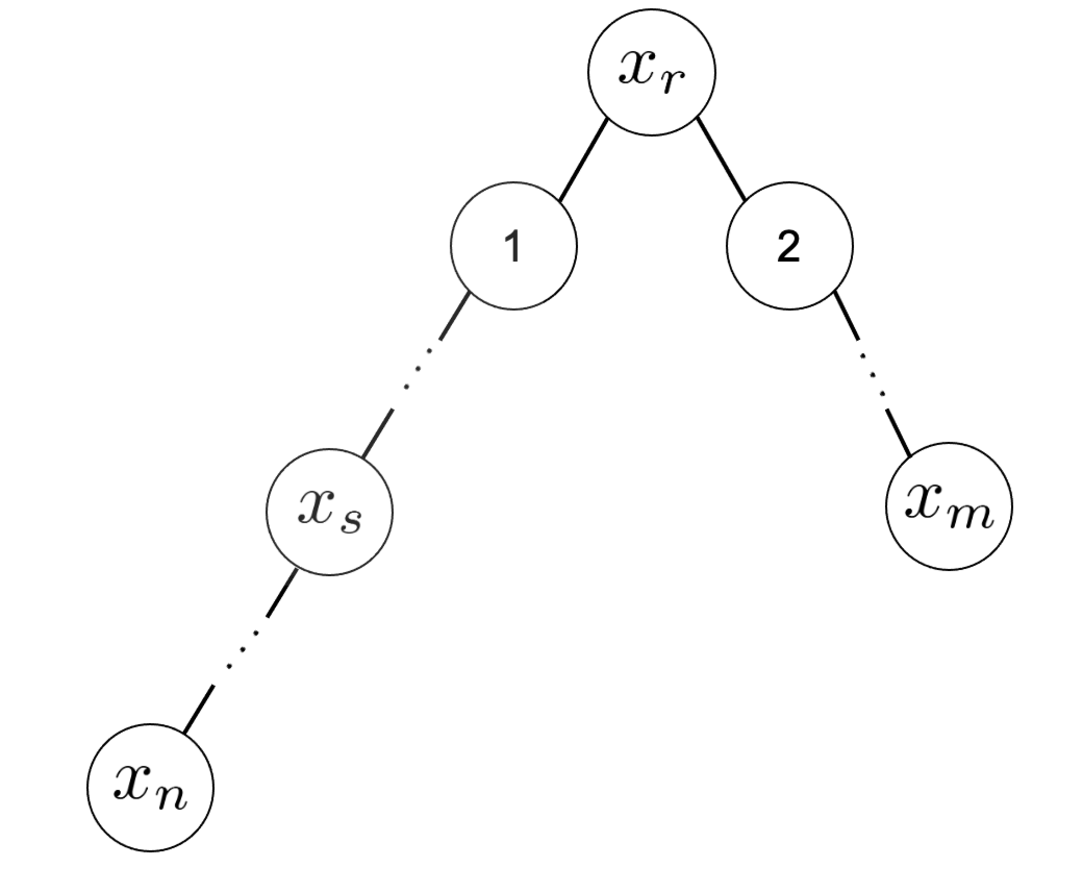}
		\caption{Illustration of the relationship among $x_{n}$, $x_{m}$ and $x_{s}$.}
		\label{fig:undirect_gm}
	\end{figure}
	We first consider the case where ${x}_{n}$ and ${x}_{m}$ belong to different branches, as shown in Fig. \ref{fig:undirect_gm}, and the depths of ${x}_{n}$ and ${x}_{m}$ are $n$ and 
	$m$, respectively. The separator node $x_{s}$ can be anywhere along the path connecting $x_{n}$ and $x_{m}$. Without loss of generality, we assume 
	it sits in the same branch as ${x}_{n}$, and its depth is $s$, where $s<n$. Then we have
	\begin{align}
		\mathbb{E}[\mathbf{x}_{n}\mathbf{x}_{n}^{\top}]&=\mathbf{A}^{n}\mathbf{\Sigma}_{\mathrm{r}}(\mathbf{A}^{n})^{\top}+\sum_{i=1}^{n}\mathbf{A}^{n-i}\mathbf{\Sigma}_{i}(\mathbf{A}^{n-i})^{\top}\\
		\mathbb{E}[\mathbf{x}_{m}\mathbf{x}_{m}^{\top}]&=\mathbf{A}^{m}\mathbf{\Sigma}_{\mathrm{r}}(\mathbf{A}^{m})^{\top}+\sum_{i=1}^{m}\mathbf{A}^{m-i}\mathbf{\Sigma}_{i}^{\prime}(\mathbf{A}^{m-i})^{\top}\\
		\mathbb{E}[\mathbf{x}_{t}\mathbf{x}_{n}^{\top}]&=\mathbf{A}^{t}\mathbf{\Sigma}_{\mathrm{r}}(\mathbf{A}^{n})^{\top}+\sum_{i=1}^{t}\mathbf{A}^{t-i}\mathbf{\Sigma}_{i}(\mathbf{A}^{n-i})^{\top},
	\end{align}
	where $\mathbf{\Sigma}_{i}$ and $\mathbf{\Sigma}_{i}^{\prime}$ are the covariance matrices of the independent noises in each branch.

	Then we calculate the distribution of conditional distribution
	\begin{align}
		\left[
			\begin{matrix}
				\mathbf{x}_{n}\\
				\mathbf{x}_{m}
				\end{matrix}
				\right]\mid \mathbf{x}_{t} \sim \mathcal{N}(\tilde{\mathbf{\mu}},\tilde{\mathbf{\Sigma}}),
	\end{align}
	where
	\begin{align}
		\tilde{\mathbf{\Sigma}}=\left[
			\begin{matrix}
				\tilde{\mathbf{\Sigma}}_{11} & \tilde{\mathbf{\Sigma}}_{12} \\
				\tilde{\mathbf{\Sigma}}_{21} & \tilde{\mathbf{\Sigma}}_{22}
				\end{matrix}
				\right].
	\end{align}
	We have
	\begin{align}
		\tilde{\mathbf{\Sigma}}_{12}=\mathbf{A}^{n}\mathbf{\Sigma}_{\mathrm{r}}(\mathbf{A}^{m})^{\top}&-\Big(\mathbf{A}^{n}\mathbf{\Sigma}_{\mathrm{r}}(\mathbf{A}^{t})^{\top}+\sum_{i=1}^{t}\mathbf{A}^{n-i}\mathbf{\Sigma}_{i}\mathbf{A}^{(t-i)\top}\Big)\nonumber\\
		&\times\Big(\mathbf{A}^{t}\mathbf{\Sigma}_{\mathrm{r}}(\mathbf{A}^{t})^{\top}+\sum_{i=1}^{t}\mathbf{A}^{t-i}\mathbf{\Sigma}_{i}\mathbf{A}^{(t-i)\top}\Big)^{-1}\mathbf{A}^{t}\mathbf{\Sigma}_{\mathrm{r}}(\mathbf{A}^{m})^{\top}=\mathbf{0}.
	\end{align}
	Thus, the conditional independence of $x_{n}$ and $x_{m}$ given $x_{s}$ is proved.

	When ${x}_{n}$ and ${x}_{m}$ are on the same branch, a similar calculation can be performed to prove the conditional independence property.
\end{proof}
\begin{proposition}\label{prop:distgraphdet}
	For a tree graph $\mathbb{T}=(\mathcal{V},\mathcal{E})$ where $\mathcal{V}=\{x_{1},x_{2},\ldots,x_{p}\}$ and any symmetric matrix $\mathbf{A}\in\mathbb{R}^{d\times d}$ whose absolute values of all the eigenvalues are less than 1, the determinant of 
	the matrix $\mathbf{\bar{D}(\mathbb{T},\mathbf{A})}$, which is defined below, is $\big[\det(\mathbf{I}-\mathbf{A}^{2})\big]^{p-1}$
	\begin{align}
		\mathbf{\bar{D}(\mathbb{T},\mathbf{A})}=\left[
			\begin{matrix}
				\mathbf{A}^{\mathrm{d_{\mathbb{T}}}(x_{1},x_{1})} & \mathbf{A}^{\mathrm{d_{\mathbb{T}}}(x_{1},x_{2})}& \cdots &\mathbf{A}^{\mathrm{d_{\mathbb{T}}}(x_{1},x_{p})} \\
				\mathbf{A}^{\mathrm{d_{\mathbb{T}}}(x_{2},x_{1})} & \mathbf{A}^{\mathrm{d_{\mathbb{T}}}(x_{2},x_{2})}& \cdots & \mathbf{A}^{\mathrm{d_{\mathbb{T}}}(x_{2},x_{p})} \\
				\vdots& \vdots &\ \ddots \ & \vdots\\
				\mathbf{A}^{\mathrm{d_{\mathbb{T}}}(x_{p},x_{1})} & \mathbf{A}^{\mathrm{d_{\mathbb{T}}}(x_{p},x_{2})}& \cdots & \mathbf{A}^{\mathrm{d_{\mathbb{T}}}(x_{p},x_{p})}
				\end{matrix}
				\right],
	\end{align}
\end{proposition}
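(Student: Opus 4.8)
The plan is to reduce the block determinant to a product of scalar determinants by diagonalizing $\mathbf{A}$, and then to evaluate each scalar determinant by an induction on the size of the tree that peels off one leaf at a time.

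First I would invoke the spectral theorem to write $\mathbf{A}=\mathbf{Q}\mathbf{\Lambda}\mathbf{Q}^{\mathrm{T}}$ with $\mathbf{Q}$ orthogonal and $\mathbf{\Lambda}=\mathrm{diag}(\lambda_{1},\ldots,\lambda_{d})$, where $|\lambda_{k}|<1$ by hypothesis. Since $\mathrm{d_{\mathbb{T}}}(x_{i},x_{j})$ is a nonnegative integer, each block satisfies $\mathbf{A}^{\mathrm{d_{\mathbb{T}}}(x_{i},x_{j})}=\mathbf{Q}\mathbf{\Lambda}^{\mathrm{d_{\mathbb{T}}}(x_{i},x_{j})}\mathbf{Q}^{\mathrm{T}}$, so if $\mathbf{P}$ denotes the block-diagonal orthogonal matrix with $p$ copies of $\mathbf{Q}$ on the diagonal, then $\mathbf{\bar{D}}(\mathbb{T},\mathbf{A})=\mathbf{P}\,\mathbf{\bar{D}}(\mathbb{T},\mathbf{\Lambda})\,\mathbf{P}^{\mathrm{T}}$ and hence $\det\mathbf{\bar{D}}(\mathbb{T},\mathbf{A})=\det\mathbf{\bar{D}}(\mathbb{T},\mathbf{\Lambda})$. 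Each block of $\mathbf{\bar{D}}(\mathbb{T},\mathbf{\Lambda})$ is diagonal, so reindexing rows and columns first by eigenvalue coordinate $k$ and then by node $i$ exhibits $\mathbf{\bar{D}}(\mathbb{T},\mathbf{\Lambda})$ as permutation-similar to $\bigoplus_{k=1}^{d}\mathbf{M}(\mathbb{T},\lambda_{k})$, where $\mathbf{M}(\mathbb{T},\lambda)$ is the $p\times p$ scalar matrix with entries $[\mathbf{M}(\mathbb{T},\lambda)]_{ij}=\lambda^{\mathrm{d_{\mathbb{T}}}(x_{i},x_{j})}$. Therefore $\det\mathbf{\bar{D}}(\mathbb{T},\mathbf{A})=\prod_{k=1}^{d}\det\mathbf{M}(\mathbb{T},\lambda_{k})$.

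It then suffices to prove the scalar identity $\det\mathbf{M}(\mathbb{T},\lambda)=(1-\lambda^{2})^{p-1}$ for every tree on $p$ nodes, which I would establish by induction on $p$, the base case $p=1$ being $\det[1]=1$. For the inductive step, choose a leaf $x_{p}$ with unique neighbor $x_{q}$. Because every path from $x_{p}$ passes through $x_{q}$, one has $\mathrm{d_{\mathbb{T}}}(x_{p},x_{i})=1+\mathrm{d_{\mathbb{T}}}(x_{q},x_{i})$ for all $i\neq p$, so the $i$-th entry of row $p$ equals $\lambda$ times the $i$-th entry of row $q$, while $[\mathbf{M}]_{pp}=1$ and $\lambda[\mathbf{M}]_{qp}=\lambda^{2}$. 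Subtracting $\lambda$ times row $q$ from row $p$ (a determinant-preserving operation) turns row $p$ into $(0,\ldots,0,1-\lambda^{2})$; expanding along this row gives $\det\mathbf{M}(\mathbb{T},\lambda)=(1-\lambda^{2})\det\mathbf{M}(\mathbb{T}',\lambda)$, where $\mathbb{T}'=\mathbb{T}\setminus\{x_{p}\}$ is again a tree, now on $p-1$ nodes. The induction hypothesis finishes the step, and assembling yields $\det\mathbf{\bar{D}}(\mathbb{T},\mathbf{A})=\prod_{k=1}^{d}(1-\lambda_{k}^{2})^{p-1}=\big[\prod_{k=1}^{d}(1-\lambda_{k}^{2})\big]^{p-1}=[\det(\mathbf{I}-\mathbf{A}^{2})]^{p-1}$, using that the eigenvalues of $\mathbf{I}-\mathbf{A}^{2}$ are the $1-\lambda_{k}^{2}$.

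The main obstacle—more a bookkeeping point than a genuine difficulty—is verifying cleanly that peeling off the leaf $x_{p}$ both zeroes the off-diagonal of row $p$ and leaves behind exactly the matrix $\mathbf{M}(\mathbb{T}',\lambda)$ of the pruned tree; this hinges on additivity of the graph distance through the cut vertex $x_{q}$, namely $\mathrm{d_{\mathbb{T}}}(x_{p},x_{i})=1+\mathrm{d}_{\mathbb{T}'}(x_{q},x_{i})$ and $\mathrm{d_{\mathbb{T}}}(x_{i},x_{j})=\mathrm{d}_{\mathbb{T}'}(x_{i},x_{j})$ for $i,j\neq p$, which is precisely where the tree structure (unique paths) enters. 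Note that the hypothesis $|\lambda_{k}|<1$ is not actually needed for the determinant identity; it only guarantees $\det(\mathbf{I}-\mathbf{A}^{2})>0$, consistent with the matrix arising as a genuine Gram/covariance object elsewhere in the paper.
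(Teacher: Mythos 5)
Your proof is correct, and while it shares the paper's combinatorial core, it packages the linear algebra differently, so a comparison is worthwhile. The paper never diagonalizes $\mathbf{A}$: it runs the leaf-peeling elimination directly at the block level, subtracting $\mathbf{A}$ times the neighbor's block row from the leaf's block row (and then the analogous block column operation), which uses only the relation $\mathbf{A}\cdot\mathbf{A}^{m}=\mathbf{A}^{m+1}$ to zero out the leaf's row and column and leave $\mathbf{I}-\mathbf{A}^{2}$ in the corner; iterating produces $\mathrm{diag}(\mathbf{I},\mathbf{I}-\mathbf{A}^{2},\ldots,\mathbf{I}-\mathbf{A}^{2})$. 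The heart of both arguments is the same distance identity $\mathrm{d}_{\mathbb{T}}(x_{p},x_{i})=1+\mathrm{d}_{\mathbb{T}}(x_{q},x_{i})$ for a leaf $x_{p}$ with unique neighbor $x_{q}$, which is exactly where the tree structure enters. What your spectral reduction buys: after the (correct) perfect-shuffle permutation-similarity $\mathbf{\bar{D}}(\mathbb{T},\mathbf{\Lambda})\cong\bigoplus_{k}\mathbf{M}(\mathbb{T},\lambda_{k})$, the problem decouples into $d$ copies of the clean scalar fact $\det\big[\lambda^{\mathrm{d}_{\mathbb{T}}(x_{i},x_{j})}\big]=(1-\lambda^{2})^{p-1}$, the bookkeeping becomes scalar (your Laplace expansion along the cleaned row replaces the paper's extra block column operation), and you rightly observe that the eigenvalue bound $|\lambda_{k}|<1$ is superfluous for the determinant identity, which holds as a polynomial identity in $\lambda$. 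What the paper's block elimination buys: since it only needs commutativity of powers of $\mathbf{A}$, it goes through verbatim for an arbitrary square matrix $\mathbf{A}$, whereas your opening step genuinely invokes the spectral theorem and hence the symmetry hypothesis (this could be patched by a Jordan-form or density argument, but as written your route depends on it). Both proofs are complete; yours has the merit of isolating the scalar tree-distance determinant as the essential content.
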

\begin{proof}[Proof of Proposition ~\ref{prop:distgraphdet}]
	Since the underlying structure is a tree, we can always find a leaf and its neighbor. Without loss of generality, we assume $x_{p}$ is a leaf and $x_{p-1}$ 
	is $x_{p}$'s neighbor, otherwise we can exchange the rows and columns of $\mathbf{\bar{D}(\mathbb{T},\mathbf{A})}$ to satisfy this assumption. Then we have
	\begin{align}
		\mathrm{d_{\mathbb{T}}}(x_{p-1},x_{p})=1 \quad\mbox{and}\quad \mathrm{d_{\mathbb{T}}}(x_{p},x_{i})=\mathrm{d_{\mathbb{T}}}(x_{p-1},x_{i})+1 \quad\mbox{for all}\quad   i\in [p-2].
	\end{align}
	Thus, we have
	\begin{align}
		\mathbf{\bar{D}(\mathbb{T},\mathbf{A})}=\left[
			\begin{matrix}
				\mathbf{A}^{0} & \mathbf{A}^{\mathrm{d_{\mathbb{T}}}(x_{1},x_{2})}& \cdots &\mathbf{A}^{\mathrm{d_{\mathbb{T}}}(x_{1},x_{p-1})} &\mathbf{A}^{\mathrm{d_{\mathbb{T}}}(x_{1},x_{p-1})+1} \\
				\mathbf{A}^{\mathrm{d_{\mathbb{T}}}(x_{2},x_{1})} & \mathbf{A}^{0}& \cdots &\mathbf{A}^{\mathrm{d_{\mathbb{T}}}(x_{2},x_{p-1})}& \mathbf{A}^{\mathrm{d_{\mathbb{T}}}(x_{2},x_{p-1})+1} \\
				\vdots& \vdots &\ \ddots \ & \vdots & \vdots\\
				\mathbf{A}^{\mathrm{d_{\mathbb{T}}}(x_{p-1},x_{1})} & \mathbf{A}^{\mathrm{d_{\mathbb{T}}}(x_{p-1},x_{2})}& \cdots & \mathbf{A}^{0} & \mathbf{A}^{1}\\
				\mathbf{A}^{\mathrm{d_{\mathbb{T}}}(x_{p-1},x_{1})+1} & \mathbf{A}^{\mathrm{d_{\mathbb{T}}}(x_{p-1},x_{2})+1}& \cdots & \mathbf{A}^{1} & \mathbf{A}^{0}
				\end{matrix}
				\right].
	\end{align}
	Subtracting $\mathbf{A}$ times the penultimate row of $\mathbf{\bar{D}(\mathbb{T},\mathbf{A})}$ from the last row of $\mathbf{\bar{D}(\mathbb{T},\mathbf{A})}$, we have
	\begin{align}
		\left[
			\begin{matrix}
				\mathbf{A}^{0} & \mathbf{A}^{\mathrm{d_{\mathbb{T}}}(x_{1},x_{2})}& \cdots &\mathbf{A}^{\mathrm{d_{\mathbb{T}}}(x_{1},x_{p-1})} &\mathbf{A}^{\mathrm{d_{\mathbb{T}}}(x_{1},x_{p-1})+1} \\
				\mathbf{A}^{\mathrm{d_{\mathbb{T}}}(x_{2},x_{1})} & \mathbf{A}^{0}& \cdots &\mathbf{A}^{\mathrm{d_{\mathbb{T}}}(x_{2},x_{p-1})}& \mathbf{A}^{\mathrm{d_{\mathbb{T}}}(x_{2},x_{p-1})+1} \\
				\vdots& \vdots &\ \ddots \ & \vdots & \vdots\\
				\mathbf{A}^{\mathrm{d_{\mathbb{T}}}(x_{p-1},x_{1})} & \mathbf{A}^{\mathrm{d_{\mathbb{T}}}(x_{p-1},x_{2})}& \cdots & \mathbf{A}^{0} & \mathbf{A}^{1}\\
				\mathbf{0} & \mathbf{0}& \cdots & \mathbf{0} & \mathbf{A}^{0}-\mathbf{A}^{2}
				\end{matrix}
				\right].
	\end{align}
	Applying the similar column transformation, we have
	\begin{align}
		\left[
			\begin{matrix}
				\mathbf{A}^{0} & \mathbf{A}^{\mathrm{d_{\mathbb{T}}}(x_{1},x_{2})}& \cdots &\mathbf{A}^{\mathrm{d_{\mathbb{T}}}(x_{1},x_{p-1})} &\mathbf{0} \\
				\mathbf{A}^{\mathrm{d_{\mathbb{T}}}(x_{2},x_{1})} & \mathbf{A}^{0}& \cdots &\mathbf{A}^{\mathrm{d_{\mathbb{T}}}(x_{2},x_{p-1})}& \mathbf{0} \\
				\vdots& \vdots &\ \ddots \ & \vdots & \vdots\\
				\mathbf{A}^{\mathrm{d_{\mathbb{T}}}(x_{p-1},x_{1})} & \mathbf{A}^{\mathrm{d_{\mathbb{T}}}(x_{p-1},x_{2})}& \cdots & \mathbf{A}^{0} & \mathbf{0}\\
				\mathbf{0} & \mathbf{0}& \cdots & \mathbf{0} & \mathbf{A}^{0}-\mathbf{A}^{2}
				\end{matrix}
				\right].
	\end{align}
	By repeating these row and column transformations, we will acquire
	\begin{align}
		\mathrm{diag}(\mathbf{I},\mathbf{I}-\mathbf{A}^{2},\ldots,\mathbf{I}-\mathbf{A}^{2}),
	\end{align}
	which has the same determinant as $\mathbf{\bar{D}(\mathbb{T},\mathbf{A})}$. Thus,  $\mathrm{det}(\mathbf{\bar{D}(\mathbb{T},\mathbf{A})})= \big[\det(\mathbf{I}-\mathbf{A}^{2})\big]^{p-1}$.
\end{proof}
The proof of Theorem \ref{theo:converse} follows from the following non-asymptotic result. 
\begin{theorem}\label{theo:converse2}
	Consider the class of graphs $\mathcal{T}(|\mathcal{V}_{\mathrm{obs}}|,\rho_{\max},l_{\max})$, where $|\mathcal{V}_{\mathrm{obs}}|\geq 3$. If the number of i.i.d.\ samples $n$ is upper bounded as follows, 
	\begin{align}
		n<\max\bigg\{\frac{2(1-\delta)\big(\log 3^{1/3} \lfloor\log_{3}(|\mathcal{V}_{\mathrm{obs}}|)\rfloor-1\big)-\frac{2}{|\mathcal{V}_{\mathrm{obs}}|}}{-l_{\max}\log\big(1-e^{-\frac{\rho_{\max}}{\lfloor\log_{3}(|\mathcal{V}_{\mathrm{obs}}|)\rfloor l_{\max}}}\big)},\frac{(1-\delta)/5-\frac{2}{|\mathcal{V}_{\mathrm{obs}}|}}{-l_{\max}\log\big(1-e^{-\frac{2\rho_{\max}}{3l_{\max}}}\big)}
		\bigg\} \label{eqn:boundThm10}
	\end{align}
	then for any graph decoder $\phi: \mathbb{R}^{n|\mathcal{V}_{\mathrm{obs}}|l_{\max}} \rightarrow \mathcal{T}(|\mathcal{V}_{\mathrm{obs}}|,\rho_{\max},l_{\max})$
	\begin{align}
		\max_{\theta(\mathbb{T})\in \mathcal{T}(|\mathcal{V}_{\mathrm{obs}}|,\rho_{\max},l_{\max})} \mathbb{P}_{\theta(\mathbb{T})}(\phi(\mathbf{X}_{1}^{n})\neq \mathbb{T})\geq \delta.
	\end{align}
\end{theorem}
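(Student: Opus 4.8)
The plan is to convert the structure-learning task into a multiple-hypothesis testing problem and then invoke information-theoretic converse techniques. Concretely, I would construct two finite sub-families of models inside $\mathcal{T}(|\mathcal{V}_{\mathrm{obs}}|,\rho_{\max},l_{\max})$, each consisting of distributions whose underlying trees are pairwise distinct, and argue that no decoder can reliably tell them apart unless $n$ exceeds the claimed thresholds. The first term of the maximum in \eqref{eqn:boundThm10} will come from a large packing of trees together with Fano's inequality, which gives $\mathbb{P}(\text{error}) \ge 1 - (n\bar{D} + \log 2)/\log M$ where $M$ is the number of hypotheses and $\bar{D}$ the largest per-sample KL divergence between any two of them; rearranging this to force the error to be at least $\delta$ yields the first bound. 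The second term will come from a small (essentially two- or few-point) family and Le Cam's method (or, equivalently, Pinsker's inequality), which is tight when the packing is small but each pair is extremely hard to separate.

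To realize both families I would use the homogeneous linear-Gaussian channel model of Propositions~\ref{prop:homo} and~\ref{prop:mono}, for which all node covariances are equal up to scale and the joint covariance of the observed nodes is, after normalization, exactly the block matrix $\bar{\mathbf{D}}(\mathbb{T},\mathbf{A})$ of Proposition~\ref{prop:distgraphdet} with $\mathbf{A}$ a fixed symmetric contraction encoding the per-edge information distance. The decisive observation --- and the reason Proposition~\ref{prop:distgraphdet} was proved --- is that $\det\bar{\mathbf{D}}(\mathbb{T},\mathbf{A}) = [\det(\mathbf{I}-\mathbf{A}^2)]^{|\mathcal{V}_{\mathrm{obs}}|-1}$ depends only on $|\mathcal{V}_{\mathrm{obs}}|$ and $\mathbf{A}$ and not on the tree $\mathbb{T}$. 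Hence, for any two models in a family built on a common $\mathbf{A}$ and $|\mathcal{V}_{\mathrm{obs}}|$, the two covariances $\mathbf{\Sigma}_0,\mathbf{\Sigma}_1$ have equal determinants, the logarithmic terms in the Gaussian KL divergence cancel, and
$$2\,D_{\mathrm{KL}}\big(\mathcal{N}(\mathbf{0},\mathbf{\Sigma}_0)\,\big\|\,\mathcal{N}(\mathbf{0},\mathbf{\Sigma}_1)\big) = \mathrm{tr}\big(\mathbf{\Sigma}_1^{-1}\mathbf{\Sigma}_0\big) - |\mathcal{V}_{\mathrm{obs}}|\,l_{\max}.$$
This reduces every divergence computation to bounding a single trace that measures the local discrepancy between two tree topologies.

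For the first (Fano) family I would take a balanced ternary tree so that each hidden node has three neighbors (ensuring membership in $\mathcal{T}_{\geq 3}$) and the tree depth is $\Theta(\lfloor\log_3|\mathcal{V}_{\mathrm{obs}}|\rfloor)$; spreading a total distance of $\rho_{\max}$ over a path of this length forces each edge to carry information distance $\approx \rho_{\max}/(\lfloor\log_3|\mathcal{V}_{\mathrm{obs}}|\rfloor\, l_{\max})$ per coordinate, which is precisely what appears in the exponent of the first denominator. Permuting or relocating a fixed local gadget among $\Theta(|\mathcal{V}_{\mathrm{obs}}|^{1/3})$ positions produces $M$ topologically distinct but nearly indistinguishable trees, giving $\log M \approx \tfrac13(\log 3)\lfloor\log_3|\mathcal{V}_{\mathrm{obs}}|\rfloor$ and hence the $\log|\mathcal{V}_{\mathrm{obs}}|$ scaling. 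For the second family I would use a minimal quartet-type gadget on a constant number of nodes whose two candidate resolutions differ only through an internal edge of length $\approx \tfrac{2}{3}\rho_{\max}$, so that the trace identity above yields a per-sample divergence $\approx -l_{\max}\log(1-e^{-2\rho_{\max}/(3l_{\max})})$, after which Le Cam's two-point bound produces the second term. In both cases I must verify that the constructed $\mathbf{\Sigma}$ satisfy Assumptions~\ref{assupleng} and~\ref{assupdist} with the prescribed $l_{\max}$ and $\rho_{\max}$ (and that the trees lie in $\mathcal{T}_{\geq 3}$), which fixes the admissible range of the contraction $\mathbf{A}$.

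The main obstacle I anticipate is the precise bounding of the trace term $\mathrm{tr}(\mathbf{\Sigma}_1^{-1}\mathbf{\Sigma}_0) - |\mathcal{V}_{\mathrm{obs}}|\,l_{\max}$ so that it matches the clean denominators $-l_{\max}\log(1-e^{-\rho/\cdots})$ with the exact constants (the factor $2$, the $1/5$, and the $2/|\mathcal{V}_{\mathrm{obs}}|$ correction) appearing in \eqref{eqn:boundThm10}. This requires explicitly inverting the structured covariance of one tree and contracting it against the other --- feasible because the homogeneous model makes $\mathbf{\Sigma}_1^{-1}$ sparse in the tree metric (only neighboring blocks couple), but delicate because the cancellation of the dominant log-determinant terms means the surviving contribution comes entirely from the few edges where the two topologies disagree, and its sign and magnitude must be controlled uniformly as $\rho_{\max},|\mathcal{V}_{\mathrm{obs}}|\to\infty$. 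Passing from the non-asymptotic Theorem~\ref{theo:converse2} to the asymptotic Theorem~\ref{theo:converse} is then a routine application of $-\log(1-e^{-a})\sim e^{-a}$ as $a\to\infty$.
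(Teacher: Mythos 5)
You have the right high-level template (restrict to finitely many hypotheses, apply an information-theoretic converse, one family for the $\log|\mathcal{V}_{\mathrm{obs}}|$ term and one for the $e^{2\rho_{\max}/(3l_{\max})}$ term), but the engine of your proof fails. Your ``decisive observation'' misreads Proposition~\ref{prop:distgraphdet}: that proposition computes the determinant of the matrix $\bar{\mathbf{D}}(\mathbb{T},\mathbf{A})$ built over \emph{all} $p=|\mathcal{V}_{\mathrm{obs}}|+|\mathcal{V}_{\mathrm{hid}}|$ nodes, with exponent $p-1$, not $|\mathcal{V}_{\mathrm{obs}}|-1$. The decoder only sees the observed marginal, i.e.\ the principal submatrix $\mathbf{V}$, and $\det(\mathbf{V})=\det(\bar{\mathbf{D}})/\det(\mathbf{H}-\mathbf{B}^{\mathrm{T}}\mathbf{V}^{-1}\mathbf{B})$ involves a Schur-complement factor that \emph{does} depend on the topology; moreover, trees of different depths (which any family realizing the $\lfloor\log_{3}|\mathcal{V}_{\mathrm{obs}}|\rfloor$ scaling must contain) have different numbers of hidden nodes, so even the full determinants disagree. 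Consequently your claimed cancellation $\det\mathbf{\Sigma}_{0}=\det\mathbf{\Sigma}_{1}$, and with it the reduction of every KL divergence to $\mathrm{tr}(\mathbf{\Sigma}_{1}^{-1}\mathbf{\Sigma}_{0})-|\mathcal{V}_{\mathrm{obs}}|l_{\max}$, is unjustified. (It does hold when two hypotheses are mere relabelings of one weighted tree, since then $\mathbf{\Sigma}_{1}=\mathbf{P}\mathbf{\Sigma}_{0}\mathbf{P}^{\mathrm{T}}$ for a permutation $\mathbf{P}$ --- but that is permutation similarity, not Proposition~\ref{prop:distgraphdet}, and it cannot cover the depth-varying family.)

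The paper sidesteps exactly the obstacle you flag as your main difficulty: it never computes pairwise divergences or inverts a tree covariance. Instead it bounds $I(\mathbf{X}_{1};K)\leq\mathbb{E}_{\mathbb{T}_{k}}[D(\mathbb{P}_{\mathbb{T}_{k}}\Vert\mathbb{Q})]$ with the fixed auxiliary center $\mathbb{Q}=\mathcal{N}(\mathbf{0},\mathbf{I})$, so with $\mathbf{\Sigma}_{\mathrm{r}}=\mathbf{I}$ the trace term is exactly $|\mathcal{V}_{\mathrm{obs}}|l_{\max}$ and all that is needed is the uniform lower bound $\det(\mathbf{V})\geq[\det(\mathbf{I}-\mathbf{A}^{2})]^{|\mathcal{V}_{\mathrm{obs}}|}$, obtained from Proposition~\ref{prop:distgraphdet} together with the Schur factorization and the Minkowski determinant theorem; this is the actual (and only) role of that proposition. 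Your hypothesis families also cannot reproduce the stated constants: the paper's first family is \emph{all} full ternary trees of depths $1\leq k\leq L$, with $\log M=\Theta(|\mathcal{V}_{\mathrm{obs}}|\log|\mathcal{V}_{\mathrm{obs}}|)$ --- your $\Theta(|\mathcal{V}_{\mathrm{obs}}|^{1/3})$ gadget placements give only $\log M\approx\tfrac{1}{3}\log|\mathcal{V}_{\mathrm{obs}}|$, which is too small to beat a mutual-information bound that scales with $|\mathcal{V}_{\mathrm{obs}}|$ under the center trick --- and the second term comes from Fano over the $\tbinom{|\mathcal{V}_{\mathrm{obs}}|}{\lceil|\mathcal{V}_{\mathrm{obs}}|/2\rceil}$ double-star trees (whence the $(1-\delta)/5$ and the $2/|\mathcal{V}_{\mathrm{obs}}|$ correction from setting $\delta^{\prime}=\delta+1/\log M$), not from a two-point Le Cam argument, which yields different constants and typically only works for $\delta<1/2$, whereas the theorem as stated must hold with these exact expressions.
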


\begin{proof}[Proof of Theorem ~\ref{theo:converse}]
	To prove Theorem \ref{theo:converse}, we simply implement the Taylor expansion $\log (1+x)=\sum_{k=1}^{\infty} (-1)^{k+1}\frac{x^{k}}{k}$ on \eqref{eqn:boundThm10} in Theorem \ref{theo:converse2} taking $\rho_{\max}\to\infty$ and $|\mathcal{V}_{\mathrm{obs}}|\to\infty$. 
\end{proof}	

	It remains to prove   Theorem \ref{theo:converse2}. 
\begin{proof}[Proof of Theorem ~\ref{theo:converse2}]	
	To prove this non-asymptotic converse bound, we consider $M$ models in $\mathcal{T}(|\mathcal{V}_{\mathrm{obs}}|,\rho_{\max},l_{\max})$, whose parameters are enumerated as $\{\theta^{(1)},\theta^{(2)},\ldots,\theta^{(M)}\}$. We choose a 
	model $K=k$ uniformly in $\{1,\ldots,M\}$ and generate $n$ i.i.d.\ samples $\mathbf{X}_{1}^{n}$ from $\mathbb{P}_{\theta^{(k)}}$. A latent tree learning algorithm is a decoder $\phi:\mathbb{R}^{n|\mathcal{V}_{\mathrm{obs}}|l_{\max}}\rightarrow  \{1,\ldots,M\}$.

	Two families are built to derive the converse bound. We separately describe the families of $M$ graphical models we consider here. 
	\paragraph{Graphical model family A} 
	We specify the structure of trees as full-$m$ trees, except the top layer, as shown in Fig.~\ref{fig:conversetree}. All the observed nodes are leaves. The parameters of each tree are set to satisfy the conditions in Proposition \ref{prop:homo}. Additionally, we set $\alpha=1$ in the homogeneous condition \eqref{eqn:homo} and 
	set $\mathbf{A}$ to be a symmetric matrix that commutes with $\mathbf{\Sigma}_{r}$. We set $m=3$ and $L=\lfloor\log_{3}(|\mathcal{V}_{\mathrm{obs}}|)\rfloor$, then the number of residual nodes is $r=|\mathcal{V}_{\mathrm{obs}}|-3^{L}$. All these residual nodes are connected to one of parents of the observed nodes.

	\begin{figure}[htbp]
		\centering
		\subfigure[The full 3-tree. All the observed nodes are leaves, and residual nodes are connected to one of parents of the observed nodes.]{
		\begin{minipage}[t]{.92\linewidth}
		\centering
		\includegraphics[width=5.1in]{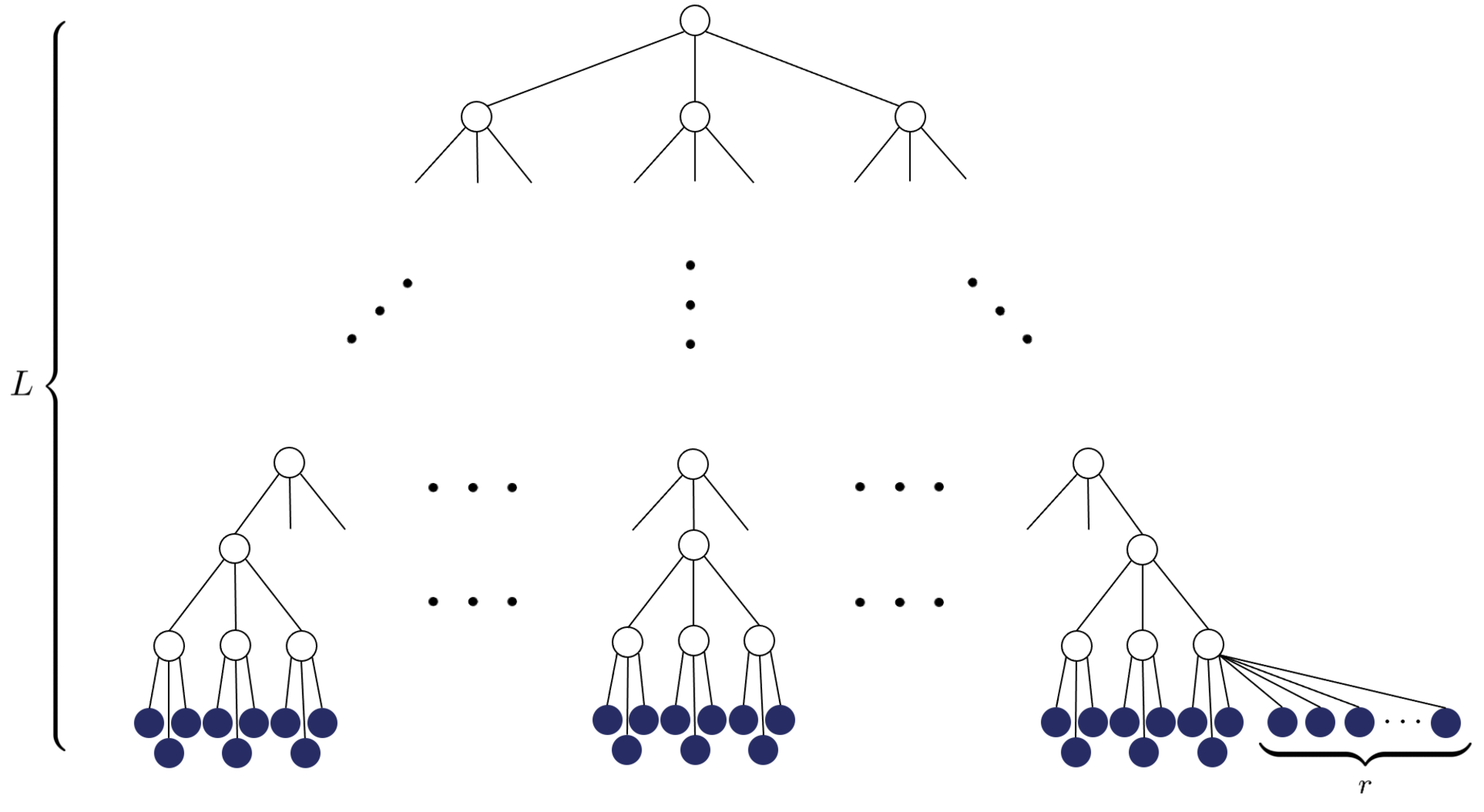}
		\end{minipage}%
		}

		\subfigure[The full tree with depth $k$, where all the internal nodes have three children except the root node.]{
		\begin{minipage}[t]{.92\linewidth}
		\centering
		\includegraphics[width=5.1in]{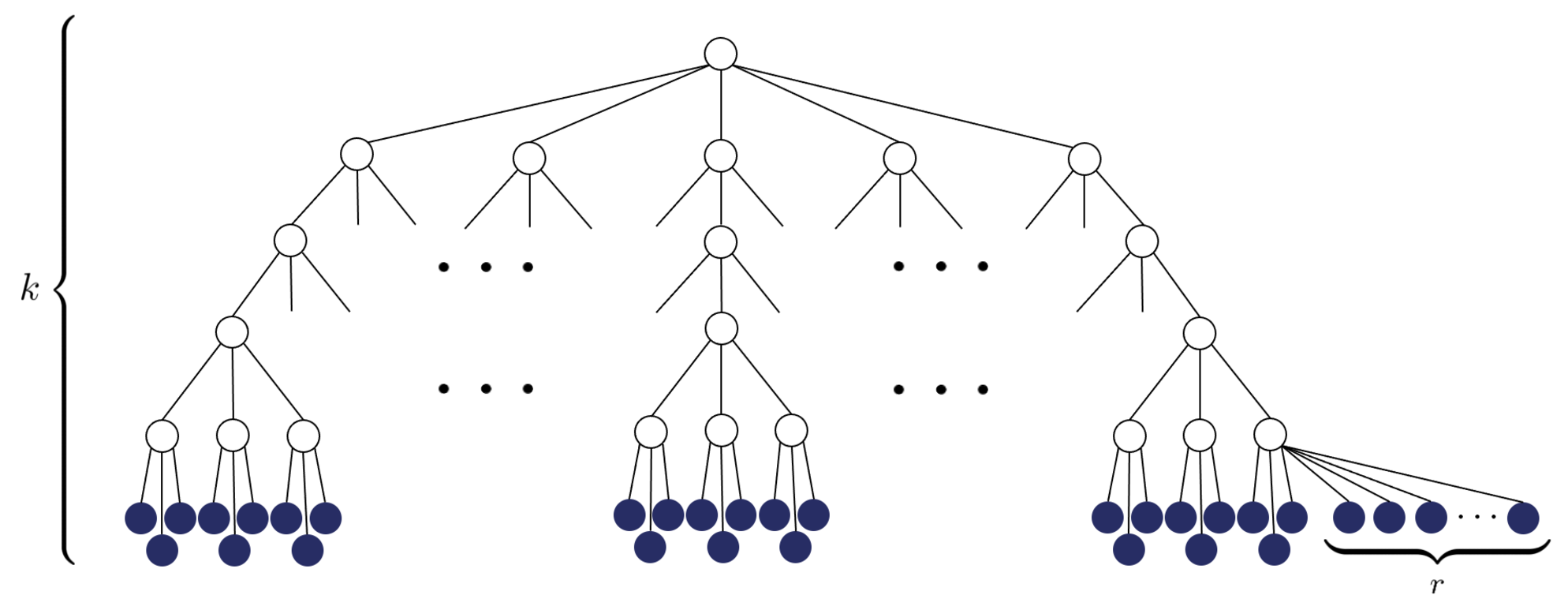}
		\end{minipage}%
		}%
		\centering
		\caption{The family A of graphical models considered in the impossibility result.}
		\label{fig:conversetree}
	\end{figure}
	
	To derive the converse result, we use the Fano's method. Namely, Fano's method says that if the sample size
	\begin{align}
		n<\frac{(1-\delta)\log M}{I(\mathbf{X}_{1};K)},
	\end{align}
	then for any decoder
	\begin{align}
		\max_{k=1,\ldots,M} \mathbb{P}_{\theta^{(k)}}\big[\phi(\mathbf{X}_{1}^{n})\neq k\big]\geq \delta-\frac{1}{\log M}.
	\end{align}

	We first evaluate the cardinality of this family of graphical models. We first count the number of graphical models with depth $1\leq k\leq L$ in   Fig.~\ref{fig:conversetree}. For a 
	specific order of labels (e.g., $1,2,\ldots,m^{L}$), exchanging the labels in a family does not change the topology of the tree. For instance, exchanging the position of node $1$ and 
	node $m$, we obtain an identical tree. By changing the orders in the last layer, it is obvious that there are $(m!)^{m^{L-1}}$ different orders representing the same structure. For the 
	penultimate layer, there are $(m!)^{m^{L-2}}$ different orders represent an identical structure. Thus, for a specific graphical model with depth $k$, there are
	\begin{align}
		(m^{L-k+1})!\prod_{i=L-1}^{L-k+1}(m!)^{m^{i}}=(m^{L-k+1})!(m!)^{\frac{m^{L}-m^{L-k+1}}{m-1}}
	\end{align}
	graphical models with the same distribution. 

	Then the number of different structures of graphical models with depth $k$ can be calculated as 
	\begin{align}
		\frac{(m^{L})!}{(m^{L-k+1})!(m!)^{\frac{m^{L}-m^{L-k+1}}{m-1}}}.
	\end{align}
	The total number of different graphical models in the family we consider is
	\begin{align}
		M=\sum_{k=1}^{L} \frac{(m^{L})!}{(m^{L-k+1})!(m!)^{\frac{m^{L}-m^{L-k+1}}{m-1}}}.
	\end{align}
	Using Stirling's formula, we have the following simplification of $M$:
	\begin{align}\label{neq:Mbound}
		M
		& \geq \sum_{k=1}^{L} \frac{\sqrt{2\pi}(m^{L})^{m^{L}+1/2}e^{-m^{L}}}{e(m^{L-k+1})^{m^{L-k+1}+1/2}e^{-m^{L-k+1}}}\frac{1}{(e^{-(m-1)}m^{m+1/2})^{(m^{L}-m^{L-k+1})/(m-1)}}\\
		&= \sum_{k=1}^{L}\frac{\sqrt{2\pi}}{e}m^{Lm^{L}-(L-k+1)m^{L-k+1}+(k-1)/2-(m+1/2)(m^{L}-m^{L-k+1})/(m-1)}\\
		&= \frac{\sqrt{2\pi}}{e} m^{(L-(m+1/2)/(m-1))m^{L}}\sum_{k=1}^{L} m^{-(L-k+1-(m+1/2)/(m-1))m^{L-k+1}+(k-1)/2}\\
		&> \frac{\sqrt{2\pi}}{e} m^{(L-(m+1/2)/(m-1))m^{L}}m^{3m/(2m-2)+(L-1)/2}
	\end{align}
	and
	\begin{align}
		\log M&>\log\Big(\frac{\sqrt{2\pi}}{e}\Big)+m^{L}\Big(L-\frac{m+\frac{1}{2}}{m-1}\Big)\log m +\Big(\frac{3m}{2m-2}+\frac{L-1}{2}\Big)\log m
		\end{align}
		Thus,
		\begin{align}
		\frac{\log M}{|\mathcal{V}_{\mathrm{obs}}|}&>\frac{1}{|\mathcal{V}_{\mathrm{obs}}|}\log\Big(\frac{\sqrt{2\pi}}{e}\Big)+\frac{m^{L}}{|\mathcal{V}_{\mathrm{obs}}|}\Big(L-\frac{m+\frac{1}{2}}{m-1}\Big)\log m+\frac{1}{|\mathcal{V}_{\mathrm{obs}}|}\Big(\frac{3m}{2m-2}+\frac{L-1}{2}\Big)\log m \\
		&>\frac{\log m}{m}\Big(L-\frac{m+\frac{1}{2}}{m-1}\Big)+\frac{1}{|\mathcal{V}_{\mathrm{obs}}|}\log\Big(\frac{\sqrt{2\pi}}{e}\Big) \\
		&\overset{(a)}{>}\frac{\log 3}{3}L-1,
	\end{align}
	where inequality $(a)$ is derived by substituting $m=3$.
	
	Next we calculate an upper bound of $I(\mathbf{X}_{1};K)$.
	Since $\mathbb{P}_{\mathbb{T}_{k}}=\mathcal{N}(0,\mathbf{\Sigma}_{\mathrm{obs}}(\mathbb{T}_{k}))$, where $\mathbf{\Sigma}_{\mathrm{obs}}$ is the covariance matrix of observed variables, we have \cite{wang2010information}
	\begin{align}
		I(\mathbf{X}_{1};K)\leq \mathbb{E}_{\mathbb{T}_{k}}\big[D(\mathbb{P}_{\mathbb{T}_{k}}\|\mathbb{Q})\big],
	\end{align}
	for any distribution $\mathbb{Q}$. By choosing $\mathbb{Q}=\mathcal{N}(0,\mathbf{I}_{l_{\max}|\mathcal{V}_{\mathrm{obs}}|\times l_{\max}|\mathcal{V}_{\mathrm{obs}}|})$, we have
	\begin{align}
		D(\mathbb{P}_{\mathbb{T}_{k}}\|\mathbb{Q})&=\frac{1}{2}\bigg\{\log\Big(\det\big(\mathbf{\Theta}_{\mathrm{obs}}(\mathbb{T}_{k})\big)\Big)+\mathrm{trace}\big(\mathbf{\Sigma}_{\mathrm{obs}}(\mathbb{T}_{k})\big)-l_{\max}|\mathcal{V}_{\mathrm{obs}}|\bigg\} \\
		&=\frac{1}{2}\bigg\{-\log\Big(\det\big(\mathbf{\Sigma}_{\mathrm{obs}}(\mathbb{T}_{k})\big)\Big)+\mathrm{trace}\big(\mathbf{\Sigma}_{\mathrm{obs}}(\mathbb{T}_{k})\big)-l_{\max}|\mathcal{V}_{\mathrm{obs}}|\bigg\}
	\end{align}
	Since we consider models that satisfy the conditions in Proposition \ref{prop:homo}, the covariance matrix of any two variables is
	\begin{align}
		\mathbb{E}\big[\mathbf{x}_{i}\mathbf{x}_{j}^{\top}\big]=\mathbf{\Sigma}_{\mathrm{r}}\mathbf{A}^{\mathrm{d_{\mathbb{T}}}(x_{i},x_{j})}.
	\end{align}
	The covariance matrix $\mathbf{\Sigma}(\mathbb{T}_{k})$ for all the observed variables and latent variables $\mathcal{V}_{\mathrm{obs}}\cup\mathcal{V}_{\mathrm{hid}}$ is
	\begin{align}
		&\left[
			\begin{matrix}
				\mathbf{\Sigma}_{\mathrm{r}}\mathbf{A}^{\mathrm{d_{\mathbb{T}}}(x_{1},x_{1})} & \cdots & \mathbf{\Sigma}_{\mathrm{r}}\mathbf{A}^{\mathrm{d_{\mathbb{T}}}(x_{1},x_{|\mathcal{V}_{\mathrm{obs}}|})}& \mathbf{\Sigma}_{\mathrm{r}}\mathbf{A}^{\mathrm{d_{\mathbb{T}}}(x_{1},y_{1})} & \cdots &\mathbf{\Sigma}_{\mathrm{r}}\mathbf{A}^{\mathrm{d_{\mathbb{T}}}(x_{1},y_{|\mathcal{V}_{\mathrm{hid}}|})} \\
				\vdots & \ddots &\vdots &\vdots &\ddots &\vdots\\
				\mathbf{\Sigma}_{\mathrm{r}}\mathbf{A}^{\mathrm{d_{\mathbb{T}}}(x_{|\mathcal{V}_{\mathrm{obs}}|},x_{1})} & \cdots & \mathbf{\Sigma}_{\mathrm{r}}\mathbf{A}^{\mathrm{d_{\mathbb{T}}}(x_{|\mathcal{V}_{\mathrm{obs}}|},x_{|\mathcal{V}_{\mathrm{obs}}|})}& \mathbf{\Sigma}_{\mathrm{r}}\mathbf{A}^{\mathrm{d_{\mathbb{T}}}(x_{|\mathcal{V}_{\mathrm{obs}}|},y_{1})} & \cdots &\mathbf{\Sigma}_{\mathrm{r}}\mathbf{A}^{\mathrm{d_{\mathbb{T}}}(x_{|\mathcal{V}_{\mathrm{obs}}|},x_{|\mathcal{V}_{\mathrm{hid}}|})} \\
				\mathbf{\Sigma}_{\mathrm{r}}\mathbf{A}^{\mathrm{d_{\mathbb{T}}}(y_{1},x_{1})} & \cdots & \mathbf{\Sigma}_{\mathrm{r}}\mathbf{A}^{\mathrm{d_{\mathbb{T}}}(y_{1},x_{|\mathcal{V}_{\mathrm{obs}}|})}& \mathbf{\Sigma}_{\mathrm{r}}\mathbf{A}^{\mathrm{d_{\mathbb{T}}}(y_{1},y_{1})} & \cdots &\mathbf{\Sigma}_{\mathrm{r}}\mathbf{A}^{\mathrm{d_{\mathbb{T}}}(y_{1},y_{|\mathcal{V}_{\mathrm{hid}}|})} \\
				\vdots & \ddots &\vdots &\vdots &\ddots &\vdots\\
				\mathbf{\Sigma}_{\mathrm{r}}\mathbf{A}^{\mathrm{d_{\mathbb{T}}}(y_{|\mathcal{V}_{\mathrm{hid}}|},x_{1})} & \cdots & \mathbf{\Sigma}_{\mathrm{r}}\mathbf{A}^{\mathrm{d_{\mathbb{T}}}(y_{|\mathcal{V}_{\mathrm{hid}}|},x_{|\mathcal{V}_{\mathrm{obs}}|})}& \mathbf{\Sigma}_{\mathrm{r}}\mathbf{A}^{\mathrm{d_{\mathbb{T}}}(y_{|\mathcal{V}_{\mathrm{hid}}|},y_{1})} & \cdots &\mathbf{\Sigma}_{\mathrm{r}}\mathbf{A}^{\mathrm{d_{\mathbb{T}}}(y_{|\mathcal{V}_{\mathrm{hid}}|},y_{|\mathcal{V}_{\mathrm{hid}}|})} 
				\end{matrix}
				\right] \nonumber\\
				&=\Big(\mathbf{I}_{|\mathcal{V}|\times|\mathcal{V}|}\otimes \mathbf{\Sigma}_{\mathrm{r}}\Big) \left[
				\begin{matrix}
					\mathbf{V}  & \mathbf{B}\\
					\mathbf{B}^{\top}  & \mathbf{H}\\
				\end{matrix}
					\right]\\
			&=\Big(\mathbf{I}_{|\mathcal{V}|\times|\mathcal{V}|}\otimes \mathbf{\Sigma}_{\mathrm{r}}\Big) \mathbf{\bar{D}}(\mathbb{T}_{k},\mathbf{A})
	\end{align}
	where $\mathbf{A}\otimes\mathbf{B}$ is the Kronecker product of matrices $\mathbf{A}$ and $\mathbf{B}$. Letting $\mathbf{\Sigma}_{\mathrm{r}}=\mathbf{I}$, it is obvious that $\mathbf{\bar{D}}(\mathbb{T},\mathbf{A})$ is a positive definite matrix. Furthermore, $\mathbf{H}-\mathbf{B}^{\top}\mathbf{V}^{-1}\mathbf{B}$ is 
	positive semi-definite matrix, since it is the inverse of the principal minor of $\mathbf{\bar{D}}^{-1}$. Thus  we have
	\begin{align}
		\det\big(\mathbf{\bar{D}}(\mathbb{T}_{k},\mathbf{A})\big)&=\det(\mathbf{V})\det\big(\mathbf{H}-\mathbf{B}^{\top}\mathbf{V}^{-1}\mathbf{B}\big)\\
		&\overset{(a)}{\leq}\det(\mathbf{V})\det\big(\mathbf{H}\big)\overset{(b)}{=}\det(\mathbf{V})\big[\det(\mathbf{I}-\mathbf{A}^{2})\big]^{|\mathcal{V}_{\mathrm{hid}}|-1},
	\end{align}
	where inequality $(a)$ is derived from Minkowski determinant theorem \cite{marcus1971extension}, and $(b)$ comes from the fact that all the latent variables themselves form a tree. Also, we have that
	\begin{align}
		\det\big(\mathbf{\bar{D}}(\mathbb{T}_{k},\mathbf{A})\big)=\big[\det(\mathbf{I}-\mathbf{A}^{2})\big]^{|\mathcal{V}_{\mathrm{hid}}|+|\mathcal{V}_{\mathrm{obs}}|-1}.
	\end{align}
	Thus, we have
	\begin{align}
		\det(\mathbf{V})\geq \big[\det(\mathbf{I}-\mathbf{A}^{2})\big]^{|\mathcal{V}_{\mathrm{obs}}|},
	\end{align}
	which implies that
	\begin{align}
		\log\Big(\det\big(\mathbf{\Sigma}_{\mathrm{obs}}(\mathbb{T}_{k})\big)\Big)&\geq \log\Big(\big(\det(\mathbf{\Sigma}_{\mathrm{r}})\big)^{|\mathcal{V}_{\mathrm{obs}}|}\big[\det(\mathbf{I}-\mathbf{A}^{2})\big]^{|\mathcal{V}_{\mathrm{obs}}|}\Big)\\
		&=|\mathcal{V}_{\mathrm{obs}}|\log\Big(\det(\mathbf{\Sigma}_{\mathrm{r}})\det(\mathbf{I}-\mathbf{A}^{2})\Big).
	\end{align}
	The mutual information can thus be upper bounded as 
	\begin{align}\label{neq:mibound}
		I(\mathbf{X}_{1};K)\leq \frac{1}{2}|\mathcal{V}_{\mathrm{obs}}|(-\log\big(\det(\mathbf{I}-\mathbf{A}^{2})\big)+\mathrm{trace}(\mathbf{\Sigma}_{\mathrm{r}})-\log\big(\det(\mathbf{\Sigma}_{\mathrm{r}})\big)-l_{\max})
	\end{align}
	Combining inequalities~\eqref{neq:Mbound} and \eqref{neq:mibound}, we can deduce that the any decoder will construct the wrong tree with probability at least $\delta$ if
	\begin{align}
		n<\frac{2(1-\delta)\big(\log 3^{1/3} L-1\big)}{-\log\big(\det(\mathbf{I}-\mathbf{A}^{2})\big)+\mathrm{trace}(\mathbf{\Sigma}_{\mathrm{r}})-\log\big(\det(\mathbf{\Sigma}_{\mathrm{r}})\big)-l_{\max}}
	\end{align}
	By choosing $\mathbf{\Sigma}_{\mathrm{r}}=\mathbf{I}$ and letting the eigenvalues of $\mathbf{A}$ are all the same, we have
	\begin{align}
		\rho_{\max}&=-\frac{2L}{2}\log\big(\det(\mathbf{A}^{2})\big)=-2l_{\max}L\log\big(\lambda(\mathbf{A})\big) 
		\end{align}
		and 
		\begin{align}
		\mathrm{trace}(\mathbf{\Sigma}_{\mathrm{r}})-\log\big(\det(\mathbf{\Sigma}_{\mathrm{r}})\big)-l_{\max}=0. 
	\end{align}
	Furthermore, we have
	\begin{align}
		\log\big(\det(\mathbf{I}-\mathbf{A}^{2})\big)=l_{\max}\log\big(1-\lambda(\mathbf{A})^{2}\big)=l_{\max}\log\big(1-e^{-\frac{\rho_{\max}}{Ll_{\max}}}\big).
	\end{align}
	By choosing $\delta^{\prime}=\delta+\frac{1}{\log(M)}$, we have that the condition
	\begin{align}
		n<\frac{2(1-\delta^{\prime})\big(\log 3^{1/3} L-1\big)-\frac{2}{|\mathcal{V}_{\mathrm{obs}}|}}{-l_{\max}\log\big(1-e^{-\frac{\rho_{\max}}{Ll_{\max}}}\big)}
	\end{align}
	guarantees that 
	\begin{align}
		\max_{\theta(\mathbb{T})\in \mathcal{T}(|\mathcal{V}_{\mathrm{obs}}|,\rho_{\max},l_{\max})} \mathbb{P}_{\theta(\mathbb{T})}(\phi(\mathbf{X}_{1}^{n})\neq \mathbb{T})\geq \delta^{\prime}
	\end{align}
	
	\paragraph{Graphical model family B} We consider the family of graphical models with double-star substructures, as shown in Fig.~\ref{fig:conversetree3}. Then the number of graphical models $M$ in this family is lower bounded as
	\begin{align}
		M&> \frac{1}{2}\binom{|\mathcal{V}_{\mathrm{obs}}|}{\lceil |\mathcal{V}_{\mathrm{obs}}|/2 \rceil}>\frac{\sqrt{2\pi}|\mathcal{V}_{\mathrm{obs}}|^{|\mathcal{V}_{\mathrm{obs}}|+1/2}e^{-|\mathcal{V}_{\mathrm{obs}}|}}{2\big(e n^{n+1/2}e^{-n}\big)\big(e (|\mathcal{V}_{\mathrm{obs}}|-n)^{|\mathcal{V}_{\mathrm{obs}}|-n+1/2}e^{-(|\mathcal{V}_{\mathrm{obs}}|-n)}\big)}\nonumber\\
		&=\frac{\sqrt{2\pi}}{2e^{2}}\frac{|\mathcal{V}_{\mathrm{obs}}|^{|\mathcal{V}_{\mathrm{obs}}|+1/2}}{n^{n+1/2}(|\mathcal{V}_{\mathrm{obs}}|-n)^{|\mathcal{V}_{\mathrm{obs}}|-n+1/2}}
	\end{align}
	
	\begin{figure}[t]
	\centering\includegraphics[width=0.75\columnwidth,draft=false]{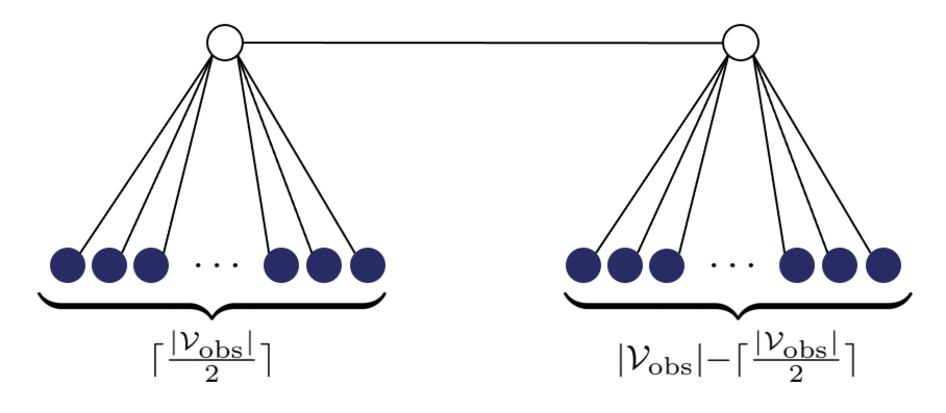}
	\caption{The family B of graphical models considered in the impossibility result.}
	\label{fig:conversetree3}
\end{figure}
	
	when $n=\lceil |\mathcal{V}_{\mathrm{obs}}|/2 \rceil$. Since $n\geq |\mathcal{V}_{\mathrm{obs}}|-n$, we further have
	\begin{align}
		M&>\frac{\sqrt{2\pi}}{2e^{2}} \frac{|\mathcal{V}_{\mathrm{obs}}|^{|\mathcal{V}_{\mathrm{obs}}|+1/2}}{n^{|\mathcal{V}_{\mathrm{obs}}|+1}}=\sqrt{\frac{\pi}{2e^{4}|\mathcal{V}_{\mathrm{obs}}|}}\Big(\frac{|\mathcal{V}_{\mathrm{obs}}|}{n}\Big)^{|\mathcal{V}_{\mathrm{obs}}|+1}
	\end{align}
		and 
	\begin{align}
		\frac{\log(M)}{|\mathcal{V}_{\mathrm{obs}}|}&>\frac{|\mathcal{V}_{\mathrm{obs}}|+1}{|\mathcal{V}_{\mathrm{obs}}|}\log\Big(\frac{|\mathcal{V}_{\mathrm{obs}}|}{\lceil |\mathcal{V}_{\mathrm{obs}}|/2 \rceil}\Big)+\frac{1}{|\mathcal{V}_{\mathrm{obs}}|}\log\Big(\sqrt{\frac{\pi}{2e^{4}|\mathcal{V}_{\mathrm{obs}}|}}\Big) \\
		&>\frac{|\mathcal{V}_{\mathrm{obs}}|+1}{|\mathcal{V}_{\mathrm{obs}}|}\log 2+\frac{1}{|\mathcal{V}_{\mathrm{obs}}|}\log\Big(\sqrt{\frac{\pi}{2e^{4}|\mathcal{V}_{\mathrm{obs}}|}}\Big)>\frac{1}{10}
	\end{align}
	By choosing $\mathbf{\Sigma}_{\mathrm{r}}=\mathbf{I}$ and letting all the eigenvalues of $\mathbf{A}$ to be  the same, we have
	\begin{align}
		\rho_{\max}=-\frac{3}{2}\log\big(\det(\mathbf{A}^{2})\big)=-3l_{\max}\log\big(\lambda(\mathbf{A})\big)
		\end{align}
		and 
		\begin{align}
		\mathrm{trace}(\mathbf{\Sigma}_{\mathrm{r}})-\log\big(\det(\mathbf{\Sigma}_{\mathrm{r}})\big)-l_{\max}=0.
	\end{align}
	Furthermore, we have
	\begin{align}
		\log\big(\det(\mathbf{I}-\mathbf{A}^{2})\big)=l_{\max}\log\big(1-\lambda(\mathbf{A})^{2}\big)=l_{\max}\log\big(1-e^{-\frac{2\rho_{\max}}{3l_{\max}}}\big).
	\end{align}
	By choosing $\delta^{\prime}=\delta+\frac{1}{\log(M)}$, we have that the condition
	\begin{align}
		n<\frac{(1-\delta^{\prime})/5-\frac{2}{|\mathcal{V}_{\mathrm{obs}}|}}{-l_{\max}\log\big(1-e^{-\frac{2\rho_{\max}}{3l_{\max}}}\big)}
	\end{align}
	guarantees that 
	\begin{align}
		\max_{\theta(\mathbb{T})\in \mathcal{T}(|\mathcal{V}_{\mathrm{obs}}|,\rho_{\max},l_{\max})} \mathbb{P}_{\theta(\mathbb{T})}(\phi(\mathbf{X}_{1}^{n})\neq \mathbb{T})\geq \delta^{\prime}
	\end{align}
	as desired. 
\end{proof}

\end{document}